\let\vec\undefined
\newcommand{\neurIPS}[1]{\iftoggle{neurIPSformat}{#1}{}}
\newcommand{\arxiv}[1]{\iftoggle{neurIPSformat}{}{#1}}
\author{%
  David S.~Hippocampus\thanks{Use footnote for providing further information
    about author (webpage, alternative address)---\emph{not} for acknowledging
    funding agencies.} \\
  Department of Computer Science\\
  Cranberry-Lemon University\\
  Pittsburgh, PA 15213 \\
  \texttt{hippo@cs.cranberry-lemon.edu} \\
}
\title{Selective Sampling and Imitation Learning \\ via Online Regression} 
\date{}
\author{Ayush Sekhari\(^1\)\\  
	  \and 
	  Karthik Sridharan\(^2\)\\ 
	  \and
	  Wen Sun\(^2\) \\
          \and
	  Runzhe Wu\(^2\)\thanks{Emails: \texttt{sekhari@mit.edu}, \texttt{\{ks999, ws455, rw646\}@cornell.edu}}\\ 
			\and
			$^1$MIT, $^2$Cornell University 
}}    
\let\vec\undefined
\newtheorem{assumption}{Assumption}
\newtheorem{corollary}{Corollary} 
\newtheorem{proposition}{Proposition}
\newtheorem{lemma}{Lemma} 
\newtheorem{theorem}{Theorem} 
\newtheorem{theorem*}{Theorem}
\newtheorem{definition}{Definition} 
\providecommand{\assumptionnumber}{}
\newcommand{\pref}[1]{\prettyref{#1}}
\newcommand{\savehyperref}[2]{\texorpdfstring{\hyperref[#1]{#2}}{#2}}
\newcommand\numberthis{\addtocounter{equation}{1}\tag{\theequation}}
\DeclarePairedDelimiter{\abs}{\lvert}{\rvert} 
\DeclarePairedDelimiter{\brk}{[}{]}
\DeclarePairedDelimiter{\crl}{\{}{\}}
\DeclarePairedDelimiter{\prn}{(}{)}
\DeclarePairedDelimiter{\nrm}{\|}{\|}
\DeclarePairedDelimiter{\tri}{\langle}{\rangle}
\DeclarePairedDelimiter{\ceil}{\lceil}{\rceil}
\let\Pr\undefined
\DeclareMathOperator{\En}{\mathbb{E}}
\DeclareMathOperator{\Pr}{Pr}
\DeclareMathOperator*{\argmin}{argmin} %
\DeclareMathOperator*{\argmax}{argmax}
\newcommand{\ls}{\ell}
\newcommand{\eps}{\epsilon}
\newcommand{\ldef}{\vcentcolon=}
\newcommand{\rdef}{=\vcentcolon}
\newcommand{\wt}[1]{\widetilde{#1}}
\newcommand{\wh}[1]{\widehat{#1}}
\newcommand{\mb}[1]{\boldsymbol{#1}}
\def\ddefloop#1{\ifx\ddefloop#1\else\ddef{#1}\expandafter\ddefloop\fi}
\def\ddef#1{\expandafter\def\csname bb#1\endcsname{\ensuremath{\mathbb{#1}}}}
\def\ddefloop#1{\ifx\ddefloop#1\else\ddef{#1}\expandafter\ddefloop\fi}
\def\ddef#1{\expandafter\def\csname b#1\endcsname{\ensuremath{\mathbf{#1}}}}
\def\ddef#1{\expandafter\def\csname c#1\endcsname{\ensuremath{\mathcal{#1}}}}
\def\ddef#1{\expandafter\def\csname h#1\endcsname{\ensuremath{\widehat{#1}}}}
\def\ddef#1{\expandafter\def\csname hc#1\endcsname{\ensuremath{\widehat{\mathcal{#1}}}}}
\def\ddef#1{\expandafter\def\csname t#1\endcsname{\ensuremath{\widetilde{#1}}}}
\def\ddef#1{\expandafter\def\csname tc#1\endcsname{\ensuremath{\widetilde{\mathcal{#1}}}}}
\newcommand{\KL}[2]{\mathrm{KL}{\prn*{#1 \| #2}}}
\newcommand{\kl}[2]{\mathrm{kl}{\prn*{#1 \| #2}}} 
\newcommand{\ball}{\mathbb{B}}
\newcommand{\grad}{\nabla}
\newcommand{\sign}{\text{sign}} 
\newcommand{\Ber}[1]{\mathrm{Ber}\prn*{#1}}
\def\*#1{\bm{#1}} 
\def\+#1{\mathcal{#1}}
\def\=#1{\mathbb{#1}}
\def\@#1{\mathrm{#1}} 
\def\^#1{\hat{#1}} 
\def\-#1{\Bar{#1}}
\def\~#1{\tilde{#1}}
\def\sign{\text{\rm sign}}
\newcommand{\E}{\mathbb{E}}
\newcommand{\starno}{\mathfrak{s}^{\mathrm{val}}} 
\newcommand{\starnock}{{\mathfrak{\check s}}^{\mathrm{val}}}  
\newcommand{\starnockul}{{\mathfrak{\underline{\check s}}}^{\mathrm{val}}}
\newcommand{\indic}{\mathbf{1}}
\renewcommand{\epsilon}{\varepsilon}
\renewcommand{\eps}{\varepsilon}
\newcommand{\oracle}{\text{\(\mathsf{Oracle}\)}}
\newcommand{\strconv}{\lambda}
\newcommand{\smooth}{\gamma}
\newcommand{\fstar}{\breve f}
\newcommand{\score}{\phi}  
\newcommand{\up}[1]{^{#1}}
\newcommand{\aggr}[1]{\mathscr{A}\prn*{#1}}  
\newcommand{\aggrSymb}{\mathscr{A}}
\newcommand{\disaggr}[2]{\theta^{\mathrm{val}}\prn*{#2; #1}}
\newcommand{\Rad}{\mathsf{Rad}} %
\newcommand{\RegT}{\mathrm{Reg}_T} %
\newcommand{\hitter}{\mathfrak{h}}
\newcommand{\pistar}{{\pi^{\star}}}   
\renewcommand{\fstar}{{f^{\star}}} 
\newcommand{\fstarm}{{f^{\star, m}}} 
\newcommand{\fstarmh}{{f^{\star, m}_h}} 
\newcommand{\fstarh}{{f^{\star}_h}} 
\newcommand{\fstart}{f^{\star}_t} 
\newcommand{\Term}{T}
\newcommand{\Queries}{N}
\newcommand{\eluderS}{\mathfrak{E}}
\newcommand{\BeluderS}{\check {\mathfrak{E}}}
\newcommand{\Gap}[1]{\mathtt{Margin}\prn{#1}}
\newcommand{\SelectAction}[1]{\mathtt{SelectAction}\prn{#1}} 
\newcommand{\SelectActionName}{\mathtt{SelectAction}} 
\newcommand{\Margin}{\mathtt{Margin}}
\newcommand{\setX}{\mathtt{X}} 
\newcommand{\pit}{{\pi_t}}
\newcommand{\pistarh}{{\pi^{\star}_h}}   
\newcommand{\dynamics}{\mathscr{T}} 
\newcommand{\stocdynamics}{\dynamics}
\renewcommand{\Term}{\mathtt{T}} 
\newcommand{\yy}{\mathsf{y}} 
\newcommand{\bmu}{\bar{\mu}} 
\newcommand{\SquareLossHP}{\textcolor{black}{\Psi_\delta(\cF, T)}}
\newcommand{\SquareLossHPPower}[1]{\textcolor{black}{\Psi^{#1}_\delta(\cF, T)}}
\newcommand{\SquareLossHPL}{\textcolor{black}{\Psi^{\ls_\phi}_\delta(\cF, T)}}
\newcommand{\SquareLossHPLm}{\textcolor{black}{\Psi^{\ls_\phi}_\delta(\cF\up{m}, T)}}
\newcommand{\LsExpected}[2]{\textcolor{black}{\mathrm{Reg}^{\mathrm{#1}}(#2)}} 
\newcommand{\SquareLossHPRL}{\textcolor{black}{\Psi_{\delta, h}(T)}}
\newcommand{\SquareLossHPRLP}{\textcolor{black}{\Psi^{\ls_\phi}_{\delta}(\cF\up{m}_h, T)}}
\renewcommand{\SquareLossHPRL}{\textcolor{black}{\Psi^{\ls_\phi}_{\delta}(\cF_h, T)}}
\newcommand{\SquareLossExpectedL}{\textcolor{black}{\wh{\Psi}^{\ls_\phi}_\delta(\cF; T)}}
\newcommand{\be}{\bar{e}}
\newcommand{\filter}{\mathfrak{F}} 
\newcommand{\kstar}{k^\star}
\newcommand{\pione}{{\pi_1}}
\newcommand{\pitwo}{{\pi_2}}
\newcommand{\eff}{F} 
\newcommand{\Gee}{G} 
\newcommand{\effstar}{F^{\star}} 
\newcommand{\effstarh}{F_h^{\star}} 
\newcommand{\matDelta}{{\vec {\Delta}}}
\newcommand{\ones}{\vec{\mathbbm{1}}} 
\newcommand{\Que}{\mathtt{Query}}
\newcommand{\bare}{{\bar{e}}}
\newcommand{\detdynamics}{\mathbb{T}}
\newcommand{\Gapt}[1]{\mathrm{Gap}\prn{#1}}
\newcommand{\IGW}{\mathrm{IGW}} 
\newcommand{\barf}{\bar{f}}
\newcommand{\igwcoef}{\alpha} 
\newcommand{\poly}{\mathrm{poly}} 
\newcommand{\SAGE}{\textsf{SAGE}}
\newcommand{\RAVIOLI}{\textsf{RAVIOLI}}
\date{} 
\begin{document} 

\maketitle 

\begin{abstract} 
We consider the problem of Imitation Learning (IL) by actively querying noisy expert for feedback. While imitation learning has been empirically successful, much of prior work assumes access to noiseless expert feedback which is not practical in many applications. In fact, when one only has access to noisy expert feedback, algorithms that rely on purely offline data (non-interactive IL) can be shown to need a prohibitively large number of samples to be successful. In contrast, in this work, we provide an interactive algorithm for IL that uses selective sampling to actively query the noisy expert for feedback. Our contributions are twofold: First,  we provide a new selective sampling algorithm that works with general function classes and multiple actions, and obtains the best-known bounds for the regret and the number of queries. Next, we extend this analysis to the problem of IL with noisy expert feedback and provide a new IL algorithm that  makes limited queries. 

Our algorithm for selective sampling leverages function approximation, and relies on an online regression oracle w.r.t.~the given model class to predict actions, and to decide whether to query the expert for its label. On the theoretical side, the regret bound of our algorithm is upper bounded by the regret of the online regression oracle, while the query complexity additionally depends on the eluder dimension of the model class. We complement this with a  lower bound that demonstrates that our results are tight. We extend our selective sampling algorithm for IL with general function approximation and provide bounds on both the regret and the number of queries made to the noisy expert. A key novelty here is that our regret and query complexity bounds only depend on the number of times the optimal policy (and not the noisy expert, or the learner) go to states that have a small margin.

\end{abstract}

\section{Introduction} 

From the classic supervised learning setting to the more complex problems like   interactive Imitation Learning (IL) \citep{ross2011reduction}, high-quality labels or supervision is often expensive and hard to obtain. Thus, one wishes to develop learning algorithms that do not require a label for every data sample presented during the learning process. Active learning or selective sampling is a learning paradigm that is designed to reduce query complexity by only querying for labels at selected data points, and has been extensively studied in both theory and in practice \citep{agarwal2013selective,dekel2012selective,hanneke2021toward,zhu2022efficient,cesa2005minimizing,hanneke2015minimax}. 

In this work, we study selective sampling and its application to interactive Imitation Learning \citep{ross2011reduction}. Our goal is to design algorithms  that can leverage general function approximation and online regression oracles to achieve small regret on predicting the correct labels, and at the same time minimize the number of expert queries made (query complexity). Towards this goal, we first study selective sampling which is an online active learning framework, and provide regret and query complexity bounds for general function classes (used to model the experts). Our key results in selective sampling are obtained by developing a connection between the regret of the online regression oracles  and the regret of predicting the correct labels. Additionally, we bound the query complexity using the eluder dimension \citep{russo2013eluder} of the underlying function class used to model the expert.  
We complement our results with a lower bound indicating that a dependence on an eluder dimension like complexity measure is unavoidable in the query complexity in the worst case. In particular, we provide lower bounds in terms of the star number of the function class---a quantity closely related to the eluder dimension. Our new selective sampling algorithm, called \textsf{SAGE}, can operate under fairly general modeling assumptions, loss functions, and allows for  multiple labels (i.e., multi-class classification). 

We then extend our  selective sampling algorithm to the interactive IL framework proposed by \cite{ross2011reduction} to reduce the query complexity. While the  DAgger algorithm proposed by \cite{ross2011reduction} has been extensively used in various robotics applications (e.g., \citet{ross2013learning,pan2018agile}), it often requires a large number of expert queries. There have been some efforts on reducing the expert query complexity by leveraging ideas from active learning (e.g., \cite{laskey2016shiv,brantley2020active}), however, these prior attempts do not have theoretical guarantees on bounding expert's query complexity.
In this work, we provide the first provably correct algorithm for interactive IL with general function classes, called \RAVIOLI, which not only achieves strong regret bounds in terms of maximizing the underlying reward functions, but also enjoys a small query complexity. Furthermore, we note that \RAVIOLI{} operates under significantly weaker assumptions as compared to the prior works, like  \cite{ross2011reduction}, on interactive IL. In particular, we only assume access to a noisy expert, as compared to the prior works that assume that the expert is noiseless. In fact, for the noisy setting, we show that one can not even hope to learn from purely offline expert demonstrations unless one has exponentially in horizon \(H\) many samples. Such a strong separation does not hold in the noiseless setting. 

Our bounds depend on the margin of the noisy expert, which intuitively quantifies the confidence level of the expert. In particular, the margin is large for states where the expert is very confident in terms of providing the correct labels, while on the other hand, the margin is small on the states where the expert is less confident and subsequently provides more noisy labels as feedback. Such kind of margin condition was missing in prior works, like  \cite{ross2011reduction}, which assumes that the expert can provide confident labels everywhere. Additionally, we note that our margin assumption is quite mild as we only assume that the expert has a large margin under the  states that could be visited by the noiseless expert (however, the states visited by the learner, or by following the noisy expert, may not have a small margin). 

We then extend our results to the multiple expert setting where the learner has access to $M$ many experts/{{teachers}} who may have different expertise at different parts of the state space. In particular, there is no {{expert}} who can singlehandedly perform well on the underlying environment, but an aggregation of their policies can lead to good performance. Such an assumption holds in  various applications and has been recently explored in continuous control tasks like robotics and discrete tasks like chess and Minigrid \citep{beliaev2022imitation}. For illustration, consider the game of chess, where we can easily find experts that have non-overlapping skills, e.g. some experts may have expertise on how to open the game, and other experts may have expertise in endgames. In this case, while no single expert can perform well throughout the game, an aggregation of their policies can lead to a good strategy that we wish to compete with. 

Similar to the single expert setting,  we model the expertise of the experts in multiple expert setting using the concept of margins. Different experts have different margin functions, capturing the fact that experts may have different expertise at different parts of the state space. Prior work from \cite{cheng2020policy}  also considers multiple experts in IL and provides meaningful regret bounds, however, their assumption on the experts is much stronger than us: they assume that for any state, there at least exists one expert who can achieve high reward-to-go if the expert took over the control starting from this state till the end of the episode. Furthermore, \cite{cheng2020policy} considers the setting where one can also query for the reward signals, whereas we do not require access to any reward signals. \arxiv{We  complement our theory by providing experiments that demonstrate that our IL algorithms outperform the prior baselines of \cite{cheng2020policy} on a classic control benchmark (cartpole) with neural networks as  function approximation. 
}

\vspace{-5pt}
\section{Contributions and Overview of Results} 
\vspace{-5pt}

\subsection{Selective Sampling} 
Online selective sampling models the interaction between a learner and an adversary over \(T\) rounds. At the beginning of each round of the  interaction, the adversary presents a context $x_t$ to the learner. After receiving the context, the learner makes a prediction $\hat y_t \in [K]$, where \(K\) denotes the number of actions. Then, the learner needs to make a choice of whether or not to query an \emph{expert} who is assumed to have some knowledge about the true label for all the presented contexts. The experts knowledge about the true label is modeled via the ground truth modeling function \(\fstar\), which is assumed to belong to a given function class \(\cF\) but is unknown to the learner.  If the learner decides to query for the label, then the expert will return a noisy label $y_t$ sampled using \(\fstar\). If the learner does not query, then the learner does not receive any feedback in this round. The learner makes an update based on the latest information it has, and moves on to the next round of the interaction. The goal of the learner is to compete with the expert policy \(\pistar\), that is defined using the experts model \(\fstar\). In the selective sampling setting, we are concerned with two things: the total regret of the learner w.r.t.~the policy \(\pistar\), and the number of expert queries that the learner makes. Our key contributions are as follows: 
\neurIPS{\begin{enumerate}[label=\(\bullet\)]}
\arxiv{\begin{enumerate}[label=\(\bullet\)]}
    \item We provide a new selective sampling algorithm (\pref{alg:ss_main_eluder}) that relies on an online regression oracle w.r.t.~\(\cF\) (where \(\cF\) is the  given model class) to make predictions and to decide whether to query for labels. Our algorithm can handle multiple actions, adversarial contexts, arbitrary model class \(\cF\), and fairly general modeling assumptions (that we discuss in more detail in  \pref{sec:ss_main}), and enjoys the following regret bound and query complexity:
 \begin{align*}
    \RegT =\wt{\cO}\left( \inf_{\epsilon} \left\{ \epsilon T_{\epsilon} + \frac{\LsExpected{}{\cF; T} }{\epsilon}  \right\}  \right)   \quad \text{and}\quad N_T &= \wt{\cO} \prn*{\inf_\epsilon \crl*{ T_\epsilon + \frac{  \LsExpected{}{\cF; T} \cdot \eluderS(\cF, \epsilon; \fstar)}{\epsilon^2}}}.\numberthis \label{eq:contributions_bound}
    \end{align*} where \(\LsExpected{}{\cF; T} \)  denotes the regret bound for the online regression oracle on \(\cF\), \(\eluderS(\cF, \epsilon; \fstar)\) denotes the eluder dimension of \(\cF\), and \(T_\epsilon\) denotes the number of rounds at which the margin of the experts predictions is smaller than \(\epsilon\) (the exact notion of margin is defined in \pref{sec:ss_main}). 
\item We show via a lower bound that, without additional assumptions,  the dependence on the eluder dimension in the query complexity bound \pref{eq:contributions_bound} is unavoidable if we desire a regret bound of the form \pref{eq:contributions_bound}, even when \(T_\epsilon = 0\). The details are located in \pref{sec:lower_bounds}. 
\item For the stochastic setting, where the context \(\crl{x_t}_{t \leq T}\)  are sampled i.i.d.~from a fixed unknown distribution, we provide an alternate algorithm (\pref{alg:ss_DIS_main}) that enjoys the same regret bound as \pref{eq:contributions_bound} but whose query complexity scales with the disagreement coefficient of \(\cF\)  instead of the eluder dimension (\pref{thm:main_ss_DIS_multiple}). Since the disagreement coefficient is always smaller than the eluder dimension, \pref{thm:main_ss_DIS_multiple} yields an improvement in the query complexity. 
\arxiv{\item Then, in \pref{sec:main_bandit_feedback}, we show how to extend our selective sampling algorithm when the learner only receives bandit feedback, i.e.~on every query the learner receives a binary feedback signal on whether the chosen action \(\wh y_t\) is identical to \(y_t\). Our algorithm given in \pref{alg:ss_main_bandit} is based on \textit{Inverse Gap Weighting} exploration strategy of \cite{foster2020beyond, abe1999associative}, and enjoys a best-of-both-worlds style regret and query complexity bound. On the regret side, the provided instance dependent (\(\epsilon\)-dependent) bound is \(O\prn*{T_\epsilon \LsExpected{}{\cF; T} + \frac{  \LsExpected{}{\cF; T}^2 \cdot \eluderS(\cF, \epsilon; \fstar)}{\epsilon^2}}\) and scales with the eluder dimension}. However, in the worst case, the regret bound is also bounded by \(O(\sqrt{KT \LsExpected{}{\cF; T}})\), and thus our algorithm enjoys the best of the two guarantees. 

\arxiv{\item In \pref{sec:multiple_experts_main}, we show how to extend our selective sampling algorithm when the learner can query \(M\) different experts on each round. Here, we do not assume that any of the experts is single-handedly optimal for the entire context space, but that there exist aggregation functions of these experts' predictions that perform well in practice, and with which we compete.} 
\end{enumerate} 

\subsection{Imitation Learning} We then move to the more challenging Imitation Learning (IL) setting, where the learner operates in an episodic finite horizon Markov Decision Process (MDP), and can query a noisy expert for feedback (i.e.~the expert action) on the states that it visits. The interaction proceeds in $T$ episodes of length \(H\) each. In episode \(t\), at each time step \(h \in [H]\) and on the state \(x_{t, h}\), the learner chooses an action \(\widehat y_{t, h}\) and transitions to state \(x_{t, h+1}\). However, the learner does not receive any reward signal. Instead, the learner can actively choose to query an \textit{expert} who has some knowledge about the correct action to be taken on \(x_{t, h}\), and gives back noisy feedback \(y_{t, h}\) about this action. Similar to the selective sampling setting, the experts knowledge about the true label is modeled via the ground truth modeling function \(\fstarh\), which is assumed to belong to a given function class \(\cF_h\) but is unknown to the learner. The goal of the learner is to compete with the optimal policy \(\pistar\) of the (noiseless) expert. Our key contributions in IL are: 
\neurIPS{\begin{enumerate}[label=\(\bullet\)]}
\arxiv{\begin{enumerate}[label=\(\bullet\)]}
\item In \pref{sec:main_il}, we first demonstrate an exponential separation in terms of task horizon \(H\) in the sample complexity, for learning via offline expert demonstration only vs interactive querying of experts, when the feedback from the expert is noisy. 

\item We then provide a general IL algorithm (in \pref{alg:il_adv}) that relies on online regression oracles w.r.t.~\(\crl{\cF_h}_{h \leq H}\)  to predict actions, and to decide whether to query for labels. Similar to the selective sampling setting, the regret bound for our algorithm scales with the regret of the online regression oracles, and the query complexity bound has an additional dependence on the eluder dimension. Furthermore, our algorithm can handle multiple  actions, adversarially changing dynamics, arbitrary model class \(\cF\), and fairly general modeling assumptions.  

\item A key difference from our results in selective sampling is that the term  \(T_\epsilon\) that appears in our regret and query complexity bounds in IL denote the number of time steps in which the expert policy \(\pistar\) has a small margin (instead of the number of time steps when the learner's policy has a small margin). In fact, the learner and the expert trajectories could be completely different from each other, and we only pay in the margin term if the expert trajectory at that time step would have a low margin. See \pref{sec:main_il} for the exact definition of margin. 

\item  In \pref{sec:multiple_experts_main}, we provide extensions to our algorithm when the learner can query \(M\) experts at each round.  Similar to selective sampling setting, we do not assume that any of the experts is singlehandedly optimal for the entire state space, but that there exist aggregation functions of these experts' predictions that perform well in practice, and with which we compete. 

\item In \arxiv{\pref{sec:experiment}}\neurIPS{\pref{app:experiment}}, we evaluate our IL algorithm on the Cartpole environment, with single and multiple experts. We found that our algorithm can match the performance of passive querying algorithms while making a significantly lesser number of expert queries. 
\end{enumerate} 

\vspace{-5pt}

\section{Selective Sampling} \label{sec:ss_main}
In the problem of selective sampling, on every round $t$, nature (or an adversary) produces a context $x_t$. The learner then receives this context and predicts a label $\widehat{y}_t \in [K]$ for that context.  The learner also computes a query condition $Z_t \in \{0,1\}$ for that context. If $Z_t = 1$, the learner requests for label $y_t \in [K]$ corresponding to the $x_t$, and if not, the learner receives no feedback on the label for that round. Let $\cF$ be a model class such  that each model $f \in \cF$ maps contexts $x$ to scores $f(x) \in \mathbb{R}^K$. In this work we assume that while contexts can be chosen arbitrarily, the label $y_t$ corresponding to a context $x_t$ is drawn from a distribution over labels specified by the score $\fstar(x_t)$ where $\fstar \in \cF$ is a fixed model unknown to the learner. We assume that a link function \(\phi: \bbR^K \mapsto \Delta(K)\) maps scores to distributions and assume that the noisy label \(y_t\) is sampled as 
\begin{align*}
y_t \sim \phi(\fstar(x_t)).  \numberthis \label{eq:main_phi_model}
\end{align*}
In this work, we assume that the link function \(\phi(v) = \grad \Phi(v)\) for some  \(\Phi: 
\bbR^K \mapsto \bbR\) (see \cite{agarwal2013selective} for more details) which satisfies the following assumption.  
\begin{assumption}    
\label{ass:link_fn_properties}
The function \(\Phi\) is \(\strconv\)-strongly-convex and \(\gamma\)-smooth, i.e.~for all \(u, u' \in \bbR^K\),  
{\small\begin{align*}
 \frac{\strconv}{2}  \nrm*{u' - u}^2_2 \leq \Phi(u') -  \Phi(u) - \tri*{\grad \Phi(u), u' - u} \leq  \frac{\smooth}{2}  \nrm*{u' - u}^2_2. 
\end{align*}}
\end{assumption} 
Our main contribution in this section is a selective sampling algorithm that uses online non-parametric regression w.r.t.~ the model class $\cF$ as a black box. Specifically, define the loss function corresponding to the link function \(\phi\) as \(\ls_\phi(v, y) = \Phi(v) - v[y]\) where \(v \in \bbR^K\) and \(y \in [K]\). We assume that the learner has access to an online regression oracle for the loss \(\ls_\phi\) (which is a convex loss) w.r.t.~the class \(\cF\), that for any  sequence \(\crl{(x_1, y_1), \dots, (x_T, y_T)}\) guarantees the regret bound 
\begin{align*}
\sum_{s=1}^T \ls_\phi(f_s(x_s),  y_s)- \inf_{f \in \cF} \sum_{s=1}^T \ls_\phi(f(x_s), y_s) &\leq \LsExpected{\ls_\phi}{\cF; T}.  \numberthis \label{eq:main_phi_regret} 
\end{align*}

When $\phi$ is identity (under which the models in $\cF$ directly map to distributions over the labels), then $\ls_\phi$ denotes the standard square loss, and we need a bound on the regret w.r.t.~ the square loss, denoted by \(\LsExpected{sq}{\cF; T}\). When $\phi$ is the Boltzman distribution mapping (given by $\Phi$ being the softmax function) then $\ls_\phi$ is the logistic loss, and we need an online logistic regression oracle for \(\cF\). Minimax rates for the regret bound in \pref{eq:main_phi_regret} are well known:
\begin{enumerate}[label=\(\bullet\)]  
\item \textit{Square-loss regression:} \citet{rakhlin2014online} characterized the minimax rates for online square loss regression in terms of the offset sequential Rademacher complexity of \(\cF\), which for example, leads to regret bound $\LsExpected{sq}{\cF; T}= O(\log|\+F|)$ for finite function classes $\+F$, and $\LsExpected{sq}{\cF; T} = O(d\log(T))$ when $\+F$ is a $d$-dimensional linear class. More examples can be found in \citet[Section 4]{rakhlin2014online}. We refer the readers to \citet{krishnamurthy2017active,foster2018practical} for efficient implementations. 

\item \textit{Logistic-loss regression:} When \(\cF\) is finite, we have the regret bound \(\LsExpected{}{\cF; T} \leq O(\log|\+F|)\) \citep[Chapter 9]{cesa2006prediction}. For learning linear predictors, there exists efficient improper learner with regret bound \(\LsExpected{}{\cF; T} \leq O(d \log|T|)\) \citep{foster2018logistic}. More examples can be found in \citet[Section 7]{foster2018logistic} and \cite{rakhlin2015online}. 
\end{enumerate}

When one deals with complex model classes $\cF$ such that the labeling concept class corresponding to $\cF$ could possibly have infinite VC dimension (like it is typically the case), then one needs to naturally rely on a margin-based analysis \citep{tsybakov2004optimal, shalev2014understanding, dekel2012selective}. For \(p \in \bbR^K\), we use the following well-known notion of  margin for  multiclass settings\footnote{Throughout the paper, we assume that the ties in \(\argmax\) or \(\argmin\) are broken arbitrarily, but consistently.}: 
\begin{align*}
\Margin(p) =  \phi(p)[\kstar] - \max_{k' \neq \kstar} \phi(p)[k'] \qquad \text{where} \quad \kstar \in \argmax_{k} \phi(p)[k],  \numberthis \label{eq:main5_margin}  
\end{align*}

A key quantity that appears in our results is the number of $x_t$'s that fall within an $\epsilon$ margin region,
\begin{align*}
T_\epsilon &= \sum_{t=1}^T \indic\crl{\Margin(\fstar(x_t))  \leq \epsilon}. 
\end{align*}
\(T_\epsilon\) denotes the number of times where even the Bayes optimal classifier is confused about the correct label on \(x_t\), and has confidence less than \(\epsilon\). The algorithm relies on an online regression oracle mentioned above to produce the predictor $f_t$ at every round. The predicted label $\widehat{y}_t = \SelectAction{f_t(x_t)} = \argmax_{k} \phi(f_t(x_t))[k]$  is picked based on the score $f_t(x_t)$ (where $\widehat{y}_t$ is the label with the largest score). The learner updates the regression oracle on only those rounds in which it makes a query. Our main algorithm for selective sampling is provided in \pref{alg:ss_main_eluder}.\footnote{Unless explicitly specified,  the action set is given by \(\cA = [K] = \crl{1, \dots, K}\) where \(K \geq 2\).}  

\begin{algorithm}
\caption{\textbf{S}elective S\textbf{A}mplin\textbf{G} with \textbf{E}xpert Feedback (\SAGE)} 
\label{alg:ss_main_eluder}  
\begin{algorithmic}[1]
\Require Parameters \(\delta, \gamma, \lambda, T\), function class \(\cF\), and online regression oracle $\oracle$ w.r.t~\(\ls_\phi\).   
\State Set \(\SquareLossHPL = \frac{4}{\strconv} \LsExpected{\ls_\phi}{\cF; T} + \frac{112}{\strconv^2} \log(4 \log^2(T)/\delta)\), Compute \(f_1 \leftarrow \oracle_1(\emptyset)\). 
\For {$t = 1$ to \(T\)} 
\State Nature chooses $x_t$. 
\State Learner plays the action $\wh y_t = \SelectAction{f_t(x_t)}$. 
\State Learner computes 
\begin{align*}
\Delta_t(x_t) \ldef{} \max_{f \in \cF}  ~ \nrm{f(x_t) - f_t(x_t)} \quad \text{s.t.} \quad \sum_{s=1}^{t - 1} Z_s \nrm*{f(x_s) - f_s(x_s)}^2  \leq \SquareLossHPL. \numberthis \label{eq:ss_main_eluder_delta}   
\end{align*} 
\State\label{line:ss_main_eluder_query}Learner decides whether to query: $Z_{t} = \indic\crl{\Margin(f_{t}(x_{t})) \leq 2 \smooth \Delta_t(x_{t})}$. 
\If{$Z_t=1$}{} %
  \State Learner queries the label \(y_t\) on \(x_t\). %
    \State \(f_{t+1} \leftarrow \oracle_t(\crl{x_t, y_t})\).
\Else{}
\State \(f_{t+1} \leftarrow f_t\). 
\EndIf 
\EndFor 
\end{algorithmic}
\end{algorithm} 
Our goal in this work is twofold: Firstly, we would like \pref{alg:il_adv} \rwcomment{should be \pref{alg:ss_main_eluder}?} to have a low regret w.r.t.~the optimal model $\fstar$, defined as $$ 
\RegT = \sum_{t=1}^T \indic\crl{\widehat{y}_t \ne y_t} - \sum_{t=1}^T \indic\crl{\SelectAction{f^\star(x_t)} \ne y_t} 
$$
Simultaneously, we also aim to make as few label queries $
N_T = \sum_{t=1}^T Z_t$  as possible. Before delving into our results, we first recall the following variant of eluder-dimension \citep{russo2013eluder, foster2020instance, zhu2022efficient}. 
\begin{definition}[Scale-sensitive eluder dimension (normed version)] 
\label{def:eluder_dimension_main} Fix any \(\fstar \in \cF\), and define \(\widetilde{\eluderS}(\cF, \beta ; \fstar)\)  to be the length of the longest sequence of contexts \(x_1, x_2, \dots x_m\) such that for all \(i\), there exists \(f_i \in \cF\) such that 
\begin{align*}
\nrm{f_i(x_i) - \fstar(x_i)} > \beta, \quad \text{and} \quad \sum_{j < i} \nrm{f_i(x_j) - \fstar(x_j)}^2 \leq \beta^2. 
\end{align*}
The value function eluder dimension is defined as \(\eluderS(\cF, \beta' ; \fstar) = \sup_{\beta \geq \beta'} \widetilde{\eluderS}(\cF, \beta ; \fstar)\). 
\end{definition} 

Bounds on the eluder dimension for various function classes are well known, e.g. when \(\cF\) is finite, \(\eluderS(\cF, \beta' ; \fstar) \leq \abs{\cF} - 1\), and when \(\cF\) is the set of \(d\)-dimensional function with bounded norm, then \(\eluderS(\cF, \beta' ; \fstar) = O(d)\). We refer the reader to \citet{russo2013eluder, mou2020sample, li2022understanding} for more examples. The following theorem is our main result for selective sampling: 
\begin{theorem} 
\label{thm:main_ss_eluder}
	Let \(\delta \in (0, 1)\). Under the modeling assumptions above (in  \pref{eq:main_phi_model}, \pref{eq:main_phi_regret} and \pref{eq:main5_margin}), with probability at least \(1 - \delta\), \pref{alg:ss_main_eluder} obtains the regret bound 
\begin{align*} 
\RegT &= \wt{\cO}\prn*{  \inf_{\epsilon} \crl*{ \epsilon T_\epsilon + \frac{\gamma^2}{\lambda \epsilon}  \LsExpected{\ls_\phi}{\cF; T} + \frac{\gamma^2}{\lambda^2 \epsilon}\log(1/\delta)}},  
\end{align*} 
while simultaneously the total number of label queries made is bounded by: 
\begin{align*}
N_T &= \wt{\cO} \prn*{\inf_\epsilon \crl*{ T_\epsilon + \frac{\gamma^2}{\lambda \epsilon^2} \cdot  \LsExpected{\ls_\phi}{\cF; T} \cdot \eluderS(\cF, \nicefrac{\epsilon}{4 \gamma}; \fstar) + \frac{\gamma^2}{\lambda^2 \epsilon^2} \log(1/\delta)}}. 
\end{align*}
\end{theorem}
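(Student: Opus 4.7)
The proof has three stages: (i) build a high-probability \emph{confidence set} from the online regression oracle that contains $\fstar$; (ii) use the algorithm's query rule plus smoothness of $\Phi$ to convert the oracle's prediction error into a $0/1$ regret bound in terms of the widths $\Delta_t(x_t)$; and (iii) apply the scale-sensitive eluder bound of \pref{def:eluder_dimension_main} to count rounds with large widths. A final Azuma-Hoeffding step upgrades from expected to the stated high-probability regret guarantee.

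\textbf{Step 1 (confidence set).} I would first show that, with probability at least $1-\delta$, the true predictor satisfies
\[
\sum_{s=1}^{t-1} Z_s \nrm*{\fstar(x_s) - f_s(x_s)}^2 \le \SquareLossHPL
\]
for every $t \le T$. Strong convexity of $\Phi$ (\pref{ass:link_fn_properties}) gives the Bregman lower bound $\E_{y \sim \phi(\fstar(x))}\brk*{\ls_\phi(f(x),y) - \ls_\phi(\fstar(x),y) \mid x} \ge \tfrac{\lambda}{2}\nrm*{f(x)-\fstar(x)}^2$. Summing this over the queried rounds, invoking \pref{eq:main_phi_regret} against $\fstar$ on the queried sub-sequence, and applying Freedman's inequality to the martingale difference between realized and conditional excess losses yields the displayed bound with $\SquareLossHPL = O(\LsExpected{\ls_\phi}{\cF;T}/\lambda + \log(1/\delta)/\lambda^2)$.

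\textbf{Step 2 (per-round regret).} Condition on the event of Step~1. Since $\phi = \grad\Phi$ is $\gamma$-Lipschitz in Euclidean norm by $\gamma$-smoothness, the constraint defining $\Delta_t(x_t)$ yields $\nrm*{\phi(\fstar(x_t)) - \phi(f_t(x_t))}_\infty \le \gamma \Delta_t(x_t)$. On non-query rounds, \pref{line:ss_main_eluder_query} certifies $\Margin(f_t(x_t)) > 2\gamma\Delta_t(x_t)$, so the top and runner-up of $\phi(f_t(x_t))$ are separated by more than twice this coordinate-wise gap; hence $\wh y_t = \pistar(x_t)$ and the expected $0/1$ regret on this round is zero. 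On query rounds, $\wh y_t$ is the $\phi(f_t(x_t))$-argmax, so the expected instantaneous regret is $\phi(\fstar(x_t))[\pistar(x_t)] - \phi(\fstar(x_t))[\wh y_t] \le 2\gamma \Delta_t(x_t)$. Summing gives $\E[\RegT] \le 2\gamma \sum_{t:Z_t=1} \Delta_t(x_t)$.

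\textbf{Step 3 (splitting by $\epsilon$, and converting to high probability).} Fix $\epsilon > 0$ and set $\beta = \epsilon/(4\gamma)$. Split the queried rounds into those with $\Delta_t(x_t) \le \beta$ and those with $\Delta_t(x_t) > \beta$. On the first group the query condition combined with the coordinate-wise stability of $\phi$ implies $\Margin(\fstar(x_t)) \le \epsilon$, so these rounds lie in $T_\epsilon$, and each contributes at most $\epsilon/2$ to the regret sum. On the second group, for each such round the witness $f$ achieving the max in \pref{eq:ss_main_eluder_delta} together with $\fstar$ from Step~1 both satisfy $\sum_{s<t} Z_s \nrm*{\cdot}^2 \le \SquareLossHPL$; a standard pigeonhole potential argument against \pref{def:eluder_dimension_main} then bounds the cardinality of this group by $\wt{\cO}(\eluderS(\cF,\beta;\fstar)\cdot \SquareLossHPL/\beta^2)$, and Cauchy-Schwarz converts this into the same bound (up to constants) for $\sum \Delta_t(x_t)$ over the group. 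Summing the two groups yields the stated regret bound; summing their cardinalities yields the query bound. A final Azuma-Hoeffding bound on the martingale $\indic\{\wh y_t \ne y_t\} - \indic\{\pistar(x_t) \ne y_t\} - (\phi(\fstar(x_t))[\pistar(x_t)] - \phi(\fstar(x_t))[\wh y_t])$ upgrades the expected-regret statement to the claimed $1-\delta$ bound.

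\textbf{Main obstacle.} I expect the technical heart of the argument to be Step~1: the oracle controls cumulative \emph{realized} excess loss against $\fstar$, but the certificate needs its \emph{conditional-expectation} counterpart. Freedman's inequality demands that the per-round variance be bounded by the conditional excess loss itself, and closing this self-bounding loop using $\lambda$-strong convexity --- combined with a peeling union bound over a dyadic range of the confidence radius --- is exactly what produces the $(1/\lambda^2)\log(1/\delta)$ term in $\SquareLossHPL$. Once the confidence set is in hand, the remaining steps are mechanical applications of $\gamma$-Lipschitzness of $\phi$ and the eluder-dimension pigeonhole.
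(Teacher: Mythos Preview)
Your Step~1 (the confidence set via strong convexity plus Freedman) and your query-complexity argument are fine and match the paper. The gap is in the regret bound. By controlling the instantaneous regret on query rounds only through $2\gamma\Delta_t(x_t)$, you are forced in Step~3 to bound $\sum_{t:Z_t=1}\Delta_t(x_t)$, and any way of doing this (Cauchy--Schwarz against $\sum_t Z_t\Delta_t^2$, or a dyadic peeling over scales of $\Delta_t$) inserts a factor of $\eluderS(\cF,\beta;\fstar)$ into the regret. The theorem's regret bound is eluder-free, so your route cannot recover it.

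The fix, which is what the paper does, is to use the \emph{tighter} per-round inequality: whenever $\wh y_t\neq\pistar(x_t)$, $\gamma$-smoothness gives
\[
\phi(\fstar(x_t))[\pistar(x_t)]-\phi(\fstar(x_t))[\wh y_t]\;\le\;2\gamma\,\nrm{\fstar(x_t)-f_t(x_t)},
\]
i.e.\ the regret is controlled by the realized prediction error $\nrm{\fstar(x_t)-f_t(x_t)}$, not by the width $\Delta_t(x_t)$. Now split by whether $\mathrm{Gap}(\fstar(x_t),\wh y_t)\le\epsilon$ (these rounds land in $T_\epsilon$ and each contributes at most $\epsilon$) or $>\epsilon$. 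On the latter rounds the same smoothness bound forces $\nrm{\fstar(x_t)-f_t(x_t)}>\epsilon/(2\gamma)$, so the indicator-to-ratio trick $\indic\{a>b\}\le a/b$ turns the sum into $(2\gamma/\epsilon)\sum_t Z_t\nrm{\fstar(x_t)-f_t(x_t)}^2$, which is at most $(2\gamma/\epsilon)\,\SquareLossHPL$ directly from Step~1---no eluder dimension needed. The eluder argument is then used \emph{only} for counting the large-$\Delta_t$ rounds in the query-complexity bound, exactly as you already have it.
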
 
A few points are in order:
\neurIPS{\begin{enumerate}[label=\(\bullet\)]}
\arxiv{\begin{enumerate}[label=\(\bullet\)]} 
    \item It must be noted that for most settings we consider, as an example if model class $\cF$ is finite, one typically has that $\LsExpected{}{\cF; T} \le \log|\cF|$. Thus,  in the case where one has a hard margin condition i.e.~$T_{\epsilon_0} = 0$  for some  $\epsilon_0 >0$,  we get $\RegT \le O\left(\frac{\log|\cF|}{\epsilon_0}\right)$ and $N_T \le O\left(\frac{ \eluderS(\cF, \epsilon ; \fstar) \log|\cF|}{\epsilon_0^2}\right)$. 
    \item Our regret bound does not depend on the eluder dimension.  However, the query complexity bound has a dependence on eluder dimension. Thus, for function classes for which the eluder dimension is large, the regret bound is still optimal while the number of label queries may be large.  
\end{enumerate} 

\paragraph{Theorem~\ref{thm:main_ss_eluder} Proof Sketch for Binary Actions and Square Loss.} Let \(\cA = \crl{1, 2}\), and the link function \(\phi(z) = z\) corresponding to square-loss \(\ls_\phi = \prn{v - y}^2/2\); here \(\lambda = \gamma = 1\).

Let \(\bar \cF \subseteq \crl{\cX \mapsto [-1, 1]}\) be a function class, and \(\barf^\star \in \cF\). We assume that for any context \(x\), the label \(y\) is drawn according to the distribution $\Pr(y_t = 2) = \nicefrac{1 + \barf^\star(x)}{2}$. Using \(\bar \cF\), we can define the score function class \(\cF = \crl{f_{\barf} \mid \barf \in \bar \cF}\) where \(f(x) = \tfrac{1}{2}(1 - \barf(x), 1 + \barf(x))^\top \in [0, 1]^2\), and additionally define \(f^\star = f_{\barf^\star}\). Clearly, the Bayes optimal predictor that chooses the action with the largest score is given by $\SelectAction{f^\star(x)} = 1 + \sign(\bar f^\star(x))$. Furthermore, $\Margin(\fstar(x)) \ldef{} \abs*{\Pr \prn{y = 2 \mid x} - \Pr \prn{y = 1 \mid x}} = |\bar\fstar (x)|$ which implies that $T_\epsilon = \sum_{t=1}^T \indic\crl{ \abs{\bar\fstar (x_t)} \leq \epsilon}$. Finally, the oracle in \pref{eq:main_phi_regret} reduces to a square-loss online regression oracle, which implies that with probability at least \(1 - \delta\), for all \(t \leq T\), 
\begin{align*} 
\sum_{s=1}^t Z_s \prn{\barf_s(x_s) - \bar\fstar (x_s)}^2 &\lesssim \sum_{s=1}^t Z_s \prn{\bar f_s(x_s) - y_s}^2 - \sum_{s=1}^t Z_s \prn{\bar\fstar (x_s) - y_s}^2 \lesssim \LsExpected{sq}{\bar \cF; T} + \log(\nicefrac{T}{\delta}),   \numberthis \label{eq:sketch_binary1} 
\end{align*} 
The above implies that \(\bar\fstar \) satisfies the constraints in \pref{eq:ss_main_eluder_delta} with the right choice of constants, \(\lambda\), and \(\gamma\), and thus \(\abs{\barf_t(x_t) - \bar\fstar (x_t)} \leq \Delta_t(x_t)\) (see \pref{lem:ss_eluder_ma_oracle} for proof). However, since the query condition in \pref{alg:ss_main_eluder} is $Z_t = \indic\crl{|\barf_t(x_t)| \le \Delta_t(x_t) }$, we have that if $Z_t = 0$, then \(|\barf_t(x_t)| > \Delta_t(x_t)\) which implies that \(\sign(\bar\fstar (x_t)) \neq \sign(\barf_t(x_t))\). Thus, 
\begin{align*}
\sum_{s=1}^t \bar Z_s \indic\crl{\sign(\bar\fstar (x_t)) = \sign(\barf_t(x_t))} = 0. \numberthis \label{eq:sketch_binary3a}
\end{align*}
\textit{Regret bound.} Using the fact that \(y_t \sim 1 + \Ber{\nicefrac{1 + \bar\fstar (x_t)}{2}}\), \(\wh y_t = \SelectAction{f_t(x_t)} = 1 + \sign(\barf_t(x_t))\), we have 
\begin{align*}
\RegT &=  \sum_{t=1}^T \Pr \prn*{\wh y_t  \neq y_{t}} - \Pr \prn*{\SelectAction{\fstar (x_t)} \neq y_t} \\ 
		   &\leq  \sum_{t=1}^T \indic\crl{\sign(\barf_t(x_t)) \neq \sign(\bar\fstar (x_t))} \cdot \abs*{2\Pr \prn*{y_{t} = 1} - 1} \\
		   &= \sum_{t=1}^T \indic\crl{\sign(\barf_t(x_t)) \neq \sign(\bar\fstar (x_t))} \cdot \abs{\bar\fstar  (x_t)} 
\end{align*} 
The right hand side above can be split and upper bound via the following three terms:
 \begin{align*}
\RegT &\leq \epsilon  \sum_{t=1}^T \indic\crl{\abs{\bar\fstar (x_t)} \leq \epsilon} + \sum_{t=1}^T Z_t \indic\crl{\sign(\barf_t(x_t)) \neq \sign(\bar\fstar (x_t)), \abs{\bar\fstar (x_t)} > \epsilon} \cdot \abs{\bar\fstar (x_t)} \\
&\hspace{1.5in} +  \sum_{t=1}^T \bar{Z}_t \indic\crl{\sign(\barf_t(x_t)) \neq \sign(\bar\fstar (x_t))} \cdot \abs{\bar\fstar (x_t)}. \\
& = \epsilon T_\epsilon + \underbrace{\sum_{t=1}^T Z_t \indic\crl{\sign(\barf_t(x_t)) \neq \sign(\bar\fstar (x_t)), \abs{\bar\fstar (x_t)} > \epsilon} \cdot \abs{\bar\fstar (x_t)}}_{\ldef{} \Term_A},
\end{align*} 
where the first term is \(T_\epsilon\), and the last term is zero due to \pref{eq:sketch_binary3a}. The term 
\(\Term_A\) denotes the regret for the rounds in which the learner queries for the label, and the margin for  \(\bar\fstar (x_t)\) is larger than \(\epsilon\). We note that 
\begin{align*} 
\Term_A %
&\leq \sum_{t=1}^T Z_t \indic\crl{\abs{\bar\fstar (x_t) - \barf_t(x_t)}> \epsilon} \cdot \abs{\bar\fstar (x_t) - \barf_t(x_t)} 
\end{align*} where the inequality holds because \(\abs{\bar\fstar (x_t) - \barf_t(x_t)} \geq \abs{\bar\fstar (x_t)}\) since they have opposite signs. Using the fact that \(\indic\crl{a \geq b} \leq \nicefrac{a}{b}\) for all \(a, b \geq 0\), and the bound in \pref{eq:sketch_binary1}, we get 
\begin{align*}
\Term_A \leq \frac{1}{\epsilon} \sum_{t=1}^T Z_t \prn{\bar\fstar (x_t) - \barf_t(x_t)}^2 \lesssim \frac{1}{\epsilon}\LsExpected{sq}{\cF; T} + \frac{1}{\epsilon} \log(\nicefrac{T}{\delta}),  
\end{align*}
Gathering all the terms, we get 
\begin{align*}
\RegT &= \wt \cO \prn*{ \epsilon T_\epsilon +  \frac{1}{\epsilon}\LsExpected{sq}{\cF; T} + \frac{1}{\epsilon}\log(\nicefrac{1}{\delta})}. 
\end{align*}

\textit{Query complexity.} Plugging in the query rule, and splitting as in the regret bound, we get 
\begin{align*} 
N_T &= \sum_{t=1}^T Z_t = \sum_{t=1}^T  \indic \crl{\abs{\barf_t(x_t)} \leq \Delta_t(x_t)} \\ 
&\leq \underbrace{\sum_{t=1}^T  \indic \crl{\abs{\bar\fstar (x_t)} \leq \epsilon}}_{= T_\epsilon} + \underbrace{\sum_{t=1}^T  \indic \crl{\abs{\barf_t(x_t)} \leq \Delta_t(x_t), \abs{\bar\fstar (x_t)} > \epsilon, \Delta_t(x_t) \leq \nicefrac{\epsilon}{3}}}_{\ldef{} \Term_C}  \\  
&\hspace{1.5in} + \underbrace{\sum_{t=1}^T  \indic \crl{\abs{\barf_t(x_t)} \leq \Delta_t(x_t), \abs{\bar\fstar (x_t)} > \epsilon, \Delta_t(x_t) > \nicefrac{\epsilon}{3}}}_{\ldef{} \Term_D}
\end{align*}
\(\Term_C\) denotes the rounds in which we make a query, \(\Delta_t(x_t) \leq \nicefrac{\epsilon}{3}\), and the margin for \(\bar\fstar (x_t)\) is larger than \(\epsilon\). Since \(\abs{\barf_t(x_t) - \bar\fstar (x_t)} \leq \Delta_t(x_t)\) (as shown above),  we have 
\begin{align*} 
\abs{\bar\fstar (x_t)} \leq  \abs{\barf_t(x_t) - \bar\fstar (x_t)} + \abs{\barf_t(x_t)} \leq \Delta_t(x_t) + \abs{\barf_t(x_t)}. 
\end{align*}
Thus, 
\begin{align*} 
\Term_C &\leq  \sum_{t=1}^T  \indic \crl{\abs{\bar\fstar (x_t)} \leq 2\Delta_t(x_t), \abs{\bar\fstar (x_t)} > \epsilon, \Delta_t(x_t) \leq \nicefrac{\epsilon}{3}} = 0. 
\end{align*} 

\(\Term_D\) is bounded by the number of rounds for which we make a query and \(\Delta_t(x_t) \geq \nicefrac{\epsilon}{3}\). Using the properties of eluder dimension, we get that 
\begin{align*} 
\Term_D &\leq \sum_{t=1}^T Z_{t} \indic\crl*{\Delta_{t}(x_{t}) \geq \nicefrac{\epsilon}{3}} \lesssim  \frac{1}{\epsilon^2} \LsExpected{sq}{\cF; T}   \cdot \eluderS(\cF, \nicefrac{\epsilon}{6}; \bar\fstar ) + \log(\nicefrac{1}{\delta}). 
\end{align*}

Gathering all the terms, we conclude 
\begin{align*}
N_T = \wt{\cO} \prn*{T_\epsilon + \frac{1}{\epsilon^2} \LsExpected{sq}{\cF; T}   \cdot \eluderS(\cF, \nicefrac{\epsilon}{6}; \bar\fstar ) + \frac{1}{\epsilon^2}\log(\nicefrac{1}{\delta})}.  
\end{align*}

In \pref{app:main_ss_eluder_multiple}, we provide the complete proof and show how to generalize it for multiple actions, link function \(\phi\) and corresponding regression oracles w.r.t.~\(\ls_\phi\).

\vspace{-10pt} 
\subsection{Selective Sampling in the Stochastic Setting}  \label{sec:stochastic_setting} 
So far we assumed that the contexts \(\crl{x_t}_{t \geq 0}\) could be chosen in a possibly adversarial fashion, and thus our bound on the number of label queries scales with the eluder dimension. However, it turns out that if the contexts are drawn i.i.d.~from some (unknown) distribution \(\mu\), then one can improve the query complexity to scale with the value function disagreement coefficient of \(\cF\) (defined below) which is always smaller than the eluder dimension (\pref{lem:relation}). 

\begin{definition}[Scale sensitive disagreement coefficient (normed version), \cite{foster2020instance}]
\label{def:value_disagreement} 
Let \(\cF \subseteq   \crl{\cX \mapsto \bbR^K}\). 
For any \(f^{\star} \in \cF\), and \(\beta_0, \epsilon_0 > 0\)	, the value function disagreement coefficient $\disaggr{f^{\star}}{\cF, \epsilon_0, \beta_0}$ is defined as 
\begin{align*} 
\sup_{\mu} \sup_{\beta > \beta_0, \epsilon > \epsilon_0} \crl*{\frac{\epsilon^2}{\beta^2} \cdot \Pr_{x \sim \mu} \prn*{\exists f \in \cF \mid  \nrm*{f(x) - f^{\star}(x)} > \epsilon, \nrm{f - f^{\star}}_{\mu} \leq \beta}} \vee 1 
\end{align*}
where \(\nrm{f}_{\mu} = \sqrt{\En_{x \sim \mu} \brk*{\nrm{f(x)}^2}}\). 
\end{definition}

The key idea that gives us the above improvement, of replacing the eluder dimension by disagreement coefficient in the query complexity bound, is  to use epoching for the query condition, while still using an online regression oracle to make predictions. The exact algorithm is given in \pref{alg:ss_DIS_main} in \pref{app:ss_DIS_appendix}. 

\begin{theorem}\label{thm:main_ss_DIS_multiple}
	Let \(\delta \in (0, 1)\), and consider the modeling assumptions in  \pref{eq:main_phi_model}, \pref{eq:main_phi_regret} and \pref{eq:main5_margin}. Furthermore, suppose that \(x_t\) is sampled i.i.d.~ from  \(\mu\), where \(\mu\) is a fixed distribution. Then, with probability at least \(1 - \delta\), \pref{alg:ss_DIS_main}  obtains the bounds\footnote{In the rest of the paper, the notation \(\wt{\cO}\) hides additive \(\log(1/\delta)\)-factors which, for constant \(\delta\) and in all the results, are asymptotically dominated by the other terms presented in the displayed bounds.} 
\begin{align*}
\RegT &= \wt{\cO}\prn*{ \inf_{\epsilon} \crl*{\epsilon T_\epsilon +  \frac{\gamma^2}{\lambda \epsilon} \LsExpected{\ls_\phi}{\cF; T}}}, 
\end{align*}
 while simultaneously the total number of label queries made is bounded by:  
\begin{align*}
N_T &= \wt{\cO}\prn*{ \inf_{\epsilon} \crl*{T_\epsilon  +  \frac{\gamma^2}{\lambda \epsilon^2} \cdot \LsExpected{\ls_\phi}{\cF; T} \cdot \disaggr{f^{\star}}{\cF, \nicefrac{\epsilon}{8 \gamma}, \nicefrac{\LsExpected{\ls_\phi}{\cF; T}}{T}}}}.  
\end{align*} 
\end{theorem}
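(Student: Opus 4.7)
The plan is to adapt the argument from \pref{thm:main_ss_eluder}: the regret analysis transfers essentially verbatim, while the query-complexity argument must be rebuilt to exploit the i.i.d.\ structure of the contexts, so that the disagreement coefficient takes the role played by the eluder dimension in the adversarial case.

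For the regret bound I would follow the three-term decomposition used in the sketch of \pref{thm:main_ss_eluder}. Rounds with $\Margin(\fstar(x_t)) \le \epsilon$ contribute $\epsilon T_\epsilon$. Rounds with $Z_t = 0$ contribute nothing, because the oracle regret in \pref{eq:main_phi_regret} combined with martingale concentration puts $\fstar$ into the admissible set of \pref{eq:ss_main_eluder_delta} with high probability, so $\nrm{f_t(x_t) - \fstar(x_t)} \le \Delta_t(x_t)$, and the query rule then forces $\SelectAction{f_t(x_t)} = \SelectAction{\fstar(x_t)}$ whenever $Z_t=0$. Rounds with $Z_t = 1$ and $\Margin(\fstar(x_t)) > \epsilon$ are bounded by the $\phi$-smooth / strongly convex margin-to-square-loss conversion, together with the oracle bound $\sum_s Z_s \nrm{f_s(x_s) - \fstar(x_s)}^2 \lesssim \tfrac{\gamma^2}{\lambda} \LsExpected{\ls_\phi}{\cF;T}$, giving $\tfrac{\gamma^2}{\lambda \epsilon} \LsExpected{\ls_\phi}{\cF;T}$. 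None of these three steps invokes the eluder dimension, so the regret bound of \pref{thm:main_ss_DIS_multiple} drops out directly.

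For the query complexity, I would have \pref{alg:ss_DIS_main} partition $[T]$ into geometrically growing epochs $\cE_1, \cE_2, \ldots$, and arrange the algorithm so that the admissible set used to define $\Delta_t$ inside epoch $m{+}1$ is frozen at the start of the epoch and depends only on data collected in earlier epochs. Within a single epoch, the contexts $\{x_t\}$ are then i.i.d.\ from $\mu$ and, crucially, independent of this admissible set. Bernstein-type concentration applied within each epoch converts the empirical squared-error constraint into an $L^2(\mu)$-ball: every $f$ in the admissible set at the start of epoch $m{+}1$ satisfies $\nrm*{f - \fstar}_\mu \le \beta_m$ where $\beta_m^2 \asymp \LsExpected{\ls_\phi}{\cF;T} / \abs{\cE_{\le m}}$, and $\fstar$ itself lies inside with high probability. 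The event $Z_t=1$ with $\Margin(\fstar(x_t)) > \epsilon$ then implies the existence of some $f$ in the admissible set with $\nrm{f(x_t) - \fstar(x_t)} > \epsilon/(8\gamma)$ by the same triangle-inequality / smoothness argument used in the $\Term_D$ step of the sketch. By the definition of $\disaggr{\fstar}{\cF, \epsilon/(8\gamma), \beta_m}$, the conditional probability of this event given the admissible set is at most $\disaggr{\fstar}{\cF, \epsilon/(8\gamma), \beta_m} \cdot \beta_m^2 / (\epsilon/(8\gamma))^2$. Summing per-round over $t \in \cE_{m+1}$ and telescoping the products $\abs{\cE_{m+1}} \cdot \beta_m^2$ over $m$, the geometric series collapses to $\LsExpected{\ls_\phi}{\cF;T}$; concentrating the resulting sum of Bernoulli indicators via Freedman's inequality yields the stated query bound, with the floor $\LsExpected{\ls_\phi}{\cF;T}/T$ on the disagreement-coefficient scale appearing because $\beta_m$ cannot usefully be driven below the oracle's noise floor.

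The main obstacle is the coupling between the oracle updates and the query process: the oracle trains only on queried rounds, which are \emph{not} i.i.d.\ from $\mu$, so one cannot directly apply offline-style uniform convergence to the oracle's output. Epoching is what resolves this: the admissible set used inside epoch $m{+}1$ is measurable with respect to data from epochs $\le m$, so within epoch $m{+}1$ it is fixed and the i.i.d.\ context stream can be used for population-level concentration. The delicate step is that \pref{eq:main_phi_regret} only controls the \emph{queried} empirical loss $\sum_s Z_s \nrm{f(x_s) - \fstar(x_s)}^2$; I would avoid having to lower-bound the query probability by defining the admissible set directly from this queried empirical quantity (as in \pref{eq:ss_main_eluder_delta}) and then using within-epoch concentration on the fresh i.i.d.\ sample to push this queried constraint back to $\nrm*{f - \fstar}_\mu \le \beta_m$, matching the form required by \pref{def:value_disagreement}.
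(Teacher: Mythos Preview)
Your regret argument is fine and matches the paper. The gap is in the query-complexity argument, precisely at the step you flag as delicate: you claim that within-epoch concentration pushes the queried empirical constraint back to $\nrm{f-\fstar}_\mu \le \beta_m$. It does not.

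The admissible set $\cF_e$ is defined via $\sum_{s<\tau_e} Z_s\nrm{f(x_s)-f_s(x_s)}^2 \le \SquareLossHPL$. Within each past epoch $\bar e$ the query indicator is a fixed function $Z^{\bar e}(x_s)$ of the context, so concentrating this sum controls only the \emph{query-weighted} population norm $\nrm{f-\fstar}_{\bar\nu_e}$, where $\bar\nu_e$ is the mixture over past epochs of the sub-distributions $\bar\mu_{\bar e}(x)=Z^{\bar e}(x)\mu(x)$. The fresh i.i.d.\ contexts in the current epoch cannot help here: they were not used to build $\cF_e$, and the unqueried ones carry no label information. With only $\nrm{f-\fstar}_{\bar\nu_e}\le\beta_e$ in hand, you cannot directly bound $\Pr_{x\sim\mu}\bigl(Z^e(x)=1,\ \exists f\in\cF_e:\nrm{f(x)-\fstar(x)}>\zeta\bigr)$ via the disagreement coefficient, because the norm constraint and the event probability live under different measures.

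The paper closes this with two ingredients you did not mention. First, monotonicity of version spaces: $\cF_e\subseteq\cF_{\bar e}$ for $\bar e<e$ forces $Z^e(x)\le Z^{\bar e}(x)$ pointwise, hence $\bar\mu_e\preccurlyeq\bar\mu_{\bar e}$ for every past $\bar e$ and therefore $\bar\mu_e\preccurlyeq\bar\nu_e$; this moves the query-event probability from $\bar\mu_e$ to $\bar\nu_e$, the same measure appearing in the norm bound. Second, a sub-distribution trick: since the disagreement coefficient in \pref{def:value_disagreement} is a supremum over \emph{all} distributions, one rescales the sub-probability $\bar\nu_e$ to a genuine distribution and absorbs the normalizer into the radius, yielding $\Pr_{x\sim\bar\nu_e}\bigl(\exists f:\nrm{f(x)-\fstar(x)}>\zeta,\ \nrm{f-\fstar}_{\bar\nu_e}\le\beta_e\bigr)\le(\beta_e^2/\zeta^2)\cdot\disaggr{\fstar}{\cF,\zeta,\beta_e}$. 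With these two pieces in place, your per-epoch summation and telescoping then go through exactly as you describe.
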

We note that \pref{alg:ss_DIS_main} automatically adapts to Tsybakov noise condition with respect to~\(\mu\). 

\begin{corollary}[Tsybakov noise condition, \cite{tsybakov2004optimal}] 
\label{corr:tsybakov_ss_stochastic} 
Suppose there exists constants \(c, \rho \geq 0\) s.t. $
\Pr_{x \sim \mu} \prn*{\Margin(\fstar(x)) \leq \epsilon} \leq c \epsilon^\rho$ for all $\epsilon \in (0, 1), $
and consider the same modeling assumptions as in \pref{thm:main_ss_DIS_multiple}. Then, with probability at least \(1 - \delta\), \pref{alg:ss_DIS_main}  obtains the  bound 
\begin{align*}
\RegT &= \wt{\cO}\prn*{\prn*{\LsExpected{\ls_\phi}{\cF; T}}^{\frac{\rho + 1}{\rho + 2}} \cdot (T)^{\frac{1}{\rho + 2}}},  
 \end{align*} 
 while simultaneously the total number of label queries made is bounded by:  
 \begin{align*}
N_T &= \wt{\cO}\prn*{\prn*{\LsExpected{\ls_\phi}{\cF; T} \cdot \disaggr{f^{\star}}{\cF, \nicefrac{\epsilon}{8 \gamma}, \nicefrac{\LsExpected{\ls_\phi}{\cF; T}}{T}}}^{\frac{\rho}{\rho + 2}} \cdot T^{\frac{2}{\rho + 2}}}.
\end{align*}  
where the \(\wt{\cO}(\cdot)\) notation hides poly-logarithmic factors of \(\gamma, \lambda, c, \rho\) and \(\log(T/\delta)\). 
\end{corollary}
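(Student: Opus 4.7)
\textbf{Proof proposal for Corollary~\ref{corr:tsybakov_ss_stochastic}.} The plan is a direct reduction: translate the Tsybakov condition into a bound on \(T_\epsilon\) (the quantity that appears inside the infimum in Theorem~\ref{thm:main_ss_DIS_multiple}), then optimize the resulting expression over \(\epsilon\).

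\textbf{Step 1: bound \(T_\epsilon\) using the Tsybakov condition.} Since the contexts \(x_1,\dots,x_T\) are i.i.d.~from \(\mu\), each indicator \(\indic\{\Margin(\fstar(x_t)) \le \epsilon\}\) is a Bernoulli random variable with mean at most \(c\epsilon^{\rho}\). Linearity of expectation gives \(\En[T_\epsilon] \le cT\epsilon^{\rho}\), and a standard multiplicative Chernoff (or Bernstein) bound yields, with probability at least \(1-\delta\),
\begin{align*}
T_\epsilon \;\le\; 2cT\epsilon^{\rho} + O(\log(1/\delta)),
\end{align*}
simultaneously over a discretization of \(\epsilon\) (e.g.~\(\epsilon \in \{2^{-k}\}\) up to \(k=\log T\)) via a union bound; rounding between grid points only costs constant factors since \(c\epsilon^\rho\) is polynomial in \(\epsilon\).

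\textbf{Step 2: plug into the regret bound and tune \(\epsilon\).} Substituting the bound on \(T_\epsilon\) into Theorem~\ref{thm:main_ss_DIS_multiple} yields
\begin{align*}
\RegT \;=\; \wt\cO\!\left(\inf_{\epsilon}\left\{cT\epsilon^{\rho+1} + \tfrac{\gamma^2}{\lambda\epsilon}\LsExpected{\ls_\phi}{\cF;T}\right\}\right).
\end{align*}
Balancing the two terms sets \(cT\epsilon^{\rho+2} \asymp \LsExpected{\ls_\phi}{\cF;T}\), i.e.~\(\epsilon^\star \asymp \bigl(\LsExpected{\ls_\phi}{\cF;T}/T\bigr)^{1/(\rho+2)}\) (absorbing \(c,\gamma,\lambda\) into \(\wt\cO\)). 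Plugging \(\epsilon^\star\) back in produces \(\RegT = \wt\cO\bigl(\LsExpected{\ls_\phi}{\cF;T}^{(\rho+1)/(\rho+2)} \cdot T^{1/(\rho+2)}\bigr)\), as claimed.

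\textbf{Step 3: repeat the optimization for the query complexity.} Substituting the same bound on \(T_\epsilon\) into the query bound of Theorem~\ref{thm:main_ss_DIS_multiple} gives
\begin{align*}
N_T \;=\; \wt\cO\!\left(\inf_{\epsilon}\left\{cT\epsilon^{\rho} + \tfrac{\gamma^2}{\lambda\epsilon^{2}}\LsExpected{\ls_\phi}{\cF;T}\cdot \disaggr{\fstar}{\cF,\tfrac{\epsilon}{8\gamma},\tfrac{\LsExpected{\ls_\phi}{\cF;T}}{T}}\right\}\right).
\end{align*}
Treating the disagreement coefficient \(\theta \vcentcolon= \disaggr{\fstar}{\cF,\epsilon/(8\gamma),\LsExpected{\ls_\phi}{\cF;T}/T}\) as essentially independent of \(\epsilon\) up to log factors (it is monotone and only changes with \(\epsilon\) through the floor \(\epsilon_0\); this is standard in the disagreement-coefficient literature), balancing the two terms gives \(\epsilon^\star \asymp \bigl(\LsExpected{\ls_\phi}{\cF;T}\,\theta/T\bigr)^{1/(\rho+2)}\), and plugging in yields \(N_T = \wt\cO\bigl((\LsExpected{\ls_\phi}{\cF;T}\,\theta)^{\rho/(\rho+2)}\cdot T^{2/(\rho+2)}\bigr)\).

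\textbf{Main obstacle.} The only subtle point is handling the \(\epsilon\)-dependence of the disagreement coefficient when tuning \(\epsilon^\star\): one needs to argue that evaluating \(\theta\) at the optimized \(\epsilon^\star\) is consistent, which is done by the standard trick of discretizing \(\epsilon\) on a dyadic grid and absorbing the monotonicity of \(\theta(\cdot)\) into the \(\wt\cO\) notation. Everything else is a routine AM-GM-style optimization plus a Chernoff bound, so modulo this bookkeeping the corollary follows immediately from Theorem~\ref{thm:main_ss_DIS_multiple}.
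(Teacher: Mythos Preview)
Your proposal is correct and follows essentially the same route as the paper: bound \(T_\epsilon\) by concentration from the Tsybakov condition, substitute into Theorem~\ref{thm:main_ss_DIS_multiple}, and balance the two resulting terms to pick \(\epsilon^\star\). The paper uses its martingale-style Lemma~\ref{lem:positive_self_bounded} for a single fixed \(\epsilon\) (no dyadic union bound is needed since \(\epsilon^\star\) is chosen before looking at the data), and it simply leaves the \(\epsilon\)-dependence of the disagreement coefficient unresolved in the final statement rather than handling it via discretization---so your ``main obstacle'' discussion is, if anything, slightly more careful than what the paper does.
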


A detailed comparison of our results with the relevant prior works is given in \pref{app:ss_proof}. 

\subsection{Lower Bounds (Binary Action Case)}  \label{sec:lower_bounds}
We supplement the above upper bound with a lower bound in terms of the star number of \(\cF\) (defined below). The star number is bounded from above by the  eluder dimension which appears in our upper bounds (\pref{lem:relation}). While star number may not be lower bounded by eluder dimension in general, for many commonly considered classes, star number is of the same order as the eluder dimension \citep{foster2020instance}. For the sake of a clean presentation, we restrict our lower bound to the binary actions case, although one can easily extend the lower bound to the multiple actions case.

\neurIPS{\begin{definition}[scale-sensitive star number] 
\label{def:star_number}
For any $\zeta \in (0,1)$ and $\beta \in (0,\zeta/2)$, define $\starno(\cF,\zeta,\beta)$ as the largest $m$ such that there exists target function $f^\star \in \cF$ and sequence $x_1,\ldots,x_m \in \cX$ s.t.  $\forall i \in [m]$, $|f^\star(x_i)| > \zeta$, $\exists f_i \in \cF$ s.t., 
$$ 
\mathbf{(1)} \sum_{j \ne i} (f_i(x_j) - f^\star(x_j))^2 <  \beta^2  \quad \mathbf{(2)}\   |f_i(x_i)| > \zeta/2 \textrm{ and } f_i(x_i) f^\star(x_i) < 0 \quad  \mathbf{(3)}\  |f_i(x_i) - f^\star(x_i)| \le 2 \zeta 
$$
\end{definition}}

\arxiv{\begin{definition}[scale-sensitive star number (scalar version; weak variant)] 
\label{def:star_number}
For any $\zeta \in (0,1)$ and $\beta \in (0,\zeta/2)$, define $\starno(\cF,\zeta,\beta)$ as the largest $m$ for which there exists a target function $f^\star \in \cF$ and sequence $x_1,\ldots,x_m \in \cX$ such that for all $i \in [m]$, $|f^\star(x_i)| > \zeta$ and there exists some $f_i \in \cF$ such that all of the following hold: 
\begin{enumerate}[label=\(\mathbf{(\arabic*)}\)] 
    \item 
$ \sum_{j \ne i} (f_i(x_j) - f^\star(x_j))^2 <  \beta^2$
\item $\   |f_i(x_i)| > \zeta/2 \textrm{ and } f_i(x_i) f^\star(x_i) < 0$ 
\item $\  |f_i(x_i) - f^\star(x_i)| \le 2 \zeta
$ 
\end{enumerate}
\end{definition}} 

The below theorem provides a lower bound on number of queries, in terms of star number for any algorithm that guarantees a non-trivial regret bound.

\begin{theorem}\label{thm:lower}
Given a function class $\cF$ and some desired margin $\zeta >0$, define $\beta \in (0, \zeta/2)$ be the largest number such that 
$\beta^2  \le \min\{\zeta^2/\starno(\cF,\zeta,\beta), \zeta^2/16\}$. 
Then, for any algorithm that guarantees regret bound of $\mathbb{E}[\mathrm{Reg}_T] \le 64 \frac{\zeta T}{\starno(\cF,\zeta,\beta)}$ on all instances with margin $\zeta/2$,  there exists a distribution $\mu$ over $\cX$ and a target function $f^\star \in \cF$ with margin\footnote{For the binary actions case where \(\cA = \crl{1, 2}\), we note that \(\Margin(f(x)) = \abs{\Pr(y = 2 \mid f(x)) - \Pr(y = 1 \mid f(x))} = \abs{f(x)}\).} $\zeta$ such that the number of queries \(N_T\) made by the algorithm on that instance in \(T\) rounds of interaction satisfy 
$$
\mathbb{E}[N_T] = \Omega \prn*{\frac{ \starno(\cF,\zeta,\beta)}{40 \zeta^2}}. 
$$
\end{theorem}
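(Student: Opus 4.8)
The plan is a change-of-measure argument --- essentially a two-point (Le~Cam) test run in parallel at many contexts. Write $m = \starno(\cF,\zeta,\beta)$ and let $\fstar \in \cF$, contexts $x_1,\dots,x_m$, and ``spokes'' $f_1,\dots,f_m \in \cF$ be the witnesses of \pref{def:star_number}. Fix a sufficiently large absolute constant $C$, put $m' = \lceil m/C\rceil$, and take the hard instance to be $(\mu,\fstar)$ with $\mu = \mathrm{Unif}(\crl{x_1,\dots,x_{m'}})$ (one may instead present these contexts in a deterministic round-robin, which removes the Chernoff step below). Since $\abs{\fstar(x_i)} > \zeta$ for every $i$, this instance has margin $\zeta$. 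The three conditions of \pref{def:star_number} give, for each $i$: $\abs{\fstar(x_i)},\abs{f_i(x_i)} = \Theta(\zeta)$ with opposite signs (from $\mathbf{(2)}$ and $\mathbf{(3)}$), and $\abs{f_i(x_j) - \fstar(x_j)} < \beta < \zeta/2$ for $j \ne i$, hence $\abs{f_i(x_j)} > \abs{\fstar(x_j)} - \beta > \zeta/2$ (from $\mathbf{(1)}$ and $\abs{\fstar(x_j)} > \zeta$). Thus each alternative instance $(\mu, f_i)$ also has margin $\ge \zeta/2$, so the assumed bound $\E[\RegT] \le 64\zeta T/m$ holds on $(\mu,\fstar)$ and on every $(\mu, f_i)$. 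Write $\mathbb{P}_{\fstar},\mathbb{P}_{f_i}$ and $\E_{\fstar},\E_{f_i}$ for the transcript laws and expectations on $(\mu,\fstar)$ and $(\mu,f_i)$, and $N_T^{(j)} = \sum_{t: x_t = x_j} Z_t$ for the queries spent on $x_j$, so $\sum_{j \le m'} N_T^{(j)} = N_T$.

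\emph{Step 1 (regret bound $\Rightarrow$ control of predictions at each $x_i$).} In the binary case the round-$t$ expected regret (over the label noise) on an instance with target $f$ equals $\abs{f(x_t)}\indic\crl{\wh y_t \ne \SelectAction{f(x_t)}}$. On $(\mu,\fstar)$, summing and using $\abs{\fstar(x_i)} > \zeta$ gives $\zeta \sum_{i \le m'}\E_{\fstar}\brk{\#\crl{\text{rounds at }x_i\text{ with }\wh y_t \ne s_i^{\star}}} \le \E_{\fstar}[\RegT] \le 64\zeta T/m$, where $s_i^{\star} := \SelectAction{\fstar(x_i)}$. With $T_0 := T/m'$ (the expected number of visits to each $x_i$) we have $T/m \le T_0/C$, so for \emph{every} $i \le m'$ the quantity $\E_{\fstar}[\#\{\text{rounds at }x_i\text{ with }\wh y_t \ne s_i^{\star}\}]$ is an $O(1/C)$-fraction of $T_0$; Markov together with a Chernoff bound on $\#\crl{t: x_t = x_i} \sim \mathrm{Bin}(T, 1/m')$ gives $\Pr_{\fstar}(E_i) \ge 5/6$ for the transcript event $E_i := \crl{\text{the learner plays }s_i^{\star}\text{ on a strict majority of the rounds with }x_t = x_i}$. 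The identical computation on $(\mu, f_i)$ --- where the Bayes action at $x_i$ is $-s_i^{\star}$ while at $x_j$, $j \ne i$, it is still $s_j^{\star}$, and $\abs{f_i(x_i)} > \zeta/2$ --- gives $\Pr_{f_i}(E_i) \le 1/3$. Hence $\mathrm{TV}(\mathbb{P}_{\fstar},\mathbb{P}_{f_i}) \ge \Pr_{\fstar}(E_i) - \Pr_{f_i}(E_i) \ge 1/2$ for every $i \le m'$, and Pinsker's inequality ($\KL{\cdot}{\cdot} \ge 2\,\mathrm{TV}^2$) gives $\KL{\mathbb{P}_{\fstar}}{\mathbb{P}_{f_i}} \ge 1/2$.

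\emph{Step 2 (KL $\Rightarrow$ queries).} The instances $(\mu,\fstar)$ and $(\mu, f_i)$ share the context law and differ only in the label distribution, which is revealed only when $Z_t = 1$; the chain rule for KL divergence then gives $\KL{\mathbb{P}_{\fstar}}{\mathbb{P}_{f_i}} = \sum_{j \le m'}\E_{\fstar}[N_T^{(j)}]\cdot\kl{\Ber{\tfrac{1+\fstar(x_j)}{2}}}{\Ber{\tfrac{1+f_i(x_j)}{2}}}$. Using $\kl{\Ber{q}}{\Ber{p}} \le (q-p)^2/(p(1-p))$, condition $\mathbf{(3)}$, and the uniform boundedness of $\cF$, the $j = i$ term is $\lesssim \zeta^2\,\E_{\fstar}[N_T^{(i)}]$ --- $\mathbf{(3)}$ is essential here, since it caps the per-query information at $x_i$ at $O(\zeta^2)$ --- while by $\mathbf{(1)}$ the $j \ne i$ terms sum to $\lesssim \beta^2\,\E_{\fstar}[N_T]$. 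Combined with Step~1 this gives, for every $i \le m'$, $1 \lesssim \zeta^2\,\E_{\fstar}[N_T^{(i)}] + \beta^2\,\E_{\fstar}[N_T]$. Summing over $i \le m'$, and invoking $\sum_{j \le m'} N_T^{(j)} = N_T$ together with the theorem's hypothesis $\beta^2 \le \zeta^2/\starno(\cF,\zeta,\beta)$ (so $\beta^2 m' \le \zeta^2$), yields $m' \lesssim \zeta^2\,\E_{\fstar}[N_T] + \beta^2 m'\,\E_{\fstar}[N_T] \lesssim \zeta^2\,\E_{\fstar}[N_T]$, i.e.\ $\E_{\fstar}[N_T] = \Omega(m'/\zeta^2) = \Omega(\starno(\cF,\zeta,\beta)/\zeta^2)$; tracking the constants above gives the stated bound.

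\emph{Main obstacle.} The subtle point --- and the reason a literal two-point argument fails --- is that the allowed budget $64\zeta T/m$ is much larger than the $O(\zeta T/m)$ regret any single context can generate, so it does not by itself constrain the learner at any individual $x_i$. Shrinking the support of the hard instance to a constant fraction $m' = \Theta(m)$ of the star points raises the per-context visit count from $T/m$ to $T_0 = T/m'$, which makes the same budget ``bite'' context-by-context at the cost of only a constant factor. The remaining care is in the KL bookkeeping of Step~2: the $O(\zeta^2)$-per-query term at $x_i$ --- controlled by condition $\mathbf{(3)}$ --- is exactly what forces $\Omega(1/\zeta^2)$ queries there, while the ``leakage'' from queries spent on the other contexts is tamed precisely by the hypothesis $\beta^2 \le \zeta^2/\starno(\cF,\zeta,\beta)$. (The Chernoff step needs $T = \Omega(m)$, which is automatic whenever the asserted bound is non-vacuous since $N_T \le T$, and is avoided altogether by a deterministic round-robin context sequence.)
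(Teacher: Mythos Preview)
Your proof is correct. Both your argument and the paper's are information-theoretic lower bounds that ultimately reduce to showing that the average of $\KL{\mathbb{P}_{f^\star}}{\mathbb{P}_{f_i}}$ over the star-number indices is bounded below by an absolute constant, and both then decompose this KL via the chain rule into per-context contributions controlled by conditions $\mathbf{(1)}$ and $\mathbf{(3)}$ of \pref{def:star_number} in exactly the way your Step~2 describes. The route to the constant KL lower bound differs. The paper argues that the regret assumption forces the learner's time-averaged predictions under each alternative instance $D_i$ to be $L_1(\mu)$-close to $\sign(f_i)$, so that the algorithm can be used to identify the index $i$, and then invokes Fano's inequality with the reference measure $D_0$ at $f^\star$. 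You instead run $m'$ separate two-point (Le~Cam) tests and lower-bound each individual $\KL{\mathbb{P}_{f^\star}}{\mathbb{P}_{f_i}}$ via Pinsker, then sum. The device that makes your per-index test go through is the subsetting to $m' = \lceil m/C\rceil$ contexts: the global regret budget $64\zeta T/m$ --- being an upper bound on a sum of non-negative per-context terms --- is then automatically at most an $O(1/C)$-fraction of $T_0 = T/m'$ for \emph{every} $i$, which is exactly what the Markov-plus-Chernoff step needs. This costs an absolute constant $C$ in the final bound but avoids Fano entirely. Both routes are valid; yours is a touch more elementary, the paper's is slightly tighter in the constants and uses all $m$ star points directly without subsetting.
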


The above lower bound demonstrates that for any algorithm that has a sublinear regret guarantee, a dependence on an additional complexity measure like the star number (or the eluder dimension) is unavoidable in the number of queries in the worst case.  This suggests that our upper bound cannot be further improved beyond the discrepancy between the star number and eluder dimension. The following corrolary illustrates the above lower bound. 

\begin{corollary}\label{corr:lower}
There exists a class $\cF$ with $|\cF| = \sqrt{T}$, and \(\starno(\cF,\zeta,\beta) = O(\sqrt{T})\) for any \(\beta = O(1)\) and \(\zeta = O(1)\), such that any algorithm that makes less than $\sqrt{T}$ number of label queries, will have a regret of at least $\mathbb{E}[\mathrm{Reg}_T] \ge \sqrt{T}$ on some instance with margin \(\zeta\). 
\end{corollary}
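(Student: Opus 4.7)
\textbf{Proof plan for Corollary~\ref{corr:lower}.}

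The plan is to exhibit a simple ``needle-in-a-haystack'' function class and then invoke Theorem~\ref{thm:lower} essentially as a black box. Concretely, I would take the context space to be $\cX = \{x_1, \dots, x_m\}$ with $m = \sqrt{T}$, and build $\cF = \{f^\star, f_1, \dots, f_m\}$ where $f^\star(x_i) = \zeta$ for all $i$, and for each $i$, $f_i(x_i) = -\zeta$ while $f_i(x_j) = \zeta$ for $j \neq i$. Then $|\cF| = \sqrt{T} + 1$, so of order $\sqrt{T}$, and every $f \in \cF$ has margin at least $\zeta$ at every context.

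Next I would verify the three conditions in Definition~\ref{def:star_number} for this construction, with the witness sequence being $x_1, \dots, x_m$ and the witness functions being $f_1, \dots, f_m$. Condition $\mathbf{(1)}$ is immediate since $f_i$ agrees with $f^\star$ away from $x_i$, giving $\sum_{j\neq i}(f_i(x_j) - f^\star(x_j))^2 = 0 < \beta^2$ for any $\beta > 0$. Condition $\mathbf{(2)}$ holds because $|f_i(x_i)| = \zeta > \zeta/2$ and $f_i(x_i) f^\star(x_i) = -\zeta^2 < 0$. Condition $\mathbf{(3)}$ holds since $|f_i(x_i) - f^\star(x_i)| = 2\zeta$. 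This shows $\starno(\cF, \zeta, \beta) \geq m = \sqrt{T}$, and the opposite direction $\starno(\cF, \zeta, \beta) \leq |\cF| - 1 = \sqrt{T}$ is standard, so $\starno(\cF, \zeta, \beta) = \sqrt{T}$ for every $\beta > 0$ and constant $\zeta$.

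With the star number in hand, I would pick $\beta$ as prescribed in Theorem~\ref{thm:lower}: the constraint $\beta^2 \leq \min\{\zeta^2/\starno, \zeta^2/16\}$ is satisfied by taking $\beta = \zeta / T^{1/4}$ (up to constants) for $\zeta$ constant and $T$ large enough. Theorem~\ref{thm:lower} then yields: any algorithm whose regret is at most $64 \zeta T / \starno = 64\zeta\sqrt{T}$ on every instance with margin $\zeta/2$ must, on some instance with margin $\zeta$, issue at least $\Omega(\starno/\zeta^2) = \Omega(\sqrt{T}/\zeta^2)$ queries.

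Finally, I would take the contrapositive: if an algorithm makes strictly fewer than $\sqrt{T}$ queries on every instance (and thus in particular fewer than the $\Omega(\sqrt{T})$ threshold from the theorem, for an appropriate choice of the hidden constant in $\Omega(\cdot)$), then there must exist some instance with margin $\zeta$ on which $\mathbb{E}[\mathrm{Reg}_T] \geq 64\zeta\sqrt{T} = \Omega(\sqrt{T})$, as claimed. The only subtlety in the whole argument is aligning the constants between the query threshold $\sqrt{T}$ in the corollary statement and the $\Omega(\starno/\zeta^2)$ bound from Theorem~\ref{thm:lower}; this is a matter of absorbing $\zeta = \Theta(1)$ into the constants, and does not require any new ideas beyond the construction above.
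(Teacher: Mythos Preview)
Your proposal is correct and follows essentially the same approach as the paper: both construct a needle-in-a-haystack class on $\sqrt{T}$ contexts where each $f_i$ differs from $f^\star$ only at $x_i$ (the paper centers values at $1/2\pm\zeta$, you at $\pm\zeta$, which is cosmetic), verify the star number is $\Theta(\sqrt{T})$, and then invoke the contrapositive of Theorem~\ref{thm:lower}. Your write-up is in fact more careful than the paper's in checking the three conditions of Definition~\ref{def:star_number}; one tiny point is that the definition asks for $|f^\star(x_i)| > \zeta$ strictly, so you should take $f^\star(x_i)$ slightly larger than $\zeta$ (or equivalently evaluate $\starno$ at a slightly smaller margin parameter), but this changes nothing of substance.
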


\subsection{Extension: Learning with Bandit Feedback} \label{sec:main_bandit_feedback}
We next consider the problem of selective sampling when the learner only receives bandit feedback. In particular, on every query, instead of getting the noisy-expert action, the learner only receives a binary feedback on whether the chosen action \(\wh y_t\) matches the action \(y_t\) of the noisy expert. For this bandit feedback, we propose an algorithm for selective sampling based on the Inverse Gap Weighting  (IGW) scheme of  \cite{foster2020beyond}, that enjoys margin-dependent bounds. The algorithm is provided in \pref{alg:ss_main_bandit}. 

 For simplicity, we restrict ourselves to the square loss setting, i.e.~\(\phi(z) = z\), for which \(\gamma = \lambda = 1\).  At each round, the learner first builds a version space (\pref{line:version-space}) containing all functions that are close to the oracle prediction $f_t$ on the data observed so far; for all \(t \leq T\), the expert model $f^\star\in\+F_t$ with high probability. The learner then constructs the candidate set $\+A_t$ of optimal actions corresponding to \(\+F_t\). When $|\+A_t|>1$, the learner is not sure of the optimal action, and thus the learner makes a query. The width $w_t$ essentially means the largest possible regret that we will suffer if we choose any option in $\+A_t$. The query strategy used in \pref{alg:ss_main_bandit} consists of two cases: when the estimated cumulative regret has not exceeded an $O(\sqrt{T})$ threshold (meaning that $\xi_t=0$), the learner samples $\wh y_t \sim \mathrm{Uniform}(\+A_t)$; otherwise, the learner samples $\wh y_t$ according to the Inverse Gap Weighting (IGW) distribution \(p_t = \IGW_{\igwcoef_t}(f_t(x_t))\) that is given by: 
\begin{align*}
p_t[y] = \begin{cases}
	\frac{1}{K + \igwcoef_t \prn*{f_t(x_t)[y_t^\star] - f_t(x_t)[y]}} & \text{if}\qquad y \neq y_t^\star \\ 
	1 - \sum_{y \neq y_t^\star} p_t[y] &\text{if}\qquad y = y_t^\star
\end{cases},  \numberthis \label{eq:IGW_main}
\end{align*}  
where \(y_t^\star = \argmax f_t(x_t)[y]\). Finally, the learner feeds the query feedback into the online regression oracle, obtains a new prediction $f_{t+1}$, and proceeds to the next round. We assume that the sequence of functions \(\crl{f_t}_{t \leq T}\) generated by the square loss regression oracle satisfies the following regret guarantee for all \(t \leq T\): 
 \begin{align*}
\sum_{s=1}^t Z_s (f_s(x_s)[\wh y_s] - \bbQ_s)^2 - \inf_{f \in \cF} \sum_{s=1}^t Z_s (f(x_s)[\wh y_s] - \bbQ_s)^2 \leq \LsExpected{sq}{\cF; T}.  \numberthis \label{eq:bandit_oracle}
\end{align*}

\begin{algorithm}[h!] 
	\caption{Selective Sampling with Bandit Feedback (\SAGE-\(\mathsf{Bandit}\))}  
	\label{alg:ss_main_bandit}  
	\begin{algorithmic}[1]
	\Require Parameters \(\delta, \gamma, \lambda\), $T$, function class \(\cF\), and online square loss regression oracle \oracle.  
	\State Define \(\SquareLossHP  = 2 \LsExpected{sq}{\cF; T} + 8 \log(T/\delta).\)  
	\State Learner computes \(f_1 \leftarrow \oracle_1(\emptyset)\). 
	\For {$t = 1$ to \(T\)} 
	\State Nature chooses $x_t$.
	\State Learner computes the version space:\label{line:version-space}
			   \begin{align}\label{eq:ss_main_bandit_delta}
			   \+F_t\gets
				   \Bigg\{f\in\+F:\sum_{s=1}^{t-1} Z_s\Big(f(x_s)[\wh y_t]-f_t(x_s)[\wh y_t]\Big)^2\leq \SquareLossHP\Bigg\}.
			   \end{align} 
	\State Learner computes the candidate action set: $\+A_t\gets\{\SelectAction{f(x_t)} \mid{} \forall f\in\+F_t\}$. \label{line:candidate} 
	\State Learner computes the width: $w_t\gets\sup_{y\in\+A_t} \sup_{f,f'\in\+F_t} f(x_t)[y]-f'(x_t)[y]$.
	\State\label{line:ss_main_eluder_query}Learner decides whether to query: $Z_{t} \gets \indic\crl{|\+A_t|>1}$. 	
	\If{$Z_t=1$}{} 
	
		\State Learner computes $\xi_t\gets\indic\left\{\sum_{s=1}^{t-1} Z_s w_s \geq \sqrt{KT/\SquareLossHP}\right\}$
		\If{$\xi_t=0$}
			\State Learner sets $p_t = \mathrm{Uniform}(\+A_t)$. 
		\Else
			\State Learner sets \(p_t \gets \IGW_{\igwcoef_t}(f_t(x_t))\) where  $\igwcoef_t\gets\sqrt{KT/\SquareLossHP}$. \label{line:bandit-igw} 
		\EndIf
		\State Learner plays the action \(\wh y_t \sim p_t\) and queries for bandit feedback \(\bbQ_t = \indic\crl{\wh y_t =y_t}\). %
		\State \(f_{t+1} \leftarrow \oracle_t(\crl{(x_t, \wh y_t), \bbQ_t})\).
	\Else{}
	
	\State Learner plays the action $\wh y_t = \SelectAction{f_t(x_t)}$.
	\State \(f_{t+1} \leftarrow f_t\). 
	\EndIf 
	\EndFor
	\end{algorithmic}
	\end{algorithm} 

The following theorem establishes both worst-case and instance-dependent upper bounds for regret and query complexity for \pref{alg:ss_main_bandit}.The proof is deferred to  \pref{app:main_ss_eluder_bandit}.

\begin{theorem} 
\label{thm:main_ss_eluder_bandit} 
Let \(\delta \in (0, 1)\), and consider the modeling assumptions in  \pref{eq:main_phi_model}, \pref{eq:main_phi_regret} and \pref{eq:main5_margin} with the link function \(\phi(z) = z\) (corresponding to \(\ls_\phi\) being the square loss). Furthermore, assume that the learner only gets bandit feedback \(\bbQ_t = \indic\crl{\wh y_t = y_t}\). Then, \pref{alg:ss_main_bandit} has the regret bounded by:  
\begin{align*} 
\RegT &= \wt{\cO}\prn*{\min\crl*{\sqrt{KT\LsExpected{sq}{\cF; T}}, \inf_\epsilon  \crl*{\ T_\epsilon\LsExpected{sq}{\cF; T}+ K\big(\LsExpected{sq}{\cF; T}\big)^2\cdot\frac{\BeluderS\left(\+F,\epsilon/4;f^\star\right)}{\epsilon}}}}, 
\end{align*} 

while simultaneously the total number of label queries made is bounded by: 
\begin{align*}
N_T &= \wt{\cO}\prn*{\min\crl*{T, \inf_\epsilon  \crl*{ \ T^2_\epsilon\LsExpected{sq}{\cF; T}/K + K \big(\LsExpected{sq}{\cF; T}\big)^3\cdot \frac{\BeluderS^2\left(\+F,\epsilon/4;f^\star\right)}{\epsilon^2}}}}, \numberthis \label{eq:ss_eluder_bandit4}
\end{align*} 
where \(\BeluderS\) denotes the bivariate version of scale-sensitive eluder dimension given in \pref{def:beluder_dimension}. \end{theorem}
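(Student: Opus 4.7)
My plan is to carry over the high-level structure of the proof sketch of \pref{thm:main_ss_eluder} while replacing the full-information oracle by the bandit square-loss regression oracle of \pref{eq:bandit_oracle}, and replacing the deterministic prediction by the Inverse Gap Weighting distribution of \cite{foster2020beyond}. The two-valued indicator $\xi_t$ and the threshold $\sqrt{KT/\SquareLossHP}$ in \pref{alg:ss_main_bandit} are precisely designed so that whichever of the two regret bounds is tight dominates, which is what produces the $\min$ form of the stated bound.

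\textbf{Step 1 (version space contains $\fstar$).} Combining \pref{eq:bandit_oracle} with a Freedman-type martingale inequality applied to the centered increments $(f_t(x_s)[\wh y_s]-\bbQ_s)^2-(\fstar(x_s)[\wh y_s]-\bbQ_s)^2$, and using $\E[\bbQ_s\mid x_s,\wh y_s]=\fstar(x_s)[\wh y_s]$, I would show that with probability at least $1-\delta$,
\begin{align*}
\sum_{s=1}^{t-1} Z_s\prn*{f_t(x_s)[\wh y_s]-\fstar(x_s)[\wh y_s]}^2 \le \SquareLossHP\qquad \text{for all }t\le T.
\end{align*}
Hence $\fstar\in\+F_t$ and $\SelectAction{\fstar(x_t)}\in\+A_t$, so when $Z_t=0$ the uniqueness of $\+A_t$ forces $\wh y_t=\SelectAction{\fstar(x_t)}$ and such a round contributes zero regret and no queries.

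\textbf{Step 2 (regret decomposition on queried rounds).} On queried rounds I split by $\xi_t$. When $\xi_t=0$, uniform sampling over $\+A_t$ gives instantaneous regret $\le w_t$, and by construction the cumulative contribution from the $\xi_t=0$ rounds is at most $\sqrt{KT/\SquareLossHP}$ up to a single-round overshoot. When $\xi_t=1$, the IGW lemma of \cite{foster2020beyond} yields the pointwise inequality
\begin{align*}
\E_{\wh y_t\sim p_t}\brk*{\fstar(x_t)[\SelectAction{\fstar(x_t)}]-\fstar(x_t)[\wh y_t]}\le\frac{K}{\igwcoef_t}+\igwcoef_t\cdot\E_{p_t}\brk*{\prn*{f_t(x_t)[\wh y_t]-\fstar(x_t)[\wh y_t]}^2}.
\end{align*}
Summing over $t$, using Step~1 to control the on-policy estimation error, and plugging $\igwcoef_t=\sqrt{KT/\SquareLossHP}$ delivers the worst-case bound $\wt{\cO}(\sqrt{KT\cdot\LsExpected{sq}{\cF;T}})$.

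\textbf{Step 3 (margin plus bivariate eluder).} For the instance-dependent branch I further partition queried rounds by whether $\Margin(\fstar(x_t))\le\epsilon$. Low-margin queried rounds number at most $T_\epsilon$ and each contributes at most $O(\LsExpected{sq}{\cF;T})$ to the regret via the IGW estimate in Step~2. On a high-margin queried round, $|\+A_t|>1$ forces the existence of $f\in\+F_t$ with $\nrm*{f(x_t)-\fstar(x_t)}\ge\epsilon/4$ on the action-coordinate that witnesses the disagreement; applying \pref{def:beluder_dimension} to the sequence of such bivariate points $(x_t,\wh y_t)$ together with the in-sample bound of Step~1 caps their count by $O(\LsExpected{sq}{\cF;T}\cdot\BeluderS(\cF,\epsilon/4;\fstar)/\epsilon^2)$. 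Combining the two pieces yields the stated instance-dependent regret bound. The query-complexity bound \pref{eq:ss_eluder_bandit4} follows by the analogous decomposition of $\sum_t Z_t$; the squaring (both $T_\epsilon^2$ and $\BeluderS^2$) arises because the total query count itself enters the per-query estimation-error budget after a Cauchy--Schwarz step that converts the cumulative squared error of Step~1 into a count of rounds on which the width is large.

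\textbf{Main obstacle.} The principal difficulty is aligning the $\xi_t=0$ regime (which buys the $\sqrt{KT}$ worst-case guarantee) with the $\xi_t=1$ regime (which buys the instance-dependent bound) through the single threshold on $\sum_s Z_s w_s$: one must verify that regardless of which branch is active, the complementary summand can be absorbed, so that the algorithm simultaneously enjoys the $\min$ of both guarantees rather than committing to one a priori. A second, more technical obstacle is that the version space in \pref{eq:ss_main_bandit_delta} is defined coordinate-wise at $\wh y_t$, so the eluder-dimension step of \pref{thm:main_ss_eluder} must be lifted to the bivariate object $\BeluderS$ over pairs $(x_s,\wh y_s)$, rather than the scalar $\eluderS$ used for the full-information analysis.
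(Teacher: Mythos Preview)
Your outline matches the paper's architecture: Step~1 is exactly the paper's concentration lemma showing $\fstar\in\cF_t$, and Step~2 is the worst-case analysis splitting at the first time $\xi_t$ flips to~$1$. You also correctly identify the central difficulty as reconciling the two regimes through the single threshold $\sqrt{KT/\SquareLossHP}$. Two places where your mechanisms differ from what the proof actually requires.

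First, the $\min$ and the squaring in the query bound do not come from a Cauchy--Schwarz step. The paper argues by a dichotomy: either the instance-dependent quantity $T_\epsilon + cK\SquareLossHP\,\BeluderS(\cF,\epsilon/4;\fstar)/\epsilon$ is below $\sqrt{KT/\SquareLossHP}$, in which case one shows $\xi_t=0$ for all $t$ (because $\sum_s Z_s w_s$ is itself bounded by that quantity), so $\RegT\le 2\sum_t Z_t w_t$; or the inequality fails, in which case the worst-case bound $O(\sqrt{KT\SquareLossHP})$ is at most $\SquareLossHP$ times the instance-dependent quantity. The squares in \pref{eq:ss_eluder_bandit4} arise by \emph{squaring the failed threshold inequality} in the second case and invoking the trivial $N_T\le T$; this is also why the $T_\epsilon$ term picks up the extra $\LsExpected{sq}{\cF;T}$ factor, not because each low-margin round individually costs $O(\LsExpected{sq}{\cF;T})$.

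Second, the key lemma is the bound on $\sum_t Z_t w_t$, and your Step~3 does not quite deliver it. Counting high-margin queried rounds via the eluder dimension gives a count $O(\SquareLossHP\cdot\BeluderS/\epsilon^2)$, but $w_t$ on such rounds can be as large as $1$, so this yields $1/\epsilon^2$ in the regret rather than the claimed $1/\epsilon$. The paper instead bins $w_t$ into level sets $[\epsilon/2+j/m,\,\epsilon/2+(j+1)/m]$, applies the eluder bound separately at each level, and sums the resulting harmonic-type series to recover the $1/\epsilon$ rate. A related subtlety you note but do not resolve: $w_t$ is a supremum over $y\in\cA_t$, whereas the version space constraint and $\BeluderS$ live on the realized coordinate $\wh y_t$. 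The paper bridges this by upper-bounding the sup by a sum over $\cA_t$, passing to expectation under the uniform $p_t$ (incurring the factor $K$), and then using martingale concentration to return to the empirical $\wh y_t$ before invoking \pref{def:beluder_dimension}.
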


Note that contrary to \pref{thm:main_ss_eluder}, the above result has a dependence on the eluder dimension in both the regret and the query complexity in the respective instance-dependent bounds. Thus, when the eluder dimension is unbounded we default to the standard worst-case regret bounds that are common in the contextual bandits literature \citep{lattimore2020bandit}. However, when the eluder dimension is finite, and \(T_\epsilon\)  is small for some value of \(\epsilon\) (e.g.~when there is a hard margin for some value of \(\epsilon\)) then the instance-dependent regret bound can be significantly smaller, and thus more favorable. We remark that similar best-of-both-worlds style regret bounds are also well known in the literature for the simpler multi-armed bandits problem \citep{lattimore2020bandit}, however the prior works in that direction do no focus on the query complexity. 

Finally, note that the bandit feedback model that we considered in this section is an instance of the general contextual bandits problem considered in the prior works \citep{langford2007epoch,  foster2020instance}. In particular, setting the loss \(\ls(x_t, \wh y_t) = \indic\crl{\wh y_t = y_t}\) in the general contextual bandits problem recovers our setting as a special case. However, our  \pref{alg:ss_main_bandit}  and the corresponding bounds in \pref{thm:main_ss_eluder_bandit} can  be easily extended to work for the more general contextual bandit problem without requiring any modifications to the analysis. Instance-dependent regret bounds for contextual bandits were first explored in \citet{foster2020instance}, however, there are a few major differences: firstly, our algorithm is designed for adversarial contexts and does not rely on epoching which makes our algorithm easier to implement and analyze. Secondly, we build our algorithm on an online regression oracles w.r.t.~\(\cF\) whereas  \citet{foster2020instance} rely on access to supervised learning oracles w.r.t.~\(\cF\). 
 
\subsubsection{Benefits from Multiple Queries in Bandit Feedback}

In various applications, e.g. in online advertising, even though the learner is restricted to bandit feedback, it can gain more information about the expert model by querying bandit feedback on multiple actions. Towards that end, we consider the selective sampling setting where at every round of interaction the learner can choose to query the expert on two actions \(\wh y_t\) and \(\wt y_t\) and receive bandit feedback \(\indic\crl{\wh y_t = y_t}\) and \(\indic\crl{\wt y_t = y_t}\) respectively, where \(y_t\) is the action chosen by the noisy expert (and is not directly revealed to the learner). While the regret of the learner is computed w.r.t.~\(\wh y_t\), the action \(\wt y_t\) is purely explorative and is only used to gather information about the expert model \(\fstar\).\footnote{An important difference between our multiple bandit query model and the multiple bandit query model considered in prior works in the bandit literature (e.g.~in \citet{agarwal2010optimal}) is that the prior work accounts for both the played actions \(\wh y_t\) and \(\wt y_t\) in the regret definition. On the other hand, we consider regret w.r.t.~\(\wh y_t\) only.} 

The setup, and our algorithm for selective sampling with multiple queries with bandit feedback is formally described in \pref{alg:ss_main_bandit_multiplequery}  in \pref{app:main_ss_eluder_multiquerybandits}. Note that our algorithm relies on the square loss oracle that satisfies the regret guarantee given in \pref{eq:bandit_oracle}. The obtained regret and query complexity bounds are as follows: 

\begin{theorem}[Power of two queries in bandit feedback] \label{thm:main_ss_eluder_bandit_multiple_query} 

Let \(\delta \in (0, 1)\), and consider the modeling assumptions in  \pref{eq:main_phi_model}, \pref{eq:main_phi_regret} and \pref{eq:main5_margin} with the link function \(\phi(z) = z\) (corresponding to \(\ls_\phi\) being the square loss). Furthermore, assume that the learner only gets bandit feedback \(\wh \bbQ_t = \indic\crl{\wh y_t = y_t}\) and \(\wt \bbQ_t = \indic\crl{\wt y_t =  y_t}\) for two actions \(\wh y_t\) and \(\wt y_t\). Then, \pref{alg:ss_main_bandit_multiplequery}  (given in \pref{app:main_ss_eluder_multiquerybandits}) has the regret bounded: 
\begin{align*} 
\RegT &= \wt{\cO}\prn*{  \inf_{\epsilon} \crl*{ \epsilon T_\epsilon + \frac{K}{\epsilon}  \LsExpected{sq}{\cF; T}}}, 
\end{align*} 
while simultaneously the total number of bandit queries made is bounded by: 
\begin{align*}
N_T &= \wt{\cO} \prn*{\inf_\epsilon \crl*{ T_\epsilon + \frac{K}{\epsilon^2} \cdot  \LsExpected{sq}{\cF; T} \cdot \eluderS(\cF, \nicefrac{\epsilon}{4 \gamma}; \fstar)}}. 
\end{align*}
\end{theorem}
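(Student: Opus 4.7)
The plan is to reduce the two-query bandit setup to a full-information setup of the type handled by \pref{thm:main_ss_eluder}, paying an extra factor of $K$ that arises solely because the exploratory query $\wt y_t$ samples a single coordinate of $f^\star(x_t)$ uniformly at random. Concretely, I would have the algorithm behave exactly like \SAGE{} on the greedy action $\wh y_t = \SelectAction{f_t(x_t)}$, and in parallel draw $\wt y_t \sim \mathrm{Uniform}([K])$ and feed the bandit tuple $(x_t, \wt y_t, \wt \bbQ_t)$, with $\wt \bbQ_t = \indic\{\wt y_t = y_t\}$, to the square-loss regression oracle. The uncertainty set would be defined in the same way as \pref{eq:ss_main_eluder_delta} but using the errors $(f(x_s)[\wt y_s] - f_s(x_s)[\wt y_s])^2$ aggregated along the exploratory coordinates, and the query rule remains $Z_t = \indic\{\Margin(f_t(x_t)) \leq 2\Delta_t(x_t)\}$.

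The key lemma is that the oracle guarantee in \pref{eq:bandit_oracle}, applied to the exploratory queries, translates to a pointwise $\ell_2$ bound on $f_t - \fstar$ at a cost of $K$. Since $\phi(z) = z$ implies $\Pr(y_t = y \mid x_t) = \fstar(x_t)[y]$, the square-loss excess for any $f$ satisfies
\begin{align*}
\En\!\brk*{(f(x_t)[\wt y_t] - \wt\bbQ_t)^2 - (\fstar(x_t)[\wt y_t] - \wt\bbQ_t)^2 \,\big|\, x_t, \text{history}} = \tfrac{1}{K}\,\nrm{f(x_t) - \fstar(x_t)}^2,
\end{align*}
because $\wt y_t$ is uniform and independent of $f$. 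Combined with Freedman's inequality on the martingale difference between the conditional and realized excess losses, and the regret bound \pref{eq:bandit_oracle}, this gives
\begin{align*}
\sum_{t=1}^T Z_t \nrm{f_t(x_t) - \fstar(x_t)}^2 \lesssim K\cdot\bigl(\LsExpected{sq}{\cF; T} + \log(1/\delta)\bigr),
\end{align*}
which is exactly the same form as the SAGE oracle guarantee (\pref{eq:sketch_binary1} in the proof sketch of \pref{thm:main_ss_eluder}) modulo the factor $K$. This in turn certifies that $\fstar$ lies in the uncertainty set with high probability, so $\nrm{f_t(x_t) - \fstar(x_t)} \leq \Delta_t(x_t)$ on every round, which is the crucial structural property used in \SAGE.

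Given this surrogate oracle guarantee, the rest of the argument mirrors \pfref{thm:main_ss_eluder} verbatim. For the regret, I split over rounds with $\Margin(\fstar(x_t)) \leq \epsilon$ (contributing $\epsilon T_\epsilon$) versus rounds where the margin is large; on the latter, the standard Lipschitzness of $\Margin$ together with $\nrm{f_t(x_t) - \fstar(x_t)} \leq \Delta_t(x_t)$ forces $Z_t = 1$ and a large deviation $\nrm{f_t(x_t) - \fstar(x_t)} \gtrsim \epsilon/\gamma$, so an $\indic\{a \geq b\} \leq a/b$ argument converts the squared-deviation bound above into an $O(K\LsExpected{sq}{\cF;T}/\epsilon)$ term. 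For the query complexity, rounds with $Z_t = 1$ and margin larger than $\epsilon$ must have $\Delta_t(x_t) \gtrsim \epsilon/\gamma$; invoking the eluder-dimension pigeonhole (\pref{def:eluder_dimension_main}) on the squared-deviation bound yields an additional multiplicative factor $\eluderS(\cF, \epsilon/(4\gamma); \fstar)$, producing the $K\LsExpected{sq}{\cF;T}\cdot\eluderS/\epsilon^2$ term.

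The main obstacle I anticipate is the martingale concentration step tying the empirical coordinate-wise square-loss excess to the full $\ell_2$ deviation $\nrm{f_t(x_t) - \fstar(x_t)}^2$. The non-negative excess losses are at most $O(1)$, and their conditional variance is controlled by their conditional mean (a standard feature of square loss with bounded outputs), so Freedman's inequality gives the required uniform-in-$t$ bound once one unions over an appropriate $\log T$ discretization of the partial sums; the only subtlety is that the indicator $Z_t$ is measurable with respect to the history at time $t$ (determined before $\wt y_t$ is drawn), so the martingale structure remains intact. Everything else is a direct transcription of the \SAGE{} analysis, with the $K$ appearing solely through the exploration factor in the key lemma above.
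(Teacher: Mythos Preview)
Your proposal is correct and matches the paper's approach essentially line for line: the paper's \pref{alg:ss_main_bandit_multiplequery} plays the greedy action $\wh y_t$, draws $\wt y_t \sim \mathrm{Uniform}([K])$ for exploration, feeds only $(x_t,\wt y_t,\wt\bbQ_t)$ to the square-loss oracle, and then reruns the \SAGE{} analysis verbatim after establishing exactly your key lemma $\sum_t Z_t\nrm{f_t(x_t)-\fstar(x_t)}^2 \lesssim K\cdot\LsExpected{sq}{\cF;T}$ (their \pref{lem:ss_eluder_bandit_multiquery}). One small slip: you describe the uncertainty set via the coordinate-wise errors $(f(x_s)[\wt y_s]-f_s(x_s)[\wt y_s])^2$, but to invoke the \emph{normed} eluder dimension $\eluderS$ in the query-complexity step you need the constraint in full norm $\sum_s Z_s\nrm{f(x_s)-f_s(x_s)}^2\le \Psi$ with $\Psi = O(K\cdot\LsExpected{sq}{\cF;T})$---which is precisely what your key lemma certifies for $\fstar$, and is how the paper defines it in \pref{eq:ss_main_bandit_multiplequery}.
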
 

Note that, in comparison to the bounds in \pref{thm:main_ss_eluder_bandit}, the regret bound with access to multiple bandit queries does not scale with the eluder dimension of \(\cF\). Furthermore, both the regret and the query complexity bounds are upper bounds by the corresponding worst-case bounds in \pref{thm:main_ss_eluder_bandit}, when \(\epsilon\) is set optimally.  At an intuitive level, this is because the exploration and exploitation can now be separated between actions \(\wh y_t\) and \(\wt y_t\). In the single action query case, the chosen action needed to trade-off between exploration and exploration, thus we used IGW scheme, and consequently suffered from a dependence on the eluder dimension in the regret. Finally, while we only considered square loss setting (i.e.~\(\phi(z) = z\)) here, extending the above results with bandit feedback for more general \(\phi\) is an interesting future research direction.

\section{Imitation Learning (\(H>1\)) with Selective Queries to an Expert} \label{sec:main_il} 

The problem of Imitation Learning (IL) consists of learning policies in MDPs when one has access to an expert (aka the {{teacher}}) that can make suggestions on which actions to take at a given state. 
IL has enjoyed tremendous empirical success, and various different interaction models have been considered. In the simplest IL setting, studied under the umbrella of offline RL \citep{levine2020offline} or Behavior Cloning \citep{ross2010efficient, torabi2018behavioral}, the learner is given an offline dataset of trajectories (state and action pairs) from an expert and aims to output a well-performing policy. Here, the learner is not allowed any interaction with the expert, and can only rely on the  provided dataset of expert demonstrations for learning. A much stronger IL setting is the one where the learner can interact with the expert, and rely on its feedback on states that it reaches by executing its own policies. 

In their seminal work,  \citet{ross2011reduction} proposed a framework for interactive imitation learning via reduction to online learning and classification tasks. This has been extensively studied in the IL literature (e.g., \cite{ross2014reinforcement,sun2017deeply,cheng2018convergence}). The algorithm DAgger from  \citep{ross2011reduction} has enjoyed great empirical success. On the theoretical side, however, performance guarantees for DAgger only hold under the assumption that, when queried, the expert makes action suggestions from a very good policy $\pistar$ that we would like to compete with. However, in practice, human demonstrators are far from being optimal and suggestions from experts should be modeled as noisy suggestions that only correlate with $\pi^\star$. It turns out that IL where one only has access to noisy expert suggestions is drastically different from the noiseless setting. For instance, in the sequel, we show that there can be an exponential separation in terms of the dependence on horizon \(H\) in the sample complexity of learning purely from offline demonstration vs learning with online interactions. 

Formally, we consider interactive IL in an episodic finite horizon Markov Decision Process (MDP), where the learner can query a noisy expert for feedback (i.e., action) on the states that it visits. The game proceeds in $T$ episodes. In each episode $t$, the nature picks the initial state $x_{t,1}$ for $h= 1$; then  for every time step $h \in [H]$, the learner proposes an action $\hat y_{t,h} \in [K]$ given the current state $x_{t,h}$; then the system proceeds by selecting the next state $x_{t;h+1} \leftarrow \detdynamics_{t, h} (x_{t,h}, \hat y_{t,h})$, where \(\detdynamics_{t, h}: \cX \times \cY \mapsto \cX\) denotes the  deterministic dynamics at timestep \(h\) of round \(t\) and is unknown to the learner.  The learner then decides whether to query the expert for feedback. If the learner queries, it receives a recommended action from the expert, and otherwise the learner does not receive any additional information. The game moves on to the next time step $h+1$, and moves to the next episode $t+1$ when it reaches to time step $H$ in the current episode. 
We now describe the expert model.  With $\fstarh$ being the underlying score function at time step $h$,  the expert feedback is sampled from a distribution $\phi(\fstarh(x)) \in \Delta(K)$, with $\phi: \mathbb{R}^{K} \mapsto \mathbb{R}^K$ being some link function (e.g., $\phi(p)[i]\propto \exp( p[i])$). The goal of the leaner is to perform as well as the Bayes optimal policy\footnote{Note that the comparator policy \(\pistar\) reflects the experts models, and may not be the optimal policy for the underlying MDP.} defined as $\pistarh(x):= \argmax_{a\in [K]} \phi(\fstarh(x))$. In particular, the learner aims to find a sequence of policies \(\crl{\pi_t}_{t \leq T}\) that have a small cumulative regret defined w.r.t.~some (unknown) reward function under possibly adversarial (and unknown) transition dynamics \(\crl{\detdynamics_{t, h}}_{h \leq H, t \leq T}\). At the same time, the learner wants to minimize the number of queries made to the expert. Formally, we consider counterfactual regret defined as 
$$ 
\RegT =  \sum_{t=1}^T \sum_{h=1}^H r( x^{\pistar}_{t,h}, \pistarh(x^{\pistar}_{t,h}) ) - \sum_{t=1}^T \sum_{h=1}^T r(x_{t,h},\hat y_{t,h})
$$ 
where \(x_{t, h}\) are the states reached by the learner corresponding to the chosen actions and the dynamics, and $x^{\pistar}_{t,h}$ denotes the states that would have been generated if we executed $\pistar$ from the beginning of the episode under the same dynamics. The query complexity $N_T$ is the total number of queries to the expert across all $H$ steps in $T$ episodes. 

Given the selective sampling results we provided in the earlier section, one may be tempted to apply them to the imitation learning problem. However, there is a caveat. A key to the reduction in \citet{ross2011reduction} is to apply Performance Difference Lemma (PDL) to reduce the problem of IL to  online classification under the sequence of state distributions induced by the policies played by the learning algorithm. Hence, if one blindly applied this reduction, then in the margin term, one would need to account for the states that the learner visits (which could be arbitrary). 
Thus, for DAgger to have meaningful bounds, we would require a large margin over the entire state space. This is too much to ask for in practical applications. Consider the example of learning autonomous driving from a human driver as the expert. It is reasonable to believe that human drivers can confidently provide the right actions when they are driving themselves or are faced with situations they are more familiar with. However, assuming that the human driver is going to be confident in an unfamiliar situation (e.g., an emergency situation that is not often encountered by the human driver),  is a strong assumption.  Towards that end, we make a significantly weaker, and much more realistic, margin assumption that the expert has a large margin only on the state distribution induced by \(\pistar\), and not on the state distribution of the learner or the noisy expert.\footnote{The precise definition of the \(\Margin\) for IL is given in the appendix.} In particular, we define $T_{\epsilon,h}$ to denote the total number of episodes  where the comparator policy $\pistar$ visits a state with low margin at time step $h$, i.e., $T_{\epsilon,h} = \sum_{t=1}^T \mathbf{1}\{ \Margin( \fstarh(x_{t,h}^\pistar) ) \leq \epsilon \}$. 

We now proceed to our main results in this section. Learning from a noisy expert is indeed very challenging. In fact, learning from noisy expert feedback may even be statistically intractable in the non-interactive IL setting, where the learner is only limited to accessing offline noisy expert demonstrations for learning, e.g. in offline RL, Behavior Cloning, etc. The following lower bound formalizes this. In fact, the same lower bound also shows that AggreVaTe \citep{ross2014reinforcement} style algorithms would not succeed under noisy expert feedback, AggreVaTe relies on roll-outs obtained by running the (noisy) expert suggestions. 

\begin{proposition}[Lower bound for learning from non-interactive noisy demonstrations]
\label{prop:lower_bound_offline}
There exists an MDP, for every \(h \leq H\), a function class \(\cF_h\) with \(\abs{\cF_h} \leq 2^H\), a noisy expert whose optimal policy \(\pistar(x) = \argmax_a (\fstarh(x)[a])\)  for some \(\fstarh \in \cF_h\) with \(T_{\epsilon, h} = 0\) for any \(\epsilon \leq 1/4\), such than any non-interactive algorithm needs \(\Omega(2^H)\) many noisy expert trajectory demonstrations to learn, with probability at least \(3/4\), a policy \(\wh \pi\) that is \(1/8\)-suboptimal w.r.t.~\(\pistar\). 
\end{proposition}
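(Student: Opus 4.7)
The plan is to construct a ``combination lock'' MDP with $H$ layers, index the possible experts by a hidden combination $\sigma^\star\in\{0,1\}^H$, and reduce the non-interactive IL problem to identifying the last coordinate $\sigma^\star_H$. The $\Omega(2^H)$ bound then follows because a noisy-expert trajectory reaches the state at which $\sigma^\star_H$ is observable with probability exponentially small in $H$.

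Concretely, I would take the action set to be $\{0,1,\bot\}$ (so $K=3$, the smallest size that admits an on-path action probability $p$ arbitrarily close to $1/2$ while keeping the margin strictly greater than $1/4$) and the state space to be a complete binary tree of depth $H$, where the ``$0$'' and ``$1$'' actions traverse the two edges leaving each internal node and the ``$\bot$'' action always sends the trajectory to an absorbing sink $x_\dagger$. For each $\sigma\in\{0,1\}^H$, define a scorer $\fstarh$ (parameterized by $\sigma$) so that $\phi(\fstarh(x^\sigma_h))=(p,\,(1-p)/2,\,(1-p)/2)$ with the mass $p=\tfrac12+\eta$ aligned to $\sigma_h$, and so that $\phi(\fstarh(x))$ is uniform whenever $x$ is off the path $\sigma$. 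Place reward $1$ at the leaf $x^{\sigma^\star}_H$ and $0$ elsewhere (allowed to depend on the unknown $\sigma^\star$, since the reward is not revealed to the learner). Then $|\cF_h|\le 2^H$, $\Margin(\fstarh(x^\sigma_h))=\tfrac14+\tfrac{3\eta}{2}>\tfrac14$, $T_{\epsilon,h}=0$ for every $\epsilon\le\tfrac14$, and the Bayes-optimal policy $\pistar$ deterministically follows $\sigma^\star$ with value $1$.

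The next step is the reduction to a two-point test. Because the reward is sparse and transitions deterministic, any policy $\wh\pi$ has value $\prod_{h=1}^H q_h$, where $q_h$ is its probability of playing $\sigma^\star_h$ at $x^{\sigma^\star}_h$, so being $\tfrac18$-suboptimal forces $q_h>\tfrac12$ at every layer, and in particular forces the modal action of $\wh\pi$ at $x^{\sigma^\star}_H$ to equal $\sigma^\star_H$ with probability at least $\tfrac34$. Fixing $\sigma^\star_{1:H-1}$ and varying $\sigma^\star_H\in\{0,1\}$, the chain of states is identical under both hypotheses, the behavior at the sink and at all off-path states is $\sigma^\star_H$-independent by construction, and the on-path expert distribution at layers $h<H$ depends only on $\sigma^\star_{1:H-1}$. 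Hence the single-trajectory KL collapses to $p^{H-1}\cdot\mathrm{KL}_{\mathrm{action}}=O(p^{H-1})$, where $\mathrm{KL}_{\mathrm{action}}=O(1)$ is the single-sample KL between the two action distributions at $x^{\sigma^\star}_H$. A Pinsker-Le~Cam argument then gives identification error at least $(1-\sqrt{n\cdot O(p^{H-1})})/2$, and requiring this to fall below $1/4$ forces $n=\Omega(p^{-(H-1)})$; choosing $\eta=1/H$ makes $p^{H-1}=\Theta(2^{-H})$ and hence $n=\Omega(2^H)$.

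\textbf{The main obstacle} is the clean justification that trajectories which fail to reach $x^{\sigma^\star}_H$ are exactly $\sigma^\star_H$-independent---this is what prevents the joint KL from accumulating layerwise and lets it collapse to the single-action term $O(p^{H-1})$. This follows by a direct coupling, since the transition kernel, the sink dynamics, the off-path expert distributions, and the on-path expert distribution at layers $h<H$ are all defined without reference to $\sigma^\star_H$; but the coupling has to be set up carefully because the trajectory's state sequence might at first glance seem to depend on $\sigma^\star_H$, until one verifies that only $\sigma^\star_{1:H-1}$ is required for the layers $\leq H$.
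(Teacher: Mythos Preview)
Your proposal is correct and, in fact, more rigorous than the paper's own proof. Both constructions are binary-tree MDPs with a hidden target path, but the arguments differ in two substantive ways. The paper takes $K=2$ with $f^\star(x)\in\{(3/4,1/4),(1/4,3/4)\}$ everywhere, and then argues by an informal case split: either the dataset contains the exact trajectory $\tau^\star$ (probability $(3/4)^H$ per sample, hence $\Omega((4/3)^H)$ samples to observe it), or it does not, in which case the paper simply asserts that $|\Pi|=2^H$ forces $\Omega(2^H)$ samples without a quantitative information-theoretic argument. By contrast, you introduce a third ``sink'' action so that the on-path probability $p=\tfrac12+\eta$ can approach $\tfrac12$ while the margin $(3p-1)/2$ stays strictly above $\tfrac14$; this is exactly what makes the rigorous Le~Cam rate $p^{-(H-1)}$ reach $\Theta(2^H)$ rather than $\Theta((4/3)^H)$. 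Your two-point reduction on the last bit $\sigma^\star_H$, with the single-trajectory KL collapsing to $p^{H-1}\cdot O(1)$ via the coupling you describe, is the standard way to make this airtight and genuinely buys the full $2^H$ rate that the paper's case-1 branch only claims heuristically.

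One small indexing issue to fix: if the reward sits at a leaf \emph{state} (as your phrase ``reward $1$ at the leaf $x^{\sigma^\star}_H$'' reads), then reaching it requires only the $H-1$ actions $\sigma^\star_1,\dots,\sigma^\star_{H-1}$, so $q_H$ is irrelevant to the value and the two-point test on $\sigma^\star_H$ has no bite. The fix is immediate --- either let the tree have $H$ edges so the leaf sits at layer $H{+}1$ and is determined by all $H$ bits, or make the terminal reward action-dependent, $r(x,a)=\indic\{x=x^{\sigma^\star_{1:H-1}}_H,\ a=\sigma^\star_H\}$ --- after which your value formula $\prod_{h=1}^H q_h$ and the remainder of the argument go through verbatim.
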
 
Proposition \ref{prop:lower_bound_offline} suggests that in order to learn with a reasonable sample complexity (that is polynomial in \(H\)), a learner must be able to interactively query the expert. In \pref{alg:il_adv}, we provide an interactive imitation learning algorithm (with selective querying) that can learn from noisy expert feedback. A key to obtaining our result is a modified version of PDL, that we provide in \pref{lem:adv_pdl} in the appendix, that allows us to only have the margin under the state distribution of $\pi^\star$. Our result extends to the setting where transitions are picked adversarially, i.e., at time step $h$ and episode $t$, after seeing $\hat y_{t,h}$ proposed by the learner, the nature can select  $\detdynamics_{t,h}$  which deterministically generates $x_{t,h+1}$ given $x_{t,h}, \hat y_{t,h}$. The regret bound and query complexity bounds for \pref{alg:il_adv} are:  
\begin{algorithm} 
\begin{algorithmic}[1]
\Require Params \(\delta, \gamma, \lambda, T\), function classes \(\crl{\cF_h}_{h \leq H}\), online regression oracle \(\oracle_{h}\) w.r.t.~\(\ls_\phi\) for \(h \in [H]\). 
\State Set \(\SquareLossHPRL = \frac{4}{\strconv} \LsExpected{\ls_\phi}{\cF_h; T} + \frac{112 }{\strconv^2} \log(4H \log^2(T)/\delta).\) 
\State Compute $f_{1, h}= \oracle_{1, h}(\emptyset)$ for \(h \in [H]\). 
\For{$t = 1$ to \(T\)} 
\State Nature chooses the state $x_{t, 1}$.   
\For {\(h = 1\) to \(H\)}  
\State Learner plays \(\wh  y_{t, h} =  \SelectAction{f_{t, h}(x_{t, h})}\)  
\State Learner transitions to the next state in this round \(x_{t, h+1} \leftarrow  \detdynamics_{t, h}(x_{t, h}, \wh y_{t, h})\). 
\State Learner computes 
{\small \begin{align*}
\Delta_{t, h} &\ldef{} \max_{f \in \cF_h}  ~ \nrm{f(x_{t, h}) - f_{t, h}(x_{t, h})} \text{ s.t. }  \sum_{s=1}^{t - 1} Z_{s, h} \nrm*{f(x_{s, h}) - f_{s, h}(x_{s, h})}^2  \leq \SquareLossHPRL. \numberthis \label{eq:il_sq_adv_delta}     
\end{align*} } 
\State\label{line:il_adv_query} Learner decides whether to query: $Z_{t,h}=\indic\crl{\Margin(f_{t, h}(x_{t, h})) \leq 2 \smooth \Delta_{t, h}}$. 
\If{$Z_{t,h}=1$} 
\State Learner queries the label \(y_{t, h}\)
 for \(x_{t, h}\). 
\State \(f_{t+1, h} \leftarrow \oracle_{t + 1, h}(\crl{x_{t, h}, y_{t, h}})\) 
\Else{}  
\State $f_{t+1,h}\leftarrow f_{t,h}$ 
\EndIf 
\EndFor  
\EndFor 
\end{algorithmic} 
\caption{Inte\textbf{RA}cti\textbf{V}e \textbf{I}mitati\textbf{O}n \textbf{L}earning V\textbf{I}a Active Expert Querying (\RAVIOLI)} 
\label{alg:il_adv} 
\end{algorithm} 
\begin{theorem}
\label{thm:il_adv_main} 
Let \(\delta \in (0, 1)\). Under the modeling assumptions above, with probability at least \(1 - \delta\), \pref{alg:il_adv} obtains: 
\begin{align*}
\RegT &= \wt{\cO}\prn*{ \inf_{\epsilon} \crl*{ H \sum_{h=1}^H T_{\epsilon, h} + \frac{H \gamma^2}{\lambda \epsilon^2} \sum_{h=1}^H \LsExpected{\ls_\phi}{\cF_h; T}}}, 
 \end{align*}
 while simultaneously the total number of expert queries made is bounded by: 
 \begin{align*} 
N_T &= \wt{\cO}\prn*{ \inf_{\epsilon} \crl*{  H \sum_{h=1}^H T_{\epsilon, h} + \frac{H \gamma^2}{\lambda \epsilon^2} \sum_{h=1}^H \LsExpected{\ls_\phi}{\cF_h; T} \cdot \eluderS(\cF_h, \nicefrac{\epsilon}{8\gamma} ; \fstarh)}}. 
\end{align*}
\end{theorem}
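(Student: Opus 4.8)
The plan is to regard \pref{alg:il_adv} as $H$ parallel copies of \SAGE{} (\pref{alg:ss_main_eluder})---one per time step $h$---and then to stitch the per-layer guarantees together using the counterfactual structure of deterministic dynamics encapsulated in the modified performance-difference lemma, \pref{lem:adv_pdl}. Fix $h \in [H]$ and look at the subsequence $\crl{(x_{t,h}, f_{t,h}, \wh y_{t,h}, Z_{t,h}, y_{t,h})}_{t \le T}$ generated by the algorithm. The prediction rule $\wh y_{t,h} = \SelectAction{f_{t,h}(x_{t,h})}$, the definition of $\Delta_{t,h}$ in \pref{eq:il_sq_adv_delta}, and the query rule $Z_{t,h} = \indic\crl{\Margin(f_{t,h}(x_{t,h})) \le 2\gamma \Delta_{t,h}}$ are term-for-term those of \SAGE{} run on the context stream $x_{1,h},\dots,x_{T,h}$ with labels $y_{t,h} \sim \phi(\fstarh(x_{t,h}))$, class $\cF_h$, and oracle $\oracle_{\cdot,h}$; the constant $\SquareLossHPRL$ differs from $\SquareLossHPL$ only through a union bound over the $H$ layers. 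Since the selective-sampling analysis holds against adaptive adversarial context sequences, it applies even though $x_{t,h}$ depends on the learner's past actions and on the adversarial dynamics. Thus the proof of \pref{thm:main_ss_eluder} gives, for all $h$ simultaneously with probability $1-\delta$: (i) a \emph{disagreement} bound $\sum_t \indic\crl{\wh y_{t,h} \ne \pistarh(x_{t,h})} = \wt{\cO}\prn*{\inf_\epsilon\crl*{ \sum_t \indic\crl{\Margin(\fstarh(x_{t,h})) \le \epsilon} + \tfrac{\gamma^2}{\lambda\epsilon^2}\LsExpected{\ls_\phi}{\cF_h; T}}}$, obtained from the regret decomposition in the proof sketch by leaving the error events unweighted by the margin (which costs one extra factor $1/\epsilon$); and (ii) the query bound $\sum_t Z_{t,h} = \wt{\cO}\prn*{\inf_\epsilon\crl*{\sum_t \indic\crl{\Margin(\fstarh(x_{t,h})) \le \epsilon} + \tfrac{\gamma^2}{\lambda\epsilon^2}\LsExpected{\ls_\phi}{\cF_h; T}\,\eluderS(\cF_h,\tfrac{\epsilon}{8\gamma};\fstarh)}}$. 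Crucially, every decomposition in that proof is monotone under restricting the round index to an arbitrary subset $\cG \subseteq [T]$: it then holds verbatim with the $T_\epsilon$-term replaced by $\sum_{t \in \cG}\indic\crl{\Margin(\fstarh(x_{t,h})) \le \epsilon}$. I will use this with $\cG = \cG_h := \crl{t : x_{t,h} = x^{\pistar}_{t,h}}$.

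Next I would insert the deterministic-dynamics coupling to pass from the learner's trajectory to $\pistar$'s. Let $h^\star_t$ be the first step in episode $t$ with $\wh y_{t,h} \ne \pistarh(x_{t,h})$ (and $h^\star_t = \infty$ if none), and set $\be_{t,h} = \indic\crl{h^\star_t = h}$. A one-line induction---the content of \pref{lem:adv_pdl}, which also handles the adversarial dynamics---shows that for every $g \le h^\star_t$ the learner's state coincides with $\pistar$'s, $x_{t,g} = x^{\pistar}_{t,g}$; in particular $\be_{t,h} = 1 \Rightarrow t \in \cG_h$ and $\wh y_{t,h} \ne \pistarh(x^{\pistar}_{t,h})$. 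Hence $\sum_t \be_{t,h} \le \sum_{t \in \cG_h}\indic\crl{\wh y_{t,h} \ne \pistarh(x_{t,h})}$, which by (i) restricted to $\cG_h$ together with $\sum_{t \in \cG_h}\indic\crl{\Margin(\fstarh(x_{t,h})) \le \epsilon} \le \sum_t \indic\crl{\Margin(\fstarh(x^{\pistar}_{t,h})) \le \epsilon} = T_{\epsilon,h}$ is at most $\wt{\cO}\prn*{\inf_\epsilon\crl*{T_{\epsilon,h} + \tfrac{\gamma^2}{\lambda\epsilon^2}\LsExpected{\ls_\phi}{\cF_h; T}}}$. Writing $\mathfrak{M} = \sum_h \sum_t \be_{t,h}$ for the number of episodes on which the learner ever deviates, \pref{lem:adv_pdl} bounds the counterfactual regret by $\RegT \le \Rmax\cdot H \cdot \mathfrak{M}$ (an episode with no deviation reproduces $\pistar$'s reward exactly, one with a deviation loses at most $\Rmax H$), and pulling a common $\epsilon$ out of the per-layer infima yields the stated regret bound, treating $\Rmax = O(1)$.

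For the query complexity I would split $N_T = \sum_h \sum_t Z_{t,h}$ according to whether $t \in \cG_h$. On $\cG_h$, bound (ii) restricted to $\cG_h$ again converts the learner-margin count into $T_{\epsilon,h}$ and contributes $\wt{\cO}\prn*{\sum_h\inf_\epsilon\crl*{T_{\epsilon,h} + \tfrac{\gamma^2}{\lambda\epsilon^2}\LsExpected{\ls_\phi}{\cF_h; T}\,\eluderS(\cF_h,\tfrac{\epsilon}{8\gamma};\fstarh)}}$. If $t \notin \cG_h$ then the learner has deviated at some step before $h$ in episode $t$, so the number of such pairs $(t,h)$ is at most $H\mathfrak{M}$, contributing (via the bound on $\mathfrak{M}$ above) at most $\wt{\cO}\prn*{H\sum_h\inf_\epsilon\crl*{T_{\epsilon,h} + \tfrac{\gamma^2}{\lambda\epsilon^2}\LsExpected{\ls_\phi}{\cF_h; T}}}$. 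Adding the two contributions and absorbing $\sum_h T_{\epsilon,h}$ into $H\sum_h T_{\epsilon,h}$ gives the claimed $N_T$ bound.

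The only genuinely new ingredient beyond \pref{thm:main_ss_eluder}---and the step I expect to be the crux---is this bridge from the learner's (possibly wild) trajectory to $\pistar$'s: the error and the queries must be charged to low-margin \emph{$\pistar$-states}, not to low-margin states the learner happens to visit. It works because (a) every per-layer decomposition in the selective-sampling proof survives restriction of the round set, so one may restrict attention to the rounds $\cG_h$ on which the learner is still shadowing $\pistar$, and (b) under deterministic dynamics the learner shadows $\pistar$ exactly up to its first mistake, so a first mistake is always charged to a $\pistar$-state, and the count of such mistakes is governed by $\sum_h T_{\epsilon,h}$ via (a). A subtlety worth verifying is that the adversary choosing $\detdynamics_{t,h}$ \emph{after} seeing $\wh y_{t,h}$ does not break the coupling: when $\wh y_{t,h} = \pistarh(x^{\pistar}_{t,h})$ the adversary faces the same input in the factual and counterfactual runs, and the realized map $\detdynamics_{t,h}$ is applied to identical arguments, so $x_{t,h+1} = x^{\pistar}_{t,h+1}$ as needed. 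Everything else---the union bound over $h$, the $1/\epsilon$ loss in converting margin-weighted regret to an unweighted disagreement count, and the $H\mathfrak{M}$ bound on off-trajectory queries---is routine bookkeeping.
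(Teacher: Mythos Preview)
Your proposal is correct and lands on the same bounds, but the organization differs from the paper's in one instructive way. You pivot on the \emph{first disagreement} step $h^\star_t$ and the shadowing set $\cG_h = \crl{t : x_{t,h} = x^{\pistar}_{t,h}}$, then handle queries off $\cG_h$ by the blanket bound $H\mathfrak{M}$. The paper instead pivots on the \emph{first query} step $h_t$: since $\bar Z_{t,h}=1 \Rightarrow \pit(x_{t,h})=\pistar(x_{t,h})$ (your $\Term_B=0$), one has $h_t \le h^\star_t$, so the first-queried state is automatically on $\pistar$'s trajectory. The paper then writes $N_T \le H\sum_t Z_{t,h_t}$ and decomposes that single sum exactly as in \SAGE{}'s query analysis, with the margin term landing directly on $x^{\pistar}_{t,h_t}$. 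This sidesteps your off-trajectory term $H\mathfrak{M}$ entirely: after the first query in an episode the paper simply pays the worst-case factor $H$, rather than tracking whether subsequent queries are on- or off-trajectory. Your route is a bit more modular (it really does reuse the \SAGE{} proof as a black box, restricted to $\cG_h$), while the paper's is slightly more direct for the query count. For the regret bound the two arguments are essentially identical---both invoke \pref{lem:adv_pdl} with the low-margin set $\setX_\epsilon$ and use the $\indic\crl{a\ge b}\le a^2/b^2$ trick to convert the on-trajectory disagreement into $\tfrac{\gamma^2}{\lambda\epsilon^2}\sum_h \LsExpected{\ls_\phi}{\cF_h;T}$.
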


Since the above bound holds for any sequence of dynamics \(\crl{\detdynamics_{h, t}}_{h \leq H, t \leq T}\), the result of \pref{thm:il_adv_main}  also holds for the stochastic IL setting where the transition dynamic is stochastic but fixed during the interaction. In particular, setting  \(\detdynamics_{h, t} \sim \stocdynamics_{h}\)  sampled i.i.d.~from a fixed stochastic dynamics \(\crl{\stocdynamics_{h}}_{h \leq H}\) recovers a similar bound for the stochastic setting. However, since the transition dynamics is fixed throughout interaction, one can hope to replace the  eluder dimension in the query complexity by disagreement coefficient of the corresponding function classes by using epoching techniques similar to \pref{sec:stochastic_setting}; we leave this for future research.

\neurIPS{\subsection{Learning from Multiple {{Experts}}}  \label{sec:multiple_experts_main} \label{sec:multiple_experts}}
 
\arxiv{\section{Imitation Learning from Multiple {{Experts}}}  \label{sec:multiple_experts_main} \label{sec:multiple_experts}} 

In \cite{dekel2012selective}, the problem of selective sampling from multiple {{experts}} is considered with the main motivation being that we can consider each expert as being confident (and correct) in certain states or scenarios, and we would like to learn from their joint feedback. The goal there is to perform  not only as well as the best of them individually but even as well as the best combination of them. This motivation is even more lucrative for the IL setting, as we can hope to get policies that perform much better than any single {{expert}}. 
Continuing with the example of learning to drive from human demonstrations, we might have one human demonstrator who is an expert in  highway driving, another human who is an expert in city driving, and the third one in off-road conditions. Each expert is confident in their own terrain, but we would like to learn a policy that can perform well in all terrains. 

The formal model is similar to the single-{{expert}} case, but we  now have \(M\) {{experts}}. For every time step \(h \leq H\), the $m$-th {{expert}} has an underlying ground truth model $f^{\star, m}_h \in \cF_h^m$ that it uses to produce its label, i.e.~for  a given state \(x_h\) it draws its label as $y^m_h \sim \phi(\fstarmh(x_h))$, where \(\phi\) is the link function. On rounds in which the learner queries for the experts feedback, it gets back a label from each of the \(M\) experts, i.e.~\(\crl{y^1_h, \dots, y^M_h}\). While on every query the learner gets a different label from each expert, its objective is to perform as well as a comparator policy that is defined w.r.t.~some ground truth aggregation function that we define next. 

The aggregation function \(\mathscr{A}: \Delta([K])^M \mapsto \Delta([K])\), known to the learner,  combines the recommendation of the $M$ experts to obtain a ground truth label for the corresponding state. In particular, on a given state $x_h$, the label \(y_h\) is samples as: 
\begin{align*}
y_h \sim \aggr{\phi(f^{\star, 1}_h(x_h)),\ldots, \phi(f^{\star, M}_h(x_h))}. \numberthis \label{eq:aggr_sampling}
\end{align*}

Given the aggregation function \(\aggrSymb\) and the above label generation process, the policy \(\pistar\) that we wish to compete with in our regret bound is simply the  Bayes optimal predictor given by 
\begin{align*}
\pistar(x_h) = \SelectAction{\mathscr{A}\prn{\phi(f^{\star, 1}_h(x_h)),\ldots, \phi(f^{\star, M}_h(x_h))}},  \numberthis \label{eq:IL_comparator_policy} 
\end{align*}
where \(\mathtt{SelectAction}:\Delta(K) \mapsto [K]\) is given by \(\SelectAction{p} = \argmax_{k \in [K]} p[K]\).  Our main \pref{thm:il_eluder_multiple} below bounds the number of label queries to the {{experts}}, and regret with respect to this \(\pistar\), and is obtained using the imitation learning algorithm given in \pref{alg:il_eluder_multiple} in \pref{app:il_eluder_multiple}. Before we state the result, we first provide some examples of the aggregation function \(\aggrSymb\) to illustrate the generality of our setup: 
\begin{enumerate}[label=\(\bullet\)] 
\item \textit{Random aggregation:} Given a state \(x_h\), the aggregation rule chooses an expert uniformly at random and returns the label \(y_h\) sampled from its model. In particular,  
\begin{align*}
y_h \sim \phi(f^{\star, \wt m}_h(x_h)), \qquad\text{where} \qquad \wt m \sim \mathrm{Uniform}([M]). 
\end{align*} 
Here, the distribution \(\aggr{\phi(f^{\star, 1}_h(x_h)),\ldots, \phi(f^{\star, M}_h(x_h))} = \frac{1}{M} \sum_{m=1}^M \phi(f^{\star, m}(x_h))\). 

\item \textit{Majority label}: \(\aggrSymb\) is deterministic. Given a state \(x_t\), the aggregation rule chooses the label \(y_h \in [K]\) which is the top preference for the majority of the experts. In particular, 
\begin{align*}
y_h = \aggr{\phi(f^{\star, 1}_h(x_h)),\ldots, \phi(f^{\star, M}_h(x_h))}  = \argmax_{k \in [K]} \sum_{m=1}^M \indic\crl{k = \argmax_{\wt k \in [K]} \phi(\fstarmh(x_h)[\wt k])}. 
\end{align*}
\item \textit{Majority of confident experts:} This aggregation rule is also deterministic and was first introduced in \cite{dekel2012selective}. Given a state \(x_t\), the aggregation rule chooses the label \(y_h \in [K]\) which is the top preference for the majority of the \textit{\(\rho\)-confident} experts on \(x_h\) i.e.~the experts whose margin on \(x_h\) is larger than \(\rho\). In particular, 
\begin{align*}
y_h &= \aggr{\phi(f^{\star, 1}_h(x_h)),\ldots, \phi(f^{\star, M}_h(x_h))}  \\
&= \argmax_{k \in [K]} \sum_{m=1}^M \indic\crl{k = \argmax_{\wt k \in [K]} \phi(\fstarmh(x_h)[\wt k]) ~\text{and}~ \Margin(\phi(\fstarmh(x_h)) > \rho)}, 
\end{align*}
where \(\Margin(\fstarmh(x_h) > \rho) = \max_{k_1} \prn[\big]{\phi(\fstarmh(x_h))[k_1] - \prn*{\max_{k_2 \neq k_1} \phi(\fstarmh(x_h))[k_2]}}\). This aggregation rule is useful when there may be many experts that give equal weights to the top and the second-to-top coordinates w.r.t.~their respective models, and hence can not be confidently accounted for in the majority rule. Furthermore, instead of choosing the majority label, similar to \cite{dekel2012selective}, one can also return the label sampled according to a uniform distribution over \(\rho-\)confident experts. 
\end{enumerate}
Our bounds depend on a margin term $T_{\epsilon, h}$, that  captures the number of rounds in which the Bayes optimal predictor \(\pistar\) can flip its label if our estimates of the $M$ {{experts}} are off by at most $\epsilon$ (in \(\ls_\infty\) norm). Similar to the single {{expert}} case, we only pay in the margin term for time steps in which the counterfactual trajectory w.r.t.~the policy $\pistar$ has a small-margin. We note that while the trajectories taken by the learner or the noisy experts may go through states that have a large-margin, the margin term \(T_{\epsilon, h}\) that appears in our bounds only accounts for time steps when the comparator policy \(\pistar\) (the optimal aggregation of expert recommendations) would go to a small-margin region, which could be much smaller. For the ease of notation, we defer the exact definition of margin, and the term \(T_{\epsilon, h}\) to \pref{eq:IL_margin_multiple_defn} 
 in \pref{app:il_eluder_multiple}, and state the main result below: 
 
\begin{theorem}
\label{thm:il_eluder_multiple}
 Let \(\delta \in (0, 1)\). Under the modeling assumptions above for the multiple experts setting, with probability at least \(1 - \delta\), the imitation learning \pref{alg:il_eluder_multiple} (given in the appendix) obtains:\begin{align*}
\RegT &= \wt{\cO}\prn*{ \inf_{\epsilon} \crl*{  H \sum_{h=1}^H T_{\epsilon, h} + \frac{H }{\lambda \epsilon^2}  \sum_{m=1}^M \sum_{h=1}^H  \LsExpected{\ls_\phi}{\cF\up{m}_h; T} }},
\end{align*}
while simultaneously the total number of label queries made is bounded by:  
\begin{align*}
N_T &= \wt{\cO}\prn*{ \inf_{\epsilon} \crl*{  H \sum_{h=1}^H T_{\epsilon, h} + \frac{H }{\lambda \epsilon^2} \sum_{h=1}^H  \sum_{m=1}^M  \LsExpected{\ls_\phi}{\cF\up{m}_h; T} \cdot \eluderS(\cF\up{m}_h, \nicefrac{\epsilon}{8} ; \fstarmh) }}. 
\end{align*}
\end{theorem}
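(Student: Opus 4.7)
The plan is to reduce the multi-expert IL problem to \(H\) instances of a multi-expert selective sampling problem on the counterfactual trajectory of \(\pistar\), and then to solve each instance by running \(M\) parallel online regression oracles with a single \emph{joint} query rule. First, I would apply the modified Performance Difference Lemma (\pref{lem:adv_pdl}) underlying \pref{thm:il_adv_main} to peel the counterfactual regret into a sum, over \(t\) and \(h\), of the indicator \(\indic\{\SelectAction{f_{t,h}^{1:M}(x_{t,h}^\pistar)} \neq \pistarh(x_{t,h}^\pistar)\}\), evaluated at the \emph{counterfactual} \(\pistar\)-trajectory. This is the step that causes the margin term \(T_{\epsilon,h}\) in the bound to depend only on \(\pistar\)-states, and not on states visited by the learner or by the noisy aggregated expert.

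Second, for each fixed \(h\), I would run \(M\) independent online regression oracles, one per \(\cF_h^m\), all updated on exactly the queried rounds. Combining the oracle regret guarantee \pref{eq:main_phi_regret} for each \(m\) with an offset regret-to-variance concentration analogous to \pref{lem:ss_eluder_ma_oracle}, and a union bound over \(m \in [M]\) and \(h \in [H]\), would give that with probability at least \(1-\delta\), each \(\fstarmh\) lies in the per-expert version space appearing in the multi-expert analog of \pref{eq:il_sq_adv_delta}. Consequently \(\nrm{f_{t,h}^m(x) - \fstarmh(x)} \le \Delta_{t,h}^m\) at every visited state, where \(\Delta_{t,h}^m\) is the per-expert width. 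I would then design the joint margin \(\Margin(f_{t,h}^{1:M}(x))\) used by \pref{alg:il_eluder_multiple} to upper bound the smallest joint \(\ls_\infty\) perturbation of the \(M\) score vectors that can flip \(\SelectAction{\aggr{\phi(f^{1:M}_{t,h}(x))}}\), and the query rule \(Z_{t,h} = \indic\{\Margin(f_{t,h}^{1:M}(x)) \le 2\smooth \max_m \Delta_{t,h}^m\}\) will then ensure that, on any round with \(Z_{t,h}=0\), the chosen action provably agrees with \(\pistarh(x_{t,h}^\pistar)\).

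Third, I would split each per-step regret and query sum into three buckets, mirroring the binary-action sketch for \pref{thm:main_ss_eluder}: (i) rounds on which the true aggregated margin at \(x_{t,h}^\pistar\) is at most \(\epsilon\), contributing \(T_{\epsilon,h}\); (ii) queried rounds on which at least one per-expert estimate satisfies \(\nrm{f_{t,h}^m(x_{t,h}) - \fstarmh(x_{t,h})} > \epsilon/(c\smooth)\), counted via the eluder-dimension argument applied to each \(\cF_h^m\) together with the per-expert squared-loss regret bound; and (iii) queried rounds where all estimates are small, which must be empty by the same triangle-inequality argument as in the single-expert sketch. The regret bound follows by dividing the squared-error sum in bucket (ii) by \(\epsilon\) as in \pref{thm:main_ss_eluder}, and then summing over \(m\) and \(h\); the query complexity follows from the eluder-dimension-based count in (ii), summed over \(m\) and \(h\). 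The outer \(H\) factor in both bounds comes from the PDL reduction and the union over \(H\) steps.

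The main obstacle I expect is in step two: designing the joint margin so that (a) \(Z_{t,h}=0\) provably forces \(\SelectAction{\aggr{\phi(f^{1:M}_{t,h}(x_{t,h}^\pistar))}} = \pistarh(x_{t,h}^\pistar)\) despite \(\aggrSymb\) being allowed to be quite general (random, majority, majority-of-confident-experts), and (b) the resulting \(T_{\epsilon,h}\) specializes to the intuitive notion for each of the three listed aggregators. For random/convex-combination aggregators the reduction is relatively clean because the aggregated score is Lipschitz in each \(\phi(f^m)\); for the deterministic majority and majority-of-confident-experts rules it requires a case analysis linking a flip of the aggregated argmax to a flip of at least one expert's own top-two score gap, which is then controllable by the corresponding \(\Delta_{t,h}^m\). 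Once this margin-to-action-match implication is established, the remainder of the proof reduces to routine bookkeeping using the single-expert and selective-sampling arguments already developed.
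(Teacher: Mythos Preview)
Your overall architecture matches the paper's: modified PDL (\pref{lem:adv_pdl}), per-expert oracle concentration (the multi-expert analog of \pref{lem:adv_utlity3}), a three-bucket split, and per-expert eluder counts. There are, however, two technical slips and one conceptual simplification you are missing.

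First, \pref{lem:adv_pdl} places the action-mismatch indicator at the \emph{learner's} states $x_{t,h}^{\pit}=x_{t,h}$, not at $x_{t,h}^{\pistar}$; only the $\setX_\epsilon$-membership term lives on the $\pistar$-trajectory. Your step~3 in fact uses learner states in bucket~(ii), so this is mostly an exposition slip in step~1, but it matters for the query-complexity side: to get the margin term in $N_T$ evaluated at $\pistar$-states, the paper bounds $N_T\le H\sum_t Z_{t,h_t}$ where $h_t$ is the \emph{first} queried step of episode $t$, and then observes that prior to $h_t$ the learner's and $\pistar$'s actions coincide (by the $Z=0\Rightarrow$ action-match implication), hence $x_{t,h_t}=x_{t,h_t}^{\pistar}$. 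Your plan does not mention this step and it is not automatic.

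Second, in IL the PDL delivers $0$--$1$ indicators, not margin-weighted terms as in selective sampling; so in bucket~(ii) you cannot ``divide the squared-error sum by $\epsilon$'' as in \pref{thm:main_ss_eluder}. The paper uses $\indic\{\|f^{m}_{t,h}-\fstarmh\|>\epsilon\}\le \|f^{m}_{t,h}-\fstarmh\|^2/\epsilon^2$, which is precisely why the stated bound carries $1/\epsilon^2$ rather than $1/\epsilon$. (The $1/\epsilon$ improvement is \pref{thm:ss_main_multiexpert} and needs the extra Lipschitz assumption on $\aggrSymb$.)

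Third --- and this addresses what you flag as the main obstacle --- the paper does \emph{not} perform aggregator-by-aggregator case analysis. It defines both the query rule and the margin $T_{\epsilon,h}$ through the abstract predicate
\[
\Que(U;\vec\Delta)\;=\;\sup_{V}\ \indic\{\SelectAction{U}\neq\SelectAction{V}\}\quad\text{s.t.}\quad \|U[:,m]-V[:,m]\|\le\vec\Delta[m]\ \ \forall m,
\]
so that $Z_{t,h}=\Que(\eff_{t,h}(x_{t,h}),\matDelta_{t,h}(x_{t,h}))$ and $T_{\epsilon,h}=\sum_t\indic\{\Que(\effstarh(x_{t,h}^{\pistar}),\epsilon\ones)=1\}$. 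With this definition, ``$Z_{t,h}=0\Rightarrow\pit(x_{t,h})=\pistar(x_{t,h})$'' follows immediately from $\fstarmh$ lying inside the per-expert version space, for \emph{any} $\aggrSymb$; and in bucket~(ii), $\Que(\effstarh(x_{t,h}),\epsilon\ones)=0$ together with $\pit\neq\pistar$ forces some $m$ with $\|f^{m}_{t,h}(x_{t,h})-\fstarmh(x_{t,h})\|>\epsilon$, again with no structure on $\aggrSymb$. Your scalarized rule $\indic\{\Margin\le 2\gamma\max_m\Delta^m_{t,h}\}$ is essentially the paper's $\Que$ with uniform widths and would also go through, but the case analysis you anticipate for majority/majority-of-confident aggregators is unnecessary.
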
 

 In \arxiv{\pref{sec:experiment}}\neurIPS{\pref{app:experiment}}, 
we evaluate our IL algorithm on the Cartpole environment, with single and multiple experts. We found that our algorithm can match the performance of passive querying algorithms while making a significantly lesser number of expert queries. 

\subsection{Extension: Improved Bounds for Selective Sampling} 

Note that setting \(H = 1\) in \pref{thm:il_eluder_multiple}  recovers a bound on the regret and query complexity for selective sampling with multiple {{experts}}. We provide a simplified algorithm for selective sampling in  \pref{alg:ss_multiple} in \pref{app:ss_multiexpert_main} for completeness.  However, in selective sampling, one can improve the \(\nicefrac{1}{\epsilon^2}\) term in the regret and query complexity bound under additional assumptions.

Recall that in selective sampling (\(H=1\)), given a context \(x\), the ground truth label is sampled as \(y \sim \aggr{\phi(F^\star(x))}\) where \(F^\star(x) = [f^{\star, 1}(x_h)),\ldots, \phi(f^{\star, M}(x_h)]\). Furthermore, the policy \(\pistar\) that we wish to compete with is given by \(\pistar(x) = \SelectAction{\aggr{\phi(F^\star(x))}}\). The following theorem is an improvement over \pref{thm:il_eluder_multiple} when the function \(\mathscr{A}\) is \(\eta\)-Lipschitz, i.e.~for any \(U, V \in \bbR^{K \times M}\) we have that \(\nrm{\aggr{U} - \aggr{V}} \leq \nrm{U - V}_F\). 

\begin{theorem} 
\label{thm:ss_main_multiexpert} 
 Let \(\delta \in (0, 1)\). Suppose that the aggregation function \(\mathscr{A}\) is \(\eta\)-Lipschitz. Under the modeling assumptions above for selective sampling with multiple expert feedback, with probability at least \(1 - \delta\), \pref{alg:ss_multiple} (given in \pref{app:ss_multiexpert_main}) obtains: 
\begin{align*}
&\RegT = \wt{\cO}\prn*{ \inf_{\epsilon} \crl*{ T_\epsilon + \min\crl{\frac{1}{\epsilon^2}, \frac{\lambda \eta \sqrt{M}}{\lambda \epsilon}} \sum_{m=1}^M \LsExpected{\ls_\phi}{\cF\up{m}; T}}}, 
\end{align*}
 while simultaneously the total number of label queries made is bounded by:  
\begin{align*} 
& N_T = \wt{\cO}\prn*{   \inf_{\epsilon} \crl*{  T_\epsilon  +  \frac{180}{\lambda  \epsilon^2} \sum_{m=1}^M \LsExpected{\ls_\phi}{\cF\up{m}; T} \cdot \eluderS(\cF\up{m}, \nicefrac{\epsilon}{3} ; \fstarm)}}.  
\end{align*} 
\end{theorem}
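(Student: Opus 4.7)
The plan is to adapt the proof of \pref{thm:il_eluder_multiple} specialized to $H=1$, which already delivers a multi-expert bound with the naive $\nicefrac{1}{\epsilon^2}$ factor, and then refine the regret analysis by exploiting the additional $\eta$-Lipschitz structure of $\aggrSymb$. The algorithm (\pref{alg:ss_multiple} in \pref{app:ss_multiexpert_main}) runs $M$ online regression oracles in parallel -- one per expert -- computes per-expert confidence widths $\Delta^m_t$ via constraints analogous to \pref{eq:ss_main_eluder_delta}, and queries whenever the margin of the aggregated prediction $\aggr{\phi(F_t(x_t))}$ is small relative to the combined uncertainty across the $M$ experts.

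The first step is to set up a good event on which all $M$ oracles concentrate. For each expert $m$, the martingale concentration step from \pref{thm:main_ss_eluder} yields, on queried rounds, $\sum_s Z_s \nrm{f^m_s(x_s) - \fstarm(x_s)}^2 \lesssim \LsExpected{\ls_\phi}{\cF\up{m}; T}$; a union bound over $m\in[M]$ (absorbed into $\SquareLossHPLm$ as a $\log M$ factor) ensures, simultaneously for every $m$ and every $t$, that $\fstarm$ lies in the $\Delta^m_t$ version space and that $\nrm{f^m_t(x_t) - \fstarm(x_t)} \leq \Delta^m_t(x_t)$. All subsequent analysis is conditioned on this event.

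For the regret, I would follow the single-expert decomposition: $\RegT \leq \sum_t \indic\crl{\wh y_t \neq \pistar(x_t)} \cdot \Margin(\aggr{\phi(F^\star(x_t))})$, split by the $\epsilon$-margin condition into (i) an $\epsilon T_\epsilon$ contribution from low-margin rounds and (ii) queried-and-wrong rounds with margin $>\epsilon$, with the query rule ruling out non-queried wrong rounds exactly as in the sketch of \pref{thm:main_ss_eluder}. On any round of type (ii), the margin-to-gap inequality yields $\nrm{\aggr{\phi(F_t(x_t))} - \aggr{\phi(F^\star(x_t))}}_\infty \geq \Margin/2 > \epsilon/2$, which combined with $\eta$-Lipschitzness of $\aggrSymb$ and $\gamma$-Lipschitzness of $\phi = \grad \Phi$ (from Assumption~\ref{ass:link_fn_properties}) forces $\nrm{F_t(x_t) - F^\star(x_t)}_F \geq \epsilon/(2\eta\gamma)$. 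The $\nicefrac{1}{\epsilon^2}$ branch then follows by bounding per-round regret by $1$, using $\indic\crl{a \geq b} \leq a^2/b^2$, and summing against the joint square-loss budget $\sum_m \LsExpected{\ls_\phi}{\cF\up{m}; T}$. For the improved $\nicefrac{\eta \sqrt{M}}{\epsilon}$ branch, I would instead bound per-round regret more sharply by $2\eta\gamma \nrm{F_t(x_t) - F^\star(x_t)}_F$, apply $\indic\crl{a \geq b} \cdot a \leq a^2/b$ with $a = \nrm{F_t(x_t) - F^\star(x_t)}_F$ and $b = \epsilon/(2\eta\gamma)$, and perform a Cauchy--Schwarz step across the $M$ experts that converts the joint Frobenius-norm sum into a $\sqrt{M}$-factored combination of the per-expert square-loss regrets. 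Taking the minimum of the two bounds yields the advertised $\min\crl{\nicefrac{1}{\epsilon^2}, \nicefrac{\eta\sqrt{M}}{\epsilon}}$ factor.

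The query complexity analysis then mirrors \pref{thm:il_eluder_multiple}: every queried round charges some expert $m$ with $\Delta^m_t$ bounded below by a constant multiple of $\epsilon$, and the scale-sensitive eluder dimension bounds the number of such rounds per expert by $\wt{\cO}(\LsExpected{\ls_\phi}{\cF\up{m}; T}\cdot \eluderS(\cF\up{m}, \nicefrac{\epsilon}{3}; \fstarm)/\epsilon^2)$; summing over $m$ gives the stated bound. The main technical obstacle is the second regret branch: extracting the $\sqrt{M}$ rather than the naive $M$ factor requires carefully interleaving the Cauchy--Schwarz step over experts with the $\indic\crl{a \geq b} \cdot a \leq a^2/b$ reweighting in the right order, so that the resulting sum is controlled by the sum of per-expert online regression regrets and not by a uniform-over-$m$ maximum. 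Everything else is bookkeeping directly parallel to the single-expert analysis of \pref{thm:main_ss_eluder}.
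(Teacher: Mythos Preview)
Your high-level plan has the right structure, but there is a mismatch in the margin notion that prevents it from proving the theorem as stated, and the $\sqrt{M}$ step is not the one the paper uses.

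The theorem's $T_\epsilon$ (defined in \pref{eq:SS_margin_multiple_defn} via the $\Que$ function of \pref{eq:T_defn}) counts rounds at which some column-wise $\epsilon$-perturbation of $\effstar(x_t)$ can flip the aggregated action. Your regret decomposition instead splits on the margin of the \emph{aggregated} prediction $\aggr{\phi(\effstar(x_t))}$, which is a different quantity and is neither implied by nor implies the $\Que$-based condition at the same scale. With the $\Que$-based condition the implication on queried-wrong rounds is: $\pit(x_t)\neq\pistar(x_t)$ and $\Que(\effstar(x_t),\epsilon\ones)=0$ force $\nrm{f_t^m(x_t)-\fstarm(x_t)}>\epsilon$ for \emph{some} $m$. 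Your route via Lipschitzness instead produces $\nrm{F_t(x_t)-\effstar(x_t)}_F\geq\epsilon/(2\eta\gamma)$, which after $\indic\crl{a\geq b}\leq a^2/b^2$ yields an $\eta^2\gamma^2/\epsilon^2$ factor rather than the theorem's $1/\epsilon^2$. Since your query-complexity argument (correctly) follows the $\Que$-based analysis from \pref{thm:il_eluder_multiple}, you would end up with two incompatible $T_\epsilon$'s in the regret and query bounds.

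For the $\eta\sqrt{M}/\epsilon$ branch, the paper does not use Cauchy--Schwarz over experts. Starting from the per-round contribution $2\eta\gamma Z_t\,\indic\crl{\exists m:\nrm{f_t^m-\fstarm}>\epsilon}\cdot\nrm{F_t-\effstar}_F$, it defines $\cM_t=\crl{m:\nrm{f_t^m-\fstarm}>\epsilon}$ and splits
\[
\nrm{F_t-\effstar}_F\leq\sqrt{M}\,\epsilon+\sum_{m\in\cM_t}\nrm{f_t^m-\fstarm}.
\]
The $\sqrt{M}\epsilon$ piece is traded against the existence indicator via $\indic\crl{a\geq b}\leq a^2/b^2$ summed over $m$, and the large-expert piece via $\indic\crl{a\geq b}\cdot a\leq a^2/b$ per $m\in\cM_t$, yielding $4\eta\gamma\sqrt{M}\sum_m Z_t\nrm{f_t^m-\fstarm}^2/\epsilon$. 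Your proposed step --- apply $\indic\crl{a\geq b}\cdot a\leq a^2/b$ with $a=\nrm{F_t-\effstar}_F$ and then Cauchy--Schwarz --- does not reproduce this: once the Frobenius norm is squared it already equals $\sum_m\nrm{f_t^m-\fstarm}^2$, and no $\sqrt{M}$ arises from any Cauchy--Schwarz over experts.
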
 

The proof is deferred to \pref{app:ss_multiexpert_main}. Extending the above  improvement to the Imitation Learning (\(H > 1\)) setting is an interesting direction for future research.

\arxiv{
\section{Experiments} \label{sec:experiment} 
We conduct experiments to verify our theory. To this end, we first introduce the simulator, \textit{Cart Pole}~\citep{barto1983neuronlike,brockman2016openai}, and then explain the implementation of our algorithm and the baselines. Finally, we present the results. 

\paragraph{Cart Pole.} 
Cart Pole is a classical control problem, in which a pole is attached by an un-actuated joint to a cart. The goal is to balance the pole by applying force to the cart either towards the left or towards the right (so binary action). The episode is terminated once either the pole is out of balance or the cart deviates too far from the origin. A reward of 1 is obtained in each time step (however, the algorithm does not get any reward signal). The observations are four-dimensional, with the values representing the cart's position, velocity, the pole's angle, and angular velocity. The action is binary, indicating the force is either to the left or to the right. 

\paragraph{Expert policies generation.}
We first generate an optimal policy $\pistar$ (that attains the maximum possible reward of 500) by policy gradient. We notice that when running the optimal policy $\pistar$, the absolute value of the cart's position only lies in $[0,2]$. Hence, to generate $M$ experts, we first divide this interval into $M$ sub-intervals $[a_0,a_1]$,$[a_1,a_2]$,$\dots$,$[a_{n-1},a_M]$ ($a_0=0$ and $a_M=2$) by geometric progression. For the $i$-th expert, it plays the same action as $\pistar$ when the absolute value of the cart's position is in the interval $[a_{i-1},a_i]$ and plays uniformly at random outside of this interval. We find that using such generation, each expert individually cannot achieve a good performance (when $M>1$), while a proper combination of them can still be as strong as $\pistar$. %
We conduct experiment for $M=1,2,3,$ and $5$, respectively.
Given this design of expert generation, when the cart is in the sub-interval $[a_{i-1},a_i]$, the only expert with non-zero margin is exactly the $i$-th expert. %

\paragraph{Implementation.}
The algorithm is similar to \pref{alg:il_eluder_multiple} but with some modification for practical purpose. First, we use a neural network (single hidden layer neural network,with 4 neurons in the hidden layer) as our function class \(\crl{\cF\up{m}_h}_{h \leq H, m \leq M}\). Second, we specify $\SelectActionName{}$ to pick the action of the most confident expert, i.e.,
\[
\SelectAction{f\up{1}_{t, h}(x), \dots, f\up{M}_{t, h}(x)}\ldef{}\sign(f\up{\hat{i}}_{t, h}(x)) \text{\quad where\quad} \hat{i}=\mathop{\arg\max}_{i\in[M]} |f\up{i}_{t, h}(x)|. 
\]
Since we are considering binary action, we assume $f\up{i}_{t, h}(x)\in[-1,1]$, and the action space is $\{-1,1\}$. 
Third, to compute $\Delta\up{m}_{t, h}$ efficiently, we apply the Lagrange multiplier to \pref{eq:il_multiexpert_delta} to arrive at the following equivalent problem: 
\begin{align*} 
\Delta\up{m}_{t, h}(x_{t, h}) &\ldef{} \min_{f \in \cF\up{m}_h}\max_{\alpha\ge0}  ~ -\nrm{f(x_{t, h}) - f\up{m}_{t, h}(x_{t, h})} \\ 
&\hspace{1in} +\alpha \left(\sum_{s=1}^{t - 1} Z_{s, h} \nrm*{f(x_{s, h}) - f\up{m}_{s, h}(x_{s, h})}^2 -\SquareLossHPRLP\right).  
\end{align*}
Then we treat the Lagrange multiplier $\alpha$ as a constant, which converts the problem into the following:
\begin{equation}\label{eq:delta-rl}
\Delta\up{m}_{t, h}(x_{t, h}) \ldef{} \min_{f \in \cF\up{m}_h} ~ -\nrm{f(x_{t, h}) - f\up{m}_{t, h}(x_{t, h})} +\alpha \sum_{s=1}^{t - 1} Z_{s, h} \nrm*{f(x_{s, h}) - f\up{m}_{s, h}(x_{s, h})}^2.
\end{equation}
The study of varying $\alpha$ is shown in \pref{fig:alpha}. We found that small values (e.g., $\alpha=1$) mostly lead to poor performance, while the results are fairly similar for large values. In our key experiments, we choose $\alpha=50$ when the number of experts is 1, 2 or 3, and choose 200 for 5-expert experiments. We note that since
computing \pref{eq:delta-rl} for each time step involves repetitively fitting neural networks, which is  time-consuming, we do a warm start at each round. In particular, we set the initial weights for the neural network of each round to be the weights of the trained network from the previous round. %
We also implemented \textit{early stopping} that stops the iteration if the loss does not significantly decrease for multiple consecutive iterations. The online regression oracle $\oracle$ 
is instantiated as applying gradient descent for certain steps on the mean squared loss over all data collected so far, using warm start for speedup as well. 

\begin{figure}[htb]
\begin{center}
\centerline{\includegraphics[width=\columnwidth]{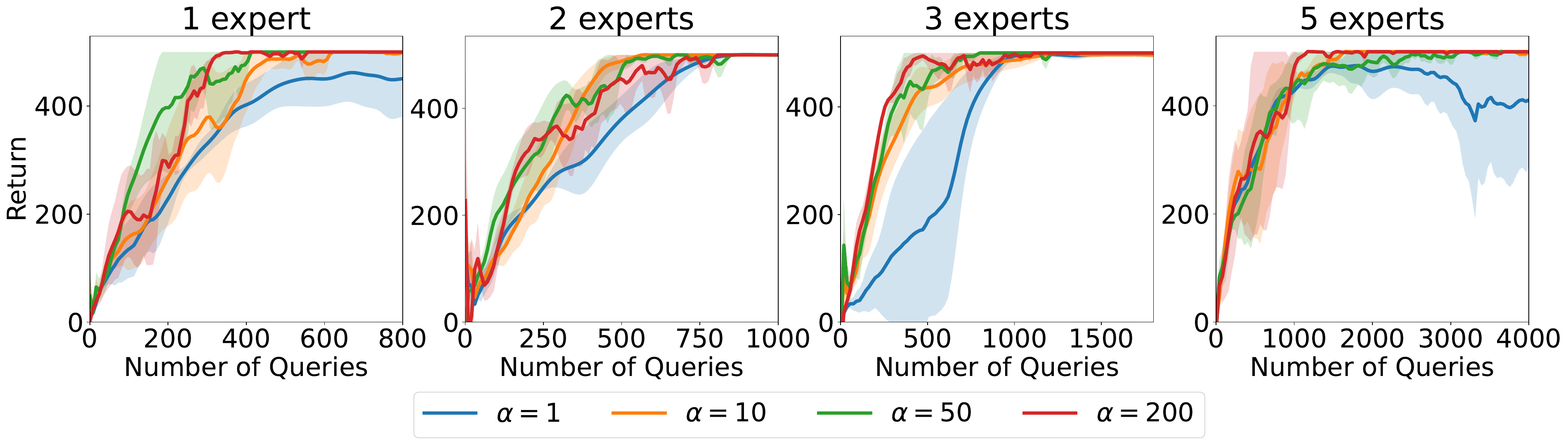}}
\vspace{-10pt}
\caption{Learning curves of return with respect to the number of queries for different values of $\alpha$ and different numbers of experts.}
\label{fig:alpha}
\end{center}
\end{figure}

We first conduct experiments on a single expert setting. In \pref{fig:ret-vs-q} we plot the curves of return and number of queries with respect to iterations for our method, and compare to DAgger (which passively makes queries at every time step; \cite{ross2014reinforcement}). We note that while our algorithm does not converge to the optimal value as fast as DAgger, the number of queries made by our algorithm is significantly fewer, which means that our method is indeed balancing the speed of learning and the number of queries. 

\begin{figure}[htb] 
\begin{center}
\centerline{\includegraphics[width=0.6\columnwidth]{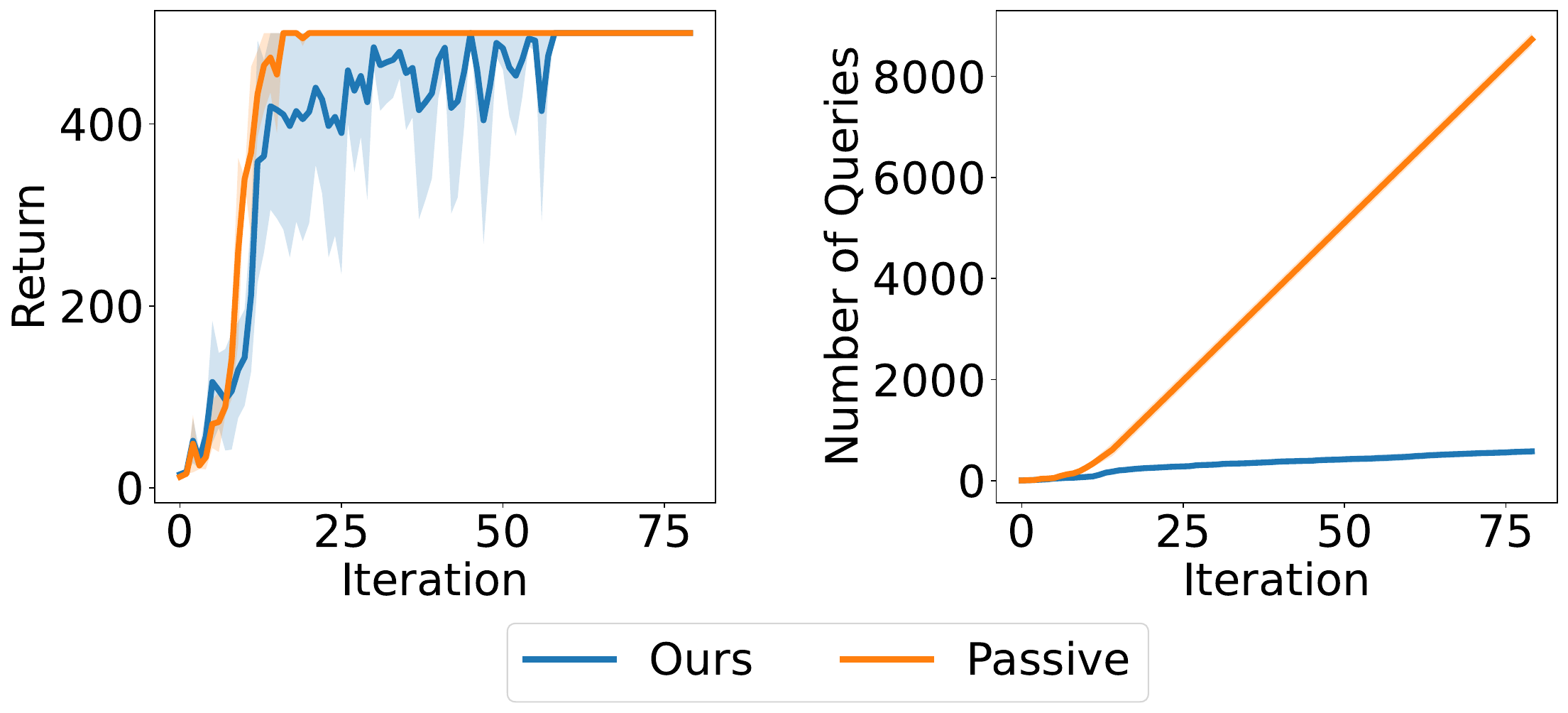}}
\vspace{-10pt} 
\caption{Learning curves of the return and the number of queries for 1 expert.}
\label{fig:ret-vs-q}
\end{center}
\end{figure}

 In additional to DAgger, we also compare to the following baselines:  %
\begin{enumerate}[label=\(\bullet\)]  
	\item \textbf{Passive learning.} By passive learning, we mean running our algorithms with $Z_{t,h} = 1$, i.e., making queries whenever possible. Based on different styles of expert feedback, we divide the passive learning baselines into two: \textit{noisy experts} and \textit{noiseless experts}. For the former we get the noisy label $y^m_{t,h}$ for $x_{t,h}$ (generated by $y^m_{t,h} \sim \phi(\fstarmh(x_{t, h}))$), and for the latter we directly get the action of the optimal policy (i.e.~the action \(\pistarh(x_{t, h})\)). Intuitively, noiseless feedback is more helpful than the noisy one. 
	\item \textbf{MAMBA.} We compare our algorithm with (a slight variant of) MAMBA~\citep{cheng2020policy}. At each time step, it creates copies of the environment and run each expert policy on these copies, and then it selects the action of the expert policy with the highest return. For simplicity, we refer to this algorithm as MAMBA. Note that MAMBA assumes that one has access to the underlying reward function. Thus this baseline is using significantly more information than our approach. 
	\item \textbf{Best expert.} We also compared our algorithm with the best expert policy.
\end{enumerate}

\begin{figure}[htb]
\begin{center}
\centerline{\includegraphics[width=\columnwidth]{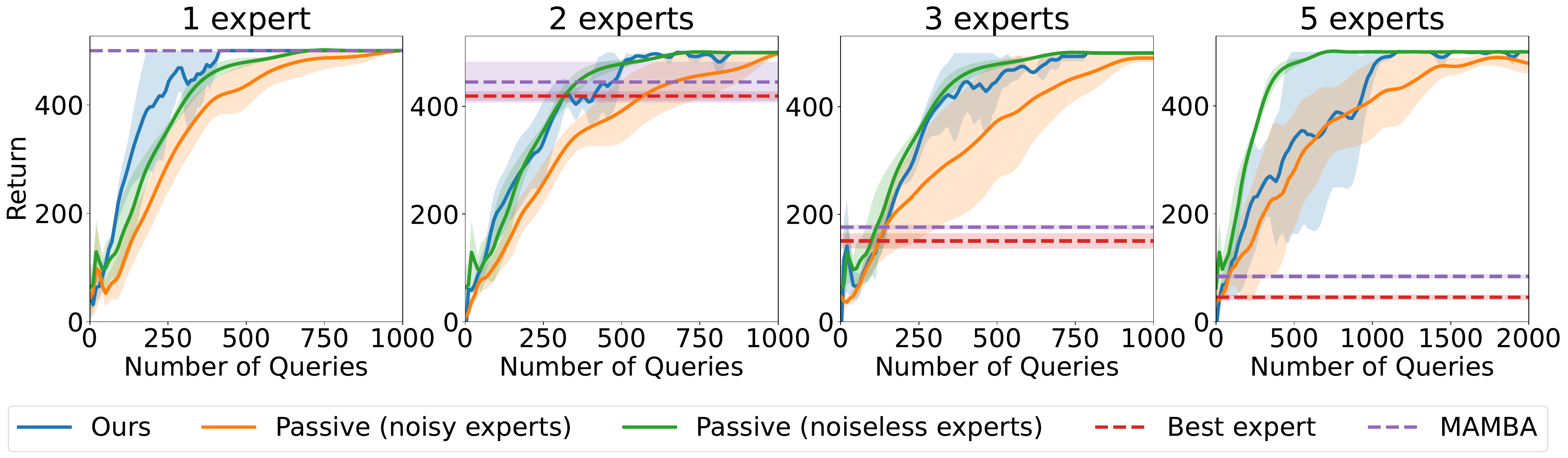}} 
\vspace{-10pt} 
\caption{Learning curves of return with respect to the number of queries for different algorithms and numbers of experts.}
\label{fig:main}
\end{center}
\end{figure} 

The main results are shown in \pref{fig:main}. We first noticed that our algorithm outperforms passive learning with noisy experts in all settings. Moreover, we beat the noiseless version when there is only one expert. Intuitively, getting feedback from noiseless experts is a very strong assumption and it is not surprising to see that the performance is improved with this stronger feedback. Note that our algorithm is only getting noisy labels as feedback.  %
We also note that, despite the fact that  MAMBA achieves better results than the best expert policy (in terms of the value function), it is still worse than our algorithm. Indeed, MAMBA does not even learn a policy that can solve the task when $M \geq 2$. This is because by our construction of experts, there is no single expert that is capable of solving the task alone.  Note that MAMBA performs well in the one expert case because in that case, the (single) expert can reliably solve the control task.

} 

\section{Conclusion}  
In this paper, or goal is to develop algorithms for online IL with active queries with small regret and query complexity bounds. Towards that end, we started by considering the selective sampling setting (IL with \(H = 1\)), and provided a selective sampling algorithm that can work with general function classes \(\cF\) and modeling assumptions, and relies on access to an online regression oracle w.r.t.~\(\cF\) to make its predictions (\pref{sec:ss_main}). The provided regret and query complexity bounds depend on the margin of the expert model. We then extended our selective sampling algorithm to interactive IL (\pref{sec:main_il}). For IL, we showed that the margin term that appears in the regret and the query complexity depends on the margin of the expert on counterfactual trajectories that would have been observed on following the expert policy (that we wish to compare to), instead of the trajectories that the learner observes. Thus, if the expert always chooses actions that leads to states where it is confident (i.e.~has less margin), the margin term will be smaller. We also considered extensions to bandit feedback, and learning with multiple experts. 

We conclude with a discussion of future research directions: 

\begin{enumerate}[label=\(\bullet\)]

\item \textit{Computationally efficient algorithms:} The algorithms that we considered in this paper are not computationally efficient beyond simple function classes (linear functions). In particular, our algorithms need to perform minimization over \(\cF\) which could be NP-hard when \(\cF\) is non-convex. Furthermore, even when minimization over \(\cF\) is tractable, our algorithms need to maintain a version space of feasible functions in \(\cF\) (e.g.~in \pref{eq:ss_main_eluder_delta}  or \pref{eq:ss_main_bandit_delta}) in order to compute the query condition, which is also intractable without strong assumptions on \(\cF\). 

\item \textit{Learning via offline regression oracles}: The algorithms that we considered in this paper rely on access to \textit{online} regression oracles w.r.t.~the underlying function class. While we have rigorous understanding of online algorithms for various classical settings e.g. linear functions, etc., provably scaling these algorithms for more complex function classes used in practice, e.g.~neural networks, is still an active area of research. On the other hand, the theory and practice of offline regression w.r.t.~these complex function classes is much more developed. Towards that end, it would be interesting to explore if our algorithms can be generalized to work with offline regression oracles. A potential approach is to rely on the epoching trick used in \pref{alg:ss_DIS_main} and fitting a fresh model \(\wh f_e\) via offline regression at the beginning of each epoch, and then using it to predict \(\wh y\) for all time steps in epoch \(e\). However, this would result in a dependence on the eluder dimension in both the regret and the query complexity. Improving this is an interesting future research direction. 

\item \textit{IL with bandit feedback:} In \pref{sec:main_bandit_feedback}, we considered selective sampling with bandit feedback---where the learner only receives feedback on whether its chosen action matches the action of the noisy expert. Extending the framework of learning with bandit feedback to imitation learning, and for multiple experts, is an interesting direction for future research. Practically speaking, IL with bandit feedback has tremendous applications from online advertising to robotics. 

\item \textit{Extension to unknown \(T\):} The algorithms rely on the knowledge of \(T\) to set the query condition and the constraint set. Extending our algorithms to operate without a-priori knowledge of \(T\), e.g. by extending the standard doubling trick in interactive learning for our setting,  is an interesting technical direction. %
\end{enumerate}

\arxiv{
\subsubsection*{Acknowledgements} 
AS thanks Sasha Rakhlin and Dylan Foster for helpful discussions. AS acknowledges support from the Simons Foundation and NSF through award DMS-2031883, as well as from the DOE through award DE-SC0022199. WS acknowledges support from NSF grant IIS-2154711. KS acknowledges support from NSF CAREER Award 1750575, and LinkedIn-Cornell grant.} 

\bibliography{refs.bib} 

\begin{thebibliography}{49}
\providecommand{\natexlab}[1]{#1}
\providecommand{\url}[1]{\texttt{#1}}
\expandafter\ifx\csname urlstyle\endcsname\relax
  \providecommand{\doi}[1]{doi: #1}\else
  \providecommand{\doi}{doi: \begingroup \urlstyle{rm}\Url}\fi

\bibitem[Abe and Long(1999)]{abe1999associative}
Naoki Abe and Philip~M Long.
\newblock Associative reinforcement learning using linear probabilistic
  concepts.
\newblock In \emph{ICML}, pages 3--11. Citeseer, 1999.

\bibitem[Agarwal(2013)]{agarwal2013selective}
Alekh Agarwal.
\newblock Selective sampling algorithms for cost-sensitive multiclass
  prediction.
\newblock In \emph{International Conference on Machine Learning}, pages
  1220--1228. PMLR, 2013.

\bibitem[Agarwal et~al.(2010)Agarwal, Dekel, and Xiao]{agarwal2010optimal}
Alekh Agarwal, Ofer Dekel, and Lin Xiao.
\newblock Optimal algorithms for online convex optimization with multi-point
  bandit feedback.
\newblock In \emph{Colt}, pages 28--40. Citeseer, 2010.

\bibitem[Barto et~al.(1983)Barto, Sutton, and Anderson]{barto1983neuronlike}
Andrew~G Barto, Richard~S Sutton, and Charles~W Anderson.
\newblock Neuronlike adaptive elements that can solve difficult learning
  control problems.
\newblock \emph{IEEE transactions on systems, man, and cybernetics}, \penalty0
  (5):\penalty0 834--846, 1983.

\bibitem[Beliaev et~al.(2022)Beliaev, Shih, Ermon, Sadigh, and
  Pedarsani]{beliaev2022imitation}
Mark Beliaev, Andy Shih, Stefano Ermon, Dorsa Sadigh, and Ramtin Pedarsani.
\newblock Imitation learning by estimating expertise of demonstrators.
\newblock In \emph{International Conference on Machine Learning}, pages
  1732--1748. PMLR, 2022.

\bibitem[Brantley et~al.(2019)Brantley, Sun, and
  Henaff]{brantley2019disagreement}
Kiante Brantley, Wen Sun, and Mikael Henaff.
\newblock Disagreement-regularized imitation learning.
\newblock In \emph{International Conference on Learning Representations}, 2019.

\bibitem[Brantley et~al.(2020)Brantley, Sharaf, and
  Daum{\'e}~III]{brantley2020active}
Kiant{\'e} Brantley, Amr Sharaf, and Hal Daum{\'e}~III.
\newblock Active imitation learning with noisy guidance.
\newblock \emph{arXiv preprint arXiv:2005.12801}, 2020.

\bibitem[Brockman et~al.(2016)Brockman, Cheung, Pettersson, Schneider,
  Schulman, Tang, and Zaremba]{brockman2016openai}
Greg Brockman, Vicki Cheung, Ludwig Pettersson, Jonas Schneider, John Schulman,
  Jie Tang, and Wojciech Zaremba.
\newblock Openai gym.
\newblock \emph{arXiv preprint arXiv:1606.01540}, 2016.

\bibitem[Cao et~al.(2022)Cao, Wang, and Sadigh]{cao2022learning}
Zhangjie Cao, Zihan Wang, and Dorsa Sadigh.
\newblock Learning from imperfect demonstrations via adversarial confidence
  transfer.
\newblock In \emph{2022 International Conference on Robotics and Automation
  (ICRA)}, pages 441--447. IEEE, 2022.

\bibitem[Cesa-Bianchi and Lugosi(2006)]{cesa2006prediction}
Nicolo Cesa-Bianchi and G{\'a}bor Lugosi.
\newblock \emph{Prediction, learning, and games}.
\newblock Cambridge university press, 2006.

\bibitem[Cesa-Bianchi et~al.(2005)Cesa-Bianchi, Lugosi, and
  Stoltz]{cesa2005minimizing}
Nicolo Cesa-Bianchi, G{\'a}bor Lugosi, and Gilles Stoltz.
\newblock Minimizing regret with label efficient prediction.
\newblock \emph{IEEE Transactions on Information Theory}, 51\penalty0
  (6):\penalty0 2152--2162, 2005.

\bibitem[Chang et~al.(2015)Chang, Krishnamurthy, Agarwal, Daum{\'e}~III, and
  Langford]{chang2015learning}
Kai-Wei Chang, Akshay Krishnamurthy, Alekh Agarwal, Hal Daum{\'e}~III, and John
  Langford.
\newblock Learning to search better than your teacher.
\newblock In \emph{International Conference on Machine Learning}, pages
  2058--2066. PMLR, 2015.

\bibitem[Cheng and Boots(2018)]{cheng2018convergence}
Ching-An Cheng and Byron Boots.
\newblock Convergence of value aggregation for imitation learning.
\newblock In \emph{International Conference on Artificial Intelligence and
  Statistics}, pages 1801--1809. PMLR, 2018.

\bibitem[Cheng et~al.(2020)Cheng, Kolobov, and Agarwal]{cheng2020policy}
Ching-An Cheng, Andrey Kolobov, and Alekh Agarwal.
\newblock Policy improvement via imitation of multiple oracles.
\newblock \emph{Advances in Neural Information Processing Systems},
  33:\penalty0 5587--5598, 2020.

\bibitem[Dekel et~al.(2012)Dekel, Gentile, and Sridharan]{dekel2012selective}
Ofer Dekel, Claudio Gentile, and Karthik Sridharan.
\newblock Selective sampling and active learning from single and multiple
  {{experts}}.
\newblock \emph{The Journal of Machine Learning Research}, 13\penalty0
  (1):\penalty0 2655--2697, 2012.

\bibitem[Du et~al.(2023)Du, Nair, Sadigh, and Finn]{du2023behavior}
Maximilian Du, Suraj Nair, Dorsa Sadigh, and Chelsea Finn.
\newblock Behavior retrieval: Few-shot imitation learning by querying unlabeled
  datasets.
\newblock \emph{arXiv preprint arXiv:2304.08742}, 2023.

\bibitem[Foster and Rakhlin(2020)]{foster2020beyond}
Dylan Foster and Alexander Rakhlin.
\newblock Beyond ucb: Optimal and efficient contextual bandits with regression
  oracles.
\newblock In \emph{International Conference on Machine Learning}, pages
  3199--3210. PMLR, 2020.

\bibitem[Foster et~al.(2018{\natexlab{a}})Foster, Agarwal, Dud{\'\i}k, Luo, and
  Schapire]{foster2018practical}
Dylan Foster, Alekh Agarwal, Miroslav Dud{\'\i}k, Haipeng Luo, and Robert
  Schapire.
\newblock Practical contextual bandits with regression oracles.
\newblock In \emph{International Conference on Machine Learning}, pages
  1539--1548. PMLR, 2018{\natexlab{a}}.

\bibitem[Foster et~al.(2018{\natexlab{b}})Foster, Kale, Luo, Mohri, and
  Sridharan]{foster2018logistic}
Dylan~J Foster, Satyen Kale, Haipeng Luo, Mehryar Mohri, and Karthik Sridharan.
\newblock Logistic regression: The importance of being improper.
\newblock In \emph{Conference On Learning Theory}, pages 167--208. PMLR,
  2018{\natexlab{b}}.

\bibitem[Foster et~al.(2020)Foster, Rakhlin, Simchi-Levi, and
  Xu]{foster2020instance}
Dylan~J Foster, Alexander Rakhlin, David Simchi-Levi, and Yunzong Xu.
\newblock Instance-dependent complexity of contextual bandits and reinforcement
  learning: A disagreement-based perspective.
\newblock \emph{arXiv preprint arXiv:2010.03104}, 2020.

\bibitem[Hanneke and Yang(2015)]{hanneke2015minimax}
Steve Hanneke and Liu Yang.
\newblock Minimax analysis of active learning.
\newblock \emph{J. Mach. Learn. Res.}, 16\penalty0 (1):\penalty0 3487--3602,
  2015.

\bibitem[Hanneke and Yang(2021)]{hanneke2021toward}
Steve Hanneke and Liu Yang.
\newblock Toward a general theory of online selective sampling: Trading off
  mistakes and queries.
\newblock In \emph{International Conference on Artificial Intelligence and
  Statistics}, pages 3997--4005. PMLR, 2021.

\bibitem[Hao et~al.(2022)Hao, Wang, Cao, Wang, Cui, and Sadigh]{hao2022masked}
Yilun Hao, Ruinan Wang, Zhangjie Cao, Zihan Wang, Yuchen Cui, and Dorsa Sadigh.
\newblock Masked imitation learning: Discovering environment-invariant
  modalities in multimodal demonstrations.
\newblock \emph{arXiv preprint arXiv:2209.07682}, 2022.

\bibitem[Hejna and Sadigh(2023)]{hejna2023inverse}
Joey Hejna and Dorsa Sadigh.
\newblock Inverse preference learning: Preference-based rl without a reward
  function.
\newblock \emph{arXiv preprint arXiv:2305.15363}, 2023.

\bibitem[Kakade and Langford(2002)]{kakade2002approximately}
Sham Kakade and John Langford.
\newblock Approximately optimal approximate reinforcement learning.
\newblock In \emph{In Proc. 19th International Conference on Machine Learning}.
  Citeseer, 2002.

\bibitem[Krishnamurthy et~al.(2017)Krishnamurthy, Agarwal, Huang,
  Daum{\'e}~III, and Langford]{krishnamurthy2017active}
Akshay Krishnamurthy, Alekh Agarwal, Tzu-Kuo Huang, Hal Daum{\'e}~III, and John
  Langford.
\newblock Active learning for cost-sensitive classification.
\newblock In \emph{International Conference on Machine Learning}, pages
  1915--1924. PMLR, 2017.

\bibitem[Langford and Zhang(2007)]{langford2007epoch}
John Langford and Tong Zhang.
\newblock The epoch-greedy algorithm for multi-armed bandits with side
  information.
\newblock \emph{Advances in neural information processing systems}, 20, 2007.

\bibitem[Laskey et~al.(2016)Laskey, Staszak, Hsieh, Mahler, Pokorny, Dragan,
  and Goldberg]{laskey2016shiv}
Michael Laskey, Sam Staszak, Wesley Yu-Shu Hsieh, Jeffrey Mahler, Florian~T
  Pokorny, Anca~D Dragan, and Ken Goldberg.
\newblock Shiv: Reducing supervisor burden in dagger using support vectors for
  efficient learning from demonstrations in high dimensional state spaces.
\newblock In \emph{2016 IEEE International Conference on Robotics and
  Automation (ICRA)}, pages 462--469. IEEE, 2016.

\bibitem[Lattimore and Szepesv{\'a}ri(2020)]{lattimore2020bandit}
Tor Lattimore and Csaba Szepesv{\'a}ri.
\newblock \emph{Bandit algorithms}.
\newblock Cambridge University Press, 2020.

\bibitem[Levine et~al.(2020)Levine, Kumar, Tucker, and Fu]{levine2020offline}
Sergey Levine, Aviral Kumar, George Tucker, and Justin Fu.
\newblock Offline reinforcement learning: Tutorial, review, and perspectives on
  open problems.
\newblock \emph{arXiv preprint arXiv:2005.01643}, 2020.

\bibitem[Li et~al.(2022)Li, Kamath, Foster, and Srebro]{li2022understanding}
Gene Li, Pritish Kamath, Dylan~J Foster, and Nati Srebro.
\newblock Understanding the eluder dimension.
\newblock \emph{Advances in Neural Information Processing Systems},
  35:\penalty0 23737--23750, 2022.

\bibitem[Mendelson(2002)]{mendelson2002rademacher}
Shahar Mendelson.
\newblock Rademacher averages and phase transitions in glivenko-cantelli
  classes.
\newblock \emph{IEEE transactions on Information Theory}, 48\penalty0
  (1):\penalty0 251--263, 2002.

\bibitem[Mou et~al.(2020)Mou, Wen, and Chen]{mou2020sample}
Wenlong Mou, Zheng Wen, and Xi~Chen.
\newblock On the sample complexity of reinforcement learning with policy space
  generalization.
\newblock \emph{arXiv preprint arXiv:2008.07353}, 2020.

\bibitem[Nguyen and Daum{\'e}~III(2020)]{nguyen2020active}
Khanh Nguyen and Hal Daum{\'e}~III.
\newblock Active imitation learning from multiple non-deterministic teachers:
  Formulation, challenges, and algorithms.
\newblock \emph{arXiv preprint arXiv:2006.07777}, 2020.

\bibitem[Osband and Van~Roy(2014)]{osband2014model}
Ian Osband and Benjamin Van~Roy.
\newblock Model-based reinforcement learning and the eluder dimension.
\newblock \emph{Advances in Neural Information Processing Systems}, 27, 2014.

\bibitem[Pan et~al.(2018)Pan, Cheng, Saigol, Lee, Yan, Theodorou, and
  Boots]{pan2018agile}
Yunpeng Pan, Ching-An Cheng, Kamil Saigol, Keuntak Lee, Xinyan Yan, Evangelos
  Theodorou, and Byron Boots.
\newblock Agile autonomous driving using end-to-end deep imitation learning.
\newblock In \emph{Robotics: science and systems}, 2018.

\bibitem[Rakhlin and Sridharan(2014)]{rakhlin2014online}
Alexander Rakhlin and Karthik Sridharan.
\newblock Online non-parametric regression.
\newblock In \emph{Conference on Learning Theory}, pages 1232--1264. PMLR,
  2014.

\bibitem[Rakhlin and Sridharan(2015)]{rakhlin2015online}
Alexander Rakhlin and Karthik Sridharan.
\newblock Online nonparametric regression with general loss functions.
\newblock \emph{arXiv preprint arXiv:1501.06598}, 2015.

\bibitem[Ross and Bagnell(2010)]{ross2010efficient}
St{\'e}phane Ross and Drew Bagnell.
\newblock Efficient reductions for imitation learning.
\newblock In \emph{Proceedings of the thirteenth international conference on
  artificial intelligence and statistics}, pages 661--668. JMLR Workshop and
  Conference Proceedings, 2010.

\bibitem[Ross and Bagnell(2014)]{ross2014reinforcement}
Stephane Ross and J~Andrew Bagnell.
\newblock Reinforcement and imitation learning via interactive no-regret
  learning.
\newblock \emph{arXiv preprint arXiv:1406.5979}, 2014.

\bibitem[Ross et~al.(2011)Ross, Gordon, and Bagnell]{ross2011reduction}
St{\'e}phane Ross, Geoffrey Gordon, and Drew Bagnell.
\newblock A reduction of imitation learning and structured prediction to
  no-regret online learning.
\newblock In \emph{Proceedings of the fourteenth international conference on
  artificial intelligence and statistics}, pages 627--635. JMLR Workshop and
  Conference Proceedings, 2011.

\bibitem[Ross et~al.(2013)Ross, Melik-Barkhudarov, Shankar, Wendel, Dey,
  Bagnell, and Hebert]{ross2013learning}
St{\'e}phane Ross, Narek Melik-Barkhudarov, Kumar~Shaurya Shankar, Andreas
  Wendel, Debadeepta Dey, J~Andrew Bagnell, and Martial Hebert.
\newblock Learning monocular reactive uav control in cluttered natural
  environments.
\newblock In \emph{2013 IEEE international conference on robotics and
  automation}, pages 1765--1772. IEEE, 2013.

\bibitem[Russo and Van~Roy(2013)]{russo2013eluder}
Daniel Russo and Benjamin Van~Roy.
\newblock eluder dimension and the sample complexity of optimistic exploration.
\newblock \emph{Advances in Neural Information Processing Systems}, 26, 2013.

\bibitem[Shalev-Shwartz and Ben-David(2014)]{shalev2014understanding}
Shai Shalev-Shwartz and Shai Ben-David.
\newblock \emph{Understanding machine learning: From theory to algorithms}.
\newblock Cambridge university press, 2014.

\bibitem[Srebro et~al.(2010)Srebro, Sridharan, and
  Tewari]{srebro2010smoothness}
Nathan Srebro, Karthik Sridharan, and Ambuj Tewari.
\newblock Smoothness, low noise and fast rates.
\newblock \emph{Advances in neural information processing systems}, 23, 2010.

\bibitem[Sun et~al.(2017)Sun, Venkatraman, Gordon, Boots, and
  Bagnell]{sun2017deeply}
Wen Sun, Arun Venkatraman, Geoffrey~J Gordon, Byron Boots, and J~Andrew
  Bagnell.
\newblock Deeply aggrevated: Differentiable imitation learning for sequential
  prediction.
\newblock In \emph{International conference on machine learning}, pages
  3309--3318. PMLR, 2017.

\bibitem[Torabi et~al.(2018)Torabi, Warnell, and Stone]{torabi2018behavioral}
Faraz Torabi, Garrett Warnell, and Peter Stone.
\newblock Behavioral cloning from observation.
\newblock \emph{arXiv preprint arXiv:1805.01954}, 2018.

\bibitem[Tsybakov(2004)]{tsybakov2004optimal}
Alexander~B Tsybakov.
\newblock Optimal aggregation of classifiers in statistical learning.
\newblock \emph{The Annals of Statistics}, 32\penalty0 (1):\penalty0 135--166,
  2004.

\bibitem[Zhu and Nowak(2022)]{zhu2022efficient}
Yinglun Zhu and Robert Nowak.
\newblock Efficient active learning with abstention.
\newblock \emph{arXiv preprint arXiv:2204.00043}, 2022.

\end{thebibliography}

\clearpage 

\newpage 
\appendix 
\setlength{\parindent}{0pt} 

\renewcommand{\contentsname}{Contents of Appendix}
\tableofcontents  
\addtocontents{toc}{\protect\setcounter{tocdepth}{3}} 

\clearpage 

\neurIPS{
\section{Experiments} \label{app:experiment} 

} 

\section{Further Discussion on Related Works}
\paragraph{Selective Sampling.} There is a large bank of both theoretical and empirical work for active learning and selective sampling. Perhaps the work closest to ours is the work of \citet{zhu2022efficient}. In this paper, the authors consider binary classification problem and provide bounds on number of queries and bound on excess risk in the active learning framework. Their algorithm also relies on regression oracle. However, there are many key differences: Firstly, their guarantees for regret for selective sampling problem (see for instance Theorem 10 on page 28 of \citet{zhu2022efficient}) has a dependence on disagreement coefficient in the regret bound as well as number of queries. On the contrary, as we show in our work, one only needs to pay for eluder dimension or disagreement coefficient in query complexity and not in regret bound. Furthermore, we supplement our result with lower bound showing that unless one has label complexity that depends on star number (and hence can be also related to worst case disagreement coefficient), one can not get a small enough regret bound. So the separation between regret bound (that is independent of eluder dimension/star number/disagreement coefficient) and query complexity (that depends on those quantities) is real. Secondly, the results in \citep{zhu2022efficient} dont automatically adapt to the margin region and in general there is no way to estimate the parameters of Tsybakov's noise condition. Finally, their regret bounds depend on pseudo dimension and are thus generally suboptimal for complex $\cF$. 

\paragraph{Imitation Learning.}  IL has enjoyed tremendous research from both theoretical and empirical perspective in the last decade; notable references include \citet{ross2011reduction,ross2014reinforcement,sun2017deeply, chang2015learning, brantley2019disagreement, brantley2020active, nguyen2020active}. \citet{ross2011reduction} initiated research on using online regression oracles to model the expert feedback, and provided regret bounds for IL. The key differences between our work and prior theoretical works on IL are as follows: Firstly, we consider active querying, and provide query complexity bounds for our algorithms. Secondly, and more importantly, we consider interactive IL with noisy expert feedback whereas prior works was restricted to exact expert feedback. Finally, our regret and query complexity bounds scale with the number of times when the comparator policy (induced by the expert) goes to the states where expert has a small margin (instead of the number of times when the learner goes to such states). In many cases, the margin error term corresponding to the comparator policy could be much smaller. On the empirical side, there is a long line of research that provided algorithms and empirical heuristics for making IL sample efficient by modeling the experts in both single expert and multiple expert settings \citep{beliaev2022imitation, cao2022learning, hejna2023inverse, du2023behavior, hao2022masked}; however most of these algorithm do not come with any rigorous guarantees.

\section{Useful Tools and Notation}
\paragraph{Additional notation.} Throughout the paper, we assume that the ties are broken arbitrarily but consistently. Vector-valued variables are denoted with small alphabets like \(u, v\), etc, and matrix-valued variables are denoted with capital alphabets like \(\eff, \Gee\), etc. For any two distributions \(D_1\) and \(D_2\), we define \(\KL{D_1}{D_2}\) to denote the KL divergence between \(D_1\) and \(D_2\). Furthermore, \(\kl{b_1}{b_2}\)  denotes the KL divergence between \(\mathrm{Bernoulli}(b_1)\) and \(\mathrm{Bernoulli}(b_2)\). Finally, we assume that \(\nrm{f(x)} \leq B \leq 1\) for any \(f \in \cF\) and \(x \in \cX\). 

The following lemma is used throughout the appendix, and its proof is trivial. 
\begin{lemma} Let \(\cE_1\) and \(\cE_2\) be any two events such that \(\cE_1 \implies \cE_2\) then \(\indic{\crl{\cE_1}} \leq \indic\crl{\cE_2}\). 
\end{lemma}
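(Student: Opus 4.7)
The statement is a trivial monotonicity property of indicator functions, so the plan is a two-line case analysis. I will argue by considering the two possible values of $\indic\crl{\cE_1}$ separately. Since $\indic\crl{\cE_1} \in \crl{0, 1}$, there are only two cases to check, and in each case the inequality is immediate.

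First, suppose $\cE_1$ does not hold. Then $\indic\crl{\cE_1} = 0$, and since $\indic\crl{\cE_2} \geq 0$ unconditionally (indicators are nonnegative), the inequality $\indic\crl{\cE_1} \leq \indic\crl{\cE_2}$ holds trivially. Next, suppose $\cE_1$ holds. Then by the hypothesis $\cE_1 \implies \cE_2$, the event $\cE_2$ also holds, so $\indic\crl{\cE_2} = 1$. Meanwhile $\indic\crl{\cE_1} = 1$ as well, so $\indic\crl{\cE_1} = 1 \leq 1 = \indic\crl{\cE_2}$. Combining both cases gives the desired inequality.

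There is no real obstacle here; the only subtle point is phrasing the argument to make clear that the implication hypothesis is exactly what is needed to rule out the impossible case $\indic\crl{\cE_1} = 1$ and $\indic\crl{\cE_2} = 0$. The result could equivalently be obtained by noting that the set $\crl{\omega : \cE_1 \text{ holds}}$ is contained in $\crl{\omega : \cE_2 \text{ holds}}$, and indicators of nested sets are pointwise ordered, but the direct case split is the shortest presentation.
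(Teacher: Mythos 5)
Your proof is correct and is the standard two-case argument; the paper itself states this lemma without proof, remarking only that it is trivial, so there is nothing in the paper to diverge from. Your case split (together with the observation that the implication hypothesis rules out $\indic\crl{\cE_1}=1$, $\indic\crl{\cE_2}=0$) is exactly the intended justification.
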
 

\subsection{Basic Probabilistic Tools} 
\begin{lemma}[Theorem 1 in \citet{srebro2010smoothness}] 
\label{lem:smoothness_fast_rate} 
Let \(T > 0\), and let \(\cF = \crl{\cX \times \cY}\) be an arbitrary function class, and \(\ls\) be an \(\smooth\)-smooth and non-negative loss such that \(\abs*{\ls(f(x), y)} \leq B\) for all \(x \in \cX, y \in \cY, f \in \cF\). For any \(\delta > 0\), we have with probability at least \(1 - \delta\) over a random sample of size \(T\), for any \(f \in \cF\), 
\begin{align*}
T \En_{(x, y) \sim \mu} \brk*{\ls \prn*{f(x), y}} &\leq 2 \sum_{t=1}^T \ls\prn*{f(x_t), y_t} + c_1 \prn*{H T \log^3(T) \Rad^2_T(\cF) + B \log(1/\delta)}  
\end{align*}  
where \(c_1 < 10^5\) is a numeric constant, and \(\Rad_T(\cF)\)  denotes the Rademacher complexity of the class \(\cF\). 
\end{lemma}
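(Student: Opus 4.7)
The plan is to prove this bound by combining the self-bounding property of smooth non-negative losses with a local Rademacher complexity / Talagrand concentration argument---a standard recipe for deriving ``optimistic'' (or ``fast'') rates. The factor $2$ in front of the empirical term (rather than $1$), together with the $\Rad_T^2$ dependence rather than the usual $\Rad_T$, is the signature of this style of bound.

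First I would establish the \emph{self-bounding lemma}: for any $\gamma$-smooth and non-negative function $\ls: \mathbb{R} \to \mathbb{R}_{\ge 0}$, one has $|\ls'(u)|^2 \le 4\gamma\, \ls(u)$ for all $u$. The proof is one line: $\gamma$-smoothness gives $\ls(u-\ls'(u)/\gamma) \le \ls(u) - \ls'(u)^2/(2\gamma)$, and non-negativity of the left side then yields the claim. Applied pointwise, this means that any loss realization $\ls(f(x),y)$ has derivative magnitude (as a function of $f(x)$) that is controlled by $\sqrt{\ls}$ itself, which is what powers the ``variance $\lesssim$ mean'' behavior driving fast rates.

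Next I would carry out symmetrization on the i.i.d.\ sample to upper bound $\mathbb{E}_\mu[\ls(f(x),y)] - \frac{1}{T}\sum_{t}\ls(f(x_t),y_t)$ by (twice) the empirical Rademacher complexity of $\ls \circ \cF$, then use the self-bounding property together with a Ledoux--Talagrand contraction argument restricted to the level set $\{f : \mathbb{E}[\ls(f)] \le r\}$. The contraction step converts the Rademacher complexity of the loss class at level $r$ into roughly $\sqrt{\gamma r}\,\Rad_T(\cF)$; the peeling/chaining across dyadic levels of $r$ is what introduces the $\log^3(T)$ factor. High-probability control comes from Bousquet's (or Talagrand's) concentration inequality for the supremum of the empirical process, which contributes the $B\log(1/\delta)/T$ term because the losses are bounded by $B$.

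Finally I would solve the resulting self-referential inequality. After the above steps one arrives at an inequality roughly of the form
\begin{align*}
\mathbb{E}[\ls(f)] \;\le\; \tfrac{1}{T}\!\sum_{t}\ls(f(x_t),y_t) \,+\, c\sqrt{\mathbb{E}[\ls(f)] \cdot \log^3(T)\,\Rad_T^2(\cF)} \,+\, c\,\tfrac{B\log(1/\delta)}{T},
\end{align*}
which is a quadratic in $\sqrt{\mathbb{E}[\ls(f)]}$. Applying AM-GM ($2\sqrt{ab}\le a/2 + 2b$) to absorb the cross term into the left-hand side produces exactly the factor-of-$2$ inflation of the empirical average together with the $T\log^3(T)\Rad_T^2(\cF)$ additive penalty claimed in the lemma. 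The main obstacles are bookkeeping: tracking the correct smoothness constant through contraction, keeping the boundedness constant $B$ separated from $\gamma$, and managing the $\log^3 T$ factor that the peeling argument produces; the conceptual content is fully captured by the self-bounding inequality plus one round of AM-GM.
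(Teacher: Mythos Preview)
The paper does not actually prove this lemma: it is quoted verbatim as Theorem~1 of \citet{srebro2010smoothness}, and the only remark following the statement is that the numeric constant can be traced through that paper together with \citet{mendelson2002rademacher}. So there is no ``paper's own proof'' to compare against.

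That said, your sketch is a faithful summary of the Srebro--Sridharan--Tewari argument. The self-bounding inequality $|\ls'(u)|^2 \le 4\gamma\,\ls(u)$ is exactly their Lemma~2.1; the local-Rademacher step (contraction at loss level $r$ yields Lipschitz constant $O(\sqrt{\gamma r})$, hence complexity $O(\sqrt{\gamma r}\,\Rad_T(\cF))$) is their Theorem~1 combined with the Mendelson fixed-point machinery that supplies the $\log^3 T$ factor; and the final AM-GM step to absorb the cross term is how they land on the factor-$2$ inflation of the empirical loss. One small note: the ``$H$'' appearing in the displayed bound is almost certainly a typo for the smoothness constant $\gamma$ (the paper's macro \texttt{\textbackslash smooth}), which is what your sketch correctly tracks.
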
 

The precise value of the numeric constant \(c_1\) in the above can be derived from \citet{srebro2010smoothness} and \citet{mendelson2002rademacher}. Note that for finite function classes, we have \(\Rad_T(\cF) = \cO \prn*{\sqrt{\nicefrac{\log(\abs{\cF})}{T}}}\)  and thus the second term above is bounded by \(\wt \cO \prn*{\log(\abs{\cF})}\). In general, we have that \(T \Rad^2_T(\cF) = \wt \cO \prn*{\LsExpected{sq}{\cF; T}}\) \citep{rakhlin2014online}, and thus the second term is always dominated by the other terms in our regret and query complexity bounds.

The following inequalities are well-known; we use the version stated in \citet{zhu2022efficient}. 

\begin{lemma}[Freedman's inequality]
\label{lem:freedman} Let \(\crl*{X_t}_{t \leq T}\) be a real-valued martingale different sequence adapted to the filtration \(\filter_t\), and let \(\En_t \brk*{\cdot} \ldef{} \En \brk*{\cdot \mid \filter_{t-1}}\). If \(\abs{X_t} \leq B\) almost surely, then for any \(\eta \in (0, 1/B)\), the following holds with probability at least \(1 - \delta\):
\begin{align*}
\sum_{t=1}^T X_t \leq \eta \sum_{t=1}^T \En_t \brk{X_t^2} + \frac{B\log(1/\delta)}{\eta}. 
\end{align*} 
\end{lemma}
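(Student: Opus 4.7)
The plan is to prove this via the classical Chernoff/exponential-supermartingale method. The central elementary inequality I will use is that for any real $z$ with $|z| \le 1$, $e^{z} \le 1 + z + z^{2}$. Since $|X_t| \le B$ almost surely and $\eta \in (0, 1/B)$, we have $|\eta X_t| \le 1$, so applying this to $\eta X_t$ and taking conditional expectations yields
\begin{align*}
\mathbb{E}_t\bigl[e^{\eta X_t}\bigr] \le 1 + \eta\,\mathbb{E}_t[X_t] + \eta^{2}\,\mathbb{E}_t[X_t^{2}] = 1 + \eta^{2}\,\mathbb{E}_t[X_t^{2}] \le \exp\bigl(\eta^{2}\,\mathbb{E}_t[X_t^{2}]\bigr),
\end{align*}
where the middle equality uses the martingale difference property $\mathbb{E}_t[X_t] = 0$ and the last step uses $1 + y \le e^{y}$.

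Next, I would define the process
\begin{align*}
M_t \;\coloneqq\; \exp\!\Bigl(\eta \sum_{s=1}^{t} X_s \;-\; \eta^{2} \sum_{s=1}^{t} \mathbb{E}_s[X_s^{2}]\Bigr), \qquad M_0 \coloneqq 1,
\end{align*}
and verify using the above bound that $\mathbb{E}_t[M_t] \le M_{t-1}$, so $(M_t)$ is a nonnegative supermartingale adapted to $\mathfrak{F}_t$ with $\mathbb{E}[M_T] \le M_0 = 1$.

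The remaining step is a routine Markov-inequality argument. By Markov, $\Pr(M_T \ge 1/\delta) \le \delta\,\mathbb{E}[M_T] \le \delta$, so with probability at least $1-\delta$,
\begin{align*}
\eta \sum_{t=1}^{T} X_t \;-\; \eta^{2} \sum_{t=1}^{T} \mathbb{E}_t[X_t^{2}] \;\le\; \log(1/\delta).
\end{align*}
Dividing through by $\eta$ and absorbing (noting $1/\eta \le B/\eta \cdot (B\eta)^{-1}$ is loose; in particular $\log(1/\delta)/\eta \le B\log(1/\delta)/\eta$ when $B \ge 1$, which is the regime implicit in the stated bound, and otherwise one can just rescale $X_t \to X_t/B$ at the outset to reduce to the $B=1$ case) yields the claimed inequality.

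I do not anticipate a genuine obstacle here: the argument is entirely standard, and the only delicate point is bookkeeping around the constant $B$ versus $1$ in the final term, which is handled by the rescaling $Y_t \coloneqq X_t / B$ together with $\tilde{\eta} \coloneqq \eta B \in (0,1)$ at the start, so that the elementary inequality $e^{z} \le 1 + z + z^{2}$ applies to $\tilde\eta Y_t$ with $|\tilde\eta Y_t| \le 1$ and the conclusion translates back cleanly to the original $X_t$ and $\eta$.
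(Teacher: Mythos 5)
The paper states this lemma without proof, citing it from the literature, so there is no in-paper argument to compare against. Your proof is the standard exponential-supermartingale (Chernoff) argument and it is correct: the inequality $e^{z}\le 1+z+z^{2}$ does hold for all $|z|\le 1$ (one can check $f(z)=1+z+z^{2}-e^{z}$ satisfies $f(0)=f'(0)=0$ and $f'>0$ on $(0,1]$, $f'<0$ on $[-1,0)$), the martingale-difference property kills the linear term, $(M_t)$ is a nonnegative supermartingale because $\mathbb{E}_t[X_t^2]$ is $\filter_{t-1}$-measurable and can be pulled out of the conditional expectation, and Markov finishes the job.

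The one place you should be more precise is the handling of $B$, and here you are actually in better shape than you seem to think. Your direct argument yields $\sum_t X_t \le \eta \sum_t \En_t[X_t^2] + \log(1/\delta)/\eta$, with no factor of $B$ in front of $\log(1/\delta)$; this is the genuinely correct and tight form of the inequality. The rescaling $Y_t \coloneqq X_t/B$, $\tilde\eta \coloneqq \eta B$ does not change this: substituting back gives exactly the same $\log(1/\delta)/\eta$ term (the $B$'s cancel), so there is nothing the rescaling "handles" that the direct argument does not. The paper's stated bound has an extra factor of $B$ on the $\log(1/\delta)/\eta$ term, which is looser than what you proved whenever $B\ge 1$ (the regime in which the paper actually invokes this lemma, e.g. with $B=1$ for indicators); for $B<1$ the paper's form is in fact weaker and your sharper bound remains the right one. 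So rather than presenting the rescaling as a fix, you should simply state the sharper bound you derived and note $1\le B$ when matching the paper's displayed constant.
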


\begin{lemma}
\label{lem:positive_self_bounded} 
Let \(\crl{X_t}_{t \leq T}\) be a sequence of positive valued random variables adapted to the filtration \(\filter_t\), and and let \(\En_t \brk*{\cdot} \ldef{} \En \brk*{\cdot \mid \filter_{t-1}}\). If \(X_t \leq B\) almost surely, then with probability at least \(1 - \delta\), 
\begin{align*}
\sum_{t=1}^T X_t &\leq \frac{3}{2} \sum_{t=1}^T \En_t \brk*{X_t} + 4B \log(2/\delta), 
\intertext{and} 
\sum_{t=1}^T \En_t \brk*{X_t} &\leq 2 \sum_{t=1}^T X_t + 8B \log(2/\delta). 
\end{align*}
\end{lemma}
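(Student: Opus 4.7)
The plan is to deduce both inequalities from Freedman's inequality (\pref{lem:freedman}) applied to the martingale difference sequence $Y_t \ldef \En_t[X_t] - X_t$. The crucial structural fact to exploit is the \emph{self-bounding} property: because $X_t \geq 0$ and $X_t \leq B$ almost surely, we have
\begin{align*}
\En_t\brk*{(X_t - \En_t[X_t])^2} \leq \En_t[X_t^2] \leq B \cdot \En_t[X_t].
\end{align*}
This bound lets the conditional second-moment term in Freedman be absorbed back into $\sum_t \En_t[X_t]$, which is exactly what enables the two-sided multiplicative-type inequalities stated in the lemma.

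For the first inequality, I would apply \pref{lem:freedman} to the sequence $X_t - \En_t[X_t]$, which is bounded in absolute value by $B$. Freedman yields, with probability at least $1-\delta$ and for any $\eta \in (0, 1/B)$,
\begin{align*}
\sum_{t=1}^T (X_t - \En_t[X_t]) \leq \eta \sum_{t=1}^T \En_t\brk*{(X_t - \En_t[X_t])^2} + \frac{B \log(1/\delta)}{\eta}.
\end{align*}
Substituting the self-bounding inequality on the right side and rearranging gives
\begin{align*}
\sum_{t=1}^T X_t \leq (1 + \eta B) \sum_{t=1}^T \En_t[X_t] + \frac{B \log(1/\delta)}{\eta}.
\end{align*}
Choosing $\eta = 1/(2B)$ yields $\sum_t X_t \leq \tfrac{3}{2}\sum_t \En_t[X_t] + 2B\log(1/\delta)$, which is stronger than the claimed bound (since $2B\log(1/\delta) \leq 4B\log(2/\delta)$).

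For the second inequality, I would apply \pref{lem:freedman} to the reversed martingale difference $\En_t[X_t] - X_t$ (also bounded by $B$). The same self-bounding step gives, with probability at least $1-\delta$,
\begin{align*}
\sum_{t=1}^T \En_t[X_t] \leq \sum_{t=1}^T X_t + \eta B \sum_{t=1}^T \En_t[X_t] + \frac{B \log(1/\delta)}{\eta},
\end{align*}
so taking $\eta = 1/(2B)$ and moving the $\eta B \sum_t \En_t[X_t]$ term to the left gives $\sum_t \En_t[X_t] \leq 2\sum_t X_t + 4B\log(1/\delta)$, again within the claimed constants.

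There is no real obstacle here; the only subtlety is getting Freedman's variance term into a form that the self-bounding step can dispatch, and then picking $\eta$ to trade off the multiplicative constant against the additive term. Both inequalities follow by the same one-line Freedman argument with opposite signs, each using its own independent $1-\delta$ confidence event.
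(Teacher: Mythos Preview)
Your argument is correct and is the standard derivation of this self-bounding concentration bound from Freedman's inequality. The paper does not actually supply a proof of this lemma; it simply states the result and attributes the version used to \citet{zhu2022efficient}, so there is no in-paper proof to compare against. One small remark: the lemma as stated asserts that \emph{both} inequalities hold simultaneously with probability at least $1-\delta$, so you should apply each Freedman bound at confidence $1-\delta/2$ and union bound; this is exactly what produces the $\log(2/\delta)$ factors in the stated constants, and your derived constants ($2B\log(2/\delta)$ and $4B\log(2/\delta)$) still sit comfortably within the looser $4B\log(2/\delta)$ and $8B\log(2/\delta)$ claimed.
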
  
 
\subsection{Online Learning}  

\begin{lemma} \label{lem:tool_sq_difference}
Suppose that the labels are generated according to the \pref{eq:main_phi_model}   where the link function satisfies  \pref{ass:link_fn_properties}. Additionally, assume that the regression oracle satisfies the guarantee \pref{eq:main_phi_regret}. Then, for any \(\delta \leq 1/e\) and \(T \geq 3\), with probability at least \(1 - \delta\), we have for all \(t \leq T\), 
\begin{align*} 
\sum_{s=1}^t  \nrm{f_s(x_s) - \fstar(x_s)}^2 &\leq \SquareLossHPL \ldef{} \frac{4}{\strconv} \LsExpected{\ls_\phi}{\cF; T} + \frac{112 }{\strconv^2} \log(4 \log^2(T)/\delta), 
\end{align*} 
where \(B\) is defined such that \(\sup_x f(x) \leq B\). 

\end{lemma}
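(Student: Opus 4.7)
The plan is a standard online-regression-to-squared-loss reduction exploiting the strong convexity of $\Phi$. First, note that for any fixed $x$, the function $v \mapsto \E_{y \sim \phi(\fstar(x))}[\ell_\phi(v, y)] = \Phi(v) - \langle v, \phi(\fstar(x)) \rangle$ is minimized at $v = \fstar(x)$, since $\nabla \Phi(\fstar(x)) = \phi(\fstar(x))$. Combining this first-order optimality with $\lambda$-strong convexity of $\Phi$ (\pref{ass:link_fn_properties}), we obtain the key inequality
\begin{align*}
\frac{\lambda}{2} \nrm{f_s(x_s) - \fstar(x_s)}^2 \leq \E_s[\ell_\phi(f_s(x_s), y_s) - \ell_\phi(\fstar(x_s), y_s)] \rdef \E_s[X_s],
\end{align*}
where $\E_s[\cdot]$ denotes conditional expectation given the history up to round $s$. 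Summing over $s \leq t$ and using the oracle regret bound \pref{eq:main_phi_regret} together with $\fstar \in \cF$ gives $\sum_{s\leq t} X_s \leq \LsExpected{\ls_\phi}{\cF; T}$; the remaining task is to concentrate $\sum_s \E_s[X_s]$ around $\sum_s X_s$.

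For this I will use a self-bounded Freedman-style argument. Since $X_s = (\Phi(f_s(x_s)) - \Phi(\fstar(x_s))) - (f_s(x_s) - \fstar(x_s))[y_s]$ and only the last term depends on $y_s$, the conditional variance satisfies
\begin{align*}
\mathrm{Var}_s(X_s) \leq \E_s\brk[\big]{(f_s(x_s)[y_s] - \fstar(x_s)[y_s])^2} \leq \nrm{f_s(x_s) - \fstar(x_s)}^2 \leq \tfrac{2}{\lambda} \E_s[X_s],
\end{align*}
a self-bounding relation. Together with the boundedness $\abs{X_s} \lesssim B$ (following from smoothness of $\Phi$ on the bounded domain $\nrm{f(x)} \leq B$), this gives $\E_s[X_s^2] \leq c/\lambda \cdot \E_s[X_s]$ for some absolute constant. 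Applying Freedman's inequality (\pref{lem:freedman}) with $\eta = \Theta(\lambda)$ tuned to balance the variance term against the leading term yields
\begin{align*}
\sum_{s \leq t} \E_s[X_s] \leq 2 \sum_{s \leq t} X_s + \frac{c' \log(1/\delta)}{\lambda},
\end{align*}
and chaining with the two previous bounds produces exactly the structure $\tfrac{4}{\lambda}\LsExpected{\ls_\phi}{\cF; T} + \tfrac{O(1)}{\lambda^2} \log(1/\delta)$ appearing in the claim.

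The final step is to make the bound hold simultaneously for all $t \leq T$. Since $\sum_{s\leq t} \nrm{f_s(x_s) - \fstar(x_s)}^2$ is non-decreasing in $t$ while the RHS is the anytime oracle bound at horizon $T$, one could in principle apply the concentration only at $t=T$; however, to allow the optimal $\eta$ in Freedman to be chosen in a data-dependent way (as the constants in the self-bounding depend on the empirical $\sum_s \E_s[X_s]$) one typically performs a peeling argument, union-bounding over $O(\log T)$ dyadic scales of $t$ and $O(\log T)$ geometric choices of $\eta$. This accounts for the $\log^2(T)$ factor inside the logarithm in the stated bound. The main technical work is simply the bookkeeping of the constants ($4/\lambda$ and $112/\lambda^2$), which requires carefully choosing the Freedman tuning parameter so the variance term is absorbed by exactly a factor $1/2$ into the martingale term; the self-bounding plus strong convexity make this step routine, so the hardest part is really just tracking numeric constants through the peeling.
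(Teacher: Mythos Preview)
Your proposal is correct and is essentially the standard strong-convexity-plus-self-bounded-Freedman argument; the paper itself does not re-prove this but simply cites \citet[Lemma~2]{agarwal2013selective} together with a union bound, so you have reconstructed the content of that citation. One minor remark: the choice $\eta = \Theta(\lambda)$ in Freedman is \emph{not} data-dependent (e.g.\ $\eta=\lambda/4$ already absorbs the variance term into half of $\sum_s \En_s[X_s]$), and since each $\En_s[X_s]\ge 0$ the prefix sums are monotone, so in principle concentration at $t=T$ alone suffices---the $\log^2(T)$ factor is an artifact of how the cited lemma is stated rather than an intrinsic need for peeling.
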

\begin{proof} Using \citet[Lemma 2]{agarwal2013selective} along with an Union bound implies that for all \(t \leq T\), 
 \begin{align*} 
\sum_{s=1}^t  \nrm{f_s(x_s) - \fstar(x_s)}^2 &\leq \frac{4}{\strconv} \sum_{s=1}^t \prn*{\ls_\phi(f_s(x_s),  y_s)-  \ls_\phi(\fstar(x_s), y_s)} + \frac{112 }{\strconv^2} \log(4 \log^2(T)/\delta). 
\end{align*} 
Plugging in the regret bound \pref{eq:main_phi_regret} in the above, we get that  
 \begin{align*} 
\sum_{s=1}^t  \nrm{f_s(x_s) - \fstar(x_s)}^2 &\leq \sum_{s=1}^T \nrm{f_s(x_s) - \fstar(x_s)}^2 \leq \frac{4}{\strconv} \LsExpected{\ls_\phi}{\cF; T} + \frac{112 }{\strconv^2} \log(4 \log^2(T)/\delta). 
\end{align*}
\end{proof}

\ascomment{\begin{lemma}
\label{lem:tool_sq_loss_regret} 
\ascomment{High probability regret bound for square loss regression on the sequence \(x_1, y_1, \dots, x_t, y_t\).} 
\ascomment{@Karthik, can you add the details here?} 
\end{lemma}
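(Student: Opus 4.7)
} Since the statement is a placeholder, I read it as: with probability at least \(1-\delta\), uniformly for all \(t \leq T\), the square-loss oracle incurs at most \(\LsExpected{sq}{\cF;T}\) excess loss against \(\fstar\) on the realized sequence \((x_1,y_1),\dots,(x_t,y_t)\), possibly with an additive \(\mathrm{poly}\log(T/\delta)\) term. My plan is to derive it by combining the oracle's (deterministic) guarantee \pref{eq:main_phi_regret} with a self-bounding martingale argument à la \pref{lem:freedman}, and then convert a full-horizon statement into a for-all-\(t\) statement via a peeling argument, in close analogy with the proof of \pref{lem:tool_sq_difference} (which itself invokes Lemma 2 of \citet{agarwal2013selective}).

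The first step is to use the oracle regret bound applied at \(t=T\), specialized to \(\phi=\mathrm{id}\) (hence \(\ls_\phi(v,y)=(v-y)^2\)), and then to note that \(\fstar\in\cF\) to upper bound \(\inf_{f\in\cF}\sum_{s}(f(x_s)-y_s)^2\) by \(\sum_{s}(\fstar(x_s)-y_s)^2\). The second step is to introduce the martingale difference
\[
X_s \;=\; \bigl[(f_s(x_s)-y_s)^2-(\fstar(x_s)-y_s)^2\bigr]-(f_s(x_s)-\fstar(x_s))^2,
\]
which is zero-mean conditionally on \(\filter_{s-1}\) because \(\En_s[y_s]=\fstar(x_s)\) under \pref{eq:main_phi_model} with \(\phi=\mathrm{id}\); each \(X_s\) is bounded (using \(\nrm{f(x)}\leq B\leq 1\)), and its conditional second moment satisfies \(\En_s[X_s^2]\leq c\,(f_s(x_s)-\fstar(x_s))^2\) for some absolute constant \(c\). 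Invoking \pref{lem:freedman} with a parameter \(\eta\) of order a small constant yields, with probability at least \(1-\delta\),
\[
\Bigl|\sum_{s=1}^t X_s\Bigr|\;\leq\;\tfrac{1}{2}\sum_{s=1}^t (f_s(x_s)-\fstar(x_s))^2 + O(\log(1/\delta)).
\]

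The third step is to chain the two bounds: substituting the martingale inequality into the oracle bound, the \((f_s(x_s)-\fstar(x_s))^2\) terms can be absorbed on the appropriate side, and the remaining quantity is exactly the square-loss regret against \(\fstar\). This gives a high-probability estimate that the square-loss regret is at most \(\LsExpected{sq}{\cF;T}+O(\log(1/\delta))\). Finally, because the claim is for every \(t\leq T\), I would use a peeling argument over dyadic time intervals \([2^k,2^{k+1}]\), exactly as in \citet{agarwal2013selective}; this costs an extra \(\log^2(T)\) factor inside the logarithm, matching the \(\log(4\log^2(T)/\delta)\) term already appearing in \pref{lem:tool_sq_difference}.

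The main obstacle is the uniform-in-\(t\) control. A direct union bound over \(t\in[T]\) would introduce a \(\log T\) factor inside the \(\eta\) calibration for Freedman, which is unnecessary; the peeling argument trades this for a doubly-logarithmic dependence, but requires careful bookkeeping of the constants (and of the fact that the oracle's regret bound is itself monotone in \(t\), so applying it at the endpoints of each dyadic block is legitimate). Everything else—the martingale difference construction, the self-bounding Freedman step, and the substitution back into the oracle bound—is mechanical once the identity \(\En_s[y_s]=\fstar(x_s)\) is in place.
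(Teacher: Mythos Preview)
The paper contains no proof of \pref{lem:tool_sq_loss_regret}: the lemma is a suppressed author comment (wrapped in \texttt{\textbackslash ascomment} with the \texttt{color-edits} package loaded under the \texttt{suppress} option), literally reading ``@Karthik, can you add the details here?''. It was never filled in and is not referenced anywhere else in the manuscript, so there is nothing to compare your proposal against.

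That said, your plan is sound and is essentially the square-loss instantiation of the argument the paper already gives for \pref{lem:tool_sq_difference} (which in turn defers to Lemma~2 of \citet{agarwal2013selective}). The martingale difference you write down, the self-bounding Freedman step, and the dyadic peeling for uniform-in-$t$ control are exactly the ingredients that lemma uses; specializing to $\phi=\mathrm{id}$ (so $\lambda=\gamma=1$) recovers the constants in $\SquareLossHPL$ with $\ls_\phi$ the square loss. So your proposal is a faithful reconstruction of what the placeholder was presumably meant to contain.
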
}

\subsection{Eluder Dimension, Disagreement Coefficient, and Star Number}

For the sake of completeness, we recall the scalar versions of scale-sensitive eluder dimension, and disagreement coefficient introduced in \citet{russo2013eluder, foster2020instance}, which is defined for a class \(\cF \subseteq \crl{\cX \mapsto \bbR}\) of scalar valued functions. 

\begin{definition}[Scale-sensitive eluder dimension (scalar version),  \citet{russo2013eluder, foster2020instance}]
\label{def:eluder_dimension_scalar} Let \(\cF \subseteq   \crl{\cX \mapsto \bbR}\). Fix any \(\fstar \in \cF\), and define \(\eluderS'(\cF, \beta ; \fstar)\)  to be the length of the longest sequence of contexts \(x_1, x_2, \dots x_m\) such that for all \(i\), there exists \(f_i \in \cF\) such that 
\begin{align*} 
\abs{f_i(x_i) - \fstar(x_i)} > \beta, \quad \text{and} \quad \sum_{j < i} \prn{f_i(x_j) - \fstar(x_j)}^2 \leq \beta^2. 
\end{align*}
We define the scale-sensitive eluder dimension as \(\eluderS(\cF, \beta_0; \fstar) \ldef{} \sup_{\beta_0 \geq \beta} \eluderS'(\cF, \beta; \fstar)\). 
\end{definition}  

We next provide some examples of function classes with bounded eluder dimension. The examples \((a)-(c)\) first appeared in \citet{russo2013eluder}, and $(d)$ first appeared in \citet{osband2014model}.  
\begin{enumerate}[label=\((\alph*)\)]
\item For any function class \(\cF: \{\cX \mapsto \bbR\}\) and \(\fstar \in \cF\), \(\eluderS(\cF, \beta_0; \fstar) \leq O(\abs{\cX})\). 
\item For the class \(\cF\) of linear functions on a known feature map \(\phi\)~i.e.~, \(\cF = \crl{f \mid f(x) = \tri{\theta_f, \phi(x)}~,~\theta_f \in \bbR^d, \nrm{\theta_f} \leq 1}\), we have \(\eluderS(\cF, \beta_0; \fstar) \leq O(d \log(1/\epsilon))\). 
\item For the class \(\cF\) of generalized linear functions on a known feature map \(\phi\)~i.e.~, \(\cF = \crl{f \mid f(x) = g(\tri{\theta_f, \phi(x)})~,~\theta_f \in \bbR^d, \nrm{\theta_f} \leq 1}\) where \(g\) is an increasing continuously differentiable function, we have \(\eluderS(\cF, \beta_0; \fstar) \leq O(d r^2  \log(L/\epsilon))\)  where \(r = \nicefrac{\sup_{\theta, x} g'(\tri{\theta, x})}{\inf_{\theta, x} g'(\tri{\theta, x})}\) and \(L = \sup_{\theta, x} g'(\tri{\theta, \phi(x)})\). 
\item For the class \(\cF\) of quadratic functions on a known feature map \(\phi\) i.e.~\(\cF = \crl{f \mid f(x) = \phi(x)^\top \Sigma_f \phi(x) ~,~\Sigma_f \in \bbR^{d \times d}, \nrm{\Sigma_f}_F \leq 1}\). 
\end{enumerate}

We next recall the definition of scale-sensitive disaggrement coefficient which appears in our bounds for the case of stochastic contexts. 

\begin{definition}[Scale-sensitive disagreement coefficient (scalar version), \cite{foster2020instance}] 
\label{def:value_disagreement_scalar} Let \(\cF \subseteq   \crl{\cX \mapsto \bbR}\). For any \(f^{\star} \in \cF\), and \(\gamma_0, \epsilon_0 > 0\)	, the value function disagreement coefficient $\disaggr{f^{\star}}{\cF, \epsilon_0, \gamma_0}$ is defined as 
\begin{align*} 
\sup_{\mu} \sup_{\gamma > \gamma_0, \epsilon > \epsilon_0} \crl*{\frac{\epsilon^2}{\gamma^2} \cdot \Pr_{x \sim \mu} \prn*{\exists f \in \cF \mid  \abs*{f(x) - f^{\star}(x)} > \epsilon, \nrm{f - f^{\star}}_{\mu} \leq \gamma}} \vee 1  
\end{align*}
where \(\nrm{f} = \sqrt{\En_{x \sim \mu} \brk*{f^2(x)}}\). 
\end{definition}

As we will show in \pref{lem:relation} below, the scale-sensitive disagreement coefficient of \(\cF\) is always bounded by the eluder dimension of \(\cF\) upto a constant factor on the dependence on \(\epsilon_0\) and \(\gamma_0\). However, the disagreement coefficient can be significantly smaller than the eluder dimension because it can leverage additional distributional structure. We refer the reader to \cite{foster2020instance} for bounds on the eluder dimension, and the disagreement coefficient for various function classes. In the following, we extend the above definitions to vector-valued functions to account for the vector-valued function classes that we consider in this work.  

\begin{definition}[Scale-sensitive eluder dimension (normed version)] 
Let \(\cF \subseteq   \crl{\cX \mapsto \bbR^K}\). Fix any \(\fstar \in \cF\), and define \(\widetilde{\eluderS}(\cF, \beta ; \fstar)\)  to be the length of the longest sequence of contexts \(x_1, x_2, \dots x_m\) such that for all \(i\), there exists \(f_i \in \cF\) such that 
\begin{align*}
\nrm{f_i(x_i) - \fstar(x_i)} > \beta, \quad \text{and} \quad \sum_{j < i} \nrm{f_i(x_j) - \fstar(x_j)}^2 \leq \beta^2. 
\end{align*}
We define the scale-sensitive  eluder dimension as \(\eluderS(\cF, \beta' ; \fstar) = \sup_{\beta \geq \beta'} \widetilde{\eluderS}(\cF, \beta ; \fstar)\). 
\end{definition}

We note that the normed eluder dimension can be lower bounded in terms of the eluder dimension of scalar-valued function class obtained by projecting the output of the functions in \(\cF\) along different coordinates. Let \(\cF_j = \crl{P_jf \mid P_jf(x) = f(x)[j],~ f \in \cF}\), then clearly, for any \(\fstar \in \cF\), \(\eluderS(\cF, \beta ; \fstar) \geq \sup_{j \in [d]} \eluderS(P_j \cF, \beta; P_j \fstar)\). Furthermore, we always have that \(\eluderS(\cF, \beta; \fstar) \leq \kappa \sum_{j=1}^d \eluderS(P_j \cF, \beta; P_j \fstar)\), where \(\kappa\) hides $\poly(d)$ factors. We next define the normed version of disagreement coefficient for vector-valued functions. 

\begin{definition}[Scale sensitive disagreement coefficient (normed version), \cite{foster2020instance}]
Let \(\cF \subseteq   \crl{\cX \mapsto \bbR^K}\).
For any \(f^{\star} \in \cF\), and \(\beta_0, \epsilon_0 > 0\)	, the value function disagreement coefficient $\disaggr{f^{\star}}{\cF, \epsilon_0, \beta_0}$ is defined as 
\begin{align*} 
\sup_{\mu} \sup_{\beta > \beta_0, \epsilon > \epsilon_0} \crl*{\frac{\epsilon^2}{\beta^2} \cdot \Pr_{x \sim \mu} \prn*{\exists f \in \cF \mid  \nrm*{f(x) - f^{\star}(x)} > \epsilon, \nrm{f - f^{\star}}_{\mu} \leq \beta}} \vee 1 
\end{align*} 
where \(\nrm{f}_{\mu} = \sqrt{\En_{x \sim \mu} \brk*{\nrm{f(x)}^2}}\). 
\end{definition}

We additionally also define the following  bivariate version of eluder dimension for vector-valued functions. 

\begin{definition}[Scale-sensitive eluder dimension (bivariate version)] \label{def:beluder_dimension} 
Let \(\cF \subseteq   \crl{\cX \mapsto \bbR^K}\).
Fix any \(\fstar \in \cF\), and define \(\BeluderS'(\cF, \beta ; \fstar)\)  to be the length of the longest sequence of contexts and actions \((x_1, y_1), (x_2,y_2) \dots (x_m, y_m)\) such that for all \(i\), there exists \(f_i \in \cF\) such that 
\begin{align*} 
\abs{f_i(x_i)[y_i] - \fstar(x_i)[y_i]} > \beta, \quad \text{and} \quad \sum_{j < i} \prn{f_i(x_j)[y_j] - \fstar(x_j)[y_j]}^2 \leq \beta^2. 
\end{align*}
We define the scale sensitive eluder dimension (mixed version) as \(\BeluderS(\cF, \beta ; \fstar) \ldef{} \sup_{\beta \geq \beta_0} \BeluderS'(\cF, \beta_0; \fstar)\). 
\end{definition}

We next define the strong variant of scale-sensitive star number. 
\begin{definition}[scale-sensitive star number (strong version), \citet{foster2020instance}] 
\label{def:star_no_strong} 
Let \(\cF \subseteq \crl{\cX \mapsto \bbR^K}\). For any \(\fstar \in \cF\) and \(\beta > 0\), let \(\starnockul(\cF, \beta)\) denote the length of the longest sequence of contexts \(\crl{x_1, \dots, x_m}\) such that for all \(i\), there exists \(f_i \in \cF\) such that 
\begin{align*}
\nrm{f_i(x_i) - \fstar(x_i)} > \beta, \quad \text{and} \quad \sum_{j \neq i} \nrm{f_i(x_j) - \fstar(x_j)}^2 \leq \beta^2. 
\end{align*}
We define the scale-sensitive star number as  \(\starnock(\cF, \beta) \ldef{} \sup_{\beta > \beta_0} \starnockul(\cF, \beta_0)\). 
\end{definition} 

The next result provides a relation between the star number, disagreement coefficient and the eluder dimension. 
\begin{lemma}[\citet{foster2020instance}] 
\label{lem:relation}
	Suppose $\cF \subseteq \crl{\cX \mapsto \bbR^K}$ is a uniform Glivenko-Cantelli class. For any $\fstar 
 \in \cF$ and $\gamma,\epsilon>0$, we have  \(\starno(\cF, \beta ; \fstar)\leq\eluderS(\cF, \beta ; \fstar)\), 
	$\disaggr{f^{\star}}{\cF, \epsilon, \gamma}\leq 4(\starno(\cF,\gamma;\fstar))^2$ and 
	$\disaggr{f^{\star}}{\cF, \epsilon, \gamma}\leq 4\eluderS(\cF,\gamma;\fstar)$.
\end{lemma}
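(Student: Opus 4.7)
The first claim, $\starno(\cF,\beta;\fstar) \leq \eluderS(\cF,\beta;\fstar)$, follows immediately from the definitions since the star condition $\sum_{j \neq i} \|f_i(x_j) - \fstar(x_j)\|^2 \leq \beta^2$ is strictly stronger than the eluder prefix condition $\sum_{j<i} \|f_i(x_j) - \fstar(x_j)\|^2 \leq \beta^2$. Thus any star sequence witnessing $\starno(\cF,\beta;\fstar) = m$, with an arbitrary ordering, is also a valid eluder sequence, so $m \leq \eluderS(\cF,\beta;\fstar)$.

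For the third inequality $\disaggr{\fstar}{\cF,\epsilon,\gamma} \leq 4 \eluderS(\cF,\gamma;\fstar)$, my plan is as follows. Fix a distribution $\mu$ and parameters $\beta > \gamma$, $\epsilon' > \epsilon$ that (nearly) achieve the supremum in the definition of $\disaggr$. Let $D = \{x : \exists f \in \cF,\ \|f(x)-\fstar(x)\| > \epsilon',\ \|f-\fstar\|_\mu \leq \beta\}$ and $p = \mu(D)$, so our goal is to show $p \leq 4\, \eluderS(\cF,\gamma;\fstar)\, \beta^2 /(\epsilon')^2$. For each $x \in D$ pick a witness $f_x$. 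I will extract an eluder sequence greedily from a large i.i.d.\ sample $x_1,\ldots,x_N \sim \mu$: maintain an index set $I$ and add index $i$ to $I$ whenever $x_i \in D$ and the witness $f_{x_i}$ satisfies $\sum_{j \in I, j<i} \|f_{x_i}(x_j) - \fstar(x_j)\|^2 \leq \beta^2$. By definition, the extracted sequence is always an eluder sequence at scale $\epsilon'$, so $|I| \leq \eluderS(\cF,\epsilon';\fstar) \leq \eluderS(\cF,\gamma;\fstar)$. Conversely, each sample $x_i \in D$ that fails to be added must fail the prefix-sum condition, but $\En_{x\sim\mu}\|f_{x_i}(x) - \fstar(x)\|^2 \leq \beta^2$, so by a Markov / union-bound argument such failures can happen at most $|I| \cdot (\epsilon')^2 / \beta^2$ times out of the $\approx p N$ samples in $D$. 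Choosing $N$ large and rearranging yields $p \leq 4\, \eluderS(\cF,\gamma;\fstar)\, \beta^2/(\epsilon')^2$, as needed.

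For the second inequality $\disaggr{\fstar}{\cF,\epsilon,\gamma} \leq 4 (\starno(\cF,\gamma;\fstar))^2$, I follow the same template but extract a \emph{star} set instead of an eluder sequence. Here the extraction rule adds index $i$ only when the witness $f_{x_i}$ satisfies the all-pairs bound $\sum_{j \in I, j \neq i} \|f_{x_i}(x_j) - \fstar(x_j)\|^2 \leq \beta^2$, and symmetrically ensures that previously added witnesses $f_{x_j}$ also have small error at $x_i$. The extracted set has size bounded by $\starno(\cF,\gamma;\fstar)$, and a similar Markov-based accounting (applied twice, once per direction of the pairwise condition) converts the disagreement probability into the quadratic bound $4(\starno)^2 \beta^2/(\epsilon')^2$, where the square arises from paying for both $\|f_{x_i}(x_j)-\fstar(x_j)\|^2$ and $\|f_{x_j}(x_i) - \fstar(x_i)\|^2$ in the star condition.

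The main obstacle will be the probabilistic extraction argument in parts two and three: one must carefully handle the dependence between the chosen witnesses $f_{x_i}$ and the samples $x_j$, since the selection of $f_{x_i}$ depends on $x_i$ and thus is not independent of the other samples when one conditions on the extraction procedure. I would handle this by first fixing a countable $\epsilon$-cover of $\cF$ under $\mu$ (using the uniform Glivenko–Cantelli hypothesis to control its size) and then taking a union bound over witnesses in the cover, so that the empirical averages $\sum_j \|f(x_j)-\fstar(x_j)\|^2$ concentrate uniformly around $\|f-\fstar\|_\mu^2 \leq \beta^2$. Once this uniform concentration is in place, the greedy extraction and Markov-style counting go through cleanly.
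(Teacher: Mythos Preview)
The paper does not prove this lemma; it is quoted from \citet{foster2020instance} without proof, so there is no in-paper argument to compare against. Your first inequality is fine and is the standard one-line observation.

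For the third inequality, your greedy extraction plus ``Markov/union-bound'' counting has a genuine scaling gap. You add $i$ to $I$ when $\sum_{j\in I,\,j<i}\nrm{f_{x_i}(x_j)-\fstar(x_j)}^2\le\beta^2$ and then claim that the number of \emph{failures} among the $\approx pN$ samples in $D$ is at most $|I|\cdot(\epsilon')^2/\beta^2$. But this bound is $O_N(1)$, whereas the number of samples in $D$ is $\approx pN\to\infty$; your argument would force $p=0$. Concretely, the prefix sum in your test has at most $|I|$ terms, each with mean $\le\beta^2$ under $\mu$, so the sum has mean $\le|I|\beta^2$, which generically exceeds $\beta^2$; the failure probability for a fresh $x_i\in D$ can be close to $1$, not small, and Markov gives nothing here.

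The argument that actually works (and which the paper implicitly uses elsewhere, via \pref{lem:eluder_dimension_bound}) changes the constraint scale: after uniform concentration one has $\sum_{j=1}^N\nrm{f_i(x_j)-\fstar(x_j)}^2\le 2N\beta^2$ simultaneously for \emph{every} witness $f_i$ with $\nrm{f_i-\fstar}_\mu\le\beta$. Hence every index $i$ with $x_i\in D$ already satisfies the prefix constraint at level $2N\beta^2$, and applying the eluder counting lemma with $\beta^2\mapsto 2N\beta^2$ and $\zeta\mapsto\epsilon'$ bounds the number of such $i$ by $\bigl(\tfrac{2N\beta^2}{(\epsilon')^2}+1\bigr)\,\eluderS(\cF,\epsilon';\fstar)$. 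Dividing by $N$ and sending $N\to\infty$ gives $p(\epsilon')^2/\beta^2\lesssim\eluderS(\cF,\gamma;\fstar)$. There is no separate accounting of ``failures'' at all; the point is that the constraint must scale with $N$. The star-number bound follows the same template (sample from $\mu$ restricted to $D$ and match the star constraint scale), and the square does not come from a ``two-directional'' Markov argument; your sketch for that part is too vague to be correct as stated.
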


The following two technical lemmas are useful in bounding the total number of queries made by our selective sampling and imitation learning algorithms. We first provide a technical result which bounds the number of times we can find a function \(f'\) in a refinement \(\cF_t\) of \(\cF\), such that \(f'\) is sufficiently far away from \(\fstar \in \cF\). This result is a variant of \citet[Lemma 3]{russo2013eluder}, and first appears in \citet[Lemma E.4]{foster2020instance}. 

\begin{lemma}
\label{lem:eluder_dimension_bound_}
Let \(\crl{x_t, y_t, Z_t}_{t=1}^T\) be sequence of tuples, where \(x_t \in \cX\) and \(Z_t \in \crl{0, 1}\). Fix any \(\fstar \in \cF\), and define the set \(\cF_t = \crl{f \in \cF \mid \sum_{s=1}^{t-1} Z_s \prn{f(x_s)[y_s] - \fstar(x_s)[y_s]}^2 \leq \beta^2}\). Then, for any \(\zeta > 0\), 
\begin{align*} 
\sum_{t=1}^T Z_t \indic \crl{\exists f' \in \cF_t: \prn{f'(x_t)[y_t] - \fstar(x_t)[y_t]} \geq \zeta} \leq \prn*{\frac{\beta^2}{\zeta^2} + 1} \BeluderS(\cF, \zeta ; \fstar). 
\end{align*} 
\end{lemma}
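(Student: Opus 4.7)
The plan is to execute the classical Russo–Van Roy potential/partitioning argument, adapted to the bivariate eluder dimension $\BeluderS$ from Definition~\ref{def:beluder_dimension}. Let $\cT \subseteq [T]$ denote the set of rounds where $Z_t = 1$ and there exists some $f'_t \in \cF_t$ with $f'_t(x_t)[y_t] - \fstar(x_t)[y_t] \geq \zeta$. For each $t \in \cT$, fix such a witness; it satisfies in particular $\lvert f'_t(x_t)[y_t] - \fstar(x_t)[y_t]\rvert \geq \zeta$ and, by membership in $\cF_t$, $\sum_{s<t} Z_s (f'_t(x_s)[y_s] - \fstar(x_s)[y_s])^2 \leq \beta^2$. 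Only rounds in $\cT$ contribute to the left-hand sum in the statement, so it suffices to prove $\lvert \cT \rvert \leq (\beta^2/\zeta^2 + 1)\,\BeluderS(\cF,\zeta;\fstar)$.

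The next step is a greedy partition of $\cT$ into disjoint blocks $B_1, B_2, \ldots, B_K$. Process the elements of $\cT$ in increasing order of $t$; assign the current $t$ to the lowest-indexed block $B_k$ in which $(x_t, y_t)$ is $\zeta$-independent of the previously added pairs, meaning there exists some $f \in \cF$ with $\lvert f(x_t)[y_t] - \fstar(x_t)[y_t]\rvert > \zeta$ and $\sum_{s \in B_k,\, s<t}(f(x_s)[y_s] - \fstar(x_s)[y_s])^2 \leq \zeta^2$. If no such block exists, open a new one containing just $t$. By construction each block is a valid $\zeta$-eluder sequence in the sense of Definition~\ref{def:beluder_dimension}, so $\lvert B_k \rvert \leq \BeluderS(\cF,\zeta;\fstar)$, yielding $\lvert \cT \rvert \leq K \cdot \BeluderS(\cF,\zeta;\fstar)$.

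To bound the number of blocks $K$, look at the last $t^\star \in \cT$ at which a new block was opened; at that moment $K-1$ blocks existed and $(x_{t^\star}, y_{t^\star})$ was $\zeta$-dependent on each of them. Applied with $f = f'_{t^\star}$, which satisfies $\lvert f'_{t^\star}(x_{t^\star})[y_{t^\star}] - \fstar(x_{t^\star})[y_{t^\star}] \rvert \geq \zeta$, dependence on block $B_k$ forces $\sum_{s \in B_k,\, s<t^\star}(f'_{t^\star}(x_s)[y_s] - \fstar(x_s)[y_s])^2 > \zeta^2$. Summing over $k \in [K-1]$ and using disjointness of the blocks together with $f'_{t^\star} \in \cF_{t^\star}$ gives $(K-1)\zeta^2 < \sum_{s<t^\star} Z_s (f'_{t^\star}(x_s)[y_s] - \fstar(x_s)[y_s])^2 \leq \beta^2$, hence $K \leq \beta^2/\zeta^2 + 1$. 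Multiplying by the per-block bound concludes the proof.

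The argument is largely bookkeeping, so there is no deep obstacle. The main subtleties to verify carefully are: (i) matching the bivariate pair $(x_t, y_t)$ against the exact quantifier structure of $\BeluderS$ (namely that each newly added pair in a block has some $f \in \cF$ which is far on that pair and close on earlier pairs of the same block); and (ii) confirming that the witness $f'_{t^\star}$ satisfies the per-block squared-error constraint, which holds because each $B_k$ consists of prior indices from $\cT$ (each with $Z_s = 1$), so the block sum is dominated by the $\cF_{t^\star}$-defining sum bounded by $\beta^2$.
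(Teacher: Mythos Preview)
Your proposal is correct and follows essentially the same Russo--Van Roy partition argument as the paper. The only cosmetic difference is organization: the paper separates the reasoning into two claims (an upper bound on the number of disjoint subsequences any element can be $\zeta$-dependent on, and a pigeonhole-style partition showing some element is dependent on many subsequences), whereas you merge these into a single greedy construction and bound the block count directly via the witness $f'_{t^\star}$.
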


\begin{proof}
	We first note that we can always remove a tuple $\{(x_t,y_t),Z_t\}$ whenever $Z_t=0$ without any effect on the conclusion. Hence, we can assume $Z_t=1$ for all $t\in[T]$ without loss of generality. Then the rest of the proof essentialy follows from \citet[Lemma E.4]{foster2020instance}. For completeness, we state the full proof here.
	
For simplicity of presentation, we say $(x_t, y_t)$ is $\zeta$-independent of $(x_1,y_1),\dots,(x_{t-1},y_{t-1})$ if there exists $f\in\+F$ such that $\abs{f(x_t)[y_t]-\fstar(x_t)[y_t]} \geq\zeta$ and $\sum_{s=1}^{t-1}(f(x_s)[y_s]-\fstar(x_s)[y_s])^2\leq \zeta^2$. Otherwise, we say $x$ is $\zeta$-dependent. The proof consists of the following two claims. 
	
First, we claim that for any $t\in[T]$, if there exists $f\in\+F_t$ such that $\abs{f(x_t)[y_t]-\fstar(x_t)[y_t]} \geq\zeta$, then $x_t$ is $\zeta$-dependent on at most $\beta^2/\zeta^2$ disjoint sequences of $(x_1,y_1),\dots,(x_{t-1},y_{t-1})$. To show this, let's say $x_t$ is $\zeta$-dependent on a particular subsequence $(x_{i_1},y_{i_1}),\dots,(x_{i_k},y_{i_k})$ while $\abs{f(x)[y]-\fstar(x)[y]} \geq\zeta$. Then it must holds that 
	\begin{equation*} 
		\sum_{j=1}^k \Big(f(x_{i_j})[y_{i_j}]-\fstar(x_{i_j})[y_{i_j}]\Big)^2\geq \zeta^2. 
	\end{equation*} 
	If there are $M$ such disjoint subsequence, then we can add them up and obtain the following: 
\begin{align*} 
	\sum_{s=1}^{t-1} \Big(f(x_s)[y_s]-\fstar(x_s)[y_s]\Big)^2\geq M \zeta^2. 
\end{align*} 
	By the construction of $\+F_t$, the left-hand side above is at most $\beta^2$. Hence we conclude that $\beta^2\geq M\zeta^2$, which implies $M\leq \beta^2/\zeta^2$. 
	
	Second, we claim that for any $k$ and any sequence $(x_1,y_1),\dots,(x_k,y_k)$, there exists $j\leq k$ such that $x_j$ is $\zeta$-dependent on at least $N\ldef{}\lfloor k\,/\,\BeluderS(\+F,\zeta;\fstar)\rfloor$ disjoint subsequences of $(x_1,y_1),\dots,(x_{j-1},y_{j-1})$. This can be proved by construction. Let $B_1,\dots,B_N$ be $N$ subsequences of $(x_1,y_1),\dots,(x_k,y_k)$ and are initialized with $B_i=\{(x_i,y_i)\}$. Then we repeat the following process for $j=N+1,N+2\dots,k$.
	\begin{enumerate}[label=\(\bullet\)] 
		\item We first check if $x_j$ is $\zeta$-dependent on $B_i$ for all $i\in[N]$. If so, we are done.
		\item Otherwise, pick an arbitrary $i\in[N]$ for which $x_j$ is $\zeta$-independent of $B_i$ and append $(x_j,y_j)$ to $B_i$, i.e., $B_i\leftarrow B_i\cup\{(x_j,y_j)\}$. 
	\end{enumerate}
	If we don't reach any $j$ while running the above process for which the first statement above is satisfied, we should end up with $\sum_{i=1}^N|B_i|=k\geq N\cdot\BeluderS(\+F,\zeta;\fstar)$. We note that by construction $|B_i|\leq\BeluderS(\+F,\zeta;\fstar)$ and thus $|B_i|=\BeluderS(\+F,\zeta;\fstar)$ for all $i\in[N]$, which implies $x_k$ must be $\zeta$-dependent on all $B_i$.
	
	Finally, let $x_{i_1},\dots,x_{i_k}$ be the subsequence where, for all $s\in[k]$, there exists $f\in\+F_{i_s}$ such that $\abs{f(x_{i_s})[y_{i_s}]-\fstar(x_{i_s})[y_{i_s}]} \geq\zeta$. By our first claim we know each element of this subsequence is $\zeta$-dependent on at most $\beta^2/\zeta^2$ disjoint subsequences. By the second claim, we know that there exists an element that is $\zeta$-dependent on at least $\lfloor k\,/\,\BeluderS(\+F,\zeta;\fstar)\rfloor$ disjoint subsequences. So we must have $\lfloor k\,/\,\BeluderS(\+F,\zeta;\fstar)\rfloor \leq \beta^2/\zeta^2$. Hence, $k\leq (\beta^2/\zeta^2+1)\cdot\BeluderS(\+F,\zeta;\fstar)$.
\end{proof} 

The following is an extension of \pref{lem:eluder_dimension_bound_} that holds for the normed version of eluder dimension given in  \pref{def:eluder_dimension_main}. The proof is essentially the same so we skip it for conciseness. 
\begin{lemma}
\label{lem:eluder_dimension_bound}
Let \(\crl{x_t, Z_t}_{t=1}^T\) be sequence of tuples, where \(x_t \in \cX\) and \(Z_t \in \crl{0, 1}\). Fix any \(\fstar \in \cF\), and define the set \(\cF_t = \crl{f \in \cF \mid \sum_{s=1}^{t-1} Z_s \nrm{f(x_s) - \fstar(x_s)}^2 \leq \beta^2}\). Then, for any \(\zeta > 0\), 
\begin{align*} 
\sum_{t=1}^T Z_t \indic \crl{\exists f' \in \cF_t: \nrm{f'(x_t) - \fstar(x_t)} \geq \zeta} \leq \prn*{\frac{\beta^2}{\zeta^2} + 1} \eluderS(\cF, \zeta ; \fstar).
\end{align*} 
\end{lemma}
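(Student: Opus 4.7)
The plan is to mirror the proof of \pref{lem:eluder_dimension_bound_} verbatim, with the scalar quantities $|f(x_t)[y_t] - \fstar(x_t)[y_t]|$ replaced by the norms $\nrm{f(x_t) - \fstar(x_t)}$ and the bivariate eluder dimension $\BeluderS$ replaced by the normed eluder dimension $\eluderS$ from \pref{def:eluder_dimension_main}. As a first step, I would drop all indices $t$ with $Z_t = 0$ since they contribute zero to the left-hand side and do not change the right-hand side, so WLOG assume $Z_t = 1$ for every $t \in [T]$. Then the set $\cF_t$ becomes $\{f \in \cF : \sum_{s<t} \nrm{f(x_s) - \fstar(x_s)}^2 \leq \beta^2\}$, and we must bound $\sum_{t=1}^T \indic\crl{\exists f' \in \cF_t : \nrm{f'(x_t) - \fstar(x_t)} \geq \zeta}$.

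Call $x_t$ \emph{$\zeta$-dependent} on a subsequence $x_{i_1}, \dots, x_{i_k}$ if every $f \in \cF$ satisfying $\sum_{j=1}^k \nrm{f(x_{i_j}) - \fstar(x_{i_j})}^2 \leq \zeta^2$ also satisfies $\nrm{f(x_t) - \fstar(x_t)} < \zeta$; otherwise $\zeta$-independent. Next, I establish the two claims that drive the bound. \textbf{Claim 1:} If there exists $f \in \cF_t$ with $\nrm{f(x_t) - \fstar(x_t)} \geq \zeta$, then $x_t$ is $\zeta$-dependent on at most $\beta^2/\zeta^2$ disjoint subsequences of $x_1, \dots, x_{t-1}$. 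To see this, note that each such witnessing $f$ yields a subsequence on which $\sum_{j} \nrm{f(x_{i_j}) - \fstar(x_{i_j})}^2 \geq \zeta^2$; summing over $M$ disjoint subsequences gives $M\zeta^2 \leq \sum_{s<t} \nrm{f(x_s)-\fstar(x_s)}^2 \leq \beta^2$, so $M \leq \beta^2/\zeta^2$.

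\textbf{Claim 2:} For any sequence $x_1, \dots, x_k$, there exists some index $j \leq k$ such that $x_j$ is $\zeta$-dependent on at least $N = \lfloor k / \eluderS(\cF, \zeta; \fstar) \rfloor$ disjoint subsequences of its predecessors. This is proved constructively: initialize buckets $B_1 = \{x_1\}, \dots, B_N = \{x_N\}$, then for each $j = N+1, \dots, k$, either $x_j$ is $\zeta$-dependent on every $B_i$ (in which case we are done), or we append $x_j$ to some bucket $B_i$ on which it is $\zeta$-independent. If the process never terminates, then $\sum_i |B_i| = k$ forces some $|B_i| \geq \eluderS(\cF, \zeta; \fstar) + 1$, but each bucket is by construction a $\zeta$-independent sequence, contradicting the definition of $\eluderS(\cF, \zeta; \fstar)$ (which uses exactly the normed independence condition).

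Finally, letting $x_{i_1}, \dots, x_{i_K}$ be the indexed sub-sequence of time steps contributing to the left-hand side sum, Claim 1 says each $x_{i_s}$ is $\zeta$-dependent on at most $\beta^2/\zeta^2$ disjoint subsequences of previous elements in this sub-sequence, while Claim 2 (applied to this sub-sequence) says some element is $\zeta$-dependent on at least $\lfloor K / \eluderS(\cF, \zeta; \fstar) \rfloor$ disjoint subsequences. Combining gives $\lfloor K / \eluderS(\cF, \zeta; \fstar) \rfloor \leq \beta^2/\zeta^2$, hence $K \leq (\beta^2/\zeta^2 + 1)\eluderS(\cF, \zeta; \fstar)$, which is exactly the claimed bound. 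There is no real obstacle here beyond careful bookkeeping; the main thing to verify is that every appeal to independence in Claims 1 and 2 uses the normed inequality $\sum \nrm{\cdot}^2$ so that it matches \pref{def:eluder_dimension_main} rather than the scalar condition of \pref{lem:eluder_dimension_bound_}.
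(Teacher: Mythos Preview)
Your proposal is correct and follows exactly the approach the paper intends: the paper itself omits the proof of \pref{lem:eluder_dimension_bound} and simply states that it is ``essentially the same'' as \pref{lem:eluder_dimension_bound_}, and your write-up faithfully transcribes that argument with $\nrm{f(x_t)-\fstar(x_t)}$ in place of the scalar $|f(x_t)[y_t]-\fstar(x_t)[y_t]|$ and the normed eluder dimension $\eluderS$ in place of $\BeluderS$. The reduction to $Z_t=1$, the two claims, and the final combination are all verbatim adaptations of the paper's proof of \pref{lem:eluder_dimension_bound_}.
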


\clearpage

\section{Selective Sampling: Learning from Single Expert} \label{app:ss_proof} 

\neurIPS{\subsection{Comparison to Related Works} 
}

\begingroup 

\subsection{Proof of \pref{thm:main_ss_eluder}} \label{app:main_ss_eluder_multiple}
Before delving into the proof, we recall the relevant notation.  In \pref{alg:ss_main_eluder}, 
\begin{enumerate}[label=\(\bullet\)]
	\item The label \(y_t \sim \phi(\fstar(x_t))\), where \(\phi\) denotes the link-function given in \pref{eq:main_phi_model}. 
\item The function \( \SelectAction{f_t(x_t)} \ldef{} \argmax_{k}   \phi(f_t(x_t))[k]\).
\item  For any vector \(v \in \bbR^K\), the margin is given by the gap between the value at the largest and the second largest coordinate, i.e.~
\begin{align*}
\Gap{v} =   \phi(v)[\kstar] - \max_{k \neq \kstar} \phi(v)[k], 
\end{align*} where \(\kstar \in \argmax_{k \in [K]} \phi(v)[k]\).
\item We also define \(T_\epsilon = \sum_{t=1}^T \indic\crl{\Margin(\fstar(x_t)) \leq \epsilon}\) to denote the number of samples within \(T\) rounds of interaction for which the margin w.r.t.~\(\fstar\) is smaller than \(\epsilon\). 
\item We define the function \(\mathrm{Gap}: \bbR^K \times [K] \mapsto \bbR^+\)  as 
\begin{align*}
\Gapt{v, k}   = \max_{k'} \phi(v)[k'] - \phi(v)[k],   \numberthis \label{eq:gap_definition} 
\end{align*}
to denote the gap between the largest and the \(k\)-th coordinate of \(v\). 
\end{enumerate} 

\subsubsection{Supporting Technical Results}
\begin{lemma} 
\label{lem:gap_margin_relation} 
For any \(u\), and \(k' \neq \argmax_{k} \phi(u)[k]\), 
\begin{align*}
\Margin(u) \leq \Gapt{u, k'}. 
\end{align*}	
\end{lemma}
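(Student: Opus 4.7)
The plan is to directly unfold the definitions of $\Margin(\cdot)$ and $\Gapt{\cdot, \cdot}$ and observe that the only difference between the two quantities is which of the ``non-top'' coordinates of $\phi(u)$ is subtracted from the top coordinate. Specifically, let $\kstar \in \argmax_{k} \phi(u)[k]$. By definition,
\begin{align*}
\Margin(u) &= \phi(u)[\kstar] - \max_{k \neq \kstar} \phi(u)[k],\\
\Gapt{u, k'} &= \max_{k} \phi(u)[k] - \phi(u)[k'] = \phi(u)[\kstar] - \phi(u)[k'].
\end{align*}

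Since $k' \neq \kstar$ by hypothesis, $k'$ is a valid index in the maximization defining $\Margin(u)$, hence $\phi(u)[k'] \leq \max_{k \neq \kstar} \phi(u)[k]$. Subtracting both sides from $\phi(u)[\kstar]$ reverses the inequality, giving $\Gapt{u, k'} \geq \Margin(u)$, which is exactly the claim.

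There is no real obstacle: the result is essentially a tautology from the definitions and the observation that maximizing over a superset can only make things larger (equivalently, swapping $k'$ for the $\argmax$ over $k \neq \kstar$ inside the subtraction can only decrease the expression). The only thing to be mildly careful about is the tie-breaking convention in the $\argmax$ defining $\kstar$, but because the paper assumes ties are broken arbitrarily but consistently, $\kstar$ is well-defined and the condition $k' \neq \kstar$ together with $k' \in [K]$ is enough to ensure $k'$ appears in the set $\{k : k \neq \kstar\}$ over which the max in $\Margin(u)$ is taken.
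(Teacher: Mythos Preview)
Your proof is correct and takes essentially the same approach as the paper: both unfold the definitions, set $\kstar = \argmax_k \phi(u)[k]$, and use the observation that $\phi(u)[k'] \leq \max_{k \neq \kstar}\phi(u)[k]$ (since $k' \neq \kstar$) to conclude $\Gapt{u,k'} = \phi(u)[\kstar] - \phi(u)[k'] \geq \phi(u)[\kstar] - \max_{k \neq \kstar}\phi(u)[k] = \Margin(u)$.
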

\begin{proof} Let \(\kstar = \argmax_{k} \phi(u)[k]\). By definition, 
\begin{align*}
\Gapt{u, k'} &= \phi(u)[\kstar] - \phi(u)[k'] \\
&\geq \phi(u)[\kstar] - \max_{k' \neq k} \phi(u)[k'] = \Margin(u). 
\end{align*}
\end{proof}

The following technical result establishes a certain favorable property for the function \(\fstar\), whose proof follows from the regret bound of the online oracle used in \pref{alg:ss_main_eluder}.  

\begin{lemma}
\label{lem:ss_eluder_ma_oracle}  
With probability at least \(1 - \delta\), the function \(\fstar \in \cF\) satisfies the following for all \(t \leq T\):  
\begin{align*}
\sum_{s=1}^t Z_s \nrm{\fstar(x_s) - f_s(x_s)}^2 \leq \SquareLossHPL, 
\end{align*}
where \(\SquareLossHPL \ldef{} \frac{4}{\strconv} \LsExpected{\ls_\phi}{\cF; T} + \frac{112 }{\strconv^2} \log(4 \log^2(T)/\delta)\). 
\end{lemma}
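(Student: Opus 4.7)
The plan is to follow the standard fast-rate argument for online stochastic regression under strongly convex and smooth losses, essentially invoking Lemma~2 of \citet{agarwal2013selective} as a black box after handling the selective-sampling twist (the oracle is only updated on queried rounds).

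First, I would exploit \pref{ass:link_fn_properties}. Since $\phi = \grad \Phi$ and $y_s \sim \phi(\fstar(x_s))$, the conditional mean of $y_s$ (in an appropriate sense) is matched to $\grad \Phi(\fstar(x_s))$, and the $\lambda$-strong convexity of $\Phi$ yields the self-bounding inequality
\begin{align*}
\En_{y \sim \phi(\fstar(x_s))} \brk*{\ls_\phi(f_s(x_s), y) - \ls_\phi(\fstar(x_s), y) \mid x_s, f_s} \geq \frac{\strconv}{2} \nrm*{f_s(x_s) - \fstar(x_s)}^2.
\end{align*}
This reduces the task to bounding the sum of conditional-expected excess losses on queried rounds.

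Second, I would invoke the regret bound \pref{eq:main_phi_regret} of the online regression oracle. A subtlety is that $\oracle$ is only updated on rounds with $Z_s = 1$, so the regret bound applies to the subsequence of queried examples, giving
\begin{align*}
\sum_{s=1}^{t} Z_s \prn*{\ls_\phi(f_s(x_s), y_s) - \ls_\phi(\fstar(x_s), y_s)} \leq \LsExpected{\ls_\phi}{\cF; T}
\end{align*}
for all $t \leq T$, using that $\fstar$ is a valid comparator in $\cF$.

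Third, I would convert this empirical excess-loss bound into a bound on $\sum_{s \leq t} Z_s \nrm{f_s(x_s) - \fstar(x_s)}^2$ via a Freedman-type concentration (\pref{lem:freedman}) applied to the martingale difference $Z_s \bigl(\ls_\phi(f_s(x_s), y_s) - \ls_\phi(\fstar(x_s), y_s) - \En_s[\ls_\phi(f_s(x_s), y_s) - \ls_\phi(\fstar(x_s), y_s)]\bigr)$. Combined with the lower bound from strong convexity, this yields the desired inequality with constants $\tfrac{4}{\strconv}$ and $\tfrac{112}{\strconv^2}$. To make the bound hold uniformly in $t$, I would apply a standard peeling argument over $t \in \crl{1, 2, 4, \dots, T}$, which produces the $\log^2(T)$ factor inside the log.

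The main obstacle is simply verifying that the Agarwal et~al.\ analysis carries through when the oracle only observes the queried rounds, but since $\crl{Z_s}$ is predictable with respect to the filtration generated by the interaction history, the martingale structure is preserved and the same concentration applies verbatim; thus, applying \citet[Lemma~2]{agarwal2013selective} with a union bound over the dyadic grid completes the proof.
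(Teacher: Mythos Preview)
Your proposal is correct and matches the paper's approach: the paper's proof simply invokes \pref{lem:tool_sq_difference} (which in turn cites \citet[Lemma~2]{agarwal2013selective} plus a union bound) and notes that non-queried rounds are skipped, while you have unpacked the ingredients of that lemma (strong-convexity self-bounding, Freedman concentration, dyadic peeling) and correctly observed that predictability of $Z_s$ preserves the martingale structure. The two arguments are essentially identical, with yours being a more detailed exposition of the same black box.
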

\begin{proof} The desired result follows from an application of \pref{lem:tool_sq_difference}, where we note that we do not query oracle when \(Z_s = 0\), and thus do not count the time steps for which \(Z_s = 0\). 
\end{proof}
Throughout the proof, we condition on the \(1- \delta\) probability event that \pref{lem:ss_eluder_ma_oracle}  holds. The next technical lemma allows us to bound the number of times when we query for the label  and \(\Delta_t(x_t) \geq \zeta\) in terms of the eluder dimension (normed version) of the function class \(\cF\). Note that \pref{lem:ss_eluder_ma_based_bound} holds even if the sequence \(\crl{x_t}_{t \leq T}\) could be adversarially generated. 

\begin{lemma}
\label{lem:ss_eluder_ma_based_bound} 
Let \(\fstar\) satisfy \pref{lem:ss_eluder_ma_oracle}, and let \(\Delta_t(x_t)\) be defined in  \pref{eq:ss_main_eluder_delta} in \pref{alg:ss_main_eluder}. Then, for any \(\zeta > 0\), with probability at least \(1 - \delta\), 
\begin{align*}
\sum_{t=1}^T Z_{t} \indic\crl*{\Delta_{t}(x_{t}) \geq \zeta}  &\leq \wt O \prn*{\frac{\SquareLossHPL}{\zeta^2} \cdot \eluderS(\cF, \nicefrac{\zeta}{2} ; \fstar)}. 
\end{align*} 
where \(\eluderS\) denotes the eluder dimension is given in \pref{def:eluder_dimension_main}. %
\end{lemma}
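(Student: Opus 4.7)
\textbf{Proof proposal for \pref{lem:ss_eluder_ma_based_bound}.} The plan is to reduce the bound to a direct application of the normed-eluder counting result \pref{lem:eluder_dimension_bound}, after first unpacking the event $\{\Delta_t(x_t)\geq\zeta\}$ in terms of an auxiliary function in the constraint set from \pref{eq:ss_main_eluder_delta}. Throughout I will condition on the high-probability event of \pref{lem:ss_eluder_ma_oracle}, i.e. $\sum_{s\leq t} Z_s\|\fstar(x_s)-f_s(x_s)\|^2\leq \SquareLossHPL$ for every $t\leq T$, which costs the $1-\delta$ in the conclusion; afterwards everything is deterministic.

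First I would rewrite the event. By definition of $\Delta_t$ as a constrained maximum, whenever $\Delta_t(x_t)\geq \zeta$ there exists some $\widetilde f_t\in\cF$ with
\[
\|\widetilde f_t(x_t)-f_t(x_t)\|\geq \zeta \quad \text{and}\quad \sum_{s<t} Z_s\|\widetilde f_t(x_s)-f_s(x_s)\|^2 \leq \SquareLossHPL.
\]
Combine this with the guarantee for $\fstar$ using the triangle inequality on $x_t$: at least one of $\|\widetilde f_t(x_t)-\fstar(x_t)\|$ and $\|f_t(x_t)-\fstar(x_t)\|$ is at least $\zeta/2$. Thus I can split the sum to be bounded into two pieces:
\[
\sum_{t=1}^T Z_t \indic\{\Delta_t(x_t)\geq \zeta\} \leq \underbrace{\sum_{t=1}^T Z_t \indic\{\|f_t(x_t)-\fstar(x_t)\|\geq \zeta/2\}}_{\Term_1}+ \underbrace{\sum_{t=1}^T Z_t \indic\{\exists \widetilde f_t\in\cF_t: \|\widetilde f_t(x_t)-\fstar(x_t)\|\geq \zeta/2\}}_{\Term_2},
\]
where $\cF_t$ denotes the version space around $\fstar$ that I construct next.

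For $\Term_1$ I would just use a pigeon-hole/Markov step: $\Term_1 \leq \frac{4}{\zeta^2}\sum_{t=1}^T Z_t\|f_t(x_t)-\fstar(x_t)\|^2 \leq \frac{4\SquareLossHPL}{\zeta^2}$ by \pref{lem:ss_eluder_ma_oracle}. For $\Term_2$, note that the auxiliary $\widetilde f_t$ satisfies, by $(a+b)^2\leq 2a^2+2b^2$ and adding the oracle bound for $\fstar$,
\[
\sum_{s<t} Z_s \|\widetilde f_t(x_s)-\fstar(x_s)\|^2 \leq 2\sum_{s<t} Z_s\|\widetilde f_t(x_s)-f_s(x_s)\|^2 + 2\sum_{s<t} Z_s\|f_s(x_s)-\fstar(x_s)\|^2 \leq 4\SquareLossHPL,
\]
so $\widetilde f_t$ lies in the version space $\cF_t \ldef \{f\in\cF: \sum_{s<t} Z_s\|f(x_s)-\fstar(x_s)\|^2 \leq 4\SquareLossHPL\}$. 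Applying \pref{lem:eluder_dimension_bound} with $\beta^2=4\SquareLossHPL$ and resolution $\zeta/2$ yields $\Term_2 \leq (16\SquareLossHPL/\zeta^2+1)\cdot \eluderS(\cF,\zeta/2;\fstar)$. Adding the two bounds gives the claimed $\wt O(\SquareLossHPL/\zeta^2)\cdot \eluderS(\cF,\zeta/2;\fstar)$, using $\eluderS\geq 1$ to absorb $\Term_1$. The only nontrivial step is the triangle-inequality split that creates the auxiliary $\widetilde f_t$ lying in a version space centered at $\fstar$ rather than at the time-varying $f_t$; once that is in place, the eluder counting lemma does the heavy lifting.
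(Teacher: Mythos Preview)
Your proposal is correct and follows essentially the same approach as the paper: both introduce the maximizer/witness $\widetilde f_t$ (the paper calls it $\fstart$), split via the triangle inequality at $x_t$ into a term for $\|f_t(x_t)-\fstar(x_t)\|$ and a term for $\|\widetilde f_t(x_t)-\fstar(x_t)\|$, bound the first by Markov against \pref{lem:ss_eluder_ma_oracle}, and bound the second by showing $\widetilde f_t$ lies in the $4\SquareLossHPL$-version space around $\fstar$ and then invoking \pref{lem:eluder_dimension_bound}. The constants you obtain match the paper's $20\,\SquareLossHPL/\zeta^2\cdot\eluderS(\cF,\zeta/2;\fstar)$ up to the same absorption of the $+1$ using $\eluderS\geq 1$.
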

\begin{proof} Let \(\fstart\) denote the maximizer of  \pref{eq:ss_main_eluder_delta} at round \(t\) on point \(x_t\). Thus,  
\begin{align*}
&\Delta_t(x_t) = \nrm{\fstart(x_{t}) - f_{t}(x_{t})}, \qquad \text{and}  \qquad 
\sum_{s=1}^{t-1} Z_s \nrm*{\fstart(x_s) - f_s(x_s)}^2 \leq \SquareLossHPL.  \numberthis \label{eq:ss_eluder_ma_bound1} 
\end{align*} 
However, recall that \pref{lem:ss_eluder_ma_oracle} implies that, with probability at least \(1 - \delta\), the function \(\fstar\) satisfies the bound 
\begin{align*}
\sum_{s=1}^{t-1} Z_s \nrm*{\fstar(x_s) - f_s(x_s)}^2 \leq \SquareLossHPL.  \numberthis \label{eq:ss_eluder_ma_bound2} 
\end{align*}
Using \pref{eq:ss_eluder_ma_bound1}, \pref{eq:ss_eluder_ma_bound2} and Triangle inequality, we get that 
\begin{align*}
\sum_{s=1}^{t-1} Z_s \nrm*{\fstart(x_s) - \fstar(x_s)}^2 &\leq 2 \sum_{s=1}^{t-1} Z_s \nrm*{\fstart(x_t) - f_s(x_s)}^2  + 2 \sum_{s=1}^{t-1} Z_s \nrm*{\fstar(x_t) - f_s(x_s)}^2  \\ &\leq 4 \SquareLossHPL.  \numberthis \label{eq:ss_eluder_ma_bound3} 
\end{align*}

Next, note that, an application of Triangle inequality implies that \(\nrm{\fstart(x_{t}) - f_{t}(x_{t})} \leq \nrm{\fstart(x_{t}) - \fstar(x_{t})} + \nrm{\fstar(x_{t}) - f_{t}(x_{t})} \). Thus, 
\begin{align*}
\sum_{t=1}^T Z_{t} \indic\crl*{\Delta_{t}(x_{t}) \geq \zeta}  &=  \sum_{t=1}^T Z_{t} \indic\crl*{ \nrm{\fstart(x_{t}) - f_{t}(x_{t})} \geq \zeta} \\  
&\leq \sum_{t=1}^T Z_{t} \indic\crl*{ \nrm{\fstart(x_{t}) - \fstar(x_{t})} + \nrm{\fstar(x_{t}) - f_{t}(x_{t})} \geq \zeta} \\   
&\leq \sum_{t=1}^T Z_{t} \indic\crl*{ \nrm{\fstart(x_{t}) - \fstar(x_{t})} \geq \frac{\zeta}{2}}  +  \sum_{t=1}^T Z_{t} \indic\crl*{ \nrm{f_{t}(x_{t}) - \fstar(x_{t})} \geq \frac{\zeta}{2}} \\
&\leq \sum_{t=1}^T Z_{t} \indic\crl*{ \nrm{\fstart(x_{t}) - \fstar(x_{t})} \geq \frac{\zeta}{2}}  + \frac{4}{\zeta^2} \sum_{t=1}^T Z_{t} \prn{f_{t}(x_{t}) - \fstar(x_{t})}^2 \\ 
&\leq \sum_{t=1}^T Z_{t} \indic\crl*{ \nrm{\fstart(x_{t}) - \fstar(x_{t})} \geq \frac{\zeta}{2}}  + \frac{4 \SquareLossHPL}{\zeta^2}, \numberthis \label{eq:ss_eluder_ma_bound4} 
\end{align*}  
where in the last line we used \pref{lem:ss_eluder_ma_oracle} to bound the second term. In the following, we show how to bound the first term. Recall that for any \(t \leq T\), the function \(\fstart\) satisfies \pref{eq:ss_eluder_ma_bound3}. Thus, we wish to bound 
\begin{align*} 
\sum_{t=1}^T Z_{t} \indic\crl*{ \nrm{\fstart(x_{t}) - \fstar(x_{t})} \geq \frac{\zeta}{2}} \quad \text{s.t.} \quad \sum_{s=1}^{t-1} Z_s \prn{\fstart(x_s) - \fstar(x_s)}^2  &\leq 4 \SquareLossHPL, 
\end{align*} for all \(t \leq T\).  An application of \pref{lem:eluder_dimension_bound} in the above implies that 
\begin{align*} 
\sum_{t=1}^T Z_{t} \indic\crl*{ \nrm{\fstart(x_{t}) - \fstar(x_{t})} \geq \frac{\zeta}{2}} &\leq  \frac{17 \SquareLossHPL}{\zeta^2} \cdot \eluderS(\cF, \nicefrac{\zeta}{2}; \fstar). \numberthis \label{eq:ss_eluder_ma_bound5} 
\end{align*}
where in the last line, we used the fact that \(\nicefrac{\SquareLossHPL}{\zeta^2} \geq 1\), for our parameter setting.

Plugging in the bound \pref{eq:ss_eluder_ma_bound5} in \pref{eq:ss_eluder_ma_bound4}, and using the fact that \(\eluderS(\cF, \nicefrac{\zeta}{2} ; \fstar) \geq 1\), we get that 
\begin{align*}
\sum_{t=1}^T Z_{t} \indic\crl*{\Delta_{t}(x_{t}) \geq \zeta}  &\leq \frac{20 \SquareLossHPL}{\zeta^2} \cdot \eluderS(\cF, \nicefrac{\zeta}{2} ; \fstar).
\end{align*}
\end{proof}

The next two technical lemma's relate the margin to the gap between functions, and are useful in the analysis for regret / total number of queries. 
\begin{lemma}
\label{lem:margin_utlity2}  
Suppose the functions \(\pi_1\) and \(\pi_2\) are defined such that \(\pi_i(x) = \argmax_{k \in [K]} \phi(f_i(x))[k]\). Then, for any \(x\) for which \(\pi_1(x) \neq \pi_2(x)\), we have 
\begin{align*}
\Margin(f_1(x)) \leq \phi(f_1(x))[\pi_1(x)] - \phi(f_1(x))[\pi_2(x)]  \leq  2 \smooth \nrm{f_1(x) - f_2(x)}_2, 
\end{align*}
where \(\gamma\)-denotes the Lipschitz parameter of the link function \(\phi\). 
\end{lemma}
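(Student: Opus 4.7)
The statement has two inequalities, and I would prove them independently. For the first inequality, the plan is to unfold the definition of \(\Margin\) and use the assumption \(\pi_1(x) \neq \pi_2(x)\). Since by construction \(\pi_1(x) = \argmax_k \phi(f_1(x))[k]\), we have \(\phi(f_1(x))[\pi_1(x)] = \max_k \phi(f_1(x))[k]\), and because \(\pi_2(x) \neq \pi_1(x)\), the coordinate \(\pi_2(x)\) lies in the set over which the second-max in \(\Margin\) is taken. Hence \(\phi(f_1(x))[\pi_2(x)] \leq \max_{k' \neq \pi_1(x)} \phi(f_1(x))[k']\), and subtracting gives the first inequality directly. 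This step is a one-line manipulation and poses no obstacle.

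For the second inequality, my plan is the classical "add and subtract" trick used in analyses of plug-in classifiers. The key observation is that since \(\pi_2(x) = \argmax_k \phi(f_2(x))[k]\), we have \(\phi(f_2(x))[\pi_2(x)] - \phi(f_2(x))[\pi_1(x)] \geq 0\), so adding this nonnegative quantity to the left-hand side can only enlarge it. After the addition I would regroup the four terms into two differences indexed by a fixed coordinate,
\begin{align*}
\phi(f_1(x))[\pi_1(x)] - \phi(f_1(x))[\pi_2(x)]
&\leq \bigl(\phi(f_1(x))[\pi_1(x)] - \phi(f_2(x))[\pi_1(x)]\bigr) \\
&\quad + \bigl(\phi(f_2(x))[\pi_2(x)] - \phi(f_1(x))[\pi_2(x)]\bigr).
\end{align*}
Each difference is at most \(\nrm{\phi(f_1(x)) - \phi(f_2(x))}_\infty \leq \nrm{\phi(f_1(x)) - \phi(f_2(x))}_2\).

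The final ingredient is to pass from the link-function difference to the score difference. Here I would invoke \pref{ass:link_fn_properties}: since \(\Phi\) is \(\smooth\)-smooth and \(\phi = \grad \Phi\), the map \(\phi\) is \(\smooth\)-Lipschitz in the Euclidean norm, giving \(\nrm{\phi(f_1(x)) - \phi(f_2(x))}_2 \leq \smooth \nrm{f_1(x) - f_2(x)}_2\). Combining this with the factor of two from the two differences above yields \(2\smooth \nrm{f_1(x) - f_2(x)}_2\), completing the proof. There is no genuine obstacle here; the only subtle point to flag in writing is the identification of \(\gamma\)-smoothness of \(\Phi\) with \(\gamma\)-Lipschitzness of its gradient \(\phi\), which is standard but worth stating explicitly so the reader sees where the Lipschitz constant in the statement comes from.
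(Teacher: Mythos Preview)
Your proposal is correct and follows essentially the same route as the paper: the paper also uses the observation \(\phi(f_2(x))[\pi_2(x)] \geq \phi(f_2(x))[\pi_1(x)]\) to add-and-subtract, bounds each resulting difference by \(\nrm{\phi(f_1(x)) - \phi(f_2(x))}_\infty \leq \nrm{\phi(f_1(x)) - \phi(f_2(x))}_2\), and then invokes \(\gamma\)-Lipschitzness of \(\phi\). You are slightly more explicit than the paper in spelling out the first inequality and in tracing the Lipschitz constant back to the smoothness of \(\Phi\) via \pref{ass:link_fn_properties}, but the argument is the same.
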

\begin{proof} First note 
\(\phi(f_2(x))[\pi_2(x)] \geq  \phi(f_2(x))[\pi_1(x)]\) by the definition of \(\pi_2\). Thus, 
\begin{align*}
&\hspace{-0.5in} \phi(f_1(x))[\pi_1(x)] - \phi(f_1(x))[\pi_2(x)]  \\ 
&\leq \phi(f_1(x))[\pi_1(x)] - \phi(f_2(x))[\pi_1(x)] + \phi(f_2(x))[\pi_2(x)] - \phi(f_1(x))[\pi_2(x)] \\
&\leq 2 \nrm{\phi(f_1(x)) - \phi(f_2(x))}_\infty \\ 
&\leq 2 \nrm{\phi(f_1(x)) - \phi(f_2(x))}_2.  
\end{align*}  

Using the fact that \(\phi\) is \(\smooth\)-Lipschitz, we immediately get that 
\begin{align*} 
\phi(f_1(x))[\pi_1(x)] - \phi(f_1(x))[\pi_2(x)]  
&\leq 2 \smooth \nrm{f_1(x) - f_2(x)}_2. 
\end{align*}  
\end{proof}

\begin{lemma} 
\label{lem:margin_triangle_inequality} 
For any two function \(f_1, f_2 \in \cF\), and \(x \in \cX\), 
\begin{align*} 
\Gap{f_1(x)} - \Gap{f_2(x)} &\leq 2 \smooth \nrm{f_1(x) - f_2(x)}. 
\end{align*}

\end{lemma}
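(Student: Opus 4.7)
The plan is to express the margin as a max-min of pairwise score differences and exploit the fact that max-min of Lipschitz functions inherits the Lipschitz constant. Concretely, I claim that for every $u \in \bbR^K$,
\begin{align*}
\Margin(u) = \max_{k_1 \in [K]} \min_{k_2 \neq k_1} \prn*{\phi(u)[k_1] - \phi(u)[k_2]}.
\end{align*}
Once this identity is established, the lemma will follow from two standard observations: (i) each inner function $(k_1,k_2,u) \mapsto \phi(u)[k_1] - \phi(u)[k_2]$ is $2\gamma$-Lipschitz in $u$ with respect to $\nrm{\cdot}$, and (ii) taking a max over $k_1$ and a min over $k_2 \neq k_1$ of a family of $L$-Lipschitz functions produces an $L$-Lipschitz function.

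To establish the identity, I would argue as follows. Let $\kstar \in \argmax_k \phi(u)[k]$ be the tie-breaking choice from the definition of $\Margin$. Plugging $k_1 = \kstar$ into the right-hand side yields exactly $\phi(u)[\kstar] - \max_{k_2 \neq \kstar}\phi(u)[k_2] = \Margin(u)$, which is nonnegative. For any other $k_1 \neq \kstar$, choosing $k_2 = \kstar$ in the inner minimization shows that $\min_{k_2 \neq k_1}(\phi(u)[k_1] - \phi(u)[k_2]) \leq \phi(u)[k_1] - \phi(u)[\kstar] \leq 0 \leq \Margin(u)$. Hence the maximum over $k_1$ equals $\Margin(u)$. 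Next, for the pairwise Lipschitz bound, observe that for any $k_1, k_2$,
\begin{align*}
\abs*{\prn*{\phi(u_1)[k_1]-\phi(u_1)[k_2]} - \prn*{\phi(u_2)[k_1]-\phi(u_2)[k_2]}} \leq 2\nrm{\phi(u_1)-\phi(u_2)}_2 \leq 2\gamma\nrm{u_1-u_2},
\end{align*}
where the last step uses $\phi = \nabla \Phi$ together with the $\gamma$-smoothness of $\Phi$ from \pref{ass:link_fn_properties}.

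Finally, to conclude, let $u_1 = f_1(x)$ and $u_2 = f_2(x)$ and let $k_1^\star \in \argmax_{k_1}\min_{k_2 \neq k_1}(\phi(u_1)[k_1]-\phi(u_1)[k_2])$ and $k_2^\star \in \argmin_{k_2 \neq k_1^\star}(\phi(u_2)[k_1^\star] - \phi(u_2)[k_2])$. Then, by the standard max-min manipulation,
\begin{align*}
\Margin(u_1)-\Margin(u_2) &\leq \prn*{\phi(u_1)[k_1^\star]-\phi(u_1)[k_2^\star]} - \prn*{\phi(u_2)[k_1^\star]-\phi(u_2)[k_2^\star]} \leq 2\gamma\nrm{u_1-u_2},
\end{align*}
which is the desired inequality. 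The main conceptual step is recognizing the max-min representation of the margin; the rest is a short, mechanical computation. The only place where care is needed is in handling ties in $\argmax_k \phi(u)[k]$, but since $\Margin(u) = 0$ whenever such ties occur, the identity above remains valid under any consistent tie-breaking rule.
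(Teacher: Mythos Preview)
Your proof is correct and takes a genuinely different route from the paper's. The paper proceeds by fixing the top and second-top coordinates $k_1,k_1'$ for $\phi(f_1(x))$ and $k_2,k_2'$ for $\phi(f_2(x))$, writes
\[
\Margin(f_1(x)) - \Margin(f_2(x)) = \prn*{\phi(f_1(x))[k_1] - \phi(f_2(x))[k_2]} + \prn*{\phi(f_2(x))[k_2'] - \phi(f_1(x))[k_1']},
\]
bounds the first bracket by $\nrm{\phi(f_1(x))-\phi(f_2(x))}$ using that $k_2$ is the maximizer for $f_2$, and then handles the second bracket via a three-way case split on whether $k_2' = k_1$ and whether $k_2 = k_1'$. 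Each case separately yields the same $\nrm{\phi(f_1(x))-\phi(f_2(x))}$ bound.

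Your max--min representation $\Margin(u) = \max_{k_1}\min_{k_2\neq k_1}(\phi(u)[k_1]-\phi(u)[k_2])$ sidesteps this case analysis entirely: once the identity is in place, the Lipschitz property falls out of the standard ``pick the maximizer for $u_1$, the minimizer for $u_2$'' trick in one line. This is cleaner and more conceptual; it also makes transparent why the constant is exactly $2\gamma$ (two coordinate evaluations of a $\gamma$-Lipschitz map). The paper's approach, while more hands-on, has the minor virtue of being completely self-contained without needing to recognize the variational form of the margin. Both arguments ultimately rely on the same underlying fact that $\phi = \nabla\Phi$ is $\gamma$-Lipschitz by smoothness of $\Phi$.
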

\begin{proof} For the ease of notation, define 
\begin{align*}
&k_1 = \argmax_{k \in [k]}   \phi(f_1(x))[k] \qquad \text{and} \qquad k'_1 = \argmax_{k' \neq k_1} \phi(f_1(x))[k'], 
\intertext{where ties are broken arbitrarily but consistently. Similarly, we define}
&k_2 = \argmax_{k \in [k]}   \phi(f_2(x))[k] \qquad \text{and} \qquad k'_2 = \argmax_{k' \neq k_2} \phi(f_2(x))[k']. \numberthis \label{eq:margin_triangle_inequality0}
\end{align*}
Thus, we have that 
\begin{align*}
\Gap{f_1(x)} = \phi(f_1(x))[k_1] - \phi(f_1(x))[k'_1], 
\intertext{and} 
\Gap{f_2(x)} = \phi(f_2(x))[k_2] - \phi(f_2(x))[k'_2].  \numberthis \label{eq:margin_triangle_inequality1} 
\end{align*}

Finally, also note that for any coordinate \(k\), 
\begin{align*}
 \phi(f_1(x))[k] - \phi(f_2(x))[k] \leq \nrm{\phi(f_2(x)) - \phi(f_1(x))}. \numberthis \label{eq:margin_triangle_inequality2}  
\end{align*} 

We now proceed with the proof. Plugging in the form in \pref{eq:margin_triangle_inequality1}, we get that   
\begin{align*}
&\hspace{-0.5in} \Gap{f_1(x)} - \Gap{f_2(x)} \\
 &= \phi(f_1(x))[k_1] - \phi(f_1(x))[k'_1] - \prn*{\phi(f_2(x))[k_2] - \phi(f_2(x))[k'_2]} \\ 
&=  \prn*{\phi(f_1(x))[k_1] - \phi(f_2(x))[k_2]} +  \prn*{\phi(f_2(x))[k'_2] - \phi(f_1(x))[k'_1]} \\
&\leq  \prn*{\phi(f_1(x))[k_1] - \phi(f_2(x))[k_1]} +  \prn*{\phi(f_2(x))[k'_2] - \phi(f_1(x))[k'_1]} \\
&\leq \nrm{\phi(f_2(x)) - \phi(f_1(x))} + \prn{\phi(f_2(x))[k'_2] - \phi(f_1(x))[k'_1]}, 
\end{align*} where the first inequality uses the fact that \(k_2\) is the maximizer coordinate of \(\phi(f_2(x))\) and the last inequality uses \pref{eq:margin_triangle_inequality2}. In the following, we bound the second term in the right hand side above under the following three cases: 
\begin{enumerate}[label=\(\bullet\)] 
\item \textit{Case 1: \(k'_2 \neq k_1\):} Since \(k'_2 \neq k_1\), we note that replacing \(k_1'\) by \(k'_2\) in the second term will only increase the value (see the definition in \pref{eq:margin_triangle_inequality0}). Thus,  
\begin{align*}
 \phi(f_2(x))[k'_2] - \phi(f_1(x))[k'_1] 
&\leq  \phi(f_2(x))[k'_2] - \phi(f_1(x))[k'_2]  \\ 
&\leq  \nrm{\phi(f_2(x)) - \phi(f_1(x))}, 
\end{align*} where the last line uses \pref{eq:margin_triangle_inequality2}.

\item \textit{Case 2a: \(k'_2= k_1, k_2 = k'_1\):}      Using definition of \(k_2\) in \pref{eq:margin_triangle_inequality0}, we note that 
\begin{align*}
\phi(f_2(x))[k'_2] - \phi(f_1(x))[k'_1] &= \phi(f_2(x))[k'_2] - \phi(f_1(x))[k_2]  \\ 
&\leq \phi(f_2(x))[k_2] - \phi(f_1(x))[k_2] \\
&\leq \nrm{\phi(f_2(x)) - \phi(f_1(x))}, 
\end{align*} where the last line uses \pref{eq:margin_triangle_inequality2}. 

\item \textit{Case 2b: \(k'_2= k_1, k_2 \neq  k'_1\):}  Using the fact that \(k'_2 = k_1\) and that \(k_2 \neq k'_2\), we get that \(k_2 \neq k_1\). Thus using the definition of \(k_1'\) along with the fact that \(k_2 \neq k_1\), we get that 
\begin{align*}
\phi(f_2(x))[k'_2] - \phi(f_1(x))[k'_1] &\leq \phi(f_2(x))[k'_2] - \phi(f_1(x))[k_2] \\
&\leq  \phi(f_2(x))[k_2] - \phi(f_1(x))[k_2] \\
&\leq \nrm{\phi(f_2(x)) - \phi(f_1(x))}, 
\end{align*} where the second last line uses definition of \(k_2\) and the last line uses \pref{eq:margin_triangle_inequality2}. 

\end{enumerate} 

Combining all the above bounds together implies that 
\begin{align*} 
\Gap{f_1(x)} - \Gap{f_2(x)} &\leq 2\nrm{\phi(f_2(x)) - \phi(f_1(x))}. 
\end{align*} 
The final statement follows since \(\phi\) is \(\smooth\)-Lipschitz. 
\end{proof} 

\subsubsection{Regret Bound}
For the ease of notation, for the rest of the proof in this section we define the function \(\pistar\)  such that 
\begin{align*} 
\pistar(x) = \argmax_{k} \phi(\fstar(x))[k]. %
\end{align*}
Additionally, we recall that for any time \(t\), \(\wh y_t = \SelectAction{f_t(x_t)} =  \argmax_{k} \phi(f_t(x))[k]\). Starting from the definition of the regret, we have  
\begin{align*} 
\RegT &=  \sum_{t=1}^T \Pr \prn*{\wh y_t \neq y_{t}} - \Pr \prn*{\pistar(x_t) \neq y_t} \\ 
		   &= \sum_{t=1}^T \indic\crl{\wh y_t \neq \pistar(x_t)} \cdot \abs*{\Pr \prn*{y_t = \pistar(x_t)} - \Pr \prn*{y_t = \wh y_t}} \\ 
		   		   &= \sum_{t=1}^T \indic\crl{\wh y_t \neq \pistar(x_t)} \cdot \abs*{\phi(\fstar(x_t))[ \pistar(x_t)] - \phi(\fstar(x_t))[ \wh y_t] } \\ 
		   &\leq  \sum_{t=1}^T \indic\crl{\wh y_t \neq \pistar(x_t)} \cdot \Gapt{\fstar(x_t), \wh y_t},  
\end{align*} where the second last line uses the probabilistic model from which labels are generated, and the last inequality plugs in the definition of \(\mathrm{Gap}\) from \pref{eq:gap_definition}. Let \(\epsilon > 0\) be a free parameter. We can decompose the above regret bound further as:  
 \begin{align*}
 \RegT &\leq  \sum_{t=1}^T \indic\crl{\wh y_t \neq \pistar(x_t), \Gapt{\fstar(x_t), \wh y_t} \leq \epsilon} \cdot \Gapt{\fstar(x_t), \wh y_t} \\
&\hspace{1in} + \sum_{t=1}^T \indic\crl{\wh y_t \neq \pistar(x_t), \Gapt{\fstar(x_t), \wh y_t} > \epsilon} \cdot \Gapt{\fstar(x_t), \wh y_t} 
\end{align*}
Using the fact that \(\yy_t(x_t) = \argmax_{k \in [K]} \phi(f_t(x))[k]\)  along with the definition of \(\mathrm{Gap}\) and \pref{lem:margin_utlity2}, we get that 
 \begin{align*}
 \RegT &\leq  \sum_{t=1}^T \indic\crl{\wh y_t \neq \pistar(x_t), \Gapt{\fstar(x_t), \wh y_t} \leq \epsilon} \cdot \epsilon \\
&\hspace{1in} + 2 \smooth \sum_{t=1}^T \indic\crl{\wh y_t \neq \pistar(x_t), \Gapt{\fstar(x_t), \wh y_t} > \epsilon} \cdot \nrm{\fstar(x_t) - f_t(x_t)} \\
&\leq \sum_{t=1}^T \indic\crl{\Margin\prn{\fstar(x_t)} \leq \epsilon} \cdot \epsilon \\
&\hspace{1in} + 2 \smooth \sum_{t=1}^T \indic\crl{\wh y_t \neq \pistar(x_t), \Gapt{\fstar(x_t), \wh y_t} > \epsilon} \cdot \nrm{\fstar(x_t) - f_t(x_t)} \\ 
&\leq \sum_{t=1}^T \indic\crl{\Margin\prn{\fstar(x_t)} \leq \epsilon} \cdot \epsilon \\ 
&\hspace{1in} + 2 \smooth \sum_{t=1}^T Z_t \indic\crl{\wh y_t \neq \pistar(x_t), \Gapt{\fstar(x_t), \wh y_t} > \epsilon} \cdot \nrm{\fstar(x_t) - f_t(x_t)} \\
&\hspace{1in} + 2 \smooth \sum_{t=1}^T \bar{Z}_t \indic\crl{\wh y_t \neq \pistar(x_t)} \cdot \nrm{\fstar(x_t) - f_t(x_t)}  \numberthis \label{eq:ss_eluder_ma1} \\
&= T_\epsilon \cdot \epsilon + 2 \smooth \cdot T_A + 2 \smooth \cdot T_B \cdot \nrm{\fstar(x_t) - f_t(x_t)}, 
\end{align*} where the second inequality holds because \(\Gapt{\fstar(x_t), \wh y_t} \leq \epsilon\) implies that \(\Margin\prn{\fstar(x_t)} \leq \epsilon\) whenever \(\wh y_t \neq \pistar(x_t)\). In the last line above, we plugged in the definition of \(T_\epsilon\), and defined \(\Term_{A}\) and \(\Term_{B}\) as the second term and the last term respectively (upto constants). We bound them separately below:  
\begin{enumerate}[label=\(\bullet\)]
\item \textit{Bound on \(\Term_A\):} We note that 
\begin{align*}
\Term_A &= \sum_{t=1}^T Z_t \indic\crl{\wh y_t \neq \pistar(x_t), \Gapt{\fstar(x_t), \wh y_t} > \epsilon} \cdot \nrm{\fstar(x_t) - f_t(x_t)} \\
&\leq \sum_{t=1}^T Z_t \indic\crl{\nrm{\fstar(x_t) - f_t(x_t)}> \epsilon/2 \smooth} \cdot \nrm{\fstar(x_t) - f_t(x_t)} 
\end{align*} where the second line follows from \pref{lem:margin_utlity2} and  because \(\wh y_t \neq \pistar(x_t)\). 
Using the fact that \(\indic\crl{a \geq b} \leq \nicefrac{a}{b}\) for all \(a, b \geq 0\), we get that 
\begin{align*}
\Term_A \leq 4 \gamma \sum_{t=1}^T Z_t \frac{\nrm{\fstar(x_t) - f_t(x_t)}^2}{\epsilon}. \numberthis \label{eq:ss_eluder_ma2} 
\end{align*}

\item \textit{Bound on \(\Term_B\):} Fix any \(t \leq T\), and note that \pref{lem:ss_eluder_ma_oracle}  implies that $\sum_{s=1}^t \nrm{\fstar(x_t) - f_t(x_t)}^2 \leq \SquareLossHPL$. Thus \(\fstar\) satisfies the constraint in the definition of \(\Delta_t\) in \pref{eq:ss_main_eluder_delta}  and we must have that 
\begin{align*} 
\nrm{\fstar(x_t) - f_t(x_t)} \leq \Delta_t(x_t). \numberthis \label{eq:ss_eluder_ma3}
\end{align*}

Plugging in the definition of \(Z_t\), we note that 
\begin{align*}
\Term_B %
&= \sum_{t=1}^T \indic\crl{\Margin(f_t(x_t)) > 2 \smooth \Delta_t(x_t), \wh y_t \neq \pistar(x_t)} \\ 
&\leq \sum_{t=1}^T \indic\crl{\nrm{f_t(x_t) - \fstar(x_t)} > \Delta_t(x_t)}, 
\end{align*} 
where the second inequality is due \pref{lem:margin_utlity2}. 
However, note that the term inside the indicator contradicts \pref{eq:ss_eluder_ma3} (which always holds). Thus, 
\begin{align*}
\Term_B &=0. \numberthis \label{eq:ss_eluder_ma4}  
\end{align*} 
\end{enumerate} 
Combining the bounds \pref{eq:ss_eluder_ma2} and \pref{eq:ss_eluder_ma4}, we get that 
\begin{align*}
\RegT &\leq \epsilon T_\epsilon + 8 \gamma^2 \sum_{t=1}^T Z_t \frac{\nrm{f_t(x_t) - \fstar(x_t)}^2}{\epsilon} \\ 
&\leq \epsilon T_\epsilon + \frac{8 \gamma^2}{\epsilon} \SquareLossHPL,  
\end{align*} where the last inequality is due to \pref{lem:ss_eluder_ma_oracle}. 

Since \(\epsilon\) is a free parameter above, the final bound follows by choosing the best parameter \(\epsilon\), and by plugging in the form of \(\SquareLossHPL\). 

\subsubsection{Total Number of Queries} 
We use the notation \(N_T\) to denote the total number of expert queries made by the learner within \(T\) rounds of interactions. Let \(\epsilon > 0\) be a free parameter. Using the definition of \(Z_t\), we have that  
\begin{align*}
N_T &= \sum_{t=1}^T Z_t \\
&= \sum_{t=1}^T  \indic \crl{\Margin(f_t(x_t)) \leq 2 \smooth \Delta_t(x_t)} \\ 
&= \sum_{t=1}^T  \indic \crl{\Margin(f_t(x_t)) \leq 2 \smooth \Delta_t(x_t), \Margin(\fstar(x_t)) \leq \epsilon} \\
&\hspace{1in} + \sum_{t=1}^T  \indic \crl{\Margin(f_t(x_t)) \leq 2 \smooth \Delta_t(x_t), \Margin(\fstar(x_t)) > \epsilon} \\  
&\leq \sum_{t=1}^T  \indic \crl{\Margin(\fstar(x_t)) \leq \epsilon} \\
&\hspace{1in} + \sum_{t=1}^T  \indic \crl{\Margin(f_t(x_t)) \leq 2 \smooth \Delta_t(x_t), \Margin(\fstar(x_t)) > \epsilon, \Delta_t(x_t) \leq \nicefrac{\epsilon}{4 \gamma}}  \\  
&\hspace{1in} + \sum_{t=1}^T  \indic \crl{\Margin(f_t(x_t)) \leq 2 \smooth \Delta_t(x_t), \Margin(\fstar(x_t)) > \epsilon, \Delta_t(x_t) > \nicefrac{\epsilon}{4 \gamma}} \numberthis \label{eq:ss_important_part2} \\
&= T_\epsilon  + \Term_D + \Term_E,  
\end{align*} where in the last line we use the definition of \(T_\epsilon\), and defined \(\Term_D\) and \(\Term_E\) respectively. We bound them separately below:  \begin{enumerate}[label=\(\bullet\)] 
\item \textit{Bound on \(\Term_D\).}
Recall \pref{eq:ss_eluder_ma3} which implies that \(\fstar\) satisfies the bound \(\nrm{f_t(x_t) - \fstar(x_t)} \leq \Delta_t(x_t)\). Thus, using \pref{lem:margin_triangle_inequality}, we get that 
\begin{align*}
\Margin(\fstar(x_t)) \leq 2 \smooth \nrm{f_t(x_t) - \fstar(x_t)} + t\Margin(f_t(x_t)) \leq 2 \smooth \Delta_t(x_t) + \Margin(f_t(x_t)). 
\end{align*}
The above implies that 
\begin{align*}
\Term_D &= \sum_{t=1}^T  \indic \crl{\Margin(f_t(x_t)) \leq 2 \smooth \Delta_t(x_t), \Margin(\fstar(x_t)) > \epsilon, \Delta_t(x_t) \leq \nicefrac{\epsilon}{4 \gamma}} \\
&\leq  \sum_{t=1}^T  \indic \crl{\Margin(\fstar(x_t)) \leq 4\gamma \Delta_t(x_t), \Margin(\fstar(x_t)) > \epsilon, \Delta_t(x_t) \leq \nicefrac{\epsilon}{4 \gamma}} \\
&\leq 0, 
\end{align*} where the last line follows from the fact that all the conditions inside the indictor can not hold simultaneously. 

\item \textit{Bound on \(\Term_E\).} We note that 
\begin{align*}
\Term_E &= \sum_{t=1}^T  \indic \crl{\Margin(f_t(x_t)) \leq 2 \smooth \Delta_t(x_t), \Margin(\fstar(x_t)) > \epsilon, \Delta_t(x_t) > \nicefrac{\epsilon}{4 \gamma}} \\ 
 &\leq \sum_{t=1}^T Z_{t} \indic\crl*{\Delta_{t}(x_{t}) \geq \nicefrac{\epsilon}{4 \gamma}} \\
 &\leq \frac{320 \SquareLossHPL}{\epsilon^2} \cdot \eluderS(\cF, \nicefrac{\epsilon}{4 \gamma}; \fstar).
 \end{align*} 
 where the last line follows from setting \(\zeta = \epsilon/ 4\gamma \) in \pref{lem:ss_eluder_ma_based_bound}.  
\end{enumerate} 
Gathering the bounds above, we get that 
\begin{align*}
N_T &\leq T_\epsilon + \frac{640 \gamma^2 \SquareLossHPL}{\epsilon^2} \cdot \eluderS(\cF, \nicefrac{\epsilon}{4 \gamma}; \fstar).
\end{align*}
Since \(\epsilon\) is a free parameter above, the final bound follows by choosing the best parameter \(\epsilon\), and by plugging in the form of \(\SquareLossHPL\). 

\endgroup

\begingroup

\subsection{Proof of \pref{thm:main_ss_DIS_multiple}} \label{app:ss_DIS_appendix}

\begin{algorithm}[h!]
\begin{algorithmic}[1]
\Require Parameters \(\delta, \gamma, \lambda, T\), function class \(\cF\), and online regression oracle $\oracle$ w.r.t~\(\ls_\phi\).   
\State Set \(\SquareLossHPL = \frac{4}{\strconv} \LsExpected{\ls_\phi}{\cF; T} + \frac{112}{\strconv^2} \log(4 \log^2(T)/\delta)\), Compute \(f_1 \leftarrow \oracle_1(\emptyset)\). 
\State Set \(E = \ceil*{\log(T)}\) and \(\tau_e = 2^{e-1}\) for \(e \leq E\). 
\For{\(e = 1, \dots, E-1\)}
\State Learner constructs the feasible set of optimal functions \(\cF_e\) as 
\begin{align*} 
\cF_e = \crl*{f \in \cF \mid \sum_{s=1}^{\tau_e - 1} Z_s \nrm*{f(x_s) - f_s(x_s)}^2  \leq \SquareLossHPL}. \numberthis \label{eq:DIS_version_space} 
\end{align*} 
\For{$t$ $\leftarrow \tau_e$  \text{to} $\tau_{e+1} - 1$} 
\State Nature samples $x_t$ from an (unknown) distribution \(\mu\). 

\State Learner computes 
\begin{align*}
g_t \in \argmin_{g \in \cF_e} ~\Margin(g(x_t)), \quad \text{and,} \quad \Delta_e(x_t) \ldef{} \max_{f, f' \in \cF_e}  ~ \nrm{f(x_t) - f'(x_t)}. \numberthis \label{eq:ss_main_DIS_delta}   
\end{align*} 

\label{line:ss_sa_se_sq_DIS_gt_defn}
\State\label{line:ss_main_eluder_query}Learner decides whether to query: $Z_{t} = \indic\crl{\Margin(g_{t}(x_{t})) \leq 2 \smooth \Delta_e(x_{t})}$. 
\If{$Z_t=1$}{} 
\State Learner plays the action $\wh y_t = \SelectAction{f_t(x_t)}$. 
\State  Learner queries the label \(y_t\) on \(x_t\).   
    \State \(f_{t+1} \leftarrow \oracle_t(\crl{x_t, y_t})\).
\Else{} 
\State Learner plays the action $\wh y_t = \SelectAction{g_t(x_t)}$.
\State \(f_{t+1} \leftarrow f_t\).  
\EndIf 
\EndFor 
\EndFor 
\end{algorithmic} 
\caption{Selective Sampling with Expert Feedback for Stochastic Contexts}  
\label{alg:ss_DIS_main}  
\end{algorithm}

Before delving into the proof, we recall the relevant notation.  In \pref{alg:ss_DIS_main}, 
\begin{enumerate}[label=\(\bullet\)]
	\item The label \(y_t \sim \phi(\fstar(x_t))\), where \(\phi\) denotes the link-function given in \pref{eq:main_phi_model}. 
\item The function \( \SelectAction{f_t(x_t)} \ldef{} \argmax_{k}   \phi(f_t(x_t))[k]\).
\item  For any vector \(v \in \bbR^K\), the margin is given by the gap between the value at the largest and the second largest coordinate, i.e.~
\begin{align*}
\Gap{v} =   \phi(v)[\kstar] - \max_{k \neq \kstar} \phi(v)[k], 
\end{align*} where \(\kstar \in \argmax_{k \in [K]} \phi(v)[k]\).
\item We also define \(T_\epsilon = \sum_{t=1}^T \indic\crl{\Margin(\fstar(x_t)) \leq \epsilon}\) to denote the number of samples within \(T\) rounds of interaction for which the margin w.r.t.~\(\fstar\) is smaller than \(\epsilon\). 
\item We define the function \(\mathrm{Gap}: \bbR^K \times [K] \mapsto \bbR^+\)  as 
\begin{align*}
\Gapt{v, k}   = \max_{k'} \phi(v)[k'] - \phi(v)[k],   \numberthis \label{eq:gap_definition} 
\end{align*}
to denote the gap between the largest and the \(k\)-th coordinate of \(v\). Recall that \pref{lem:gap_margin_relation} holds. 
\item Additionally, we define the function \(Z^e\) to denote the query condition  \begin{align*}
Z^e(x) = \indic\crl{\inf_{g \in \cF_e} \Margin(g(x)) \leq 2 \smooth \sup_{f, f' \in \cF_e} \nrm{f(x) - f'(x)}}. \numberthis \label{eq:ss_sa_se_sq_DIS_utlity1}   
\end{align*}
The definition in \pref{eq:ss_sa_se_sq_DIS_utlity1}  suggests that for all \(t \in [\tau_e, \tau_{e+1})\), \(Z_t = Z^e(x_t)\), for all \(e \leq E-1\). 
\end{enumerate} 

\paragraph{Intuition for epoching.} We next provide intuition on why epoching in needed in  \pref{alg:ss_DIS_main} to get  the improved query complexity bound. \neurIPS{From the proof sketch in \pref{sec:proof_sketch}} \arxiv{From the proof sketch in \pref{sec:ss_main}}, the term \(\sum_{t=1}^T Z_t \indic \crl{\Delta_t(x_t) \geq \epsilon}\) appearing in the query complexity bound is handled using the eluder dimension of \(\cF\). When \(x_t\) is sampled i.i.d.~we wish to bound this using disagreement-coefficient instead. However, note that in \pref{alg:ss_main_eluder} the query condition \(Z_t\) depends on the samples \(\crl{x_s}_{s < t}\) drawn in all previous time steps and the corresponding query conditions \(\crl{Z_s}_{s < t}\). This introduces a bias, and thus the terms \( Z_t \indic \crl{\Delta_t(x_t) \geq \epsilon}\) are no longer independent to each other. Thus, we can not directly used distributional properties like the disagreement coefficient to bound the query complexity. \pref{alg:ss_DIS_main} fixes this issue by defining epochs of doubling length such that the query condition in epoch \(e\) only depends on the samples presented to the learner at time steps before this epoch (i.e.~in time steps \(1 \leq t \leq \tau_{e} - 1\). Thus, the terms \( Z_t \indic \crl{\Delta_t(x_t) \geq \epsilon}\)  for \(\tau_e \leq t < \tau_{e + 1}\) are i.i.d.~allowing us to get bounds in terms of distributional properties like the disagreement coefficient of \(\cF\).  

However, note that whenever we query in \pref{alg:ss_DIS_main}, we still choose the labels according to the estimate from the online regression oracle and thus the regret bound remains unchanged. 

\subsubsection{Supporting Technical Results}

The following lemma establishes useful technical properties of the function \(\fstar\) and the sets \(\cF_e\). 
\begin{lemma}
\label{lem:ss_sa_se_sq_DIS_utility}  
Suppose \pref{alg:ss_DIS_main} is run on the sequence \(\crl{x_t}_{t \leq T}\) drawn i.i.d.~from the unknown distribution \(\mu\). Then, with probability at least \(1- \delta\), each of the following holds: 
\begin{enumerate}[label=\((\alph*)\)] 
\item  For all \(t \leq T\), the function \(\fstar \in \cF\) satisfies 
\begin{align*} 
\sum_{s=1}^t Z_s \nrm{\fstar(x_s) - f_s(x_s)}^2 \leq \SquareLossHPL,  
\end{align*} 
where \(\SquareLossHPL = \frac{4}{\strconv} \LsExpected{\ls_\phi}{\cF; T} + \frac{112}{\strconv^2} \log(4 \log^2(T)/\delta)\). 

Thus, \(\fstar \in \cF_e\) for all \(e \leq E-1\), and \(\nrm{\fstar(x_t) - g_t(x_t)} \leq \Delta_e(x_t)\) for all \(\tau_e \leq t \leq \tau_{e+1} - 1\).  
\item For any function \(f \in \cF_e\), we have 
\begin{align*}
\En \brk*{\sum_{s=1}^{\tau_e - 1} Z_s \nrm*{f(x_s) - \fstar(x_s)}^2} \leq \SquareLossExpectedL, 
\end{align*} \label{item:DIS_utility_expected_bound}
where 
\(\SquareLossExpectedL \ldef{} 2 \SquareLossHPL +  4 c_2 \prn*{ \log^4(T) \sup_{\tau \leq T}  \prn*{ \tau \Rad^2_{\tau}(\cF)} + 2 \log(T) \log(E/\delta)}\). 
\item For any \(e \leq E\),  and any function \(f \in \cF_e\), we have 
\begin{align*}
\nrm*{f(x_s) - \fstar(x_s)}_{\bar{\nu}_e} &\leq  \sqrt{\frac{\SquareLossExpectedL}{\tau_e - 1}}
\end{align*}
where the sub-distributions \(\bar{\mu}_{\bar{e}}(x) \ldef{} Z^{\bar{e}}(x) \mu(x)\) and \(\bar{\nu}_e \ldef{} \frac{1}{\tau_e - \tau_1} \sum_{\be = 1}^{e-1}  (\tau_{\bare +1} - \tau_\bare)  \bmu_{\be}\). 

\item For any \(\bare < e\), the corresponding sets \(\cF_e\) and \(\cF_{\bare}\) satisfy the relation \(\cF_{e} \subseteq \cF_{\bare}\). 
\item For any \(\bare \leq e\), we have \(\bmu_{e} \preccurlyeq \bmu_{\bare}\). \label{item:DIS_utility_f_monotonicity} 
\end{enumerate} 
\end{lemma}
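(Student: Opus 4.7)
The plan is to establish the five parts sequentially since each leverages the previous. Part $(a)$ follows essentially immediately from \pref{lem:tool_sq_difference}: the online regression oracle only gets updated on rounds where $Z_s = 1$, so applying the high-probability bound to the filtered sequence of queried rounds yields the stated inequality. Once this is in place, $\fstar \in \cF_e$ is immediate from the definition of $\cF_e$ in \pref{eq:DIS_version_space}, and the bound $\nrm{\fstar(x_t) - g_t(x_t)} \leq \Delta_e(x_t)$ follows because $g_t, \fstar \in \cF_e$ and $\Delta_e(x_t)$ is the supremum of $\nrm{f(x_t) - f'(x_t)}$ over pairs in $\cF_e$.

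Part $(b)$ is the main obstacle and where most of the work lies. The idea is to first convert the empirical constraint $\sum_{s=1}^{\tau_e - 1} Z_s \nrm{f(x_s) - f_s(x_s)}^2 \leq \SquareLossHPL$ into a bound on $\sum_{s=1}^{\tau_e - 1} Z_s \nrm{f(x_s) - \fstar(x_s)}^2$ via the elementary inequality $\nrm{f - \fstar}^2 \leq 2\nrm{f - f_s}^2 + 2\nrm{f_s - \fstar}^2$, then invoke part $(a)$ to control $\sum Z_s \nrm{f_s(x_s) - \fstar(x_s)}^2$. Next, one passes from the empirical sum to its expectation: because $Z_s$ for $s$ in epoch $\bare$ depends only on data from epochs $<\bare$, conditioning on this prefix makes the rounds within each epoch i.i.d., so one can apply the smoothness-based concentration bound (\pref{lem:smoothness_fast_rate}) to the loss class $\crl{(x, \fstar(x)) \mapsto \nrm{f(x) - \fstar(x)}^2 : f \in \cF}$, uniformly over $f \in \cF_e$. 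A union bound over the $E = \ceil{\log T}$ epochs accounts for the $\log(E/\delta)$ factor, and the Rademacher term $\sup_{\tau \leq T}(\tau \Rad^2_\tau(\cF))$ appears in $\SquareLossExpectedL$ as the uniform convergence penalty.

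Part $(c)$ is then a direct rewriting. Unfolding the definition $\bar\nu_e = \frac{1}{\tau_e - \tau_1} \sum_{\bare < e}(\tau_{\bare+1} - \tau_{\bare}) \bmu_{\bare}$, and using $\En_{x \sim \bmu_{\bare}}[\nrm{f(x) - \fstar(x)}^2] = \En[Z^{\bare}(x)\nrm{f(x) - \fstar(x)}^2]$ together with the observation that $Z_s = Z^{\bare}(x_s)$ for $s$ in epoch $\bare$, one identifies $(\tau_e - 1)\nrm{f - \fstar}_{\bar\nu_e}^2$ with $\En\brk*{\sum_{s=1}^{\tau_e - 1} Z_s \nrm{f(x_s) - \fstar(x_s)}^2}$. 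The stated bound then follows by dividing the conclusion of part $(b)$ by $\tau_e - 1$ and taking a square root.

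Parts $(d)$ and $(e)$ are short structural observations. For $(d)$, since the sum defining $\cF_e$ runs over more non-negative terms than the one defining $\cF_{\bare}$ when $\bare < e$, any $f$ satisfying the tighter constraint also satisfies the looser one; hence $\cF_e \subseteq \cF_{\bare}$. For $(e)$, it suffices to show $Z^e(x) \leq Z^{\bare}(x)$ pointwise. Using $\cF_e \subseteq \cF_{\bare}$ from $(d)$, the infimum $\inf_{g \in \cF_{\bare}} \Margin(g(x))$ is no larger than $\inf_{g \in \cF_e} \Margin(g(x))$, while the supremum $\sup_{f,f' \in \cF_{\bare}} \nrm{f(x) - f'(x)}$ is no smaller than its counterpart over $\cF_e$; chaining both inequalities through the query condition in \pref{eq:ss_sa_se_sq_DIS_utlity1} shows that $Z^e(x) = 1$ forces $Z^{\bare}(x) = 1$, i.e., $\bmu_e \preccurlyeq \bmu_{\bare}$.
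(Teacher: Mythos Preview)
Your proposal is correct and follows essentially the same route as the paper: part~(a) via \pref{lem:tool_sq_difference}, part~(b) by applying the smoothness-based concentration (\pref{lem:smoothness_fast_rate}) epoch-by-epoch and summing, part~(c) by unwinding the definition of $\bar\nu_e$, and parts~(d)--(e) by the nesting $\cF_e \subseteq \cF_{\bare}$ and the resulting monotonicity of the query condition. Your treatment of part~(b) is in fact slightly more explicit than the paper's, which tacitly absorbs the triangle-inequality step converting the $\cF_e$ constraint on $\nrm{f - f_s}^2$ into a bound on $\nrm{f - \fstar}^2$.
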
 
\begin{proof} 
We prove each part separately below: 
\begin{enumerate}[label=\((\alph*)\)] 
\item An application of \pref{lem:tool_sq_difference}, where we note that we do not query oracle when \(Z_s = 0\), and thus do not count the time steps for which \(Z_s = 0\), implies that 
\begin{align*} 
\sum_{s=1}^t Z_s \nrm{\fstar(x_s) - f_s(x_s)}^2 \leq \frac{4}{\strconv} \LsExpected{\ls_\phi}{\cF; T} + \frac{112}{\strconv^2} \log(4 \log^2(T)/\delta) \rdef{} \SquareLossHPL 
\end{align*} 
for all \(t \leq T\) with probability at least \(1 - \delta\). Using the above for \(t = \tau_{e +1} -1\) implies that \(\fstar \in \cF_e\) for all \(e \leq E-1\). Since, we also have that \(g_t \in \cF_e\) (by construction) for all \(\tau_e \leq t \leq \tau_{e+1} - 1\), plugging in the definition of \(\Delta_e(x_t)\), we immediately get that \(\nrm{\fstar(x_t) - g_t(x_t)} \leq \Delta_e(x_t)\).

\item Fix any epoch number \(\bare \leq E - 1 \), and consider the time steps \(\tau_\bare \leq t < \tau_{\bare+1}\). Define the loss function  
\begin{align*}
\ls_\bare(f(x), \fstar(x)) = Z^\bare(x) \nrm*{f(x) - \fstar(x)}^2
\end{align*}
where \(Z^\bare\) denotes the query conditions at epoch \(\bare\) (defined in \pref{eq:ss_sa_se_sq_DIS_utlity1}), and recall that \(Z^\bare\) does not depend on any samples that are drawn at epoch \(\bare\) (by definition). Furthermore, note that \(\ls_\bare\) is \(2\)-smooth w.r.t.~\(f\) and satisfies \(\ls_\bare(f(x), \fstar(x)) \leq 4\) for all \(f, \fstar \in \cF\) and \(x\). Thus using \pref{lem:smoothness_fast_rate}, we get that for any \(f \in \cF_e\), with probability at least \(1 - \delta/E\), 
\begin{align*}
&\hspace{-0.2in} \En \brk*{\sum_{s=\tau_\bare}^{\tau_{\bare+1} - 1} Z^\bare(x_s) \nrm*{f(x_s) - \fstar(x_s)}^2
}  \\
&\leq 2\sum_{s=\tau_\bare}^{\tau_{\bare+1} - 1} Z^\bare(x_s) \nrm*{f(x_s) - \fstar(x_s)}^2  + c_2 \prn*{2 {\tau_e}  \log^3(\tau_e) \Rad^2_{\tau_e}(\cF_e) + 4 \log(E/\delta)} \\ 
&\leq 2\sum_{s=\tau_\bare}^{\tau_{\bare+1} - 1} Z^\bare(x_s) \nrm*{f(x_s) - \fstar(x_s)}^2  + 2 c_2 \prn*{ \log^3(T) \sup_{\tau \leq T}  \prn*{ \tau \Rad^2_{\tau}(\cF)} + 2 \log(E/\delta)}, 
\end{align*} where in the last line we used the fact that \(\cF_e \subseteq \cF\).  Summing the above for all \(\bar{e} \leq e -1\), we get that for any \(f \in \cF_e\), 
\begin{align*} 
&\hspace{-0.2in} \En \brk*{\sum_{t=1}^{\tau_{e} - 1} Z^e(x_s) \nrm*{f(x_s) - \fstar(x_s)}^2} \\
 &\leq 2 \sum_{t=1}^{\tau_{e} - 1} Z^e(x_s) \nrm*{f(x_s) - \fstar(x_s)}^2 +  4 c_2 E \prn*{ \log^3(T) \sup_{\tau \leq T}  \prn*{ \tau \Rad^2_{\tau}(\cF)} + 2 \log(E/\delta)} \\
 &\leq 2 \SquareLossHPL +  4 c_2 \prn*{ \log^4(T) \sup_{\tau \leq T}  \prn*{ \tau \Rad^2_{\tau}(\cF)} + 2 \log(T) \log(E/\delta)}, 
\end{align*} 
where the last line follows by using the definition of \(\cF_e\), and that \(E \leq \ceil{\log(T)}\), and that \(\cF_e \subseteq \cF\). 

\item Starting from  part-\pref{item:DIS_utility_expected_bound}, we first note that 
\begin{align*}
\En \brk*{\sum_{s=1}^{\tau_e - 1} Z_s \nrm*{f(x_s) - \fstar(x_s)}^2} &\leq \SquareLossExpectedL. \numberthis \label{eq:ss_sa_se_sq_DIS_utility1} 
\end{align*}
Additionally, also note that 
\begin{align*}
\En \brk*{\sum_{s=1}^{\tau_e - 1} Z_s \nrm*{f(x_s) - \fstar(x_s)}^2} 
&= \En \brk*{ \sum_{\be=1}^{e - 1} \sum_{s=\tau_{\be}}^{\tau_{\be + 1} - 1} Z_s \nrm*{f(x_s) - \fstar(x_s)}^2} \\ 
&= \En \brk*{ \sum_{\be=1}^{e - 1} \sum_{s=\tau_{\be}}^{\tau_{\be + 1} - 1} Z^{\be}(x_s) \nrm*{f(x_s) - \fstar(x_s)}^2} \\ 
&=  \sum_{\be=1}^{e - 1} \sum_{s=\tau_{\be}}^{\tau_{\be + 1} - 1} \En_{x_s \sim \mu} \brk*{Z^{\be}(x_s) \nrm*{f(x_s) - \fstar(x_s)}^2} \\ 
&= \sum_{\be=1}^{e - 1} \prn*{\tau_{\be + 1} - \tau_{\be}} \En_{x \sim \mu} \brk*{Z^{\be}(x) \nrm*{f(x) - \fstar(x)}^2}, 
\end{align*} where the last two lines use the fact that the query condition \(Z^{\be}\) does not depend on the samples from rounds \(t = \tau_{\be}\) to \(\tau_{\be + 1} - 1\). Plugging in the definition of \(\bmu_{\be}\) in the above, we get that 
\begin{align*}
\En \brk*{\sum_{s=1}^{\tau_e - 1} Z_s \nrm*{f(x_s) - \fstar(x_s)}^2}  &= \sum_{\be=1}^{e-1} \prn*{\tau_{\be + 1} - \tau_{\be}} \cdot \En_{x \sim \bmu_{\be}} \brk*{\nrm*{f(x_s) - \fstar(x_s)}^2} \\ 
&= (\tau_e - \tau_1) \cdot \En_{x \sim \bar{\nu}_{e}} \brk*{\nrm*{f(x_s) - \fstar(x_s)}^2} \\
&= (\tau_e - \tau_1) \cdot \nrm*{f(x_s) - \fstar(x_s)}^2_{\bar{\nu}_e}, \numberthis \label{eq:ss_sa_se_sq_DIS_utility2}
\end{align*}
where in the second line, we used the fact that the sub-distribution \(\bar{\nu}_e \ldef{} \frac{1}{\tau_e - \tau_1} \sum_{\be = 1}^{e-1}   (\tau_{\bare +1} - \tau_\bare) \bmu_{\be}\).  

Combining \pref{eq:ss_sa_se_sq_DIS_utility1} and \pref{eq:ss_sa_se_sq_DIS_utility2}, we get that 
\begin{align*}
\nrm*{f(x_s) - \fstar(x_s)}_{\bar{\nu}_e} &\leq  \sqrt{\frac{\SquareLossExpectedL}{\tau_e - 1}}
\end{align*}

\item The argument follows from the definition of the set \(\cF_e\) as any function \(f \in \cF_e\) that satisfies
\begin{align*}
\sum_{s=1}^{\tau_e - 1} Z_s \nrm*{f(x_s) - f_s(x_s)}^2 \leq  \SquareLossHPL, 
\end{align*}
also satisfies the constraint 
\begin{align*}
\sum_{s=1}^{\tau_\bare - 1} Z_s \nrm*{f(x_s) - f_s(x_s)}^2 \leq \SquareLossHPL, 
\end{align*}
for any \(\bar{e} \leq e\),  since the left hand side consists of lesser number of terms and all terms are  non-negative. Thus, \(\cF_e \subseteq \cF_{\bar{e}}\). 
\item Recall that for any \(e \leq E-1\), the sub-probability measure \(\bar{\mu}_{e}(x) \ldef{} Z^{e}(x) \mu(x)\) where \(Z^{e}(x) = \indic\crl{\min_{g \in \cF_e} \nrm{g(x)} \leq \Delta_e(x)} \), and \(\Delta_e(x) = \max_{f', f \in \cF_e}  ~ \nrm{f(x) - f'(x)}\). First note that for any \(\bar{e} \leq e\), 
\begin{align*}
\Delta_e(x) = \max_{f', f \in \cF_e}  ~ \nrm{f(x) - f'(x)} \leq \max_{f', f \in \cF_\bare}  ~ \nrm{f(x) - f'(x)}  = \Delta_\bare(x),  
\end{align*} where the inequality above holds because \(\cF_e \subseteq \cF_{\bare}\) due to part-(d) above. Furthermore, 
\begin{align*}
\min_{g \in \cF_e} \nrm{g(x)} &\geq \min_{g \in \cF_{\bare}} \nrm{g(x)}, 
\end{align*} again because \(\cF_e \subseteq \cF_{\bare}\). Thus,
\begin{align*}
Z^e(x) &= \indic\crl{\min_{g \in \cF_e} \nrm{g(x)} \leq \Delta_e(x)} \leq \indic\crl{\min_{g \in \cF_\bare} \nrm{g(x)} \leq \Delta_\bare(x)}  \leq Z^{\bar{e}}(x). 
\end{align*}
The above implies that \(\bmu_e \leq \bmu_\bare\). 
\end{enumerate}  
\end{proof}

\begin{lemma}
\label{lem:utility_lemma_sub-probability measures} 
Let \(\epsilon_0, \gamma_0 \geq 0\), and \(\fstar \in \cF\). Then, for any sub distribution \(\bmu\) such that \(\En_{x \sim \bmu} \brk*{\indic\crl{x \in \cX}} > 0\), \(\epsilon \geq \epsilon_0\) and \(\gamma \geq \gamma_0\), 
\begin{align*}
\frac{\epsilon^2}{\gamma^2} \Pr_{x \sim \bmu}\prn*{ \exists f \in \cF ~:~ \nrm{f(x) - \fstar(x)} >  \epsilon, \nrm*{f(x_s) - \fstar(x_s)}_{\bar{\mu}} \leq \gamma}   \leq \disaggr{\fstar}{\cF, \epsilon_0, \gamma_0}. 
\end{align*} 
\end{lemma}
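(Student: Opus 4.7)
The plan is to reduce the statement about sub-probability measures to the definition of the disagreement coefficient, which is stated only for probability measures, by a simple normalization argument. Let $p \ldef{} \En_{x \sim \bmu}[\indic\crl{x \in \cX}] \in (0, 1]$ (positive by hypothesis), and define the probability measure $\mu(x) \ldef{} \bmu(x)/p$. The key observation is how probabilities and the $L^2(\cdot)$ norms transform: for any measurable event $A$, $\Pr_{x \sim \bmu}(A) = p \cdot \Pr_{x \sim \mu}(A)$, and for any function $h$,
\begin{align*}
\nrm{h}_{\bmu}^2 = \En_{x \sim \bmu}[\nrm{h(x)}^2] = p \cdot \En_{x \sim \mu}[\nrm{h(x)}^2] = p \cdot \nrm{h}_{\mu}^2.
\end{align*}

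Next, I would introduce $\tilde\gamma \ldef{} \gamma/\sqrt{p}$. Then the norm constraint transforms cleanly: $\nrm{f - \fstar}_{\bmu} \leq \gamma$ if and only if $\nrm{f - \fstar}_{\mu} \leq \tilde\gamma$. Substituting into the quantity we wish to bound,
\begin{align*}
\frac{\epsilon^2}{\gamma^2} \Pr_{x \sim \bmu}\prn*{\exists f \in \cF : \nrm{f(x) - \fstar(x)} > \epsilon,\, \nrm{f - \fstar}_{\bmu} \leq \gamma} &= \frac{\epsilon^2}{\gamma^2} \cdot p \cdot \Pr_{x \sim \mu}\prn*{\exists f \in \cF : \nrm{f(x) - \fstar(x)} > \epsilon,\, \nrm{f - \fstar}_{\mu} \leq \tilde\gamma} \\
&= \frac{\epsilon^2}{\tilde\gamma^2} \Pr_{x \sim \mu}\prn*{\exists f \in \cF : \nrm{f(x) - \fstar(x)} > \epsilon,\, \nrm{f - \fstar}_{\mu} \leq \tilde\gamma},
\end{align*}
where the second line uses $\frac{p}{\gamma^2} = \frac{1}{\tilde\gamma^2}$. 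It now suffices to verify that the resulting expression is dominated by $\disaggr{\fstar}{\cF, \epsilon_0, \gamma_0}$, as defined in \pref{def:value_disagreement}. Since $p \leq 1$, we have $\tilde\gamma = \gamma/\sqrt{p} \geq \gamma \geq \gamma_0$, and by hypothesis $\epsilon \geq \epsilon_0$. Therefore the pair $(\epsilon, \tilde\gamma)$ lies in the range over which the supremum defining $\disaggr{\fstar}{\cF, \epsilon_0, \gamma_0}$ is taken, and $\mu$ is a genuine probability measure admissible in the outer supremum, so the right-hand side is at most $\disaggr{\fstar}{\cF, \epsilon_0, \gamma_0}$.

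There is no real technical obstacle here; the only subtlety to be careful about is the direction of the inequality $\tilde\gamma \geq \gamma_0$, which crucially relies on $p \leq 1$ (so that dividing by $\sqrt{p}$ only inflates $\gamma$). This is exactly why the argument succeeds for sub-probability measures but would fail if $\bmu$ had mass greater than one.
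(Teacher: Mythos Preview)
Your proof is correct and follows essentially the same normalization argument as the paper: both define the probability measure $\mu = \bmu/p$ (the paper uses $\kappa$ for your $p$), rescale $\gamma$ to $\gamma/\sqrt{p}$, and invoke the definition of the disagreement coefficient after checking that the rescaled parameter still exceeds $\gamma_0$ because $p \le 1$. The only cosmetic difference is that you state the norm-constraint transformation as an equivalence while the paper writes it as a one-sided relaxation, but the logic is identical.
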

\begin{proof} The key idea in the proof is to go from sub distributions to distributions, and then invoking the definition of \(\theta\) from \pref{def:value_disagreement}. Define \(\kappa = \En_{x \sim \bmu} \brk*{\indic\crl{x \in \cX}}\). Since \(0 < \kappa \leq 1\), we can define a probability measure \(\mu\) such that \(\mu(x) = \nicefrac{\bmu(x)}{\kappa}\). Thus, for any \(\epsilon \geq \epsilon_0\) and \(\gamma \geq \gamma_0\), 
\begin{align*}
&\hspace{-0.5in} \frac{\epsilon^2}{\gamma^2} \Pr_{x \sim \bmu}\prn*{ \exists f \in \cF ~:~ \nrm{f(x) - \fstar(x)} >  \epsilon, \nrm*{f(x_s) - \fstar(x_s)}_{\bar{\mu}} \leq \gamma}  \\
&= \frac{\epsilon^2}{\nicefrac{\gamma^2}{k}} \Pr_{x \sim \mu}\prn*{ \exists f \in \cF ~:~ \nrm{f(x) - \fstar(x)} >  \epsilon, \nrm*{f(x_s) - \fstar(x_s)}_{\bar{\mu}} \leq \gamma}  \\ 
&\leq \frac{\epsilon^2}{\nicefrac{\gamma^2}{k}} \Pr_{x \sim \mu}\prn*{ \exists f \in \cF ~:~ \nrm{f(x) - \fstar(x)} >  \epsilon, \nrm*{f(x_s) - \fstar(x_s)}_{\mu} \leq \nicefrac{\gamma}{\sqrt{k}}}  \\ 
&= \frac{\epsilon^2}{\bar{\gamma}^2} \Pr_{x \sim \mu}\prn*{ \exists f \in \cF ~:~ \nrm{f(x) - \fstar(x)} >  \epsilon, \nrm*{f(x_s) - \fstar(x_s)}_{\mu} \leq \bar{\gamma}} \\
&\leq \sup_{\epsilon \geq \epsilon_0, \bar{\gamma} \geq \gamma_0} \frac{\epsilon^2}{\bar{\gamma}^2} \Pr_{x \sim \mu}\prn*{ \exists f \in \cF ~:~ \nrm{f(x) - \fstar(x)} >  \epsilon, \nrm*{f(x_s) - \fstar(x_s)}_{\mu} \leq \bar{\gamma}},   
\end{align*} 
where in the second last line, we defined \(\bar{\gamma} = \nicefrac{\gamma}{\sqrt{k}}\), and the last line used the fact that both \(\epsilon \geq \epsilon_0\) and \(\bar{\gamma} \geq \gamma_0\). The final statement follows by noting the fact that \(\mu\) is a distribution and the definition of the disagreement coefficient \(\disaggr{}{}\) from \pref{def:value_disagreement}. 
\end{proof}

The following technical result will be useful in bounding the query complexity for \pref{alg:ss_DIS_main}.  
\begin{lemma} 
\label{lem:ss_sa_se_sq_DIS_utility_query_complexity}  
For any \(t\leq T\), let \(e(t)\) denotes the epoch number such that \(\tau_{e(t)} \leq t < \tau_{e(t) + 1}\). Let \(\fstar \in \cF\) satisfy \pref{lem:ss_sa_se_sq_DIS_utility}, and let \(\Delta_{e(t)}(x_t)\) be defined in  \pref{alg:ss_DIS_main}. Then, for any \(\zeta > 0\), with probability at least \(1 - \delta\), 
\begin{align*} 
\sum_{t=1}^T Z_t \indic\crl*{ \Delta_{e(t)}(x_t) \geq \zeta} &\leq  12  \log(T) \cdot \frac{\SquareLossExpectedL}{\zeta^2} \cdot \disaggr{f^{\star}}{\cF, \frac{\zeta}{2}, \frac{\SquareLossExpectedL}{T}} + 4 \log(2/\delta).
\end{align*} 
\end{lemma}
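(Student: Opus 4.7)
The plan is to leverage the epoching structure so that within each epoch $e$ the query condition $Z^e$ and the candidate set $\cF_e$ are \emph{frozen} (they depend only on data before $\tau_e$), which makes the summands i.i.d.\ across $t \in [\tau_e, \tau_{e+1})$. I will then (i) use a positive self-bounded concentration inequality to move to expectations, (ii) rewrite the expectation as a probability under the sub-distribution $\bmu_e$, (iii) pass from $\bmu_e$ to $\bar\nu_e$ via the monotonicity in \pref{lem:ss_sa_se_sq_DIS_utility}\pref{item:DIS_utility_f_monotonicity}, and (iv) invoke the disagreement coefficient through \pref{lem:utility_lemma_sub-probability measures}.

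In more detail: fix an epoch $e \le E-1$. Since $Z_t = Z^e(x_t)$ and $\Delta_e(x_t)$ depend on $x_t$ only through the fixed $\cF_e$, the indicators $\xi_t \ldef Z^e(x_t)\indic\{\Delta_e(x_t) \ge \zeta\}$ for $\tau_e \le t < \tau_{e+1}$ are i.i.d.\ Bernoulli conditional on the history up to $\tau_e - 1$. Applying \pref{lem:positive_self_bounded} (with a union bound over the $E = \lceil \log T\rceil$ epochs, absorbing the extra $\log(E)$ into the $\wt{\cO}$), we get
\begin{align*}
\sum_{t=\tau_e}^{\tau_{e+1}-1} \xi_t \;\le\; \tfrac{3}{2}(\tau_{e+1}-\tau_e)\cdot \Pr_{x\sim\mu}\!\prn*{Z^e(x)=1,\ \Delta_e(x)\ge\zeta} \;+\; 4\log(2E/\delta).
\end{align*}
The probability term equals $\Pr_{x\sim \bmu_e}\!\prn*{\Delta_e(x)\ge\zeta}$ by definition of $\bmu_e$. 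Since $\fstar\in\cF_e$ by \pref{lem:ss_sa_se_sq_DIS_utility}(a), the event $\Delta_e(x)\ge\zeta$ implies, by the triangle inequality applied to the pair attaining the supremum, that there exists $f\in\cF_e$ with $\nrm{f(x)-\fstar(x)}\ge\zeta/2$.

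Next I would use \pref{lem:ss_sa_se_sq_DIS_utility}\pref{item:DIS_utility_f_monotonicity}, which gives $\bmu_e \preccurlyeq \bmu_{\bar e}$ for all $\bar e\le e$, and therefore $\bmu_e \preccurlyeq \bar\nu_e$ (a convex combination of such $\bmu_{\bar e}$). Pointwise domination of sub-measures yields
\begin{align*}
\Pr_{x\sim\bmu_e}\prn*{\exists f\in\cF_e:\ \nrm{f(x)-\fstar(x)}\ge \zeta/2}
\;\le\;
\Pr_{x\sim\bar\nu_e}\prn*{\exists f\in\cF_e:\ \nrm{f(x)-\fstar(x)}\ge \zeta/2}.
\end{align*}
By \pref{lem:ss_sa_se_sq_DIS_utility}(c), every $f\in\cF_e$ satisfies $\nrm{f-\fstar}_{\bar\nu_e}\le \sqrt{\SquareLossExpectedL/(\tau_e-1)}\rdef \gamma_e$. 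Thus I can apply \pref{lem:utility_lemma_sub-probability measures} with sub-measure $\bar\nu_e$, $\epsilon=\zeta/2$, $\gamma=\gamma_e$, $\epsilon_0=\zeta/2$, and $\gamma_0=\SquareLossExpectedL/T\le\gamma_e$ (monotonicity of $\theta^{\mathrm{val}}$ in its lower arguments absorbs the mismatch), giving
\begin{align*}
\Pr_{x\sim\bar\nu_e}\prn*{\exists f\in\cF_e:\ \nrm{f(x)-\fstar(x)}\ge \zeta/2} \;\le\; \frac{4\gamma_e^2}{\zeta^2}\cdot \disaggr{\fstar}{\cF,\tfrac{\zeta}{2},\tfrac{\SquareLossExpectedL}{T}}.
\end{align*}
Plugging in $\gamma_e^2=\SquareLossExpectedL/(\tau_e-1)$ and using $(\tau_{e+1}-\tau_e)/(\tau_e-1) \le 2$ for the geometric schedule, the per-epoch bound becomes $\tfrac{12\SquareLossExpectedL}{\zeta^2}\disaggr{\fstar}{\cF,\zeta/2,\SquareLossExpectedL/T}+4\log(2E/\delta)$. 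Summing over $E\le\log(T)$ epochs and absorbing $\log\log T$ factors delivers the claim.

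The main obstacle is the mismatch between the measure governing the probability ($\bmu_e$) and the measure governing the version-space norm bound ($\bar\nu_e$); this is exactly what the monotonicity of the sub-distributions in part \pref{item:DIS_utility_f_monotonicity} of \pref{lem:ss_sa_se_sq_DIS_utility} was set up to resolve. A minor additional care point is applying \pref{lem:positive_self_bounded} conditionally per-epoch and taking a union bound, rather than trying to apply it globally where the summands are not i.i.d.
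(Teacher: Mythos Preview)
Your proposal follows essentially the same route as the paper: freeze $\cF_e$ and $Z^e$ per epoch, pass to the sub-measure $\bmu_e$, use the triangle inequality with $\fstar\in\cF_e$ to replace $\Delta_e(x)\ge\zeta$ by $\exists f\in\cF_e:\nrm{f(x)-\fstar(x)}\ge\zeta/2$, upgrade $\bmu_e$ to $\bar\nu_e$ via part~\pref{item:DIS_utility_f_monotonicity}, invoke part~(c) for the $\bar\nu_e$-norm bound, and then apply \pref{lem:utility_lemma_sub-probability measures} to get the disagreement coefficient. All of the substantive steps match.

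The one place you diverge is the \emph{order} of concentration. You apply \pref{lem:positive_self_bounded} per epoch (union-bounding over $E$ epochs) and then sum, which produces an additive term of order $E\cdot\log(E/\delta)=\log(T)\log(\log T/\delta)$ rather than the stated $4\log(2/\delta)$. The paper instead first bounds $\sum_{t=1}^T \En_{x_t}[Z_t\indic\{\Delta_{e(t)}(x_t)\ge\zeta\}]$ (summing the per-epoch expectation bound over $e$) and applies \pref{lem:positive_self_bounded} \emph{once} to the full sum at the end. Your stated reason for avoiding the global application---that ``the summands are not i.i.d.''---is a small misconception: \pref{lem:positive_self_bounded} is a martingale-type inequality for any adapted positive sequence, and here $X_t=Z_t\indic\{\Delta_{e(t)}(x_t)\ge\zeta\}$ is adapted with $\En_t[X_t]=\En_{x_t\sim\mu}[X_t]$ (since $\cF_{e(t)}$ and $Z^{e(t)}$ are measurable with respect to the history before $\tau_{e(t)}\le t$, and the only fresh randomness is $x_t\sim\mu$). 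So the global application is valid and yields the tighter additive constant in the lemma statement.
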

\begin{proof} Recall the definition of the query rule \(Z^e\) given in \pref{eq:ss_sa_se_sq_DIS_utlity1}, and note that the function \(Z^e\) is independent of the samples \(\crl{x_t}_{t=\tau_e}^{\tau_{e+1} - 1}\) chosen by the nature for time steps at epoch \(e\). Additionally, also recall that at every time step, \(x_t\) is sampled independently from the distribution \(\mu\). Thus, using the query condition \(Z^e\), we can define the sub-probability measure 
\begin{align*} 
\bmu_e \ldef{}\mu(x) Z^e(x), \numberthis \label{eq:ss_sa_se_sq_DIS_utlity2}  
\end{align*} 
such that \(\bmu_e(x) = \mu(x)\)  whenever \(Z_e(x) = 1\) and is \(0\) otherwise. Furthermore, for any \(e \in E-1\), we define the sub-probability measure \(\bar{\nu}_e\) as 
\begin{align*} 
\bar{\nu}_e &= \frac{1}{\tau_e - \tau_1} \sum_{\be = 1}^{e-1}  (\tau_{\bare +1} - \tau_{\bare})  \bmu_{\be}. \numberthis \label{eq:ss_sa_se_sq_DIS_utlity3} 
\end{align*}

We now move to the main proof. First fix any epoch \(e \leq E-1\), and consider any round \(t \in \brk{\tau_e, \tau_{e+1} -1}\). Using the definition of  \(\Delta_e(x_t)\) and definition of \(Z^e\) from \pref{eq:ss_sa_se_sq_DIS_utlity1}  in the above, we get that 
\begin{align*} 
\En_{x_t \sim \mu} \brk*{Z_t \indic\crl*{ \Delta_e(x_t) > \zeta}} &= \En_{x \sim \mu} \brk*{Z^e(x_t) \indic\crl*{ \sup_{f, f' \in \cF_e} \nrm{f(x_t) - f'(x_t)} > \zeta}}  \\ 
&\leq \En_{x \sim \mu} \brk*{Z^e(x_t) \indic\crl*{ \sup_{f \in \cF_e} \nrm{f(x_t) - \fstar(x_t)} >  \frac{\zeta}{2}}} 
\end{align*} where the second line follows because \(\fstar \in \cF_e\), and because \( \sup_{f, f' \in \cF_e} \nrm{f(x_t) - f'(x_t)} \leq 2  \sup_{f \in \cF_e} \nrm{f(x_t) - \fstar(x_t)}\) due to Triangle inequality. Plugging in the definition of \(\bmu_e\) from \pref{eq:ss_sa_se_sq_DIS_utlity2} in the above we get that 
\begin{align*} 
\En_{x_t \sim \mu} \brk*{Z_t \indic\crl*{ \Delta_e(x_t) > \zeta}}  
&\leq \En_{x \sim \bmu_e} \brk*{\indic\crl*{ \sup_{f \in \cF_e} \nrm{f(x_t) - \fstar(x_t)} >  \frac{\zeta}{2}}} \\ 
 &\leq  \En_{x \sim \bmu_{\be}} \brk*{\indic\crl*{ \sup_{f \in \cF_e} \nrm{f(x_t) - \fstar(x_t)} >  \frac{\zeta}{2}}}.  \numberthis \label{eq:dis_sa_technical1} 
\end{align*} 
for all \(\be \leq e\), where the last inequality follows from \pref{lem:ss_sa_se_sq_DIS_utility}-\ref{item:DIS_utility_f_monotonicity}. Since the above holds for all \(\be \leq e\), we immediately get that 
\begin{align*} 
\En_{x_t \sim \mu} \brk*{Z_t \indic\crl*{ \Delta_e(x_t) > \zeta}} 
 &\leq  \En_{x \sim \bar{\nu}_{\be}} \brk*{\indic\crl*{ \sup_{f \in \cF_e} \nrm{f(x_t) - \fstar(x_t)} >  \frac{\zeta}{2}}} \\ 
 &= \En_{x \sim \bar{\nu}_{\be}} \brk*{\indic\crl*{ \exists f \in \cF_e ~:~ \nrm{f(x) - \fstar(x)} >  \frac{\zeta}{2}}},\numberthis \label{eq:dis_sa_technical4} 
\end{align*} 
where the sub-probability measure \(\bar{\nu}_{\be}\) is defined in \pref{eq:ss_sa_se_sq_DIS_utlity3}.  Additionally, recall that \pref{lem:ss_sa_se_sq_DIS_utility}-\pref{item:DIS_utility_expected_bound} implies that with probability at least \(1 - \delta\) any \(f \in \cF_e\) satisfies 
\begin{align*}
\nrm*{f(x_s) - \fstar(x_s)}_{\bar{\nu}_e} &\leq \sqrt{\frac{\SquareLossExpectedL}{\tau_e - 1}}. \numberthis \label{eq:dis_sa_technical2}  
\end{align*} Conditioning on the above event, and plugging it in \pref{eq:dis_sa_technical4}, we get that 
\begin{align*} 
&\hspace{-0.2in} \En_{x_t \sim \mu} \brk*{Z_t \indic\crl*{ \Delta_e(x_t) > \zeta}}  \\ 
 &\leq \En_{x \sim \bar{\nu}_{\be}} \brk*{\indic\crl*{ \exists f \in \cF_e ~:~ \nrm{f(x) - \fstar(x)} >  \frac{\zeta}{2}, \nrm*{f(x_s) - \fstar(x_s)}_{\bar{\nu}_e} \leq \sqrt{\frac{\SquareLossExpectedL}{\tau_e - 1}} }} \\
 &\leq 4 \cdot \frac{\SquareLossExpectedL}{(\tau_e - 1) \zeta^2} \cdot \disaggr{f^{\star}}{\cF, \frac{\zeta}{2}, \frac{\SquareLossExpectedL}{\tau_e - 1}},  \numberthis \label{eq:dis_sa_technical5} 
\end{align*} 
where the last inequality uses  \pref{lem:utility_lemma_sub-probability measures}. 

Summing up the bound in \pref{eq:dis_sa_technical5} for each term \(t = 1\) to \(T\), we get that 
\begin{align*}
\sum_{t=1}^{T} \En_{x_t} \brk*{ Z_t \indic\crl*{ \Delta_e(x_t) > \zeta}} &= \sum_{e=1}^{E-1} \sum_{t=\tau_e}^{\tau_{e+1} - 1} \En_{x_t} \brk*{ Z_t \indic\crl*{ \Delta_e(x_t) > \zeta}} \\ 
&\leq 4 \sum_{e=1}^{E-1} \prn*{\tau_{e+1} - \tau_{e}} \frac{\SquareLossExpectedL}{(\tau_e - 1) \zeta^2} \cdot \disaggr{f^{\star}}{\cF, \frac{\zeta}{2}, \frac{\SquareLossExpectedL}{\tau_e - 1}} \\ 
&\leq 8 \sum_{e=1}^{E-1} \frac{\SquareLossExpectedL}{\zeta^2} \cdot \disaggr{f^{\star}}{\cF, \frac{\zeta}{2}, \frac{\SquareLossExpectedL}{\tau_e - 1}} \\
&\leq 8 \sum_{e=1}^{E-1} \frac{\SquareLossExpectedL}{\zeta^2} \cdot \disaggr{f^{\star}}{\cF, \frac{\zeta}{2}, \frac{\SquareLossExpectedL}{T}} \\
&\leq 8 \log(T) \cdot \frac{\SquareLossExpectedL}{\zeta^2} \cdot \disaggr{f^{\star}}{\cF, \frac{\zeta}{2}, \frac{\SquareLossExpectedL}{T}}
\end{align*}  
where the second inequality uses the fact that \(\tau_{e+1} = 2\tau_e\) and that \(\tau_1 = 1\), the third inequality holds due to monotonicity of \(\disaggr{f^{\star}}{\cF, \frac{\zeta}{2}, \cdot}\) and the last line simply plugs in the value of \(E = \log(T)\).

Using  \pref{lem:positive_self_bounded} with the above bound for the sequence of random variable \(X_t = Z_t \indic\crl*{ \Delta_e(x_t) > \zeta}\), we get that with probability at least \(1 - \delta\), 
\begin{align*}
\sum_{t=1}^T Z_t \indic\crl*{ \Delta_e(x_t) > \zeta} &\leq \frac{3}{2} \sum_{t=1}^{T} \En_{x_t} \brk*{ Z_t \indic\crl*{ \Delta_e(x_t) > \zeta}} + 4 \log(2/\delta) \\ 
&\leq  12  \log(T) \cdot \frac{\SquareLossExpectedL}{\zeta^2} \cdot \disaggr{f^{\star}}{\cF, \frac{\zeta}{2}, \frac{\SquareLossExpectedL}{T}} + 4 \log(2/\delta). 
\end{align*} 

The final result follows by taking a union bound of the above and the event in \pref{eq:dis_sa_technical2}. 
\end{proof}  

\subsubsection{Regret Bound}  
For the ease of notation, through the proofs in this section we define the operators \(\pistar\) as 
\begin{align*}
\pistar(x) = \argmax_{k} \phi(\fstar(x))[k]. \end{align*}
Furthermore,  recall that \(\wh y_t\) denotes the action chosen by the learner at round \(t\) of interaction. Starting from the definition of the regret, we get that
\begin{align*} 
\RegT &=  \sum_{t=1}^T \Pr \prn*{\wh y_t \neq y_{t}} - \Pr \prn*{\pistar(x_t) \neq y_t} \\ 
		   &= \sum_{t=1}^T \indic\crl{\wh y_t \neq \pistar(x_t)} \cdot \abs*{\Pr \prn*{y_t = \pistar(x_t)} - \Pr \prn*{y_t = \wh y_t}} \\ 
		   		   &= \sum_{t=1}^T \indic\crl{\wh y_t \neq \pistar(x_t)} \cdot \abs*{\phi(\fstar(x_t))[ \pistar(x_t)] - \phi(\fstar(x_t))[ \wh y_t] } \\ 
		   &\leq  \sum_{t=1}^T \indic\crl{\wh y_t \neq \pistar(x_t)} \cdot \Gapt{\fstar(x_t), \wh y_t},  
\end{align*} where the last inequality plugs in the definition of \(\mathrm{Gap}\) from \pref{eq:gap_definition}. Let \(\epsilon > 0\) be a free parameter. We can decompose the above regret bound further as: 
 \begin{align*}
 \RegT &\leq  \sum_{t=1}^T \indic\crl{\wh y_t \neq \pistar(x_t), \Gapt{\fstar(x_t), \wh y_t} \leq \epsilon} \cdot \Gapt{\fstar(x_t), \wh y_t} \\
&\hspace{1in} + \sum_{t=1}^T \indic\crl{\wh y_t \neq \pistar(x_t), \Gapt{\fstar(x_t), \wh y_t} > \epsilon} \cdot \Gapt{\fstar(x_t), \wh y_t} 
\end{align*}
Using the fact that \(\yy_t(x_t) = \argmax_{k \in [K]} \phi(f_t(x))[k]\) in the above along with the definition of \(\mathrm{Gap}\) and \pref{lem:margin_utlity2}, we get that 
 \begin{align*}
 \RegT &\leq  \sum_{t=1}^T \indic\crl{\wh y_t \neq \pistar(x_t), \Gapt{\fstar(x_t), \wh y_t} \leq \epsilon} \cdot \epsilon \\
&\hspace{1in} + 2 \smooth \sum_{t=1}^T \indic\crl{\wh y_t \neq \pistar(x_t), \Gapt{\fstar(x_t), \wh y_t} > \epsilon} \cdot \nrm{\fstar(x_t) - f_t(x_t)} \\
&\leq \sum_{t=1}^T \indic\crl{\Margin\prn{\fstar(x_t)} \leq \epsilon} \cdot \epsilon \\
&\hspace{1in} + 2 \smooth \sum_{t=1}^T \indic\crl{\wh y_t \neq \pistar(x_t), \Gapt{\fstar(x_t), \wh y_t} > \epsilon} \cdot \nrm{\fstar(x_t) - f_t(x_t)} \\ 
&\leq \sum_{t=1}^T \indic\crl{\Margin\prn{\fstar(x_t)} \leq \epsilon} \cdot \epsilon \\ 
&\hspace{1in} + 2 \smooth \sum_{t=1}^T Z_t \indic\crl{\wh y_t \neq \pistar(x_t), \Gapt{\fstar(x_t), \wh y_t} > \epsilon} \cdot \nrm{\fstar(x_t) - f_t(x_t)} \\
&\hspace{1in} + 2 \smooth \sum_{t=1}^T \bar{Z}_t \indic\crl{\wh y_t \neq \pistar(x_t)} \cdot \nrm{\fstar(x_t) - f_t(x_t)} \\
&= T_\epsilon \cdot \epsilon + 2 \smooth \cdot T_A + 2 \smooth \cdot T_B \cdot \nrm{\fstar(x_t) - f_t(x_t)}, 
\end{align*} where the second inequality holds because \(\Gapt{\fstar(x_t), \wh y_t} \leq \epsilon\) implies that \(\Margin\prn{\fstar(x_t)} \leq \epsilon\) whenever \(\wh y_t \neq \pistar(x_t)\). In the last line we plugged in the definition of \(T_\epsilon\), and defined \(\Term_{A}\) and \(\Term_{B}\) as the second term and the last term respectively. We bound term separately below: 
\begin{enumerate}[label=\(\bullet\)]
\item \textit{Bound on \(\Term_A\):} Note that whenever \(Z_t = 1\), we choose \(\wh y_t = \argmax_{k} \phi(f_t(x_t))[k]\). Thus, 
\begin{align*}
\Term_A &= \sum_{t=1}^T Z_t \indic\crl{\wh y_t \neq \pistar(x_t), \Gapt{\fstar(x_t), \wh y_t} > \epsilon} \cdot \nrm{\fstar(x_t) - f_t(x_t)} \\
&\leq \sum_{t=1}^T Z_t \indic\crl{\nrm{\fstar(x_t) - f_t(x_t)}> \epsilon/2 \smooth} \cdot \nrm{\fstar(x_t) - f_t(x_t)} 
\end{align*} where the second line follows from \pref{lem:margin_utlity2} and  because \(\wh y_t \neq \pistar(x_t)\). 
Using the fact that \(\indic\crl{a \geq b} \leq \nicefrac{a}{b}\) for all \(a, b \geq 0\), we get that 
\begin{align*}
\Term_A \leq 4 \gamma \sum_{t=1}^T Z_t \frac{\nrm{\fstar(x_t) - f_t(x_t)}^2}{\epsilon}. \numberthis \label{eq:ss_DIS_ma2} 
\end{align*}

\item \textit{Bound on \(\Term_B\):} Fix any \(t \leq T\), and let \(e\) be such that \(\tau_e \leq t < \tau_{e+1}\). Next, note that from 
\pref{lem:ss_sa_se_sq_DIS_utility}, we have 
\begin{align*} 
\nrm{\fstar(x_t) - g_t(x_t)} \leq \Delta_e(x_t). \numberthis \label{eq:ss_sa_se_sq_DIS3}
\end{align*}

Plugging in the definition of \(Z_t\), we note that 
\begin{align*}
\Term_B %
&= \sum_{t=1}^T \indic\crl{\Margin(g_t(x_t)) > 2 \smooth \Delta_e(x_t), \wh y_t \neq \pistar(x_t)} \\ 
&\leq \sum_{t=1}^T \indic\crl{\nrm{g_t(x_t) - \fstar(x_t)} > \Delta_e(x_t),  \Margin(\fstar(x_t)) > \epsilon}, 
\end{align*} 
where the second inequality is due \pref{lem:margin_utlity2} and by noting that \(\wh y_t \neq \pistar(x_t)\) and that when \(Z_t = 0\), we choose \(\wh y_t = \argmax_k \phi(g_t(x_t))[k]\). 
However, note that the term inside the indicator contradicts \pref{eq:ss_sa_se_sq_DIS3} (which always holds). Thus, 
\begin{align*}
\Term_B &=0. \numberthis \label{eq:ss_DIS_ma4}
\end{align*}
\end{enumerate}
Combining the bounds \pref{eq:ss_DIS_ma2} and \pref{eq:ss_DIS_ma4}, we get that: 
\begin{align*}
\RegT &\leq \epsilon T_\epsilon + 8 \gamma^2 \sum_{t=1}^T Z_t \frac{\nrm{f_t(x_t) - \fstar(x_t)}^2}{\epsilon} \\
&\leq \epsilon T_\epsilon + \frac{8 \gamma^2}{\epsilon} \SquareLossHPL,  
\end{align*} where the last inequality is  due to \pref{lem:ss_sa_se_sq_DIS_utility}. 

Since \(\epsilon\) is a free parameter above, the final bound follows by choosing the best parameter \(\epsilon\), and by plugging in the form of \(\SquareLossHPL\). 

\subsubsection{Total Number of Queries} Let \(N_T\) denote the total number of expert queries made by the learner within \(T\) rounds of interactions. For the ease of notation, define \(\Delta_t(x_t) = \Delta_{e(t)}(x_t)\)  where \(e(t)\) denotes the epoch number for which \(\tau_{e(t)} \leq t < \tau_{e(t) + 1}\). Additionally, let \(\epsilon > 0\) be a free parameter. Thus, 
\begin{align*}
N_T &= \sum_{t=1}^T Z_t \\
&= \sum_{t=1}^T  \indic \crl{\Margin(g_t(x_t)) \leq 2 \smooth \Delta_t(x_t)} \\ 
&= \sum_{t=1}^T  \indic \crl{\Margin(g_t(x_t)) \leq 2 \smooth \Delta_t(x_t), \Margin(\fstar(x_t)) \leq \epsilon} \\
&\hspace{1in} + \sum_{t=1}^T  \indic \crl{\Margin(g_t(x_t)) \leq 2 \smooth \Delta_t(x_t), \Margin(\fstar(x_t)) > \epsilon} \\  
&\leq \sum_{t=1}^T  \indic \crl{\Margin(\fstar(x_t)) \leq \epsilon} \\
&\hspace{1in} + \sum_{t=1}^T  \indic \crl{\Margin(g_t(x_t)) \leq 2 \smooth \Delta_t(x_t), \Margin(\fstar(x_t)) > \epsilon, \Delta_t(x_t) \leq \nicefrac{\epsilon}{4 \gamma}}  \\  
&\hspace{1in} + \sum_{t=1}^T  \indic \crl{\Margin(g_t(x_t)) \leq 2 \smooth \Delta_t(x_t), \Margin(\fstar(x_t)) > \epsilon, \Delta_t(x_t) > \nicefrac{\epsilon}{4 \gamma}} \\
&= T_\epsilon  + \Term_D + \Term_E,  
\end{align*} where in the last line we used the definition of \(T_\epsilon\) and defined \(\Term_D\) and \(\Term_E\) respectively, which we bound separately below. 
\begin{enumerate}[label=\(\bullet\)] 
\item \textit{Bound on \(\Term_D\).}
 From  
\pref{lem:ss_sa_se_sq_DIS_utility} recall that \(\nrm{\fstar(x_t) - g_t(x_t)} \leq \Delta_e(x_t)\). Thus, for any  \(x_t\) for which \(\nrm{g_t(x_t)} \leq \Delta_e(x_t)\), \pref{lem:margin_triangle_inequality} implies that 
\begin{align*}
\Margin(\fstar(x_t)) \leq 2 \smooth \nrm{f_t(x_t) - \fstar(x_t)} + \Margin(g_t(x_t)) \leq 2 \smooth \Delta_t(x_t) + \Margin(f_t(x_t)). 
\end{align*}
The above implies that 
\begin{align*}
\Term_D &= \sum_{t=1}^T  \indic \crl{\Margin(g_t(x_t)) \leq 2 \smooth \Delta_t(x_t), \Margin(\fstar(x_t)) > \epsilon, \Delta_t(x_t) \leq \nicefrac{\epsilon}{4 \gamma}} \\
&\leq  \sum_{t=1}^T  \indic \crl{\Margin(\fstar(x_t)) \leq 4\gamma \Delta_t(x_t), \Margin(\fstar(x_t)) > \epsilon, \Delta_t(x_t) \leq \nicefrac{\epsilon}{4 \gamma}} \\
&\leq 0, 
\end{align*} where the last line follows from the fact that all the conditions inside the indictor can not hold simultaneously for any \(\epsilon > 0\).

\item \textit{Bound on \(\Term_E\).} We note that 
\begin{align*}
\Term_E &= \sum_{t=1}^T  \indic \crl{\Margin(g_t(x_t)) \leq 2 \smooth \Delta_t(x_t), \Margin(\fstar(x_t)) > \epsilon, \Delta_t(x_t) > \nicefrac{\epsilon}{4 \gamma}} \\ 
 &\leq \sum_{t=1}^T Z_{t} \indic\crl*{\Delta_{t}(x_{t}) \geq \nicefrac{\epsilon}{4 \gamma}} \\
  &\leq \sum_{t=1}^T Z_{t} \indic\crl*{\Delta_{e(t)}(x_{t}) \geq \nicefrac{\epsilon}{4 \gamma}}. 
 \end{align*} 
Using \pref{lem:ss_sa_se_sq_DIS_utility_query_complexity} with \(\zeta = \nicefrac{\epsilon}{4 \gamma}\)  to bound the term on the right hand side above, we get that with probability at least \(1 - 2\delta\), 
\begin{align*}
\Term_E &\leq O\prn*{\log(T) \gamma^2 \cdot \frac{\SquareLossExpectedL}{\epsilon^2} \cdot \disaggr{f^{\star}}{\cF, \frac{\epsilon}{8 \gamma}, \frac{\SquareLossExpectedL}{T}} +  \log(2/\delta)}. 
\end{align*} 
\end{enumerate} 

Gathering the bounds above, we get that 
\begin{align*}
N_T \leq T_\epsilon  + O\prn*{\log(T) \gamma^2 \cdot \frac{\SquareLossExpectedL}{\epsilon^2} \cdot \disaggr{f^{\star}}{\cF, \frac{\epsilon}{8 \gamma}, \frac{\SquareLossExpectedL}{T}} +  \log(2/\delta)}. 
\end{align*} 

Since \(\epsilon\) is a free parameter above, the final bound follows by choosing the best parameter \(\epsilon\), and by plugging in the form of \(\SquareLossExpectedL\). 

\subsection{Proof of \pref{corr:tsybakov_ss_stochastic}}

Note that the Tsybakov noise condition implies that there exists constants \(c, \rho \geq 0\) such that for all \(\epsilon \in (0, 1)\):  
\begin{align*}
\Pr_{x \sim \mu}\prn*{\Margin(\fstar(x_t)) \leq \epsilon}  \leq c \epsilon^\rho. 
\end{align*}
Thus, using \pref{lem:positive_self_bounded}, we get that 
\begin{align*}
T_\epsilon &= \sum_{t=1}^T \indic \crl{\Margin(\fstar(x_t)) \leq \epsilon} \\
&\leq \frac{3 T}{2} \Pr_{x \sim \mu} \prn*{\Margin(\fstar(x)) \leq \epsilon}  + 4 \log(2/\delta) \\ 
&\leq 2c T \epsilon^\rho  + 4 \log(2/\delta).  
\end{align*}

Using the above in the bound for \pref{thm:main_ss_DIS_multiple}, we get that for any \(\epsilon > 0\), 
\begin{align*}
\RegT &\lesssim c T \epsilon^{\rho+1} +  \frac{\gamma^2}{\lambda \epsilon} \LsExpected{\ls_\phi}{\cF; T} + \log(1/\delta), 
\end{align*} 
Setting \(\epsilon = \prn*{\frac{\gamma^2}{\lambda C T} \LsExpected{\ls_\phi}{\cF; T}}^{\frac{1}{\rho +2}}\) in the above implies that 
\begin{align*}
\RegT &\lesssim \prn*{\frac{\gamma^2}{\lambda} c^{\frac{1}{\rho + 1}}}^{\frac{\rho + 1}{\rho + 2}} \prn*{\LsExpected{\ls_\phi}{\cF; T}}^{\frac{\rho + 1}{\rho + 2}} \cdot (T)^{\frac{1}{\rho + 2}} + \log(1/\delta). 
\end{align*}

Similarly, we can bound the query complexity bound for any \(\epsilon > 0\) as: 
\begin{align*}
N_T &\lesssim  T \epsilon^\rho  +  \frac{\gamma^2}{\lambda \epsilon^2} \cdot \LsExpected{\ls_\phi}{\cF; T} \cdot \disaggr{f^{\star}}{\cF, \nicefrac{\epsilon}{8 \gamma}, \nicefrac{\LsExpected{\ls_\phi}{\cF; T}}{T}} + \log(1/\delta). 
\end{align*} 
Setting \(\epsilon = \prn*{\frac{\gamma^2}{\lambda T} \cdot \LsExpected{\ls_\phi}{\cF; T} \cdot \disaggr{f^{\star}}{\cF, \nicefrac{\epsilon}{8 \gamma}, \nicefrac{\LsExpected{\ls_\phi}{\cF; T}}{T}} }^{\frac{1}{\rho + 2}}\) in the above implies that 
\begin{align*}
N_T \leq \prn*{\frac{\gamma^2}{\lambda} \cdot \LsExpected{\ls_\phi}{\cF; T} \cdot \disaggr{f^{\star}}{\cF, \nicefrac{\epsilon}{8 \gamma}, \nicefrac{\LsExpected{\ls_\phi}{\cF; T}}{T}}}^{\frac{\rho}{\rho + 2}} \cdot T^{\frac{2}{\rho + 2}}. 
\end{align*}

\endgroup

\arxiv{
\begingroup
\newcommand{\ExGap}[2]{\En_{y \sim #2} \brk*{\mathrm{Gap}\prn{#1, y}}}
\newcommand{\CONS}{2 \sqrt{K}}

\subsection{Proof of \pref{thm:main_ss_eluder_bandit}} \label{app:main_ss_eluder_bandit} 

Before delving into the proof, we recall the relevant notation.  In \pref{alg:ss_main_bandit}, 
\begin{enumerate}[label=\(\bullet\)]
	\item The label \(y_t \sim \fstar(x_t)\) as given in \pref{eq:main_phi_model}.
\footnote{Throughout the proof of \pref{thm:main_ss_eluder_bandit}, we use the link function \(\phi(z) = z\), and observe that \(\strconv = \smooth = 1\) for this choice of \(\phi\). This corresponds to \(\ls_\phi\) being the square loss.} 
\item The function \( \SelectAction{f_t(x_t)} = \argmax_{k}   f_t(x_t)[k]\).
\item  For any vector \(v \in \bbR^K\), the margin is given by the gap between the value at the largest and the second largest coordinate, i.e.~
\begin{align*}
\Gap{v} =   v[\kstar] - \max_{k \neq \kstar} v[k], 
\end{align*} 
where \(\kstar \in \argmax_{k \in [K]} v[k]\).

\item We also define \(T_\epsilon = \sum_{t=1}^T \indic\crl{\Margin(\fstar(x_t)) \leq \epsilon}\) to denote the number of samples within \(T\) rounds of interaction for which the margin w.r.t.~\(\fstar\) is smaller than \(\epsilon\). 
\item The Inverse Gap Weighting (IGW) distribution \(p_t = \IGW_\epsilon(f_t(x_t))\) is given by: 
\begin{align*}
p_t[y] = \begin{cases}
	\frac{1}{K + \epsilon \prn*{f_t(x_t)[y_t^\star] - f_t(x_t)[y]}} & \text{if}~ y \neq y_t^\star \\ 
	1 - \sum_{y \neq y_t^\star} p_t[y] &\text{if}~ y = y_t^\star
\end{cases}, 
\end{align*}  
where \(y_t^\star = \argmax f_t(x_t)[y]\).  

\item We define the function \(\mathrm{Gap}: \bbR^K \times [K] \mapsto \bbR^+\)  as 
\begin{align*}
\Gapt{u, k}   = \max_{k'} u[k'] - u[k].  \numberthis \label{eq:gap_definition_Bandit} 
\end{align*}
to denote the gap between the largest and the \(k\)-th coordinate of \(v\). 

\item For ease of notation, we define the functions \(\pistar\)  such that 
\begin{align*} 
\pistar(x) = \argmax_{k} \phi(\fstar(x))[k]. 
\end{align*} 
\end{enumerate}

\subsubsection{Supporting Technical Results}

\begin{lemma}
\label{lem:ss_eluder_bandit_oracle}  
With probability at least \(1 - \delta\), the function \(\fstar \in \cF\) satisfies the following for all \(t \leq T\): 
\begin{align*}
\sum_{s=1}^t Z_s \prn{f_s(x_s)[\wh y_s] - \fstar(x_s)[\wh y_s]}^2 \leq \SquareLossHP 
\end{align*}
where \(\SquareLossHP  = 2 \LsExpected{sq}{\cF; T} + 8 \log(T/\delta)\). Thus, for all \(t \leq T\), \(\fstar \in \cF_t\). 
\end{lemma}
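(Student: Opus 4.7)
The goal is to reduce the lemma to the oracle regret bound \eqref{eq:bandit_oracle} (instantiated with comparator $f^\star$) plus a martingale concentration argument, exploiting that the bandit feedback $\bbQ_s$ is an unbiased estimate of $f^\star(x_s)[\wh y_s]$. Concretely, since $y_s \sim \phi(f^\star(x_s)) = f^\star(x_s)$ (the link is identity) and $\wh y_s$ is sampled from a $\filter_{s-1}$-measurable distribution, conditioning on $\filter_{s-1}$ together with $x_s$ and $\wh y_s$ we have
\[
\En\bigl[\bbQ_s \,\big|\, \filter_{s-1}, x_s, \wh y_s\bigr] = \Pr(y_s = \wh y_s \mid x_s) = f^\star(x_s)[\wh y_s].
\]
This is the key identity driving the analysis.

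First, I apply the oracle guarantee \eqref{eq:bandit_oracle} with $f = f^\star$ as comparator, which yields, for any fixed $t \leq T$,
\[
\underbrace{\sum_{s=1}^{t} Z_s \Bigl((f_s(x_s)[\wh y_s]-\bbQ_s)^2 - (f^\star(x_s)[\wh y_s]-\bbQ_s)^2 \Bigr)}_{\eqqcolon \sum_s Z_s D_s} \le \LsExpected{sq}{\cF; T}.
\]
Next, using the bias-variance decomposition for conditional expectation, together with the identity above,
\[
\En\!\bigl[Z_s D_s \,\big|\, \filter_{s-1}, x_s, \wh y_s\bigr] = Z_s \bigl(f_s(x_s)[\wh y_s] - f^\star(x_s)[\wh y_s]\bigr)^2,
\]
because $Z_s$ and $f_s$ are $\filter_{s-1}$-measurable. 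Hence $X_s \ldef{} Z_s D_s - Z_s (f_s(x_s)[\wh y_s] - f^\star(x_s)[\wh y_s])^2$ is a martingale difference sequence. Rearranging the oracle bound gives
\[
\sum_{s=1}^t Z_s (f_s(x_s)[\wh y_s] - f^\star(x_s)[\wh y_s])^2 \;\le\; \LsExpected{sq}{\cF; T} \;-\; \sum_{s=1}^t X_s.
\]

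Second, I apply Freedman's inequality (\pref{lem:freedman}) to the bounded martingale $\{-X_s\}$. A direct factorization $D_s = (f_s(x_s)[\wh y_s] - f^\star(x_s)[\wh y_s])\,(f_s(x_s)[\wh y_s]+f^\star(x_s)[\wh y_s]-2\bbQ_s)$ gives $|D_s| \leq 2\,|f_s(x_s)[\wh y_s] - f^\star(x_s)[\wh y_s]|$ since $f_s, f^\star \in [0,1]$ and $\bbQ_s \in \{0,1\}$, so both $|X_s|$ is bounded by a constant and the conditional second moment satisfies
\[
\En\!\bigl[X_s^2 \,\big|\, \filter_{s-1}\bigr] \;\lesssim\; Z_s (f_s(x_s)[\wh y_s]-f^\star(x_s)[\wh y_s])^2.
\]
Freedman's inequality with an appropriately small $\eta$ (e.g.\ $\eta = 1/32$) then yields, with probability at least $1-\delta'$,
\[
-\sum_{s=1}^t X_s \;\le\; \tfrac{1}{2}\sum_{s=1}^t Z_s (f_s(x_s)[\wh y_s]-f^\star(x_s)[\wh y_s])^2 \;+\; C\log(1/\delta'),
\]
which, absorbed into the left-hand side, gives
\[
\sum_{s=1}^t Z_s (f_s(x_s)[\wh y_s] - f^\star(x_s)[\wh y_s])^2 \;\leq\; 2\LsExpected{sq}{\cF; T} + 2C\log(1/\delta').
\]

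Finally, to promote this from a fixed $t$ to \emph{all} $t \leq T$, I take a union bound over $t \in [T]$ with $\delta' = \delta/T$ (or, alternatively, invoke a time-uniform variant of Freedman), which introduces the $\log(T/\delta)$ dependence in $\SquareLossHP$ claimed in the lemma. I expect the only subtle point is the bookkeeping around the filtration: one must carefully verify that $Z_s$, $f_s$, and the IGW sampling distribution $p_s$ are $\filter_{s-1}$-measurable so that $\wh y_s$ is drawn after $\filter_{s-1}$ is fixed but before $\bbQ_s$ is revealed, which is precisely what makes the conditional-expectation identity for $\bbQ_s$ valid.
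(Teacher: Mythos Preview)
Your proposal is correct and follows essentially the same route as the paper: apply the oracle regret bound \eqref{eq:bandit_oracle} with comparator $f^\star$, use that $\En[\bbQ_s \mid x_s,\wh y_s] = f^\star(x_s)[\wh y_s]$ so that each term $Z_s D_s$ has conditional mean $Z_s(f_s(x_s)[\wh y_s]-f^\star(x_s)[\wh y_s])^2$, apply martingale concentration, and union-bound over $t\in[T]$. The only cosmetic difference is that the paper first takes a full expectation and then invokes \pref{lem:positive_self_bounded}, whereas you apply Freedman directly to the martingale difference with a self-bounding variance estimate; your execution of the concentration step is, if anything, more careful than the paper's.
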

\begin{proof}
	We first note that we do not query oracle when $Z_s = 0$. Hence, we will ignore the terms for which $Z_s = 0$. Now note that the regret guarantee for the oracle, given in \pref{eq:bandit_oracle} ensures that 
 \begin{align*}
\sum_{s=1}^t Z_s (f_s(x_s)[\wh y_s] - \bbQ_s)^2 - \sum_{s=1}^t Z_s (\fstar(x_s)[\wh y_s] - \bbQ_s)^2 \leq \LsExpected{sq}{\cF; T}. 
\end{align*}
Taking expectation on both sides yields
\begin{align*}
	\hspace{1in} & \hspace{-1in} \E\left[\sum_{s=1}^t Z_s (f_s(x_s)[\wh y_s] - \bbQ_s)^2 - \sum_{s=1}^t Z_s (\fstar(x_s)[\wh y_s] - \bbQ_s)^2\right] \\
	= & \E\left[\sum_{s=1}^t Z_s \Big( (f_s(x_s)[\wh y_s])^2 - 2 \cdot \bbQ_s \cdot f_s(x_s)[\wh y_s] + 2 \cdot \bbQ_s \cdot \fstar(x_s)[\wh y_s] - (\fstar(x_s)[\wh y_s])^2 \Big) \right] \\
	= & \E\left[ \sum_{s=1}^t Z_s (f_s(x_s)[\wh y_s] - \fstar(x_s)[\wh y_s])^2 \right] \\
	\leq & \LsExpected{sq}{\cF; T}
\end{align*}
where in the third line we use the fact that \(\En \brk*{\bbQ_s} = \fstar(x_s)[\wh y_s]\). Applying the concentration result (\pref{lem:positive_self_bounded}), we arrive at
 \begin{align*}
\sum_{s=1}^t Z_s (f_s(x_s)[\wh y_s] - \fstar(x_s)[\wh y_s])^2 \leq 2 \LsExpected{sq}{\cF; T} + 8 \log(1/\delta).
\end{align*}
We complete the proof by taking a union bound for all $t\in[T]$. 
\end{proof}

For the rest of the proof, we condition on the \(1- \delta\) probability event that \pref{lem:ss_eluder_bandit_oracle} holds.  We first recall the following property of the Inverse Gap Weighting (IGW) sampling distribution.  

\begin{lemma}[{\citet{foster2020beyond}}] 
\label{lem:square_CB_lemma} 
Let \(K \geq 2\), \(u, v \in [0, 1]^K\), and suppose \(\wh y \sim \IGW_{\epsilon}(v)\). Then, we have $$\En_{\wh y} \brk*{{\max_{y} u[y] - u[\wh y]}} \leq  \frac{K}{\epsilon} + \epsilon \En_{\wh y} \brk*{\prn*{u[\wh y] - v[\wh y]}^2}.$$ 
\end{lemma}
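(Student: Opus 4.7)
The plan is to prove this IGW regret bound via the classical two-step decomposition: first isolate the contribution of the $v$-optimal action (which is controlled by the defining property of inverse gap weighting), then absorb the residual $u$-vs-$v$ discrepancy into the squared-loss term using the uniform lower bound on the IGW probabilities.

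Concretely, let $y^\star \in \argmax_y v[y]$ and $y^\dagger \in \argmax_y u[y]$. First I would establish the standard IGW inequality
\begin{align*}
\En_{\wh y \sim p_t}\brk*{v[y^\star] - v[\wh y]} \;=\; \sum_{y \neq y^\star} p_t[y]\,(v[y^\star] - v[y]) \;=\; \sum_{y \neq y^\star} \frac{v[y^\star] - v[y]}{K + \epsilon(v[y^\star] - v[y])} \;\leq\; \frac{K-1}{\epsilon} \;\leq\; \frac{K}{\epsilon},
\end{align*}
by noting that each summand is individually $\leq 1/\epsilon$. Next, I would telescope the $u$-gap through $v$:
\begin{align*}
\max_y u[y] - u[\wh y] \;=\; \prn*{u[y^\dagger] - v[y^\dagger]} + \prn*{v[y^\dagger] - v[\wh y]} + \prn*{v[\wh y] - u[\wh y]} \;\leq\; \prn*{u[y^\dagger] - v[y^\dagger]} + \prn*{v[y^\star] - v[\wh y]} + \prn*{v[\wh y] - u[\wh y]},
\end{align*}
where the last step uses $v[y^\dagger] \leq v[y^\star]$. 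Taking expectation over $\wh y \sim p_t$ and applying the IGW bound to the middle term reduces the problem to showing
\begin{align*}
\prn*{u[y^\dagger] - v[y^\dagger]} - \En_{\wh y}\brk*{u[\wh y] - v[\wh y]} \;\leq\; \epsilon \,\En_{\wh y}\brk*{(u[\wh y] - v[\wh y])^2}
\end{align*}
up to absorbable lower-order terms that combine with $K/\epsilon$.

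The core trick for the remaining term is the uniform lower bound $p_t[y] \geq 1/(K+\epsilon)$ for every coordinate $y$ (which follows immediately from the definition, since $v[y^\star] - v[y] \leq 1$ for $v \in [0,1]^K$, and $p_t[y^\star] \geq 1/K$ because the other coordinates sum to at most $(K-1)/K$). Using this, I would invoke the importance-weighting identity
\begin{align*}
\prn*{u[y^\dagger] - v[y^\dagger]}^2 \;=\; p_t[y^\dagger] \cdot \frac{(u[y^\dagger] - v[y^\dagger])^2}{p_t[y^\dagger]} \;\leq\; \frac{1}{p_t[y^\dagger]} \En_{\wh y}\brk*{(u[\wh y] - v[\wh y])^2},
\end{align*}
so that $|u[y^\dagger] - v[y^\dagger]| \leq \sqrt{(K+\epsilon)\cdot \En_{\wh y}[(u[\wh y] - v[\wh y])^2]}$. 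The same $\sqrt{\cdot}$ control (via Jensen) handles $\En_{\wh y}[|v[\wh y] - u[\wh y]|]$. An AM--GM step $\sqrt{AB} \leq \tfrac{\epsilon}{2} A + \tfrac{1}{2\epsilon} B$ then converts each square root into the desired sum of $\epsilon \cdot \En[(u-v)^2]$ and an $O(1/\epsilon)$ remainder.

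The main obstacle is the constant: a naive application of the steps above produces a bound of the form $\Theta(K/\epsilon) + \epsilon\En[(u[\wh y]-v[\wh y])^2]$ with a constant slightly larger than $1$ in front of $K/\epsilon$, so getting the clean $K/\epsilon$ as stated requires a tight grouping of the remainder terms — most likely by not splitting the $u[y^\dagger] - v[y^\dagger]$ and $\En[v[\wh y] - u[\wh y]]$ contributions separately, but instead combining them into the single quantity $(1 - p_t[y^\dagger])(u[y^\dagger] - v[y^\dagger]) - \sum_{y \neq y^\dagger} p_t[y](u[y] - v[y])$ and controlling it by a single Cauchy--Schwarz step in the measure $p_t$. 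This avoids double-counting the $K/\epsilon$-sized price paid for the minimum-probability guarantee and yields the stated constants. This is the step I expect to require the most care.
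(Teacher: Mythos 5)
Your decomposition through $v$ and the three key ingredients---the IGW exploration bound $\sum_{y \neq y^\star} p[y](v[y^\star] - v[y]) \leq (K-1)/\epsilon$, the minimum-probability fact $p[y] \geq 1/(K+\epsilon)$, and the importance-weighting plus AM--GM conversion to the squared-error term---are exactly the standard skeleton (Abe--Long; \citet{foster2020beyond}), and you are right that the constant is the delicate part. However, the fix you propose does not quite deliver the clean $K/\epsilon$. Writing $\delta = u - v$, after you group the residual as $\sum_{y \neq y^\dagger} p[y]\bigl(\delta(y^\dagger) - \delta(y)\bigr)$ and apply Cauchy--Schwarz in $p$, you still carry a $\delta(y^\dagger)^2$ term inside the square; converting it to $\En_p[\delta^2]$ costs a factor $1/p[y^\dagger]$, which can be as large as $K + \epsilon$, and the resulting bound comes out around $\tfrac{3}{2}K/\epsilon$ rather than $K/\epsilon$. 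The cancellation you are missing is discarded in your second displayed line, when you replace $v[y^\dagger]$ by $v[y^\star]$. The term you throw away, $v[y^\dagger] - v[y^\star] \leq 0$, is (for $y^\dagger \neq y^\star$) exactly $-(1/p[y^\dagger] - K)/\epsilon$ by the IGW formula, and this negative budget is precisely what offsets the $1/(4\epsilon p[y^\dagger])$ blow-up from AM--GM; the remaining case $y^\dagger = y^\star$ is handled separately via $p[y^\star] \geq 1/K$. (Strictly one also needs the constraint $u, v \in [0,1]^K$, so $|\delta| \leq 1$, to cap the AM--GM maximizer when $\epsilon$ is small.) That said, this exact-constant chase is not load-bearing here: the lemma is quoted verbatim from \citet{foster2020beyond} and the paper's downstream use in \pref{thm:main_ss_eluder_bandit} only tracks things up to $\wt{\cO}(\cdot)$, so a bound of the form $cK/\epsilon + \epsilon\,\En_p[(u-v)^2]$ with a small absolute constant $c$, which your plan does deliver, would serve the purpose equally well.
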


\begin{lemma}
\label{lem:ss_eluder_bandit_based_bound} 
Let \(\fstar\) satisfy \pref{lem:ss_eluder_bandit_oracle}, and let \(w_t\) be defined in  \pref{eq:ss_main_bandit_delta} in \pref{alg:ss_main_bandit}. Then, for any \(\zeta > 0\), %
\begin{align*}
\sum_{t=1}^T Z_{t} \indic\crl*{\sup_{f,f'\in\+F_t} f(x_t)[\wh y_t] - f'(x_t)[\wh y_t] \geq \zeta}  &= \left( \frac{4\SquareLossHP}{\zeta^2} +1 \right) \BeluderS(\+F,\zeta/2;f^\star). 
\end{align*}
where \(\BeluderS\) denotes the eluder dimension given in \pref{def:beluder_dimension}.
\end{lemma}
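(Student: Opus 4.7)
The proof follows a three-step structure that mirrors the argument for \pref{lem:ss_eluder_ma_based_bound}, but adapted to the bivariate (context, action) eluder dimension so that only the coordinate $\wh y_t$ that the learner actually plays matters.

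First, I would extract a witness function from the width condition. Condition on the high-probability event of \pref{lem:ss_eluder_bandit_oracle}, from which one can verify that $\fstar \in \+F_t$ for every $t \leq T$ (after absorbing a constant factor via triangle inequality, since the oracle guarantee bounds $\sum_{s<t} Z_s(f_s(x_s)[\wh y_s]-\fstar(x_s)[\wh y_s])^2$ while $\+F_t$ is defined around the snapshot $f_t$). Whenever $Z_t = 1$ and $\sup_{f,f'\in\+F_t} f(x_t)[\wh y_t] - f'(x_t)[\wh y_t] \geq \zeta$, a short case analysis yields a single $\tilde f_t \in \+F_t$ with $|\tilde f_t(x_t)[\wh y_t] - \fstar(x_t)[\wh y_t]| \geq \zeta/2$: if the witnessing $f$ satisfies $f(x_t)[\wh y_t] - \fstar(x_t)[\wh y_t] \geq \zeta/2$ take $\tilde f_t = f$, otherwise $f'$ must satisfy $\fstar(x_t)[\wh y_t] - f'(x_t)[\wh y_t] \geq \zeta/2$ and we take $\tilde f_t = f'$.

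Second, I would establish a cumulative closeness bound for $\tilde f_t$ against $\fstar$ on the queried history,
\begin{align*}
\sum_{s<t} Z_s \bigl(\tilde f_t(x_s)[\wh y_s] - \fstar(x_s)[\wh y_s]\bigr)^2 \leq \SquareLossHP,
\end{align*}
using membership of both $\tilde f_t$ and $\fstar$ in $\+F_t$ together with the oracle guarantee, combined by triangle inequality (and constants rolled into $\SquareLossHP$).

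Third, I would invoke the bivariate eluder potential argument of \pref{lem:eluder_dimension_bound_} with $\beta^2 = \SquareLossHP$ and scale $\zeta/2$ on the sequence of queried pairs $\{(x_t,\wh y_t)\}_{t:Z_t=1}$ with the associated witnesses $\{\tilde f_t\}$. Each round $t$ contributing to the left-hand side yields a $\tilde f_t$ that lies in the $\fstar$-centred version space on the previous history and yet has coordinate-error exceeding $\zeta/2$ at $(x_t, \wh y_t)$, so the lemma directly gives the claimed count $\bigl(\tfrac{4\SquareLossHP}{\zeta^2}+1\bigr)\BeluderS(\+F,\zeta/2;\fstar)$.

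The main obstacle is the bookkeeping in the second step: the oracle guarantee is stated with the time-varying predictor $f_s$ whereas the version space $\+F_t$ is anchored at the snapshot $f_t$, so carefully unrolling the triangle inequality and verifying that $\fstar \in \+F_t$ with the constant of $\SquareLossHP$ (and not $4\SquareLossHP$) is the delicate part; the first and third steps are essentially mechanical once this reduction to a $\fstar$-centred version space is in place.
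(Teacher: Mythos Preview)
Your proposal is correct and follows essentially the same route as the paper. The paper's proof is two lines: since $\fstar\in\cF_t$, the event $\sup_{f,f'\in\cF_t} f(x_t)[\wh y_t]-f'(x_t)[\wh y_t]\ge\zeta$ implies $\sup_{f\in\cF_t}|f(x_t)[\wh y_t]-\fstar(x_t)[\wh y_t]|\ge\zeta/2$, and then \pref{lem:eluder_dimension_bound_} is invoked directly with threshold $\zeta/2$. Your three steps are exactly this argument spelled out in more detail; in particular your ``witness extraction'' in step one is the case split the paper compresses into the single inequality, and your step three is precisely the call to \pref{lem:eluder_dimension_bound_}. The constant issue you flag in step two (triangle inequality producing $4\SquareLossHP$ rather than $\SquareLossHP$ when passing from the $f_s$-anchored version space to a $\fstar$-centred one) is real, but the paper does not address it either---it simply applies the lemma and states the bound, so your caution there is well-placed rather than a divergence in approach.
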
 
\begin{proof} 

We have 
\begin{align*}
	\hspace{1in}&\hspace{-1in}  \sum_{t=1}^T Z_{t} \indic\crl*{\sup_{f,f'\in\+F_t} f(x_t)[\wh y_t] - f'(x_t)[\wh y_t] \geq \zeta}\\
	\leq & \sum_{t=1}^T Z_{t} \indic\crl*{\sup_{f\in\+F_t} f(x_t)[\wh y_t] - f^\star(x_t)[\wh y_t] \geq \zeta/2} \\
	\leq & \left( \frac{4\SquareLossHP}{\zeta^2} +1 \right) \BeluderS(\+F,\zeta/2;f^\star).
\end{align*}
where the last inequality follows from \pref{lem:eluder_dimension_bound_}.
\end{proof} 

\begin{lemma}\label{lem:width-lower-bound}
	For any $t\in[T]$, assume $f^\star\in\+F_t$ and $\Margin(\fstar(x_t)) > \epsilon$. Then we have $w_t\geq\epsilon/2$ whenever $|\+A_t|>1$.
\end{lemma}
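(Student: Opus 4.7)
The plan is to exploit the two ingredients in the hypothesis: since $f^\star \in \+F_t$, the Bayes-optimal action $\pistar(x_t) = \SelectAction{f^\star(x_t)}$ lies in $\+A_t$, and since $|\+A_t| > 1$, there must exist some $f' \in \+F_t$ whose induced action $y' \ldef{} \SelectAction{f'(x_t)}$ is distinct from $\pistar(x_t)$. The goal is to turn the margin condition on $f^\star$ together with $y'\neq \pistar(x_t)$ into a lower bound on the coordinate-wise disagreement between $f^\star$ and $f'$ at $x_t$ along some action in $\+A_t$.

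First, by $y' \neq \pistar(x_t)$ and \pref{lem:gap_margin_relation} applied to $f^\star(x_t)$, we get
\[
f^\star(x_t)[\pistar(x_t)] - f^\star(x_t)[y'] = \Gapt{f^\star(x_t), y'} \geq \Margin(f^\star(x_t)) > \epsilon.
\]
On the other hand, by the definition $y' = \SelectAction{f'(x_t)}$, we have
\[
f'(x_t)[y'] - f'(x_t)[\pistar(x_t)] \geq 0.
\]
Adding these two inequalities and rearranging gives
\[
\bigl(f^\star(x_t)[\pistar(x_t)] - f'(x_t)[\pistar(x_t)]\bigr) + \bigl(f'(x_t)[y'] - f^\star(x_t)[y']\bigr) > \epsilon.
\]
Therefore at least one of the two non-negative terms on the left must exceed $\epsilon/2$.

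In either case, we have exhibited two functions in $\+F_t$ (namely $f^\star$ and $f'$) whose predictions differ by more than $\epsilon/2$ on a coordinate $y \in \{\pistar(x_t), y'\} \subseteq \+A_t$. Since
\[
w_t = \sup_{y \in \+A_t} \sup_{f, f' \in \+F_t} \bigl(f(x_t)[y] - f'(x_t)[y]\bigr),
\]
this immediately yields $w_t \geq \epsilon/2$, as desired. There is no real obstacle here; the proof is a short two-line deduction once one spots the correct additive decomposition above, and the only subtlety is remembering to apply \pref{lem:gap_margin_relation} so that the margin (which is defined with respect to the second-largest coordinate) can be compared against the gap at the specific coordinate $y'$.
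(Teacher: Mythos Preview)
Your proof is correct and essentially identical to the paper's: both pick $f' \in \+F_t$ with $y' \ldef{} \SelectAction{f'(x_t)} \neq \pistar(x_t)$, use the margin condition to get $f^\star(x_t)[\pistar(x_t)] - f^\star(x_t)[y'] > \epsilon$, and then exploit $f'(x_t)[y'] \geq f'(x_t)[\pistar(x_t)]$ to bound the two differences along coordinates $\pistar(x_t), y' \in \+A_t$ by $w_t$, yielding $2w_t > \epsilon$. One harmless slip: the two terms in your decomposition need not individually be non-negative, but the pigeonhole step ``$\max(a,b) > \epsilon/2$ whenever $a+b > \epsilon$'' holds regardless of signs, so the conclusion is unaffected.
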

\begin{proof}[Proof of \pref{lem:width-lower-bound}]
	When $|\+A_t|>1$, we know there exists a function $f'\in\+F_t$ for which the maximizer of $f'(x_t)$, denoted by $y'\in\+A_t$, differs from $\pi^\star(x_t)$. Hence, we have 
 \begin{align*} 
 \epsilon 
 &\leq f^\star(x_t)[\pi^\star(x_t)] - f^\star(x_t)[ y'] \\
 &= f^\star(x_t)[\pi^\star(x_t)] - f'(x_t)[\pi^\star(x_t)] + f'(x_t)[\pi^\star(x_t)] - f^\star(x_t)[ y'] \\
 &\leq f^\star(x_t)[\pi^\star(x_t)] - f'(x_t)[\pi^\star(x_t)] + f'(x_t)[y'] - f^\star(x_t)[ y'] \\ 
 & \leq 2 w_t
 \end{align*}
 from which we conclude $w_t\geq \epsilon/2$.
\end{proof}

\begin{lemma}\label{lem:sum-zw}
	Assume $f^\star\in\+F_t$ for all $t\in[T]$. Suppose there exists some $t'\in[T]$ such that $\xi_t=0$ for all $t\leq t'$. Then we have 
	\begin{align*}
	\sum_{t=1}^{t'} Z_tw_t
	\leq
	T_\epsilon + 
	112 K\SquareLossHP\cdot\frac{\BeluderS\left(\+F,\epsilon/4;f^\star\right)}{\epsilon}\cdot\log(4/(\delta\epsilon)),
	\end{align*}
\end{lemma}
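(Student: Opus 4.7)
The plan is to split $\sum_{t=1}^{t'} Z_t w_t$ according to whether $\Margin(\fstar(x_t)) \leq \epsilon$ or $> \epsilon$. On rounds with low margin, $w_t \leq 1$ immediately gives a contribution of at most $T_\epsilon$. The real work is to bound the large-margin contribution, which will yield the eluder-dimension term. On these rounds \pref{lem:width-lower-bound} gives $w_t \geq \epsilon/2$ whenever $Z_t = 1$, which I will leverage in two ways: to upgrade $w_t$ to $(2/\epsilon) w_t^2$, and to invert the inequality $\sum Z_t w_t \geq (\epsilon/2) N_T^{\mathrm{large}}$ later when absorbing residual terms.

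The next step exploits the assumption $\xi_t = 0$, which forces $\wh y_t \sim \mathrm{Uniform}(\+A_t)$. Let $y_t^\star \in \+A_t$ be a witness action achieving $w_t$; since it is sampled with probability at least $1/|\+A_t| \geq 1/K$, the observed width $W_t \ldef \sup_{f,f' \in \+F_t}(f-f')(x_t)[\wh y_t]$ satisfies $\En_{\wh y_t}[W_t^2] \geq w_t^2/K$, i.e.\ $w_t^2 \leq K\, \En_{\wh y_t}[W_t^2]$ (and likewise $w_t \leq K\,\En_{\wh y_t}[W_t]$). A self-bounding Freedman-type concentration inequality (\pref{lem:positive_self_bounded}) then replaces the conditional expectation by realized values, reducing the problem to controlling $\sum_t Z_t W_t^2$ (or alternatively $\sum_t Z_t W_t$).

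I would then bound $\sum_t Z_t W_t$ via a dyadic decomposition of $W_t$ at scales $u_k = 2^{-k}$, combined with the level-set bound of \pref{lem:ss_eluder_bandit_based_bound}. This yields $\sum_t Z_t \indic\{W_t \geq u_{k+1}\} = \mathcal{O}(\SquareLossHP \cdot \BeluderS(\+F, u_{k+1}/2; f^\star)/u_{k+1}^2)$ at each scale, and after multiplying by $u_k$ the geometric sum telescopes and is dominated by the smallest scale $u_{\min}$. Truncating at $u_{\min} = \Theta(\epsilon)$ so that the eluder argument is evaluated at resolution $\epsilon/4$ yields the main term of order $K\, \SquareLossHP \cdot \BeluderS(\+F, \epsilon/4; f^\star)/\epsilon$, with a $\log(1/\epsilon)$ factor coming from the number of dyadic levels.

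The hardest part will be the final self-absorbing step. The dyadic truncation leaves a residual term proportional to $u_{\min} \cdot N_T$; the query count $N_T$ is not bounded a priori, but can be controlled via $N_T \leq T_\epsilon + (2/\epsilon) \sum_t Z_t w_t$ using $w_t \geq \epsilon/2$ on large-margin queries. Choosing $u_{\min}$ small enough (scaled appropriately by $1/K$) makes the resulting feedback coefficient in front of $\sum Z_t w_t$ strictly less than one, and rearranging then yields the stated bound. The additive $\log(1/\delta)$ arises from the Freedman step and the $\log(1/\epsilon)$ from the dyadic levels, together producing the $\log(4/(\delta\epsilon))$ factor in the final inequality.
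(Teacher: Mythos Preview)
Your high-level plan matches the paper's: split on margin, use $w_t\ge \epsilon/2$ on large-margin rounds via \pref{lem:width-lower-bound}, exploit $\xi_t=0$ (uniform $\wh y_t$) to pass from $w_t$ to the observed action with a factor $K$, concentrate via \pref{lem:positive_self_bounded}, and close with the level-set bound \pref{lem:ss_eluder_bandit_based_bound}. The gap is in \emph{which} quantity you bin. You first pass from $w_t$ to the observed width $W_t=\sup_{f,f'\in\cF_t}(f-f')(x_t)[\wh y_t]$ and only then do a dyadic decomposition of $W_t$. But $W_t$ has no lower bound even on large-margin rounds (the sampled $\wh y_t$ need not be the witness for $w_t$), so you must carry a residual $u_{\min}\cdot N_{t'}$ (or $u_{\min}^2\cdot N_{t'}$) and self-absorb it through $N_{t'}\le T_\epsilon+(2/\epsilon)\sum Z_tw_t$. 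That self-absorption forces $u_{\min}=O(\epsilon/K)$ for the $W_t$ variant (since the feedback coefficient is $2Ku_{\min}/\epsilon$, not $2u_{\min}/\epsilon$), which in turn makes the main term $\Theta(K^2\SquareLossHP\,\BeluderS/\epsilon)$ and pushes the eluder scale down to $\Theta(\epsilon/K)$; for the $W_t^2$ variant you need $u_{\min}=O(\epsilon/\sqrt K)$, keeping the prefactor $K$ but still shifting the eluder scale to $\Theta(\epsilon/\sqrt K)$. Either way you do not recover the stated $K\,\SquareLossHP\cdot \BeluderS(\cF,\epsilon/4;f^\star)/\epsilon$. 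Your proposal also wavers between $u_{\min}=\Theta(\epsilon)$ and $u_{\min}=\Theta(\epsilon/K)$ and between ``geometric sum dominated by the smallest scale'' (true for $\sum W_t$, no log) and ``$\log(1/\epsilon)$ from the number of levels'' (true only for $\sum W_t^2$); these two choices are incompatible.

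The paper sidesteps all of this by binning $w_t$ itself (which \emph{is} lower bounded by $\epsilon'=\epsilon/2$ on large-margin query rounds, so there is no residual), and only then passing to the sampled action at the \emph{indicator} level: $\indic\{w_t\ge \theta\}=\sup_{y}\indic\{\sup_{f,f'}(f-f')(x_t)[y]\ge\theta\}\le K\,\En_{\wh y_t\sim p_t}[\indic\{\cdot\}]$, followed by concentration and \pref{lem:ss_eluder_bandit_based_bound} at each threshold $\theta\ge\epsilon'$. Because the conversion happens per threshold, the eluder dimension is always evaluated at scale $\theta/2\ge\epsilon/4$, a single factor of $K$ enters, and no self-absorption is needed. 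Summing over (uniform) bins on $[\epsilon',1]$ then produces the harmonic factor $\log(1/\epsilon)$, and the per-bin concentration contributes the $\log(1/\delta)$, combining into $\log(4/(\delta\epsilon))$. If you want to keep a dyadic decomposition, apply it to $w_t$ (not $W_t$): on large-margin rounds $w_t\in[\epsilon/2,1]$, so bins $[2^j\epsilon',2^{j+1}\epsilon')$ suffice with no truncation residual, and the same per-bin indicator conversion goes through.
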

for any fixed $\epsilon$ with probability at least $1-\delta$.
\begin{proof}
We first note that 
\begin{align*}
	\sum_{t=1}^{t'} Z_tw_t
	=
	&\ \sum_{t=1}^{t'} Z_tw_t\indic\crl*{\Margin(\fstar(x_t)) \leq  \epsilon } + \sum_{t=1}^{t'} Z_tw_t\indic\crl*{\Margin(\fstar(x_t)) > \epsilon }.
\end{align*}
The first term is exactly $T_\epsilon$. Now we consider the second term. Since $f^\star\in\+F_t$, we always have $\pi^\star(x_t)\in\+A_t$ for all $t\in[T]$. Hence, whenever $Z_t$ is zero, we have $\+A_t=\{\pi^\star(x_t)\}$ and thus we do not incur any regret. Hence, we know $Z_tw_t\indic\crl*{\Margin(\fstar(x_t)) > \epsilon }$ is either 0 or at least $\epsilon/2$ by \pref{lem:width-lower-bound}. We denote $\epsilon'\coloneqq\epsilon/2$ for notational simplicity. Let us fix an integer $m>1/\epsilon'$, whose value will be specified later. We divide the interval $[\epsilon',1]$ into bins of width $1/m$ and conduct a refined study:
\begin{align*}
\hspace{1in}&\hspace{-1in} \sum_{t=1}^{t'} Z_tw_t\indic\crl*{\Margin(\fstar(x_t)) > \epsilon }\\
	\leq&\sum_{t=1}^{t'} \sum_{j=0}^{\left(1-\epsilon'\right)m-1}Z_tw_t\cdot\indic\left\{Z_tw_t\in\left[\epsilon'+\frac{j}{m},\,\epsilon'+\frac{j+1}{m}\right]\right\}\\
	\leq&\sum_{j=0}^{\left(1-\epsilon'\right)m-1}\left(\epsilon'+\frac{j+1}{m}\right)\sum_{t=1}^{t'} Z_t\indic\left\{w_t\geq\epsilon'+\frac{j}{m}\right\}\\
	=&\sum_{j=0}^{\left(1-\epsilon'\right)m-1}\left(\epsilon'+\frac{j+1}{m}\right)\sum_{t=1}^{t'} Z_t\indic\left\{\sup_{y\in\+A_t}\sup_{f,f'\in\+F_t}f(x_t)[y]-f'(x_t)[y]\geq\epsilon'+\frac{j}{m}\right\}\\
	=&\sum_{j=0}^{\left(1-\epsilon'\right)m-1}\left(\epsilon'+\frac{j+1}{m}\right)\sum_{t=1}^{t'} Z_t\sup_{y\in\+A_t}\indic\left\{\sup_{f,f'\in\+F_t}f(x_t)[y]-f'(x_t)[y]\geq\epsilon'+\frac{j}{m}\right\}\\
	\leq&\sum_{j=0}^{\left(1-\epsilon'\right)m-1}\left(\epsilon'+\frac{j+1}{m}\right)\sum_{t=1}^{t'} Z_t\sum_{y}\indic\left\{\sup_{f,f'\in\+F_t}f(x_t)[y]-f'(x_t)[y]\geq\left(\epsilon'+\frac{j}{m}\right)\right\}\\
	\leq&\sum_{j=0}^{\left(1-\epsilon'\right)m-1}\left(\epsilon'+\frac{j+1}{m}\right)K \underbrace{\sum_{t=1}^{t'} Z_t \E_{y\sim p_t}\indic\left\{\sup_{f,f'\in\+F_t}f(x_t)[y]-f'(x_t)[y]\geq\left(\epsilon'+\frac{j}{m}\right)\right\}}_{(*)}
\end{align*}
where in the third inequality we replace the supremum over $a$ by the summation, and in the last inequality we further replace it by the expectation. Here recall that $p_t(a)$ is uniform when $\xi_t=0$, leading to the extra $K$ factor. To deal with $(*)$, we first apply \pref{lem:positive_self_bounded} to recover the empirical $a_t$, and then apply \pref{lem:ss_eluder_bandit_based_bound} to get an upper bound via the eluder dimension:
\begin{align*}
	(*)\leq&2\sum_{t=1}^{t'} Z_t \indic\left\{\sup_{f,f'\in\+F_t}f(x_t)[\wh y_t]-f'(x_t)[\wh y_t]\geq\left(\epsilon'+\frac{j}{m}\right)\right\}+8\log(\delta^{-1})\\
	\leq&2\left(\frac{4\SquareLossHP}{\left(\epsilon'+\frac{j}{m}\right)^2}+1\right)\BeluderS(\+F,\epsilon'/2;f^\star)+8\log(\delta^{-1})\\
	\leq&\frac{10\SquareLossHP}{\left(\epsilon'+\frac{j}{m}\right)^2}\cdot\BeluderS\left(\+F,\epsilon'/2;f^\star\right)+8\log(\delta^{-1})
\end{align*}

with probability at least $1-\delta$. Plugging $(*)$ back, we obtain
\begin{align*}
	\sum_{t=1}^{t'} Z_tw_t
	\leq&\sum_{j=0}^{\left(1-\epsilon'\right)m-1}\left(\epsilon'+\frac{j+1}{m}\right)\cdot\frac{10K\SquareLossHP}{\left(\epsilon'+\frac{j}{m}\right)^2}\cdot\BeluderS\left(\+F,\epsilon'/2;f^\star\right)+8mK\log(\delta^{-1})\\
	=&10K\SquareLossHP\cdot \BeluderS\left(\+F,\epsilon'/2;f^\star\right)\sum_{j=0}^{\left(1-\epsilon'\right)m-1}\frac{\epsilon'+\frac{j+1}{m}}{\left(\epsilon'+\frac{j}{m}\right)^2}+8mK\log(\delta^{-1})\\
	\leq&10K\SquareLossHP\cdot \BeluderS\left(\+F,\epsilon'/2;f^\star\right)\left(\frac{\epsilon'+1/m}{\epsilon'^2}+\sum_{j=1}^{\left(1-\epsilon'\right)m-1}\frac{2}{\epsilon'+\frac{j}{m}}\right)+8mK\log(\delta^{-1})\\
	\leq&10K\SquareLossHP\cdot \BeluderS\left(\+F,\epsilon'/2;f^\star\right)\sum_{j=0}^{\left(1-\epsilon'\right)m-1}\frac{2}{\epsilon'+\frac{j}{m}}+8mK\log(\delta^{-1})\\
	\leq&20K\SquareLossHP\cdot \BeluderS\left(\+F,\epsilon'/2;f^\star\right)\sum_{j=0}^{\left(1-\epsilon'\right)m-1}\int_{j-1}^j\frac{1}{\epsilon'+\frac{x}{m}} \mathrm{d}x+8mK\log(\delta^{-1})\\
	=&20K\SquareLossHP\cdot \BeluderS\left(\+F,\epsilon'/2;f^\star\right)\int_{-1}^{(1-\epsilon')m-1}\frac{1}{\epsilon'+\frac{x}{m}} \mathrm{d}x+8mK\log(\delta^{-1})\\
	=&20K\SquareLossHP\cdot \BeluderS\left(\+F,\epsilon'/2;f^\star\right)\cdot m\log\left(\frac{1}{\epsilon'-m^{-1}}\right)+8mK\log(\delta^{-1})
\end{align*}
where for the second inequality, we use the fact that $(j+1)/m\leq 2j/m$ for any $j\geq 1$; for the third inequality, we assume $m>1/\epsilon'$. Setting $m=2/\epsilon'$, we arrive at
\begin{align*}
	\sum_{t=1}^{t'} Z_tw_t
	\leq&
	40K\SquareLossHP\cdot\frac{\BeluderS\left(\+F,\epsilon'/2;f^\star\right)}{\epsilon'}\cdot\log(2/\epsilon')+16K\log(\delta^{-1})/\epsilon'\\
	\leq&
	56 K\SquareLossHP\cdot\frac{\BeluderS\left(\+F,\epsilon'/2;f^\star\right)}{\epsilon'}\cdot\log(2/(\delta\epsilon')),
\end{align*}
which completes the proof. Recall that we have set $\epsilon'=\epsilon/2$.
\end{proof}

\begin{lemma}\label{lem:regret-bounded-by-w}
	For any $t\in[T]$ and any arm $y\in\+A_t$, we have $f^\star(x_t)[\pi^\star(x_t)]-f^\star(x_t)[y]\leq 2w_t$.
\end{lemma}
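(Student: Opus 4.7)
The plan is to obtain a three-term telescoping decomposition of $f^\star(x_t)[\pi^\star(x_t)]-f^\star(x_t)[y]$ by inserting and subtracting the values of a witnessing function $f\in\+F_t$ at both indices $\pi^\star(x_t)$ and $y$, then bound two of the three terms by $w_t$ and show the third is nonpositive.

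First I would unpack the relevant definitions. Fix $y\in\+A_t$. By the definition of $\+A_t$ (\pref{line:candidate} of \pref{alg:ss_main_bandit}), there exists some $f\in\+F_t$ for which $y=\SelectAction{f(x_t)}=\argmax_k f(x_t)[k]$. Also, under the good event from \pref{lem:ss_eluder_bandit_oracle} we have $f^\star\in\+F_t$, so $\pi^\star(x_t)=\argmax_k f^\star(x_t)[k]$ itself lies in $\+A_t$. These two facts are what make $w_t$ applicable at both of the indices we care about.

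Next I would write
\begin{align*}
f^\star(x_t)[\pi^\star(x_t)]-f^\star(x_t)[y]
&= \bigl(f^\star(x_t)[\pi^\star(x_t)]-f(x_t)[\pi^\star(x_t)]\bigr)\\
&\quad{}+ \bigl(f(x_t)[\pi^\star(x_t)]-f(x_t)[y]\bigr)\\
&\quad{}+ \bigl(f(x_t)[y]-f^\star(x_t)[y]\bigr).
\end{align*}
The middle term is nonpositive because $y$ is the maximizer of $f(x_t)$. For the first and third terms, since both $\pi^\star(x_t)$ and $y$ lie in $\+A_t$, and since $f,f^\star\in\+F_t$, the definition $w_t=\sup_{y'\in\+A_t}\sup_{f,f'\in\+F_t}f(x_t)[y']-f'(x_t)[y']$ directly gives that each of these differences is at most $w_t$. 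Summing yields the claimed bound of $2w_t$.

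The lemma is essentially syntactic once the key observation (that $\pi^\star(x_t)\in\+A_t$ under the good event) is in place, so I do not anticipate a genuine obstacle. The only subtlety is remembering that the width $w_t$ is defined as a supremum over arms in $\+A_t$ rather than over all arms, which is precisely why we need the inclusion $\pi^\star(x_t)\in\+A_t$ to apply it to the first difference.
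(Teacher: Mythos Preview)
Your proposal is correct and takes essentially the same approach as the paper: pick a witness $f\in\+F_t$ with $y=\argmax_k f(x_t)[k]$, use $f(x_t)[\pi^\star(x_t)]\leq f(x_t)[y]$, and bound the two remaining differences by $w_t$ each. You are slightly more explicit than the paper about needing $f^\star\in\+F_t$ (and hence $\pi^\star(x_t)\in\+A_t$) for the width bound to apply at the index $\pi^\star(x_t)$; the paper leaves this implicit via its earlier blanket conditioning on the good event of \pref{lem:ss_eluder_bandit_oracle}.
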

\begin{proof}[Proof of \pref{lem:regret-bounded-by-w}]
For any $y\in\+A_t$, by the definition of $\+A_t$, there must exists a function $f$ for which $y$ is the maximizer of ${f}(x_t)$. Hence,
	\begin{align*}
		\hspace{1in}&\hspace{-1in} f^\star(x_t)[\pi^\star(x_t)]-f^\star(x_t)[y]\\
		=&
		f^\star(x_t)[\pi^\star(x_t)]-f(x_t)[y]+f(x_t)[y]-f^\star(x_t)[y]\\
		\leq&
		f^\star(x_t)[\pi^\star(x_t)]-f(x_t)[\pi^\star(x_t)]+f(x_t)[y]-f^\star(x_t)[y]\\
		\leq& 2w_t.
	\end{align*}
\end{proof}

\begin{lemma}\label{lem:lambda-all-0}
	Whenever 
	$$
	T_\epsilon + 112 K\SquareLossHP\cdot\frac{\BeluderS\left(\+F,\epsilon/4;f^\star\right)}{\epsilon}\cdot\log(4/(\delta\epsilon))<\sqrt{KT/\SquareLossHP},
	$$
	we have $\xi_1=\xi_2=\dots=\xi_T=0$ with probability at least $1-\delta$.
\end{lemma}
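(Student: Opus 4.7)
The plan is to prove this by induction on $t$, using Lemma \ref{lem:sum-zw} as the main engine. The key observation is that the condition $\xi_t=0$ for all earlier rounds is self-reinforcing: as long as we have stayed in the ``uniform sampling over $\+A_t$'' regime up through round $t-1$, then Lemma \ref{lem:sum-zw} applies and gives us an upper bound on $\sum_{s=1}^{t-1} Z_s w_s$ that is strictly below the threshold $\sqrt{KT/\SquareLossHP}$, which in turn forces $\xi_t=0$.

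First I would condition on two high-probability events: \emph{(i)} the event from Lemma \ref{lem:ss_eluder_bandit_oracle} that $f^\star\in\+F_t$ for all $t\in[T]$ (needed so that Lemma \ref{lem:sum-zw} can be invoked), and \emph{(ii)} the event from Lemma \ref{lem:sum-zw} itself. A union bound over these gives failure probability at most $\delta$ after re-scaling constants.

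Next I would carry out the induction. The base case $\xi_1=0$ is immediate since $\sum_{s=1}^{0} Z_s w_s = 0 < \sqrt{KT/\SquareLossHP}$. For the inductive step, suppose $\xi_1=\cdots=\xi_{t-1}=0$. Apply Lemma \ref{lem:sum-zw} with $t'=t-1$ to obtain
\[
\sum_{s=1}^{t-1} Z_s w_s \;\leq\; T_\epsilon \;+\; 112\, K\SquareLossHP\cdot\frac{\BeluderS(\+F,\epsilon/4;f^\star)}{\epsilon}\cdot\log\!\bigl(4/(\delta\epsilon)\bigr),
\]
and by the hypothesis of the lemma the right-hand side is strictly less than $\sqrt{KT/\SquareLossHP}$. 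By the definition of $\xi_t$ in \pref{alg:ss_main_bandit}, this forces $\xi_t=0$, completing the induction.

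There is no real obstacle here beyond making sure the two conditioning events are applied cleanly and the indexing in the invocation of Lemma \ref{lem:sum-zw} is right (in particular, the statement of Lemma \ref{lem:sum-zw} is conditioned on $\xi_s=0$ for all $s\leq t'$, which is exactly what the inductive hypothesis gives us at step $t$). The argument is essentially a self-consistency/bootstrap check that the algorithm's threshold was chosen large enough relative to the instance-dependent bound of Lemma \ref{lem:sum-zw}, so the IGW branch (line \ref{line:bandit-igw}) is never triggered under the assumed regime.
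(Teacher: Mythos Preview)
Your proposal is correct and essentially equivalent to the paper's proof. The paper argues by contradiction---assume some first $t'$ with $\xi_{t'}=1$, then $\sum_{s=1}^{t'-1}Z_sw_s\ge\sqrt{KT/\SquareLossHP}$ by definition of $\xi_{t'}$, while Lemma~\ref{lem:sum-zw} (applicable since $\xi_s=0$ for $s<t'$) gives the opposite inequality---whereas you run the same logic forward as an induction; the two are interchangeable here, and your explicit conditioning on the event $f^\star\in\cF_t$ from Lemma~\ref{lem:ss_eluder_bandit_oracle} is a detail the paper leaves implicit.
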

\begin{proof}[Proof of \pref{lem:lambda-all-0}]
	We prove it via contradiction. Assume the inequality holds but there exists $t'$ for which $\xi_{t'}=1$. Without loss of generality, we assume that $\xi_t=0$ for all $t<t'$, namely that $t'$ is the first time that $\xi_{t'}$ is 1. Then by definition of $\xi_{t'}$, we have
	\begin{align*}
		\sum_{s=1}^{t'-1} Z_sw_s\geq\sqrt{KT/\SquareLossHP}.
	\end{align*}
	On the other hand, by \pref{lem:sum-zw}, we have
	\begin{align*}
		\sum_{s=1}^{t'-1} Z_s w_s\leq
		T_\epsilon + 
	112 K\SquareLossHP\cdot\frac{\BeluderS\left(\+F,\epsilon/4;f^\star\right)}{\epsilon}\cdot\log(4/(\delta\epsilon)).
	\end{align*}
	The combination of the above two inequalities contradicts with the conditions.
\end{proof}

\subsubsection{Regret Bound} 

We begin by showing the worst-case regret upper bound. 

\paragraph{Worst-Case Regret Upper Bound.}

We observe the definition of $\xi_t$ in \pref{alg:ss_main_bandit}: the left term $\sum_{s=1}^{t-1}Z_s w_s$ in the indicator is non-decreasing in $t$ while the right term remains constant. This means that there exists a particular time step $t'\in[T]$ dividing the time horizon into two phases: $\xi_t=0$ for all $t\leq t'$ and $\xi_t=1$ for all $t>t'$. Now, we proceed to examine these two phases individually.

For all rounds before $t'$, we can compute the expected partial regret as
	\begin{align*}
	\hspace{1in}&\hspace{-1in} \sum_{t=1}^{t'} \Pr \prn*{\wh y_t \neq y_{t}} - \Pr \prn*{\pistar(x_t) \neq y_t} \\
	=&\sum_{t=1}^{t'}
	\big( 1 - \Pr \prn*{\wh y_t = y_{t}} \big) - \big( 1 - \Pr \prn*{\pistar(x_t) = y_t} \big)\\
	=&\sum_{t=1}^{t'} \big(f^\star(x_t)[\pi^\star(x_t)] - f^\star(x_t)[\wh y_t]\big)\\
	=&\sum_{t=1}^{t'}  Z_t\Big(f^\star(x_t)[\pi^\star(x_t)] - f^\star(x_t)[\wh y_t]\Big)\\
	\leq& \sum_{t=1}^{t'}2 Z_t w_t\\
	\leq& 2\sqrt{KT\SquareLossHP}\numberthis\label{eq:worst-regret-1}
	\end{align*}
	where the third equality holds since we have $\+A_t=\{\pi^\star(x_t)\}$ whenever $Z_t=0$ under the condition that $f^\star\in\+F_t$. The first inequality is \pref{lem:regret-bounded-by-w}, and the second inequality holds by the definition of $\xi_t$ and the condition that $\xi_{t}=0$.
	
On the other hand, for all rounds after $t'$, we have	
\begin{align*}
	&\sum_{t=t'+1}^{T}\Pr \prn*{\wh y_t \neq y_{t}} - \Pr \prn*{\pistar(x_t) \neq y_t}\\
	=&\sum_{t=t'+1}^{T}Z_t \Big(f^\star(x_t)[\pi^\star(x_t)] - f^\star(x_t)[\wh y_t]\Big)\\
	\leq&\sum_{t=t'+1}^{T}Z_t\left(\frac{K}{\igwcoef_t}+\igwcoef_t\cdot\E_{a,b\sim p_t}\Big[\big(f^\star(x_t)[y]-f_t(x_t)[y]\big)^2\Big]\right)\\
	=&\sum_{t=t'+1}^{T}Z_t\left(\frac{K}{\sqrt{KT/\SquareLossHP}}+\sqrt{KT/\SquareLossHP}\cdot\E_{a,b\sim p_t}\Big[\big(f^\star(x_t)[y]-f_t(x_t)[y]\big)^2\Big]\right)\\
	\leq&\sqrt{KT\SquareLossHP}+\sqrt{KT/\SquareLossHP}\sum_{t=t'+1}^{T} Z_t \E_{a,b\sim p_t}\Big[\big(f^\star(x_t)[y]-f_t(x_t)[y]\big)^2\Big]\\
	\leq&\sqrt{KT\SquareLossHP}+\sqrt{KT/\SquareLossHP}\sum_{t=t'+1}^{T} Z_t \big(f^\star(x_t)[\wh y_t]-f_t(x_t)[\wh y_t]\big)^2+8\sqrt{KT/\SquareLossHP}\cdot\log(4\delta^{-1})\\
	\leq&\sqrt{KT\SquareLossHP}+\sqrt{KT\SquareLossHP}+8\sqrt{KT/\SquareLossHP}\cdot\log(4\delta^{-1}).\numberthis\label{eq:worst-regret-2}
\end{align*}
	where the first inequality holds by \pref{lem:square_CB_lemma}, the second equality is by the definition of $\igwcoef_t$, the third inequality is by \pref{lem:positive_self_bounded}, and the last inequality holds by our assumption on $\SquareLossHP$.
		
	Combining the two parts, we conclude that
	\begin{align}\label{eq:worst-case-reg}
		\RegT=\sum_{t=1}^{T}\Pr \prn*{\wh y_t \neq y_{t}} - \Pr \prn*{\pistar(x_t) \neq y_t}
		\leq 12\sqrt{KT\SquareLossHP}\cdot\log(4\delta^{-1}).
	\end{align}
	
\paragraph{Instance-Dependent Regret Upper Bound.}
We fix any $\epsilon>0$ and consider two cases. First, when 
	\begin{equation}\label{eq:regret-cases1}
	T_\epsilon + 112 K\SquareLossHP\cdot\frac{\BeluderS\left(\+F,\epsilon/4;f^\star\right)}{\epsilon}\cdot\log(4/(\delta\epsilon))<\sqrt{KT/\SquareLossHP},
	\end{equation}
	we invoke \pref{lem:lambda-all-0} and get that $\xi_t=0$ for all $t\in[T]$. Hence, we have
	\begin{align*}
		\RegT
		=&\sum_{t=1}^{T} \Pr \prn*{\wh y_t \neq y_{t}} - \Pr \prn*{\pistar(x_t) \neq y_t} \\
    	=&\sum_{t=1}^{T} f^\star(x_t)[\pi^\star(x_t)] - f^\star(x_t)[\wh y_t]\\
    	=&\sum_{t=1}^{T} Z_t \Big(f^\star(x_t)[\pi^\star(x_t)] - f^\star(x_t)[\wh y_t]\Big)\\
		\leq&2\sum_{t=1}^T Z_t w_t\\
		\leq& T_\epsilon+ 112 K\SquareLossHP\cdot\frac{\BeluderS\left(\+F,\epsilon/4;f^\star\right)}{\epsilon}\cdot\log(4/(\delta\epsilon))\\
		\leq& T_\epsilon\cdot 12\SquareLossHP\cdot\log(4\delta^{-1}) + 1344 K\SquareLossHPPower{2}\cdot\frac{\BeluderS\left(\+F,\epsilon/4;f^\star\right)}{\epsilon}\cdot\log^2(4/(\delta\epsilon))
	\end{align*}
	where the first inequality is by \pref{lem:regret-bounded-by-w} and the fact that we incur no regret when $Z_t=0$ since $f^\star\in\+F_t$. The second inequality is by \pref{lem:sum-zw}.
	
	On the other hand, when the contrary of \eqref{eq:regret-cases1} holds, i.e., 
	\begin{equation}\label{eq:regret-cases2}
	T_\epsilon + 112 K\SquareLossHP\cdot\frac{\BeluderS\left(\+F,\epsilon/4;f^\star\right)}{\epsilon}\cdot\log(4/(\delta\epsilon))\geq \sqrt{KT/\SquareLossHP},
	\end{equation}
	applying the worst-case regret upper bound \pref{eq:worst-case-reg}, we have
	\begin{align*}
		\RegT
		\leq&12\sqrt{KT\SquareLossHP}\cdot\log(4\delta^{-1})\\
		=&12\SquareLossHP\cdot\log(4\delta^{-1})\cdot\sqrt{KT/\SquareLossHP}\\
		\leq& T_\epsilon\cdot 12\SquareLossHP\cdot\log(4\delta^{-1}) + 12\SquareLossHP\cdot\log(4\delta^{-1})\cdot
	112 K\SquareLossHP\cdot\frac{\BeluderS\left(\+F,\epsilon/4;f^\star\right)}{\epsilon}\cdot\log(4/(\delta\epsilon)) \\
		\leq& T_\epsilon\cdot 12\SquareLossHP\cdot\log(4\delta^{-1}) + 1344 K\SquareLossHPPower{2}\cdot\frac{\BeluderS\left(\+F,\epsilon/4;f^\star\right)}{\epsilon}\cdot\log^2(4/(\delta\epsilon))
	\end{align*}
	where we apply the condition \eqref{eq:regret-cases2} in the second inequality. 
	
	Hereto, we obtain the instance-dependent upper bound for regret in both cases given a fixed $\epsilon$. Recall that the choice of $\epsilon$ is arbitrary. Hence, we can take the infimum on $\epsilon$ and complete the proof.

\subsubsection{Total Number of Queries}   
The upper bound of $T$ is trivial since our algorithm clearly will not make more than $T$ queries. Now we prove the instance-dependent upper bound. We first fix an arbitrary $\epsilon$.

We also consider two cases. First, when
	\begin{align}\label{eq:query-case}
	T_\epsilon + 112 K\SquareLossHP\cdot\frac{\BeluderS\left(\+F,\epsilon/4;f^\star\right)}{\epsilon}\cdot\log(4/(\delta\epsilon))<\sqrt{KT/\SquareLossHP},
	\end{align}
	we can invoke \pref{lem:lambda-all-0} and get that $\xi_t=0$ for all $t\in[T]$. Hence,
	\begin{align*}
		N_T
		=&\sum_{t=1}^T Z_t = T_\epsilon + \sum_{t=1}^T Z_t \indic\crl*{\Margin(\fstar(x_t)) >  \epsilon} \\
		=&T_\epsilon + \sum_{t=1}^T Z_t\indic\{w_t\geq \epsilon/2\}\\
		=&T_\epsilon + \sum_{t=1}^TZ_t\sup_{y\in\+A_t}\indic\left\{\sup_{f,f'\in\+F_t}f(x_t)[y]-f'(x_t)[y]\geq\epsilon/2\right\}\\
		\leq&T_\epsilon + \sum_{t=1}^TZ_t\sum_{y\in\+A_t}\indic\left\{\sup_{f,f'\in\+F_t}f(x_t)[y]-f'(x_t)[y]\geq\epsilon/2\right\}\\
		\leq&T_\epsilon + K\underbrace{\sum_{t=1}^TZ_t \E_{y\sim p_t}\indic\left\{\sup_{f,f'\in\+F_t}f(x_t)[y]-f'(x_t)[y]\geq\epsilon/2\right\}}_{(*)}
	\end{align*}
	where the second equality is by \pref{lem:width-lower-bound}, the last step holds as $p_t(a)$ is uniform for any $a$ when $\xi_t=0$. We apply \pref{lem:freedman} and \pref{lem:ss_eluder_bandit_based_bound} to $(*)$ and obtain
	\begin{align*}
		(*)\leq& 2\sum_{t=1}^TZ_t \indic\left\{\sup_{f,f'\in\+F_t}f(x_t)[\wh y_t]-f'(x_t)[\wh y_t]\geq\epsilon/2\right\}+8\log(\delta^{-1})\\
		\leq& 2\left(\frac{4\SquareLossHP}{(\epsilon/2)^2}+1\right)\BeluderS\left(\+F,\epsilon/4;f^\star\right)+8\log(\delta^{-1})\\
		\leq& \frac{10\SquareLossHP}{(\epsilon/2)^2}\cdot\BeluderS\left(\+F,\epsilon/4;f^\star\right)+8\log(\delta^{-1}).
	\end{align*}
Plugging this back, we obtain
\begin{align*}
	N_T
	\leq&T_\epsilon + \frac{10K\SquareLossHP}{(\epsilon/2)^2}\cdot\BeluderS\left(\+F,\epsilon/4;f^\star\right)+8K\log(\delta^{-1})\\
	\leq& 2 T_\epsilon^2 \SquareLossHP / K +
25088 K \SquareLossHPPower{3} \frac{\BeluderS^2\left(\+F,\epsilon/4;f^\star\right)}{\epsilon^2}\cdot\log^2(4/(\delta\epsilon)).
\end{align*}

On the other hand, when the contrary of \eqref{eq:query-case} holds, i.e., 
\begin{align*} 
	T_\epsilon + 112 K\SquareLossHP\cdot\frac{\BeluderS\left(\+F,\epsilon/4;f^\star\right)}{\epsilon}\cdot\log(4/(\delta\epsilon))\geq\sqrt{KT/\SquareLossHP}.
\end{align*}
Squaring both sides and using the fact that $(a+b)^2\leq 2a^2+2b^2$ for any $a,b$, we obtain
\begin{align*}
2 T_\epsilon^2 + 25088 K^2 \SquareLossHPPower{2} \frac{\BeluderS^2\left(\+F,\epsilon/4;f^\star\right)}{\epsilon^2}\cdot\log^2(4/(\delta\epsilon))
\geq KT/\SquareLossHP
\end{align*}
which leads to
\begin{equation*}
T
\leq 
2 T_\epsilon^2 \SquareLossHP / K +
25088 K \SquareLossHPPower{3} \frac{\BeluderS^2\left(\+F,\epsilon/4;f^\star\right)}{\epsilon^2}\cdot\log^2(4/(\delta\epsilon)).
\end{equation*}
We note that we always have $N_T\leq T$, and thus,
\begin{equation*}
N_T\leq
T
\leq  
2 T_\epsilon^2 \SquareLossHP / K +
25088 K \SquareLossHPPower{3} \frac{\BeluderS^2\left(\+F,\epsilon/4;f^\star\right)}{\epsilon^2}\cdot\log^2(4/(\delta\epsilon)).
\end{equation*}
Hence, we complete the proof for a fix $\epsilon$. Recall that the choice of $\epsilon$ is arbitrary. Hence, we can take the infimum and thus complete the proof.

\endgroup 
 
\begingroup

\newcommand{\ExGap}[2]{\En_{y \sim #2} \brk*{\mathrm{Gap}\prn{#1, y}}}
\newcommand{\CONS}{2 \sqrt{K}} 

\subsection{Proof of \pref{thm:main_ss_eluder_bandit_multiple_query}} \label{app:main_ss_eluder_multiquerybandits} 

In this section, we consider selective sampling with bandit feedback where on every query, the learner chooses two actions \(\wh y_t\) and \(\wt y_t\) and receives back bandit feedback  
\begin{align*}
\wh \bbQ_t = \indic \crl{\wh y_t  = y_t}, \qquad \text{and,} \qquad \wt \bbQ_t = \indic \crl{\wt y_t  = y_t}, 
\end{align*}
where \(y_t\) is the action chosen by the noisy expert. In \pref{thm:main_ss_eluder_bandit_multiple_query}, we consider the link function \(\phi(z) = z\) corresponding to which, we have \(\ls_\phi(u, v) = (u - v)^2/2\) denotes the square loss, and \(\lambda = \gamma = 1\). We present our selective sampling algorithm in \pref{alg:ss_main_bandit_multiplequery}, and note that it builds on access to a square-loss regression oracle \(\oracle\). 

In the following, we recall the other notation used in \pref{alg:ss_main_bandit_multiplequery}, 
\begin{enumerate}[label=\(\bullet\)]
	\item The label \(y_t \sim \fstar(x_t)\) as given in \pref{eq:main_phi_model}. %
\item The function \( \SelectAction{f_t(x_t)} = \argmax_{k}   f_t(x_t)[k]\).
\item  For any vector \(v \in \bbR^K\), the margin is given by the gap between the value at the largest and the second largest coordinate, i.e.~
\begin{align*}
\Gap{v} =   v[\kstar] - \max_{k \neq \kstar} v[k], 
\end{align*} 
where \(\kstar \in \argmax_{k \in [K]} \phi(v)[k]\).

\item For any \(\epsilon > 0\), we define \(T_\epsilon = \sum_{t=1}^T \indic\crl{\Margin(\fstar(x_t)) \leq \epsilon}\) to denote the number of samples within \(T\) rounds of interaction for which the margin w.r.t.~\(\fstar\) is smaller than \(\epsilon\). 

\item We define the function \(\mathrm{Gap}: \bbR^K \times [K] \mapsto \bbR^+\)  as 
\begin{align*}
\Gapt{u, k}   = \max_{k'} u[k'] - u[k].  \numberthis \label{eq:gap_definition_Bandit} 
\end{align*}
to denote the gap between the largest and the \(k\)-th coordinate of \(v\). 
\end{enumerate}

\begin{algorithm}[h!] 
\caption{Selective Sampling with Multiple Bandit Queries}  
\label{alg:ss_main_bandit_multiplequery}  
\begin{algorithmic}[1] 
\Require Parameters \(\delta, \gamma, \lambda\), $T$, function class \(\cF\), and online square loss regression oracle \oracle.  
\State Define \(\SquareLossHP  = 2 K  \LsExpected{sq}{\cF; T} + 8 K \log(T/\delta).\) 
\State Learner computes \(f_1 \leftarrow \oracle_1(\emptyset)\). 
\For {$t = 1$ to \(T\)}  
\State Nature chooses $x_t$. 
\State Learner plays the action $\wh y_t = \SelectAction{f_t(x_t)}$.
\State Learner computes 
\begin{align*} 
\Delta^2_t(x_t) &\ldef{} \max_{f \in \cF}  ~ \nrm*{f(x_t) - f_t(x_t)}^2, \qquad \text{s.t.}  \quad  \sum_{s=1}^{t - 1} Z_s \nrm*{f(x_s) - f_s(x_s)}^2  \leq \SquareLossHP. 
\qquad 
\numberthis \label{eq:ss_main_bandit_multiplequery}  
\end{align*} 
\State\label{line:ss_main_eluder_query}Learner decides whether to query: $Z_{t} = \indic\crl{\Margin(f_{t}(x_{t})) \leq 2 \Delta_t(x_{t})}$. 
\If{$Z_t=1$}{} 
 \State Learner chooses the exploration action \(\wt y_t \sim \mathrm{Uniform}([K])\). 
  \State Learner queries for the actions \(\wh y_t\) and \(\wt y_t\) and receives back the bandit feedback 
 \begin{align*}
  \wh \bbQ_t = \indic\crl{\wh y_t =y_t}, \qquad \text{and}\qquad  \wt \bbQ_t = \indic\crl{\wt y_t =y_t}. 
\end{align*}  
    \State \(f_{t+1} \leftarrow \oracle_t(\crl{(x_t, \wt y_t), \wt \bbQ_t})\).
\Else{}
\State Learner plays the action $\wh y_t = \SelectAction{f_t(x_t)}$.
\State \(f_{t+1} \leftarrow f_t\). 
\EndIf 
\EndFor 
\end{algorithmic}
\end{algorithm} 

\subsubsection{Supporting Technical Results}
\begin{lemma}
\label{lem:ss_eluder_bandit_multiquery}   
With probability at least \(1 - \delta\), the function \(\fstar \in \cF\) satisfies the following for all \(t \leq T\): 
\begin{align*}
\sum_{s=1}^t Z_s \nrm{f_s(x_s) - \fstar(x_s)}^2 \leq \SquareLossHP 
\end{align*}
where \(\SquareLossHP  = 2 K \LsExpected{sq}{\cF; T} + 8 K \log(T/\delta)\). Thus, for all \(t \leq T\), \(\fstar \in \cF_t\). 
\end{lemma}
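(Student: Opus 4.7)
The plan is to follow the same outline as the proof of \pref{lem:ss_eluder_bandit_oracle} for the single-query bandit case, but carefully tracking how the uniformly-random exploration action $\wt y_s$ introduces an importance-weighting factor of $K$. The oracle is updated with the samples $\crl*{(x_s, \wt y_s, \wt \bbQ_s) : Z_s = 1}$, so the guarantee in \pref{eq:bandit_oracle} with $\fstar$ as the competitor yields, deterministically,
\begin{align*}
\sum_{s=1}^t Z_s \prn{f_s(x_s)[\wt y_s] - \wt \bbQ_s}^2 - \sum_{s=1}^t Z_s \prn{\fstar(x_s)[\wt y_s] - \wt \bbQ_s}^2 \leq \LsExpected{sq}{\cF; T}.
\end{align*}
This is the starting point and I would denote its left-hand side by $\sum_s W_s$ (it is bounded per term, but can take either sign).

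Next, I would compute the conditional expectation of the per-round term under the randomness of $\wt y_s \sim \mathrm{Unif}([K])$ and $\wt \bbQ_s \mid (x_s, \wt y_s) \sim \mathrm{Bern}(\fstar(x_s)[\wt y_s])$, exploiting two facts: (i) $\En_s[\wt \bbQ_s \mid x_s, \wt y_s] = \fstar(x_s)[\wt y_s]$, so the cross-term in the expansion of $W_s$ vanishes, reducing the conditional expectation to $Z_s \En_{\wt y_s}\brk*{(f_s(x_s)[\wt y_s] - \fstar(x_s)[\wt y_s])^2}$; and (ii) averaging uniformly over coordinates gives $\En_{\wt y_s \sim \mathrm{Unif}([K])}\brk{(f_s(x_s)[\wt y_s] - \fstar(x_s)[\wt y_s])^2} = \frac{1}{K}\nrm{f_s(x_s) - \fstar(x_s)}^2$. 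Hence $\En_s[W_s] = \frac{Z_s}{K}\nrm{f_s(x_s) - \fstar(x_s)}^2$, which is precisely the object we want to bound, up to the factor $1/K$ coming from uniform exploration.

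The technical step is converting this conditional-expectation statement into a high-probability bound on $\sum_s \En_s[W_s]$ starting from the deterministic oracle bound $\sum_s W_s \leq \LsExpected{sq}{\cF; T}$. Since $W_s$ is not positive-valued, I cannot apply \pref{lem:positive_self_bounded} directly; instead I would apply Freedman's inequality (\pref{lem:freedman}) to the martingale difference sequence $\En_s[W_s] - W_s$. The key self-bounding step is observing that writing $W_s = Z_s U_s^2 + 2Z_s U_s V_s$ with $U_s = f_s(x_s)[\wt y_s] - \fstar(x_s)[\wt y_s]$ and $V_s = \fstar(x_s)[\wt y_s] - \wt \bbQ_s$ gives, after a short calculation, $\En_s[W_s^2] \leq 2\En_s[W_s]$ (using $U_s^4 \leq U_s^2$ and $\En_s[V_s^2] \leq 1/4$). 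Freedman with $\eta = 1/4$ then yields $(1 - 1/2)\sum_s \En_s[W_s] \leq \sum_s W_s + O(\log(1/\delta))$, so $\sum_s \En_s[W_s] \leq 2\LsExpected{sq}{\cF; T} + O(\log(1/\delta))$. Multiplying through by $K$ gives $\sum_s Z_s \nrm{f_s(x_s) - \fstar(x_s)}^2 \leq 2K\LsExpected{sq}{\cF; T} + O(K\log(1/\delta))$, and a union bound over $t \leq T$ closes the argument with the stated $8K\log(T/\delta)$ term. The membership $\fstar \in \cF_t$ is then immediate from the definition of the version space in \pref{eq:ss_main_bandit_multiplequery}.

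The main obstacle is the self-bounding variance step for Freedman: without the observation that $\En_s[W_s^2] \leq 2\En_s[W_s]$, one would only get a $\sqrt{T}$-type concentration error rather than the fast-rate $O(\log(1/\delta))$ term needed to match the form of $\SquareLossHP$. Once this is in place, the remaining work is a routine expansion and a union bound.
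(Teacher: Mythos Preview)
Your proposal is correct and follows essentially the same outline as the paper's proof: start from the oracle regret bound on the $(\wt y_s,\wt\bbQ_s)$ samples, use $\En_s[\wt\bbQ_s\mid \wt y_s]=\fstar(x_s)[\wt y_s]$ to collapse to the squared coordinate difference, use $\wt y_s\sim\mathrm{Unif}([K])$ to produce the $\tfrac1K\nrm{f_s(x_s)-\fstar(x_s)}^2$ factor, then concentrate and union-bound over $t$.

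The one place you differ is the concentration step, and there you are more careful than the paper. The paper's proof literally takes a \emph{full} expectation and then cites \pref{lem:positive_self_bounded}; but that lemma relates $\sum X_s$ to $\sum\En_s[X_s]$ for \emph{positive} $X_s$, and the per-round term $W_s$ here can be negative. Your route---Freedman (\pref{lem:freedman}) on $\En_s[W_s]-W_s$ together with the self-bounding variance estimate $\En_s[W_s^2]\le 2\,\En_s[W_s]$---is the clean way to make this step rigorous, and it yields exactly the same constants. So: same approach, with your treatment of the martingale step being tighter than the paper's sketch.
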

\begin{proof}
	We first note that we do not query oracle when $Z_s = 0$. Hence, we will ignore the terms for which $Z_s = 0$. Now note that the regret guarantee for the oracle, given in \pref{eq:bandit_oracle} ensures that 
 \begin{align*}
\sum_{s=1}^t Z_s (f_s(x_s)[\wt y_s] - \wt \bbQ_s)^2 - \sum_{s=1}^t Z_s (\fstar(x_s)[\wt y_s] - \wt \bbQ_s)^2 \leq \LsExpected{sq}{\cF; T}. 
\end{align*}
Taking expectation on both sides yields
\begin{align*}
	\hspace{1in} & \hspace{-1in} \E\left[\sum_{s=1}^t Z_s (f_s(x_s)[\wt y_s] - \wt \bbQ_s)^2 - \sum_{s=1}^t Z_s (\fstar(x_s)[\wt y_s] - \wt \bbQ_s)^2\right] \\
	= & \E\left[\sum_{s=1}^t Z_s \Big( (f_s(x_s)[\wt y_s])^2 - 2 \cdot \wt \bbQ_s \cdot f_s(x_s)[\wt y_s] + 2 \cdot \wt \bbQ_s \cdot \fstar(x_s)[\wt y_s] - (\fstar(x_s)[\wt y_s])^2 \Big) \right] \\
	= & \E\left[ \sum_{s=1}^t Z_s (f_s(x_s)[\wt y_s] - \fstar(x_s)[\wt y_s])^2 \right] \\
	= & \frac{1}{K} \cdot \E\left[ \sum_{s=1}^t Z_s \nrm{f_s(x_s) - \fstar(x_s)}^2 \right] \\
	\leq & \LsExpected{sq}{\cF; T}
\end{align*}
where in the third line we use the fact that \(\En \brk*{\wt \bbQ_s} = \fstar(x_s)[\wt y_s]\), and the fourth line follows from the fact that \(\wt y_s \sim \mathrm{Uniform}([K])\). Applying the concentration result (\pref{lem:positive_self_bounded}), we arrive at
 \begin{align*}
\sum_{s=1}^t Z_s \nrm{f_s(x_s) - \fstar(x_s)}^2 \leq 2 K \LsExpected{sq}{\cF; T} + 8 K \log(1/\delta).
\end{align*}
We complete the proof by taking a union bound for all $t\in[T]$. 
\end{proof}

For the rest of the proof, we condition on the \(1- \delta\) probability event that \pref{lem:ss_eluder_bandit_multiquery} holds. Additionally, we will use \pref{lem:ss_eluder_ma_based_bound} and \pref{lem:margin_utlity2}  from the proof of \pref{thm:main_ss_eluder} in \pref{app:main_ss_eluder_multiple}, which we will invoke by setting \(\gamma =1\) for the link function \(\phi(z) = z\). 

We finally note the following technical lemma which allows us to bound the number of times for which we query for the label and \(\Delta_t(x_t)\) is larger than a constant in terms of the eluder dimension. 

\begin{lemma} 
\label{lem:ss_eluder_ma_based_bound_multiquery} 
Let \(\fstar\) satisfy \pref{lem:ss_eluder_bandit_multiquery}, and let \(\Delta_t(x_t)\) be defined in  \pref{eq:ss_main_bandit_multiplequery}  in \pref{alg:ss_main_bandit_multiplequery}. Then, for any \(\zeta > 0\), with probability at least \(1 - \delta\), 
\begin{align*}
\sum_{t=1}^T Z_{t} \indic\crl*{\Delta_{t}(x_{t}) \geq \zeta}  &\leq \frac{20 \SquareLossHP}{\zeta^2} \cdot \eluderS(\cF, \nicefrac{\zeta}{2} ; \fstar). 
\end{align*} 
where \(\eluderS\) denotes the eluder dimension given in \pref{def:eluder_dimension_main}. 
\end{lemma}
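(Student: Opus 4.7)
The plan is to mirror the proof of \pref{lem:ss_eluder_ma_based_bound} almost verbatim, since the only structural change in the multiple-query bandit setting is the definition of \(\SquareLossHP\) (which now carries a \(K\) factor from uniformly sampling the exploration action \(\wt y_t\) in \pref{alg:ss_main_bandit_multiplequery}), together with the guarantee \pref{lem:ss_eluder_bandit_multiquery} that \(f^\star\) satisfies the feasibility constraint in \pref{eq:ss_main_bandit_multiplequery}. The constraint set defining \(\Delta_t\) is in terms of the full vector norm \(\nrm{\cdot}\) exactly as in \pref{eq:ss_main_eluder_delta}, so the eluder-dimension machinery of \pref{lem:eluder_dimension_bound} applies directly.

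First, I will let \(\fstart \in \cF\) denote a maximizer in \pref{eq:ss_main_bandit_multiplequery}, so that \(\Delta_t(x_t) = \nrm{\fstart(x_t) - f_t(x_t)}\) and \(\sum_{s < t} Z_s \nrm{\fstart(x_s) - f_s(x_s)}^2 \leq \SquareLossHP\). Conditioning on the \(1-\delta\) event of \pref{lem:ss_eluder_bandit_multiquery}, the ground-truth \(\fstar\) satisfies the same feasibility bound, and a single application of the triangle inequality yields
\begin{align*}
\sum_{s < t} Z_s \nrm{\fstart(x_s) - \fstar(x_s)}^2 \leq 4\SquareLossHP.
\end{align*}

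Next, I will split the event \(\crl{\Delta_t(x_t) \geq \zeta}\) via the triangle inequality \(\nrm{\fstart(x_t) - f_t(x_t)} \leq \nrm{\fstart(x_t) - \fstar(x_t)} + \nrm{\fstar(x_t) - f_t(x_t)}\), giving
\begin{align*}
\sum_{t=1}^T Z_t \indic\crl{\Delta_t(x_t) \geq \zeta} \leq \sum_{t=1}^T Z_t \indic\crl{\nrm{\fstart(x_t) - \fstar(x_t)} \geq \zeta/2} + \sum_{t=1}^T Z_t \indic\crl{\nrm{\fstar(x_t) - f_t(x_t)} \geq \zeta/2}.
\end{align*}
The second sum I will bound by \(\tfrac{4}{\zeta^2} \sum_{t} Z_t \nrm{\fstar(x_t) - f_t(x_t)}^2 \leq 4\SquareLossHP/\zeta^2\) using the trivial inequality \(\indic\crl{a \geq b} \leq a^2/b^2\) and \pref{lem:ss_eluder_bandit_multiquery}.

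For the first sum, since each \(\fstart\) lies in the set \(\cF_t = \crl{f \in \cF : \sum_{s < t} Z_s \nrm{f(x_s) - \fstar(x_s)}^2 \leq 4\SquareLossHP}\), I can invoke \pref{lem:eluder_dimension_bound} with parameters \(\beta^2 = 4\SquareLossHP\) and \(\zeta/2\) to get a bound of \(\prn{16\SquareLossHP/\zeta^2 + 1}\eluderS(\cF, \zeta/2; \fstar)\). Combining the two contributions and using \(\SquareLossHP/\zeta^2 \geq 1\) (under our parameter choices) along with \(\eluderS \geq 1\) yields the stated bound \(20\SquareLossHP\eluderS(\cF, \zeta/2; \fstar)/\zeta^2\). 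There is no real obstacle here: the only substantive point to verify is that the feasibility constraint in \pref{eq:ss_main_bandit_multiplequery} uses the full squared norm \(\nrm{f(x_s) - f_s(x_s)}^2\) (matching the form assumed by \pref{lem:eluder_dimension_bound}), rather than the coordinate-wise square residual that appears in the single-query bandit algorithm and forces the use of the bivariate eluder dimension \(\BeluderS\) in \pref{lem:ss_eluder_bandit_based_bound}.
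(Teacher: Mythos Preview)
Your proposal is correct and is exactly the approach the paper takes: the paper itself states that the proof of \pref{lem:ss_eluder_ma_based_bound_multiquery} is identical to that of \pref{lem:ss_eluder_ma_based_bound} and skips it for conciseness. Your write-up faithfully reproduces that argument, correctly substituting \pref{lem:ss_eluder_bandit_multiquery} for \pref{lem:ss_eluder_ma_oracle} and \(\SquareLossHP\) for \(\SquareLossHPL\), and your observation that the constraint in \pref{eq:ss_main_bandit_multiplequery} is in terms of the full norm (so the normed eluder dimension \(\eluderS\) rather than the bivariate \(\BeluderS\) applies) is precisely the point.
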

The proof of \pref{lem:ss_eluder_ma_based_bound_multiquery}  is identical to the proof of  \pref{lem:ss_eluder_ma_based_bound}, so we skip it for conciseness.

\subsubsection{Regret Bound}
For the ease of notation, through the proof, we define the function \(\pistar\) such that 
\begin{align*}
\pistar(x) = \argmax_{k} \fstar(x)[k]. 
\end{align*}
Furthermore,  recall that \(\wh y_t\) denotes the action chosen by the learner at round \(t\) of interaction. The proof follows along the lines of the proof of \pref{thm:main_ss_eluder} in \pref{app:main_ss_eluder_multiple}. Let \(\epsilon > 0\) be a free parameter. Following the identical steps till \pref{eq:ss_eluder_ma1}, we get that  
 \begin{align*}
 \RegT &\leq \sum_{t=1}^T \indic\crl{\Margin\prn{\fstar(x_t)} \leq \epsilon} \cdot \epsilon \\ 
&\hspace{1in} + \sum_{t=1}^T Z_t \indic\crl{\wh y_t \neq \pistar(x_t),\Margin\prn{\fstar(x_t)} \leq \epsilon} \cdot \nrm{\fstar(x_t) - f_t(x_t)} \\
&\hspace{1in} + \sum_{t=1}^T \bar{Z}_t \indic\crl{\wh y_t \neq \pistar(x_t)} \cdot \nrm{\fstar(x_t) - f_t(x_t)} \\
&= T_\epsilon \cdot \epsilon +  T_A + T_B \cdot \nrm{\fstar(x_t) - f_t(x_t)},
\end{align*} where in the last line we plugin the definition of \(T_\epsilon\), and define \(\Term_{A}\) and \(\Term_{B}\) as the second term and the last term respectively. We bound them separately below: 
\begin{enumerate}[label=\(\bullet\)]
\item \textit{Bound on \(\Term_A\):}  We note that 
\begin{align*} 
\Term_A &= \sum_{t=1}^T Z_t \indic\crl{\wh y_t \neq \pistar(x_t),\Margin\prn{\fstar(x_t)} \leq \epsilon} \cdot \nrm{\fstar(x_t) - f_t(x_t)} \\ 
&\leq \sum_{t=1}^T Z_t \indic\crl{\wh y_t \neq \pistar(x_t),\Margin\prn{\fstar(x_t)} \leq \epsilon} \cdot \nrm{\fstar(x_t) - f_t(x_t)} \\
&\leq \sum_{t=1}^T Z_t \indic\crl{\nrm{\fstar(x_t) - f_t(x_t)}> \epsilon/2} \cdot \nrm{\fstar(x_t) - f_t(x_t)} 
\end{align*} where the last line follows from \pref{lem:margin_utlity2} (when invoked with \(\phi(z) = z\) for which \(\gamma = 1\)) and  because \(\wh y_t \neq \pistar(x_t)\). 
Using the fact that \(\indic\crl{a \geq b} \leq \nicefrac{a}{b}\) for all \(a, b \geq 0\), we get that 
\begin{align*}
\Term_A \leq 4 \sum_{t=1}^T Z_t \frac{\nrm{\fstar(x_t) - f_t(x_t)}^2}{\epsilon}.  
\end{align*}

\item \textit{Bound on \(\Term_B\):} Fix any \(t \leq T\), and first note that \pref{lem:ss_eluder_bandit_multiquery} implies that $\sum_{s=1}^t \nrm{\fstar(x_t) - f_t(x_t)}^2 \leq \SquareLossHP$, and thus \(\fstar\) satisfies the constraint in the definition of \(\Delta_t\) in \pref{eq:ss_main_eluder_delta}  and we must have that 
\begin{align*} 
\nrm{\fstar(x_t) - f_t(x_t)} \leq \Delta_t(x_t). \numberthis \label{eq:ss_eluder_ma3_bamquery}
\end{align*}

Plugging in the definition of \(Z_t\), we note that 
\begin{align*}
\Term_B %
&= \sum_{t=1}^T \indic\crl{\Margin(f_t(x_t)) >  2\Delta_t(x_t), \wh y_t \neq \pistar(x_t)} \\ 
&\leq \sum_{t=1}^T \indic\crl{\nrm{f_t(x_t) - \fstar(x_t)} > \Delta_t(x_t)}, 
\end{align*} 
where the second inequality is due \pref{lem:margin_utlity2} (where we set \(\gamma  = 1\) for the square loss \(\phi(z) = z\)).  
However, note that the term inside the indicator contradicts \pref{eq:ss_eluder_ma3_bamquery} (which always holds). Thus, 
\begin{align*}
\Term_B &=0. 
\end{align*} 
\end{enumerate} 
Gathering the two bounds above, we get that 
\begin{align*}
\RegT &\leq \epsilon T_\epsilon + 8 \sum_{t=1}^T Z_t \frac{\nrm{f_t(x_t) - \fstar(x_t)}^2}{\epsilon} \\ 
&\leq \epsilon T_\epsilon + \frac{8}{\epsilon} \SquareLossHP,  
\end{align*} where the last inequality is due to \pref{lem:ss_eluder_bandit_multiquery}. 

Since \(\epsilon\) is a free parameter above, the final bound follows by choosing the best parameter \(\epsilon\), and by plugging in the form of \(\SquareLossHP\). 

\subsubsection{Total Number of Queries} 
Let \(\epsilon > 0\) be a free parameter. Repeating the steps in the proof of \pref{thm:main_ss_eluder} till \pref{eq:ss_important_part2}, we get that 
\begin{align*}
N_T = \sum_{t=1}^T Z_t 
&\leq \sum_{t=1}^T  \indic \crl{\Margin(\fstar(x_t)) \leq \epsilon} \\
&\hspace{1in} + \sum_{t=1}^T  \indic \crl{\Margin(f_t(x_t)) \leq 2 \Delta_t(x_t), \Margin(\fstar(x_t)) > \epsilon, \Delta_t(x_t) \leq \nicefrac{\epsilon}{4}} \\   
&\hspace{1in} + \sum_{t=1}^T  \indic \crl{\Margin(f_t(x_t)) \leq 2 \Delta_t(x_t), \Margin(\fstar(x_t)) > \epsilon, \Delta_t(x_t) > \nicefrac{\epsilon}{4}} \\
&= T_\epsilon  + \Term_D + \Term_E,  
\end{align*} where in the last line we used the definition of \(T_\epsilon\), and defined \(\Term_D\) and \(\Term_E\) respectively. We bound them separately below. 

\begin{enumerate}[label=\(\bullet\)] 
\item \textit{Bound on \(\Term_D\).}
Recall \pref{eq:ss_eluder_ma3_bamquery} which implies that \(\fstar\) satisfies the bound \(\nrm{f_t(x_t) - \fstar(x_t)} \leq \Delta_t(x_t)\). Thus, using \pref{lem:margin_triangle_inequality} (with \(\gamma = 1\) for the link function \(\phi(z) = z\)), we get that 
\begin{align*} 
\Margin(\fstar(x_t)) \leq 2 \nrm{f_t(x_t) - \fstar(x_t)} + \Margin(f_t(x_t)) \leq 2 \Delta_t(x_t) + \Margin(f_t(x_t)). 
\end{align*} 
The above implies that 
\begin{align*} 
\Term_D &= \sum_{t=1}^T  \indic \crl{\Margin(f_t(x_t)) \leq 2 \Delta_t(x_t), \Margin(\fstar(x_t)) > \epsilon, \Delta_t(x_t) \leq \nicefrac{\epsilon}{4}} \\
&\leq  \sum_{t=1}^T  \indic \crl{\Margin(\fstar(x_t)) \leq 4 \Delta_t(x_t), \Margin(\fstar(x_t)) > \epsilon, \Delta_t(x_t) \leq \nicefrac{\epsilon}{4}} \\
&\leq 0, 
\end{align*} where the last line follows from the fact that all the conditions inside the indictor can not hold simultaneously. 

\item \textit{Bound on \(\Term_E\).} We note that 
\begin{align*}
\Term_E &= \sum_{t=1}^T  \indic \crl{\Margin(f_t(x_t)) \leq 2 \Delta_t(x_t), \Margin(\fstar(x_t)) > \epsilon, \Delta_t(x_t) > \nicefrac{\epsilon}{4}} \\ 
 &\leq \sum_{t=1}^T Z_{t} \indic\crl*{\Delta_{t}(x_{t}) \geq \nicefrac{\epsilon}{4}} \\ 
 &\leq \frac{320 \SquareLossHP}{\epsilon^2} \cdot \eluderS(\cF, \nicefrac{\epsilon}{4}; \fstar).
 \end{align*} 
 where the last line follows from setting \(\zeta = \epsilon/ 4 \) in \pref{lem:ss_eluder_ma_based_bound_multiquery}.  
\end{enumerate}

Gathering the bounds above, we get that 
\begin{align*}
N_T &\leq T_\epsilon + \frac{640  \SquareLossHP}{\epsilon^2} \cdot \eluderS(\cF, \nicefrac{\epsilon}{4}; \fstar).
\end{align*}

Since \(\epsilon\) is a free parameter above, the final bound follows by choosing the best parameter \(\epsilon\), and by plugging in the form of \(\SquareLossHP\). 

\endgroup 
}

\subsection{Proofs for Lower Bounds in \pref{sec:lower_bounds}} 
The proof of \pref{thm:lower} below follows along the lines of the proof of the lower bound in Theorem 28 of \citet{foster2020instance} with minor changes. %

\begin{proof}[Proof of \pref{thm:lower}] 
Let \(\beta \leq \zeta/2\) be the largest number such that $\beta^2  \le \min\{\zeta^2/\starno(\cF,\zeta,\beta), \zeta^2/16\}$. Given the function class $\cF$, assume that $m = \starno(\cF,\zeta,\beta) > 0$ (the lower bound is obvious when the star number is $0$). Let the target function $f^\star$, the data sequence $x^1,\ldots,x^m$, and the function $f_1,\ldots,f_m \in \cF$ be the witnesses for the fact that the star number is \(m\) (see \pref{def:star_number}). First, for any $i \in [m]$, we have that $|f^\star(x^i)|> \zeta$ by definition of the star number. Next note that, for each $f_i$, we have that, for any $j \ne i$,
\begin{align*} 
|f_i(x^j)| \ge |f^\star(x^j)| - |f_i(x^j) - f^\star(x^j)| \ge \zeta - \beta \ge \zeta/2
\end{align*}
On the other hand, from our definition of star number we have that, 
$$ 
|f_i(x^i)| \ge \zeta/2. 
$$

Hence we are guaranteed that each $f_i $ has a margin of at least $\zeta/2$ on $x^1,\ldots,x^m$. Now consider the distribution  $\mu$ over the context to be the uniform distribution over $\{x^1,\ldots,x^m\}$. Also let $P^i(y=1 \mid x) = \nicefrac{1 + f_i(x)}{2}$ be the conditional probability of label given context $x$. Let $D_i$ denote the joint distribution over $\cX \times \{\pm1\}$ given by drawing $x$'s from $\mu$ and labels from $P^i$. Additionally, let $D_0$ be given by drawing $x$'s from $\mu$ and $y$ conditioned on $x$ as $P(y=1 \mid x) = \nicefrac{1 + f^\star(x)}{2}$. Finally, let $\nu = 0$ with probability $1/2$ and \(\nu \sim \mathrm{Uniform}([m])\) with probability $1/2$. Note that by our premise,
\begin{align*} 
\frac{1}{2} \mathbb{E}_{D_{0}}\left[ \mathrm{Reg}_T \right] + \frac{1}{2m} \sum_{i=1}^m \mathbb{E}_{_{D_{i}}}\left[ \mathrm{Reg}_T \right]   = \mathbb{E}_{\nu}\left[\mathbb{E}_{D_{\nu}}\left[ \mathrm{Reg}_T \right] \right] \le c \frac{\zeta T}{m}, \numberthis \label{eq:lb1}
\end{align*}
where the value of the constant \(c\) will be set later. Furthermore, let \(p_t: \cX \mapsto \Delta(\cY)\) denote the distribution over the label chosen by the given algorithm at round \(t\) given the history \(\prn*{x_1, y_1, \dots, x_{t-1}, y_{t-1}}\). We note that for any \(i \in [m]\), 
\begin{align*}
\mathbb{E}_{_{D_{i}}}\left[ \mathrm{Reg}_T \right] &\ge \mathbb{E}_{_{D_{i}}}\left[ \sum_{t=1}^T \frac{\zeta}{2} \mathbb{E}_{x \sim \mu} \mathbb{E}_{\hat{y} \sim p_t(x)}\left[ \mathbf{1}\{f_i(x) \hat{y} <0\}\right] \right]\\
&= \frac{T \zeta}{2} \mathbb{E}_{_{D_{i}}}\left[   \frac{1}{T}\sum_{t=1}^T \mathbb{E}_{x \sim \mu} \En_{\hat{y} \sim p_t(x)}\indic\crl{\sign(f_i(x)) \ne \hat{y}} \right]\\
& \ge \frac{T \zeta}{4} \mathbb{E}_{_{D_{i}}} \left\| \frac{1}{T}\sum_{t=1}^T p_t - \sign(f_i) \right\|_{L_1(\mu)}
\end{align*} 
where in the first inequality we used that \(f_i\) has a margin of at least \(\zeta/2\) for any \(x \in \crl{x^1, \dots, x^m}\) as shown above, and the last inequality simply follows from the fact that
\begin{align*}
\abs{\En_{\hat{y} \sim p_t(x)}\indic\crl{\sign(f_i(x)) \ne \hat{y}}} = \abs{\En_{\hat{y} \sim p_t(x)} \prn*{\indic\crl{\sign(f_i(x)) = 1} - \indic\crl{\wh y = 0} }} =  \frac{1}{2} \abs{p_t(x) - f_i(x)}, 
\end{align*}
and via an application of the Jensen's inequality. Using the property that for any distribution with margin at least \(\nicefrac{\zeta}{2}\), the algorithm satisfies \(\En_{D_i} \brk*{\RegT} \leq \nicefrac{c \zeta T}{m}\), the above implies that 
\begin{align*}
\mathbb{E}_{_{D_{i}}} \left\| \frac{1}{T}\sum_{t=1}^T p_t - \sign(f_i) \right\|_{L_1(\mu)} \le \frac{8 c}{m},
\end{align*}
which implies that  
\begin{align*}
\frac{1}{m} \sum_{i=1}^m \mathbb{E}_{_{D_{i}}} \left\| \frac{1}{T}\sum_{t=1}^T p_t - \sign(f_i) \right\|_{L_1(\mu)} \le \frac{8 c}{m}.  \numberthis \label{eq:lb2} 
\end{align*} 
Hence, setting $c = 64$ and using Markov's inequality, we have that 
\begin{align*}
\frac{1}{m} \sum_{i=1}^m \Pr_{_{D_i}}\left(\left\| \frac{1}{T}\sum_{t=1}^T p_t - \sign(f_i) \right\|_{L_1(\mu)} > \frac{2}{m}\right) \le \frac{1}{16}.
\end{align*} 
Further note that $\left\|\sign(f_j)- \sign(f_i)\right\|_{L_1(\mu)} > \frac{4}{m}$ and thus, condition on the fact that \(\nu\) is not equal to \(0\), we can can identify  $\nu$ correctly with probability at least $1 - 1/16$. Thus, considering the reference measure $D_0$ and using Fano's inequality, we must have that 
$$ 
\log(2) \le \frac{1}{m} \sum_{i=1}^m \KL{D_0}{D_i}
$$ 
Now we are left with bounding $\KL{D_0}{D_i}$. To this end, we first make a simple observation that the distribution on the $x_t$'s is the same under $D_0$ and $D_i$. Hence on rounds $t$ where $Z_t = 0$ we do not query for the labels in these rounds, and thus we do not glean any new information to distinguish $D_i$ from $D_0$. In other words, we only need to consider rounds when $Z_t = 1$. Hence we have: 
$$
\KL{D_0}{D_i}= \mathbb{E}_{D_0}\left[\sum_{t=1}^T Z_t \cdot \kl{P_{0}(y_t=1|x_t)}{P_{i}(y_t=1|x_t)} \right] 
$$ 
Assuming $\zeta>1/4$ and using the bound on KL between Bernoulli variables we get,
$$
\KL{D_0}{D_i}\le \mathbb{E}_{D_0}\left[ 32 N_i \zeta^2 + 8 \left(\max_{j \ne i} N_j\right)  \beta^2\right] 
$$
where $N_i = |\{t: Z_t = 1, x_t = x^i\}$, and we used item-$\mb{(3)}$ in the definition of star number for the $\zeta^2$ term and item-\(\mb{(1)}\) for the $\beta^2$ term. Hence we have that,
\begin{align*}
    \log(2) &\le \frac{1}{m} \sum_{i=1}^m \mathbb{E}_{D_0}\left[ 32 N_i \zeta^2 + 8 \left(\max_{j \ne i} N_j\right)  \beta^2\right] \\
    & \le \frac{32}{m}\mathbb{E}_{D_0}\left[ \sum_{i=1}^m N_i\right] \zeta^2 + 8 \mathbb{E}_{D_0}\left[\max_{j} N_j\right]\beta^2 \\
    & \le \frac{32}{m}\mathbb{E}_{D_0}\left[ N_T\right] \zeta^2 + 8 \mathbb{E}_{D_0}\left[N_T\right]\beta^2 
\end{align*}
Since $\beta^2 \le \zeta^2/m$, we have that
\begin{align*}
    \log(2) &\le \frac{40}{m}\mathbb{E}_{D_0}\left[ N_T\right] \zeta^2 
\end{align*}
Hence we conclude that,
\begin{align*}
\mathbb{E}_{D_0}\left[ N_T\right] \ge \frac{\log(2) m}{ 40 \zeta^2}
\end{align*} 
which yields the desired lower bound.
\end{proof}

\begin{proof}[Proof of \pref{corr:lower}]
Consider the function class $\cF = \{f_0,f_1,\ldots,f_{\sqrt{T}}\}$ where $f_0(x_i) = 1/2+\zeta$ for every $x_1,\ldots,x_{\sqrt{T}}$, and where $f_i(x_i) = 1/2 - \zeta$ and $f_i(x_j) = 1/2 + \zeta$ for any $j \ne i$. Note that selecting $f^\star = f_0$ and $f_1,\ldots,f_m$ on  $x_1,\ldots,x_{\sqrt{T}}$, we can show that the star number $\starno(\cF, 1/8, 1/2) = O(\sqrt{T})$ (the disagreement coefficient of \(\cF\) is also \(O(\sqrt{T})\)). Thus, using the converse of \pref{thm:lower}  we get that if the number of queries is smaller than $\sqrt{T}$, then there must exist some data distribution under which the regret bound of the algorithm is larger that $\sqrt{T}$. 
\end{proof}

\clearpage

\section{Imitation Learning: Learning from Single Expert} \label{app:Imitation_learning} 
\subsection{Imitation Learning Tools} 
We first recall useful additional notation. Recall that a policy \(\pi\) is a mapping from the states \(\cX\) to actions \(\cA\). For any \(h \leq H\), and random variable \(\mathsf{Z}(x_h, a_h)\), we use the notation \(\En_{\pi} \brk*{\mathsf{Z}(x_h, a_h)}\) to denote the expectation w.r.t.~trajectories \(\crl{x_1, a_1 \dots, x_H, a_H}\) sampled using the policy \(\pi\).  

The following lemma is the standard performance difference lemma, well known in the imitation learning and reinforcement learning literature. 
\begin{lemma}[Performance Difference Lemma; \citet{kakade2002approximately, ross2014reinforcement}] 
\label{lem:pdl}
For any MDP \(M\), and any two arbitrary stationary policies \(\pi\) and \(\wt \pi\), we have 
\begin{align*} 
V^{\pi} - V^{\wt \pi} = \sum_{h=1}^{H} \En_{x_h, a_h \sim d^{\wt \pi}_h} \brk*{- A_h^{\pi}(x_h, a_h)} , 
\end{align*}
where \(A^{\pi}\) is the advantage function of the policy \(\pi\) in MDP \(M\), i.e., \(A_h^\pi(x, a) = Q_h^\pi(x, a) - V_h^\pi(x)\). 
\end{lemma}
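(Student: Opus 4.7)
The plan is to prove the Performance Difference Lemma via a standard telescoping argument on the trajectory generated by $\wt\pi$. Fix an initial state $x_1$ (the extension to random initial state follows by taking an expectation). I will express $V^\pi - V^{\wt\pi}$ as $V^\pi_1(x_1) - V^{\wt\pi}_1(x_1)$ and introduce the trajectory $(x_1, a_1, \dots, x_H, a_H)$ sampled by following $\wt\pi$ in the MDP $M$.

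The first key step is to rewrite $V^{\wt\pi}_1(x_1)$ using the fact that $V^{\wt\pi}_1(x_1) = \En_{\wt\pi}\brk*{\sum_{h=1}^H r(x_h, a_h)}$, so the sum of rewards along the $\wt\pi$-trajectory equals $V^{\wt\pi}_1(x_1)$ in expectation. The second key step is the telescoping identity
\begin{align*}
V^\pi_1(x_1) \;=\; \En_{\wt\pi}\brk*{V^\pi_1(x_1) - V^\pi_{H+1}(x_{H+1})} \;=\; \En_{\wt\pi}\brk*{\sum_{h=1}^H \prn*{V^\pi_h(x_h) - V^\pi_{h+1}(x_{h+1})}},
\end{align*}
where by convention $V^\pi_{H+1} \equiv 0$. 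Subtracting the expression for $V^{\wt\pi}_1(x_1)$ from this identity yields
\begin{align*}
V^\pi_1(x_1) - V^{\wt\pi}_1(x_1) \;=\; \En_{\wt\pi}\brk*{\sum_{h=1}^H \prn*{V^\pi_h(x_h) - r(x_h, a_h) - V^\pi_{h+1}(x_{h+1})}}.
\end{align*}

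The final step is to recognize, by the Bellman equation for $Q^\pi$, that $\En[r(x_h, a_h) + V^\pi_{h+1}(x_{h+1}) \mid x_h, a_h] = Q^\pi_h(x_h, a_h)$, so the inner summand (after pushing the conditional expectation through) equals $V^\pi_h(x_h) - Q^\pi_h(x_h, a_h) = -A^\pi_h(x_h, a_h)$. Switching the outer expectation to be over the $\wt\pi$-induced state-action distributions $d^{\wt\pi}_h$ at each step (using tower property) gives exactly the desired identity. Since the whole argument is a clean algebraic telescoping with no analytic subtleties, there is no real obstacle; the only thing to be careful about is the bookkeeping of which randomness is being integrated over at each step (states along the $\wt\pi$-rollout, and the action expectation that produces $Q^\pi$ from the reward plus next-state value).
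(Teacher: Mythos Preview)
Your telescoping argument is correct and is exactly the standard proof of the Performance Difference Lemma. The paper does not supply its own proof of this statement; it simply cites it as a known result from \citet{kakade2002approximately, ross2014reinforcement}, so there is nothing to compare against beyond noting that your derivation matches the classical one.
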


\subsection{Proof of \pref{prop:lower_bound_offline}} \label{app:lower_bound_offline}  
\paragraph{MDP construction.} The underlying MDP is a binary tree of depth \(H\). In particular, we construct the deterministic MDP \(M = (\cX, \cA, P, r, x_1)\) where state space \(\cX = \cup_{h=1}^{H} \cX_h\) with \(\cX_h = \crl{x_{h, i}}_{i=1}^{2^{h-1}}\) (we assume that \(x_1 = x_{1, 1}\)),  action space \(\cA = \crl{0, 1}\), reward \(r\) is such that \(r(x, a) = \mathrm{Bern}\prn*{\frac{1}{2} + \frac{1}{4} \indic\crl{x = x^{\star}}}\) for some special state \(x^{\star} \in \cX_H\). The transition dynamics \(P\) is deterministic and defines a binary tree over \(\cX\), i.e.~for any \(h\) and \(x_{h, i}\), \(P(x' \mid x_{h, i}, a) = 1\) if \(x' = x_{h + 1, 2i - 1}\) and \(a = 0\), or \(x' = x_{h+1, 2i + 1}\) and \(a = 1\), else \(P(x' \mid x, a) = 0\). 

We next define the expert policy \(\pistar\), expert model \(\fstar\) and the class \(\cF\).  
First, for any path \(\tau = (x_1, a_1, \dots, x_H, a_H)\) from the root state \(x_1\) to a terminal state \(x_H\) at the layer \(H\), define the policy \(\pi_\tau\) as  
\begin{align*}
\pi_\tau(x_h) &= \begin{cases} 
	a_h & \text{if} \quad (x_h, a_h) \in \tau \\
    \bar{a}_h \leftarrow \text{Uniform}(\crl{0, 1}) & \text{otherwise}  
\end{cases}. 
\end{align*} In particular, \(\pi_\tau\) is defined such that for any state on the path \(\tau\), we choose the corresponding action in \(\tau\), and for any state outside of \(\tau\), we choose an arbitrary (deterministic) action. Let \(\cT\) denote the set of all \(2^H\) many paths from the root note \(x_1\) to a leaf node \(x_H \in \cX_H\). We define the class \(\Pi = \crl{\pi_\tau \mid \tau \in \cT}\), and \(\cF = \crl{f_\tau \mid \tau \in \cT}\), where for any \(\tau\), we define \(f_{\tau}: \cX \mapsto \bbR^2\) as  
\begin{align*}
f_{\tau}(x) &=  \begin{cases}
	(\nicefrac{3}{4}, \nicefrac{1}{4}) &\text{if} \qquad  \pi_\tau(x) = 0 \\ 
	(\nicefrac{1}{4}, \nicefrac{3}{4}) &\text{if} \qquad  \pi_\tau(x) = 1 \\ 
\end{cases}. 
\end{align*}

Next, let \(\tau^\star = (x_1, a'_1, x'_2, \dots, x'_{H-1}, a'_{H-1}, x^\star)\) be the path from the root \(x_1\) to the special state \(x^\star \in \cX_H\) on the underlying binary tree. We finally define \(\pistar = \pi_{\tau^\star}\) and \(\fstar = f_{\tau^\star}\).  

\paragraph{Lower bound.} Given the MDP construction, the class \(\cF\), \(\fstar\) and \(\pistar\) above, we now proceed to the desired lower bound for  non-interactive imitation learning. First, note that \(\pistar(x) = \argmax_a (\fstar(x)[a])\) for any \(x \in \cX\). Furthermore, \(\Margin(\fstar((x)) = \frac{1}{2}\abs{f(x)[0] - f(x)[1]} = \frac{1}{4}\) for all \(x \in \cX\). Thus, for any \(\epsilon \leq \frac{1}{4}\),  \(T_{\epsilon, h} = 0\). 

Next, for any policy \(\pi\), note that  \(V^{\pi} = \frac{1}{2} + \frac{1}{4} \indic\crl{\pi = \pistar}\). Thus, \(\pistar\) is the unique \(1/8\)-suboptimal policy. Additionally, consider a noisy expert that draws its label according to \pref{eq:main_phi_model} with the link function \(\phi(z) = z\), i.e.~on the state \(x\), the expert draws its label from \(a \sim f^\star(x)\). Now, suppose that the learner is given a dataset \(\cD\) of \(m\) many trajectories drawn this noisy expert. There are two scenarios: either \(\cD\) does not contain \(\tau^*\),  or \(\cD\) contains the trajectory \(\tau^*\). 

\begin{enumerate}[label=\(\bullet\)] 
\item In the first case,  the learner is restricted to finding \(\pistar\) by eliminating all other \(\pi \neq \pistar\) using the observations \(\cD\). Since, \(\abs{\Pi} = 2^H\) and each policy in the class is associated with a different path on the tree, we must have that \(m = O(2^H)\). 
\item In the second case, we need \(\tau^\star \in \cD\). However, note that probability of observing the trajectory \(\tau^\star\) when following the actions proposed by the noisy expert is \(\Pr(\tau^\star \mid a_h \sim \fstar(x_h)) = \prn*{{3}/{4}}^H\). Thus, in order to observe \(\tau^\star\) with probability at least \(3/4\) in the dataset \(\cD\), we need \(m = O((4/3)^H)\).  
\end{enumerate} 

In both the scenarios above, we need to collect exponentially many samples.

\subsection{Proof of  \pref{thm:il_adv_main}}  
\label{app:il_adv_main}

Before delving into the proof, we recall the relevant notation. In \pref{alg:il_adv}, for any \(h \leq H\),  
\begin{enumerate}[label=\(\bullet\)] 
	\item The label \(y_{t, h} \sim \phi(\fstar(x_{t, h}))\), , where \(\phi\) denotes the link-function given in \pref{eq:main_phi_model}. 
\item The function \( \SelectAction{f_{t, h}(x_{t, h})} = \argmax_{k}   \phi(f_{t, h}(x_{t, h}))[k]\).
\item  For any vector \(v \in \bbR^K\), the margin is given by the gap between the value at the largest and the second largest coordinate (under the link function \(\phi\)), i.e.~ 
\begin{align*}
\Gap{v} =   \phi(v)[\kstar] - \max_{k \neq \kstar} \phi(v)[k], 
\end{align*} where \(\kstar \in \argmax_{k \in [K]} \phi(v)[k]\).
\item We define \(T_\epsilon = \sum_{t=1}^T \sum_{h=1}^H \indic\crl{\Margin(\fstarh(x_{t, h})) \leq \epsilon}\) to denote the number of samples within \(T\) rounds of interaction for which the margin w.r.t.~\(\fstarh\) is smaller than \(\epsilon\). 
\item The trajectory at round \(t\) is generated using the dynamics \(\crl{\detdynamics_{t, h}}_{h \leq H}\) to determine the states that the learner observes, starting from the state \(x_1\).  
\item At round \(t\), the learner collects data using the policy \(\pi_t\) such that at time \(h\), and state \(x\), the action \(\pi_t (x) = \SelectAction{f_{t, h}(x_{t, h})}\). 
\item For any policy \(\pi\), let \(\tau^\pi_t\) denote the (counterfactual) trajectory that one would obtain by running \(\pi\) on the deterministic dynamics \(\crl{\detdynamics_{t, h}}_{h \leq H}\) with the start state \(x_{t, 1}\), i.e.~
\begin{align*} 
\tau^\pi_t = \crl*{x_{t, 1}^\pi, \pi(x_{t, 1}^\pi), \dots, x_{t, H}^\pi, \pi(x_{t, H}^\pi)}  \numberthis \label{eq:il_adv_sq2} 
\end{align*} 
where \(x_{t, 1}^\pi = x_{t, 1}\) and \(x_{t, h+1}^\pi = \detdynamics_{t, h}(x_{t, h}^\pi, \pi(x_{t, h}^\pi))\). 
\item For a trajectory \(\tau = \crl{x_1, a_1, \dots, x_H, a_H}\), we define the total return 
\begin{align*} 
R(\tau) &= \sum_{h=1}^H r(x_h, a_h).  \numberthis \label{eq:total_return}
\end{align*}

\item Additionally, for any policy \(\pi\) and dynamics \(\crl{\detdynamics_h}_{h \leq H}\), we define the trajectory obtained by running the policy \(\pi\) as 
\begin{align*}
\tau^\pi = \crl{x_1^{\pi}, \pi_1(x_1^{\pi}), x_2^{\pi}, \dots}. 
\end{align*}
\item We define the function \(\mathrm{Gap}: \bbR^K \times [K] \mapsto \bbR^+\)  as 
\begin{align*} 
\Gapt{v, k}   = \max_{k'} \phi(v)[k'] - \phi(v)[k],   \numberthis \label{eq:gap_definition} 
\end{align*}
to denote the gap between the largest and the \(k\)-th coordinate of \(v\), and note that \(\Margin(v) \leq \Gapt{v, k}\) for all \(k \neq \kstar\) (due to \pref{lem:gap_margin_relation}). 
\end{enumerate} 

\subsubsection{Supporting Technical Results} 
We first define a useful technical lemma which allows us to bound the gap between the total returns for policies \(\pione\) and \(\pitwo\), under the dynamics \(\crl{\detdynamics_{h}}_{h \leq H}\).  Recall that for a policy \(\pi\), we define the trajectory \(\tau^\pi\) under \(\crl{\detdynamics_h}_{h \leq H}\) and the start state \(x_1\) as the trajectory \(\crl{x_1^\pi, \pi(x_1^\pi), \dots, x_H^\pi, \pi(x_H^\pi)}\) where \(x^\pi_1 = x_1\), and \(x^\pi_{h+1} \leftarrow \detdynamics_h(x^\pi_h, \pi(x^\pi_h))\). 

\begin{lemma} 
\label{lem:adv_pdl}
Let \(\crl{\detdynamics_{h}}_{h \leq H}\) be a deterministic dynamics, and let \(x_1\) be the start state. Let \(\pi_1\) and \(\pi_2\) be any two deterministic policies, and let \(\tau^{\pi_1} = \crl{x_1^{\pi_1}, \pi_1(x_1^{\pi_1}), x_2^{\pi_1}, \dots}\) and \(\tau^{\pi_2} = \crl{x_1^{\pi_2}, \pi_1(x_1^{\pi_2}), x_2^{\pi_2}, \dots}\) be two trajectories drawn using \(\pi_1\) and \(\pi_2\) on  \(\crl{\detdynamics_{h}}_{h \leq H}\) with start state \(x_1\). Then, for any set \(\setX \subseteq \cX\), the total trajectory rewards satisfy 
\begin{align*} 
R(\tau^{\pi_1}) - R(\tau^{\pi_2}) &\leq  2 H  \sum_{h=1}^H \indic \crl{x_{h}^{\pi_1} \in \setX} + 2 H \sum_{h=1}^H \indic \crl{\pi_2(x_{h}^{\pi_2}) \neq \pi_1(x_{h}^{\pi_2}), x_{h}^{\pi_2} \notin \setX}. 
\end{align*}
\end{lemma}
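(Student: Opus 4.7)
\textbf{Proof proposal for \pref{lem:adv_pdl}.}
The plan is to exploit determinism together with the observation that the bound on the right-hand side is generous enough that we only need to account for the \emph{first} moment at which the two trajectories diverge. Concretely, define
\[
h^\star \ldef{} \min\crl*{h \in [H] : \pi_1(x_h^{\pi_2}) \neq \pi_2(x_h^{\pi_2})},
\]
with $h^\star = H+1$ if $\pi_1$ and $\pi_2$ agree on every state visited by $\tau^{\pi_2}$. Since the dynamics $\crl{\detdynamics_h}$ are deterministic and $\pi_1,\pi_2$ are deterministic, I would first show by a straightforward induction on $h$ that $x_h^{\pi_1} = x_h^{\pi_2}$ for every $h \leq h^\star$: the two trajectories start at the same $x_1$, and as long as the policies have agreed at every previously visited state, they must have produced the same actions and thus the same next state via $\detdynamics_h$.

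If $h^\star = H+1$, the two trajectories coincide on all $H$ steps and therefore $R(\tau^{\pi_1}) = R(\tau^{\pi_2})$, so the inequality is trivial. Otherwise, let $x^{\star} \ldef{} x_{h^\star}^{\pi_1} = x_{h^\star}^{\pi_2}$ be the common state at the first disagreement. I would then split on whether $x^{\star}$ lies in $\setX$ or not:
\begin{enumerate}[label=\((\alph*)\)]
\item If $x^{\star} \in \setX$, then $\indic\crl{x_{h^\star}^{\pi_1} \in \setX} = 1$ and hence the first sum on the right-hand side is already at least $1$, giving an RHS of at least $2H$.
\item If $x^{\star} \notin \setX$, then by definition of $h^\star$ we have $\pi_1(x_{h^\star}^{\pi_2}) \neq \pi_2(x_{h^\star}^{\pi_2})$ and $x_{h^\star}^{\pi_2} \notin \setX$; therefore the second sum contributes at least $1$, again yielding an RHS of at least $2H$.
\end{enumerate}
Combining these with the trivial a priori bound $\abs{R(\tau^{\pi_1}) - R(\tau^{\pi_2})} \leq 2H$ coming from boundedness of the per-step reward finishes the argument.

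\textbf{What I expect to be the main obstacle.} The argument above is essentially a one-shot bound: it hinges on charging the entire return gap to the single event that the first disagreement exists. This is clean here because the constant $2H$ on the right-hand side is generous enough. The genuine subtlety---and the reason the lemma is useful in \pref{sec:main_il}---is the asymmetry between the two terms: the first sum counts visits by the \emph{expert} policy $\pi_1$ to $\setX$, while the second sum couples the \emph{learner}'s visits $x_h^{\pi_2}$ with disagreement events. The key step that makes this asymmetry work is precisely the determinism-driven observation that before the first disagreement, the two trajectories coincide, so visiting $\setX$ at $h^\star$ along $\tau^{\pi_2}$ is equivalent to visiting $\setX$ along $\tau^{\pi_1}$. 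Formalizing this equivalence is the only nontrivial piece; if one tried to prove a tighter version (e.g.\ replacing $2H$ by something like $H$ times an actual count of disagreements, as one would get from a standard PDL), one would need a more careful inductive argument, but for the stated form the one-shot reduction above suffices.
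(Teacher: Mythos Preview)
Your proposal is correct and follows essentially the same approach as the paper: both arguments identify the first disagreement time $h^\star$ (the paper calls it $\hitter$), use determinism to show the trajectories coincide up through that state, then split on whether $x_{h^\star}^{\pi_1}=x_{h^\star}^{\pi_2}$ lies in $\setX$ and charge the whole return gap (bounded by $2H$ via $|r|\le 1$) to the corresponding indicator before summing over $h$. The only cosmetic difference is that the paper writes out the reward decomposition to cancel the first $\hitter-1$ terms before invoking the $2H$ bound, whereas you go straight to the trivial bound.
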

\begin{proof} Let \(\hitter \leq H\) denote the first timestep at which the policies \(\pi_1\) and \(\pi_2\) choose different actions under  \(\crl{\detdynamics_h}_{h \leq H}\). Since the trajectories \(\tau^{\pi_1} = \crl{x_1^{\pi_1}, \pi_1(x_1^{\pi_1}), x_2^{\pi_1}, \dots}\) and \(\tau^{\pi_2} = \crl{x_1^{\pi_2}, \pi_1(x_1^{\pi_2}), x_2^{\pi_2}, \dots}\) are obtained by evolving through (the deterministic dynamics) \(\crl{\detdynamics_h}_{h \leq H}\) using policies \(\pi_1\) and \(\pi_2\) respectively, and with the same state state \(x_1\), we have that 
\begin{align*}
x_h^{\pi_1} = x_h^{\pi_2} \qquad \qquad &\text{for all \(h \leq \hitter\)}, 
\intertext{and} 
\pi_1(x_h^{\pi_1}) = \pi_2(x_h^{\pi_2})   \qquad \qquad &\text{for all \(h \leq \hitter - 1\)}. \numberthis \label{eq:adv_pdl1} 
\end{align*} 

Starting from the definition of the cumulative reward \(R(\cdot)\), we have that 
\begin{align*} 
R(\tau^{\pi_1}) - R(\tau^{\pi_2}) &=  \sum_{h=1}^H  \prn*{r(x_h^{\pi_1}, \pi_1(x_h^{\pi_1})) -  r(x_h^{\pi_2}, \pi_2(x_h^{\pi_2}))} \\ 
&=   \sum_{h=1}^{\hitter-1}  \prn*{r(x_h^{\pi_1}, \pi_1(x_h^{\pi_1})) -  r(x_h^{\pi_2}, \pi_2(x_h^{\pi_2}))} + \sum_{h=\hitter}^{H}  \prn*{r(x_h^{\pi_1}, \pi_1(x_h^{\pi_1})) -  r(x_h^{\pi_2}, \pi_2(x_h^{\pi_2}))} \\
&= \sum_{h=\hitter}^{H}  \prn*{r(x_h^{\pi_1}, \pi_1(x_h^{\pi_1})) -  r(x_h^{\pi_2}, \pi_2(x_h^{\pi_2}))}, 
\end{align*}
where the last line uses the fact that the trajectories (and thus the rewards) \(\tau^{\pi_1}\) and \(\tau^{\pi_2}\) are identical for the first \(\hitter -1 \) states and actions (see \pref{eq:adv_pdl1}). Since \pref{eq:adv_pdl1} also implies that  \(x_{\hitter}^{\pi_1} = x_{\hitter}^{\pi_2}\), for the ease of notation we define \(x_\hitter = x_{\hitter}^{\pi_1} = x_{\hitter}^{\pi_2}\). Using the fact that \(\abs{r(x, a)} \leq 1\) and that  \(\pi_1(x_{\hitter}) \neq \pi_2(x_{\hitter})\) (by definition of \(\hitter\)), we can bound the above as 
\begin{align*}
R(\tau^{\pi_1}) - R(\tau^{\pi_2}) &\leq  2 (H - \hitter + 1) \indic \crl{\pi_1(x_{\hitter}) \neq \pi_2(x_{\hitter})}  \\
&\leq  2 H \indic \crl{\pi_1(x_{\hitter}) \neq \pi_2(x_{\hitter})}  \\ 
&= 2 H  \indic \crl{\pi_1(x_{\hitter}) \neq \pi_2(x_{\hitter}), x_{\hitter} \in \setX}  +  2 H  \indic \crl{\pi_1(x_{\hitter}) \neq \pi_2(x_{\hitter}), x_{\hitter} \notin \setX} \\ 
&\leq  2 H  \indic \crl{x_{\hitter} \in \setX} + 2 H  \indic \crl{\pi_1(x_{\hitter}) \neq \pi_2(x_{\hitter}), x_{\hitter} \notin \setX} \\
&=  2 H  \indic \crl{x_{\hitter}^{\pi_1} \in \setX} + 2 H  \indic \crl{\pi_2(x_{\hitter}^{\pi_2}) \neq \pi_1(x_{\hitter}^{\pi_2}), x_{\hitter}^{\pi_2} \notin \setX} \\ 
&\leq  2 H  \sum_{h=1}^H \indic \crl{x_{h}^{\pi_1} \in \setX} + 2 H \sum_{h=1}^H \indic \crl{\pi_2(x_{h}^{\pi_2}) \neq \pi_1(x_{h}^{\pi_2}), x_{h}^{\pi_2} \notin \setX}, 
\end{align*}
where the equality in second last line plugs in the fact that \(x_\hitter = x_{\hitter}^{\pi_1} = x_{\hitter}^{\pi_2}\), and the last inequality is a straightforward upper bound. 
\end{proof}

We will also be using \pref{lem:gap_margin_relation} and \pref{lem:margin_triangle_inequality} from \pref{app:main_ss_eluder_multiple} for bounding the \(\Margin\) in the regret bound proofs. Finally, we note the following properties of the function \(\fstarh\). 
\begin{lemma}
\label{lem:adv_utlity3}  
With probability at least \(1 - \delta\),  the function \(\fstarh\) satisfies for any \(h \leq H\) and \(t \leq T\), 
\begin{enumerate}[label=\((\alph*)\)] 
\item \label{item:adv_utility3_a} \(\sum_{s=1}^{t - 1} Z_{s, h} \nrm{\fstarh(x_{s, h}) - f_{s, h}(x_{s, h})}^2  \leq \SquareLossHPRL,\)
\item  \label{item:adv_utility3_b} \(\nrm{\fstarh(x_{t, h}) - f_{t, h}(x_{t, h})} \leq \Delta_{t, h}(x_{t, h}),\)  
\end{enumerate}
where \(\SquareLossHPRL = \frac{4}{\strconv} \LsExpected{\ls_\phi}{\cF_h; T} + \frac{112 }{\strconv^2} \log(4H \log^2(T)/\delta)\). 
\end{lemma}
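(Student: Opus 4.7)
The statement is the multi-step analogue of Lemma~\ref{lem:ss_eluder_ma_oracle} from the selective sampling section, so the plan is to reduce to that lemma via a per-horizon argument and then invoke the definition of $\Delta_{t,h}$ for part $(b)$.

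For part $(a)$, fix $h \in [H]$ and examine only the rounds $s \le t-1$ with $Z_{s,h}=1$. On those rounds the algorithm (i) queries the noisy expert to obtain $y_{s,h} \sim \phi(\fstarh(x_{s,h}))$, (ii) feeds the pair $(x_{s,h}, y_{s,h})$ to the online regression oracle $\oracle_{\cdot,h}$ for the class $\cF_h$, and (iii) updates $f_{s+1,h} \leftarrow \oracle_{s+1,h}(\{(x_{s,h}, y_{s,h})\})$; on rounds with $Z_{s,h}=0$ the iterate $f_{s,h}$ is carried over and does not contribute to the online regret. Hence the sub-sequence indexed by $\{s : Z_{s,h}=1\}$ is exactly of the form handled by Lemma~\ref{lem:tool_sq_difference} with class $\cF_h$ and link $\phi$, and applying that lemma with confidence parameter $\delta/H$ yields
\begin{align*}
\sum_{s=1}^{t-1} Z_{s,h} \nrm{\fstarh(x_{s,h}) - f_{s,h}(x_{s,h})}^2
\le \frac{4}{\lambda}\LsExpected{\ls_\phi}{\cF_h; T} + \frac{112}{\lambda^2}\log(4H\log^2(T)/\delta)
= \SquareLossHPRL
\end{align*}
uniformly in $t \le T$ with probability at least $1 - \delta/H$. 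A union bound over $h \in [H]$ then gives part $(a)$ at the claimed confidence $1-\delta$, and this is where the extra $H$ inside the log comes from.

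For part $(b)$, we condition on the event of part $(a)$. By that event, $\fstarh$ satisfies the constraint
\begin{align*}
\sum_{s=1}^{t-1} Z_{s,h} \nrm{\fstarh(x_{s,h}) - f_{s,h}(x_{s,h})}^2 \le \SquareLossHPRL
\end{align*}
appearing in the definition of $\Delta_{t,h}(x_{t,h})$ in \pref{eq:il_sq_adv_delta}; that is, $\fstarh$ is feasible in the maximization problem defining $\Delta_{t,h}$. Since $\Delta_{t,h}(x_{t,h})$ is the supremum of $\nrm{f(x_{t,h}) - f_{t,h}(x_{t,h})}$ over feasible $f \in \cF_h$, plugging in $f=\fstarh$ gives $\nrm{\fstarh(x_{t,h}) - f_{t,h}(x_{t,h})} \le \Delta_{t,h}(x_{t,h})$, which is part $(b)$.

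There is no real obstacle here beyond bookkeeping: the only subtle point is verifying that the online regression regret hypothesis \pref{eq:main_phi_regret} still applies when we restrict to the $Z_{s,h}=1$ sub-sequence, which follows because the oracle at layer $h$ is only updated on those rounds and holds for \emph{any} realized sequence of query pairs. Everything else is a direct transcription of the single-horizon proof together with the $H$-fold union bound.
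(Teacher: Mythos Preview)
Your proposal is correct and follows essentially the same route as the paper: apply Lemma~\ref{lem:tool_sq_difference} separately at each horizon $h$ (restricted to the $Z_{s,h}=1$ rounds), union-bound over $h\in[H]$ for part~(a), and then use the feasibility of $\fstarh$ in the constraint of \pref{eq:il_sq_adv_delta} to read off part~(b). The paper's proof is slightly terser but identical in substance.
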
 
\begin{proof} 
\begin{enumerate}[label=\((\alph*)\)]
\item We first note that we do not query oracle when \(Z_{s, h} = 0\), and thus we can ignore the time steps for which \(Z_{s, h} = 0\). Hence, for each $h\in[H]$, applying \pref{lem:tool_sq_difference} along with the fact that \(\sup_{x, f \in \cF_h} \abs{f(x)} \leq 1\) yields 
	 \[
	 \sum_{s=1}^{t - 1} Z_{s, h} \nrm{\fstarh(x_{s, h}) - f_{s, h}(x_{s, h})}^2  \leq \frac{4}{\strconv} \LsExpected{\ls_\phi}{\cF_h; T} + \frac{112 }{\strconv^2} \log(4 \log^2(T)/\delta)
	 \]
	 for all $t\leq T$. Then, we take the union bound for all $h\in[H]$, which completes the proof.
\item The second part follows from using the observation in part-(a) that \(\fstarh\) satisfies the constraint in the definition of \(\Delta_{t, h}\) given in \pref{eq:il_sq_adv_delta}, and thus 
$\nrm{\fstarh(x_{t, h}) - f_{t, h}(x_{t, h})} \leq \Delta_{t, h}(x_{t, h})$. 
\end{enumerate} 
\end{proof} 

The next technical lemma bounds the number of times when \(\Delta_{t, h}(x_{t, h}) \geq \zeta\) and we query the expert. Note that \pref{lem:RL_adversarial_eluder} holds even if the sequence \(\crl{x_{t, h}}_{t \leq T}\) was adversarially generated. 

\begin{lemma} 
\label{lem:RL_adversarial_eluder}  
Let \(\fstar\) satisfy  \pref{lem:adv_utlity3}, and let \(\Delta_{t, h}(x_t)\) be defined in  \pref{eq:il_sq_adv_delta}. Suppose we run  \pref{alg:il_adv} on data sequence \(\crl{\crl{x_{t, h}}_{h\leq H}}_{t\leq T}\), and let \(Z_{t, h}\) be as defined in \pref{line:il_adv_query}. Then, for any \(\zeta > 0\), with probability at least \(1 - H \delta\), for any \(h \leq H\), 
\begin{align*} 
\sum_{t=1}^T Z_{t, h} \indic\crl*{\Delta_{t, h}(x_{t, h}) \geq \zeta}  &\leq \frac{20 \SquareLossHPRL}{\zeta^2} \cdot \eluderS(\cF_h, \frac{\zeta}{2}; \fstarh), 
\end{align*}
where \(\eluderS\) denotes the eluder dimension is given in \pref{def:eluder_dimension_main}. %
\end{lemma}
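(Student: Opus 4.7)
} The plan is to mimic, at each fixed level $h \in [H]$, the argument of \pref{lem:ss_eluder_ma_based_bound} from the selective sampling section, and then close by a union bound over $h$. Fix $h$, and let $\fstart_h \in \cF_h$ denote the maximizer in the constraint program \pref{eq:il_sq_adv_delta} defining $\Delta_{t,h}(x_{t,h})$, so that by construction
\[
\Delta_{t,h}(x_{t,h}) = \nrm{\fstart_h(x_{t,h}) - f_{t,h}(x_{t,h})}, \qquad \sum_{s=1}^{t-1} Z_{s,h}\, \nrm{\fstart_h(x_{s,h}) - f_{s,h}(x_{s,h})}^2 \leq \SquareLossHPRL.
\]
Conditioning on the $1-H\delta$-probability event of \pref{lem:adv_utlity3}\pref{item:adv_utility3_a}, the true model $\fstarh$ also satisfies this constraint at level $h$ for every $t \leq T$, so a triangle-inequality step gives
\[
\sum_{s=1}^{t-1} Z_{s,h}\, \nrm{\fstart_h(x_{s,h}) - \fstarh(x_{s,h})}^2 \leq 4\,\SquareLossHPRL.
\]

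Next I would decompose $\Delta_{t,h}(x_{t,h}) \leq \nrm{\fstart_h(x_{t,h}) - \fstarh(x_{t,h})} + \nrm{\fstarh(x_{t,h}) - f_{t,h}(x_{t,h})}$, so that
\[
\sum_{t=1}^T Z_{t,h} \indic\crl{\Delta_{t,h}(x_{t,h}) \geq \zeta} \leq \sum_{t=1}^T Z_{t,h} \indic\crl{\nrm{\fstart_h(x_{t,h}) - \fstarh(x_{t,h})} \geq \tfrac{\zeta}{2}} + \sum_{t=1}^T Z_{t,h} \indic\crl{\nrm{f_{t,h}(x_{t,h}) - \fstarh(x_{t,h})} \geq \tfrac{\zeta}{2}}.
\]
The second term is bounded by $\tfrac{4}{\zeta^2} \sum_{t=1}^T Z_{t,h} \nrm{f_{t,h}(x_{t,h}) - \fstarh(x_{t,h})}^2 \leq \tfrac{4\SquareLossHPRL}{\zeta^2}$ via the indicator-to-ratio trick $\indic\{a \geq b\} \leq a/b$ and another application of \pref{lem:adv_utlity3}\pref{item:adv_utility3_a}.

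The key step — and the one that actually invokes the eluder dimension — is the first term. There, the functions $\fstart_h$ lie in the refinement $\cF_{t,h} = \{f \in \cF_h : \sum_{s < t} Z_{s,h} \nrm{f(x_{s,h}) - \fstarh(x_{s,h})}^2 \leq 4\SquareLossHPRL\}$, so applying the normed eluder-potential bound in \pref{lem:eluder_dimension_bound} (with radius $4\SquareLossHPRL$ and scale $\zeta/2$) yields at most $(16\SquareLossHPRL/\zeta^2 + 1)\,\eluderS(\cF_h, \zeta/2 ; \fstarh)$. Combining the two pieces and absorbing constants (using $\SquareLossHPRL/\zeta^2 \geq 1$ for the parameter ranges of interest) gives the stated $20\SquareLossHPRL/\zeta^2 \cdot \eluderS(\cF_h, \zeta/2; \fstarh)$. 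The main obstacle is not conceptual but bookkeeping: one must make sure the $(1-H\delta)$ event of \pref{lem:adv_utlity3} is unified across all $h$ (already handled by that lemma's union bound) and that the selection of $\fstart_h$ is indeed admissible in the eluder argument at every step, which follows from the constraint in \pref{eq:il_sq_adv_delta} together with the previous display.
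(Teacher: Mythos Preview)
Your proposal is correct and follows essentially the same route as the paper: the paper explicitly states that the proof is identical to that of \pref{lem:ss_eluder_ma_based_bound}, with $\nrm{\cdot}$ in place of $\abs{\cdot}$ and the concentration bound for $\fstarh$ supplied by \pref{lem:adv_utlity3} instead of \pref{lem:ss_eluder_ma_oracle}, which is precisely the decomposition and eluder-potential argument you have written out. The only cosmetic point is that the indicator trick you invoke should be stated as $\indic\{a \geq b\} \leq a^2/b^2$ (equivalently, $\indic\{a^2 \geq b^2\} \leq a^2/b^2$) to directly produce the $4\SquareLossHPRL/\zeta^2$ bound, but your displayed inequality is correct regardless.
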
 
\begin{proof} The proof is identical to the proof of 
\pref{lem:ss_eluder_ma_based_bound} by replacing all \(\abs{\cdot}\)  with \(\nrm{\cdot}\), and substitute the corresponding bounds for \(\fstarh\) via \pref{lem:adv_utlity3} (instead of using \pref{lem:ss_eluder_ma_oracle}). We skip the proof for conciseness. 
\end{proof}

\subsubsection{Regret Bound} 

Recall that the trajectory at round \(t\) is generated using the dynamics \(\crl{\detdynamics_{t, h}}_{h \leq H}\).  Define the policy \(\pit\) and \(\pistar\) such that  for any \(h \leq H\) and \(x \in \cX_h\), 
\begin{align*}
\pit(x_h)  = \SelectAction{f_{t, h}(x_h)},  \quad\text{and,} \quad \pistar(x_h)  = \SelectAction{\fstarh(x_h)}. \numberthis \label{eq:il_adv_sq1}
\end{align*} 
Furthermore, for any policy \(\pi\), let \(\tau^\pi_t\) denote the trajectory that one would obtain by running \(\pi\) on the deterministic dynamics \(\crl{\detdynamics_{t, h}}_{h \leq H}\) with the start state \(x_{t, 1}\), i.e.~
\begin{align*} 
\tau^\pi_t = \crl*{x_{t, 1}^\pi, \pi(x_{t, 1}^\pi), \dots, x_{t, H}^\pi, \pi(x_{t, H}^\pi)}  \numberthis \label{eq:il_adv_sq2} 
\end{align*} 
where \(x_{t, 1}^\pi = x_{t, 1}\) and \(x_{t, h+1}^\pi = \detdynamics_{t, h}(x_{t, h}^\pi, \pi(x_{t, h}^\pi))\). Note that \pref{alg:il_adv} collects trajectories using the policy \(\pit\) at round \(t\). Thus, we have that 
\begin{align*}
x_{t, h}^{\pit} = x_{t, h},   \numberthis \label{eq:il_adv_sq3} 
\end{align*} where $ x_{t, h}$ denotes the state at time step \(h\) in round \(t\) of \pref{alg:il_adv}. Finally, let \(\epsilon > 0\) be a free parameter. We now have all the notation to proceed to the proof on our regret bound. 

\textbf{Step 1: Bounding the difference in return at round \(t\).} Fix any \(t \leq T\), and let \(\tau_t^{\pit}\) and \(\tau_t^\pistar\) denote the trajectories that would have been sampled using the policies \(\pit\) and the policy \(\pistar\) at round \(t\). Furthermore, define the set \(\setX_\epsilon\) as 
\begin{align*}
\setX_\epsilon \ldef{} \bigcup_{h=1}^H \crl{x \in \cX_h \mid{} \Gap{\fstarh(x)} \leq \epsilon} \numberthis \label{eq:il_adv_sq4} 
\end{align*} 

Using \pref{lem:adv_pdl}  for the policies \(\pit\) and \(\pistar\), and  the set \(\setX_\epsilon\) defined above, we get that\footnote{The key advantage of using \pref{lem:adv_pdl} is that the first term \(\sum_{h=1}^H \indic \crl{x_{t, h}^{\pistar} \in \setX_\epsilon}\) accounts for the number steps at which a counterfactual trajectory sampled using \(\pistar\) goes to the state space with margin less than \(\epsilon\). Thus, we only pay for the number of times when the comparator policy \(\pistar\) would go to states with \(\epsilon\)-margin (instead of when \(\pit\) does to such states).}   
\begin{align*}
R(\tau_t^{\pistar}) - R(\tau_t^{\pit}) &\leq 2 H  \sum_{h=1}^H \indic \crl{x_{t, h}^{\pistar} \in \setX_\epsilon} + 2 H \sum_{h=1}^H \indic \crl{\pit(x_{t, h}^{\pit}) \neq \pistar(x_{t, h}^{\pit}), x_{t, h}^{\pit} \notin \setX_\epsilon} \numberthis \label{eq:il_adv_sq14}  \\
&=  2 H  \sum_{h=1}^H \indic \crl{x_{t, h}^{\pistar} \in \setX_\epsilon} + 2 H \sum_{h=1}^H \indic \crl{\pit(x_{t, h}) \neq \pistar(x_{t, h}), x_{t, h} \notin \setX_\epsilon} \\
&= 2 H  \sum_{h=1}^H \indic \crl{x_{t, h}^{\pistar} \in \setX_\epsilon} + 2 H \sum_{h=1}^H Z_{t, h} \indic \crl{\pit(x_{t, h}) \neq \pistar(x_{t, h}), x_{t, h} \notin \setX_\epsilon} \\
&\hspace{1.8in} + 2 H \sum_{h=1}^H \bar{Z}_{t, h} \indic \crl{\pit(x_{t, h}) \neq \pistar(x_{t, h}), x_{t, h} \notin \setX_\epsilon} \\
&\leq 2 H  \sum_{h=1}^H \indic \crl{x_{t, h}^{\pistar} \in \setX_\epsilon} + 2 H \sum_{h=1}^H Z_{t, h} \indic \crl{\pit(x_{t, h}) \neq \pistar(x_{t, h}), x_{t, h} \notin \setX_\epsilon} \\ 
&\hspace{1.8in} + 2 H \sum_{h=1}^H \bar{Z}_{t, h} \indic \crl{\pit(x_{t, h}) \neq \pistar(x_{t, h})} \\
&= 2 H  \sum_{h=1}^H \indic \crl{x_{t, h}^{\pistar} \in \setX_\epsilon} + 2H \Term_A + 2H \Term_B,  \numberthis \label{eq:il_adv_sq11} 
\end{align*} where the second line is obtained by plugging in \pref{eq:il_adv_sq3}  and the last line simply defines \(\Term_A\) and \(\Term_B\) to be the second and the third terms in the previous line without the \(2H\) multiplicative factor. 

We bound \(\Term_A\) and \(\Term_B\) separately below.  
\begin{enumerate}[label=\(\bullet\)]   
	\item \textit{Bound on term $\Term_A$.} Using the definition of \(\setX_\epsilon\) from  \pref{eq:il_adv_sq4}, we note that 
\begin{align*}
\Term_A &= \sum_{h=1}^H Z_{t, h} \indic \crl{\pit(x_{t, h}) \neq \pistar(x_{t, h}), x_{t, h} \notin \setX_\epsilon} \\
&= \sum_{h=1}^H Z_{t, h} \indic \crl{\pit(x_{t, h}) \neq \pistar(x_{t, h}), \Gap{\fstarh(x_{t, h})} > \epsilon	} \\
&\leq  \sum_{h=1}^H Z_{t, h} \indic\crl{\pistar(x_{t, h}) \neq \pit(x_{t, h}), \phi(\fstarh({x_{t, h}}))[\pistar({x_{t, h}})] - \phi(\fstarh({x_{t, h}}))[\pit({x_{t, h}})] \geq \epsilon} \\ 
&\leq \sum_{h=1}^H Z_{t, h} \indic\crl{\phi(\fstarh({x_{t, h}}))[\pistar({x_{t, h}})] - \phi(\fstarh({x_{t, h}}))[\pit({x_{t, h}})] \geq \epsilon}, 
\end{align*}
where in the second last line we used the definition of \(\Margin(\fstarh(x_{t, h}))\) along with the fact that \(\pistar(x_{t, h}) \neq \pit(x_{t, h})\). Using the relation in \pref{lem:margin_utlity2}  for the term inside the indicator, we can further bound the above as 
\begin{align*}
\Term_A &\leq \sum_{h=1}^H Z_{t, h} \indic\crl{ 2 \smooth \nrm{\fstarh(x_{t, h}) - f_{t, h}(x_{t, h})} \geq \epsilon} \\
&\leq \frac{4 \smooth^2}{\epsilon^2} \sum_{h=1}^H Z_{t, h} \nrm{\fstarh(x_{t, h}) - f_{t, h}(x_{t, h})}^2, 
\end{align*}
where in the second inequality we used: \(\indic\crl{a \geq b}  \leq \nicefrac{a^2}{b^2}\) for any \(a, b \geq 0\). 

\item \textit{Bound on term \(\Term_B\).} Before delving into the proof, first note that \pref{lem:adv_utlity3}-\pref{item:adv_utility3_b} implies that 
\begin{align*}
\nrm{\fstarh(x_{t, h}) - f_{t, h}(x_{t, h})} \leq \Delta_{t, h}(x_{t, h}). \numberthis \label{eq:il_adv_sq5} 
\end{align*}

Next, note that 
\begin{align*} 
\Term_B &= \sum_{h=1}^H \bar{Z}_{t, h} \indic \crl{\pit(x_{t, h}) \neq \pistar(x_{t, h})} \\
&= \sum_{h=1}^H \indic \crl{\Gap{f_{t, h}(x_{t, h})} > 2 \smooth \Delta_{t, h}(x_{t, h}), \pit(x_{t, h}) \neq \pistar(x_{t, h})}, 
\end{align*}
where in the last line we just plugged in the query condition under which \(Z_{t, h} = 0\). However note that  the above two conditions inside the indicator imply that 
\begin{align*}
2 \smooth \Delta_{t, h}(x_{t, h}) &< \Gap{f_{t, h}(x_{t, h})} \\  
&\leq \phi(f_{t, h}(x_{t, h}))[\pi_t(x_{t, h})] -  \phi(f_{t, h}(x_{t, h}))[\pistar(x_{t, h})] \\ 
&\leq 2 \smooth \nrm{f_{t, h}(x_{t, h}) - \fstarh(x_{t, h})}, 
\end{align*} where the second line uses the definition of \(\Gap{\cdot}\) and the fact that \(\pit(x_{t, h}) \neq \pistar(x_{t, h})\), the last line is due to \pref{lem:margin_utlity2} .  Thus, 
\begin{align*}
\Term_B \leq \sum_{h=1}^H \indic \crl{\nrm{f_{t, h}(x_{t, h}) - \fstarh(x_{t, h})} > \Delta_{t, h}(x_{t, h})}, 
\end{align*}
but the conditions inside the indicator in the above contradicts \pref{eq:il_adv_sq5} (which holds with probability \(1 - \delta\)). Thus, with probability at least \(1 - \delta\), 
\begin{align*}
\Term_B = 0. \numberthis \label{eq:il_adv_sq9}  
\end{align*}
\end{enumerate}

Plugging in the bounds on \(\Term_A\) and \(\Term_B\) in \pref{eq:il_adv_sq11}, we get that with probability at least \(1 - \delta\), 
\begin{align*}
R(\tau_t^{\pistar}) - R(\tau_t^{\pit}) &\leq 2 H  \sum_{h=1}^H \indic \crl{x_{t, h}^{\pistar} \in \setX_\epsilon} +  \frac{8H \smooth^2}{\epsilon^2} \sum_{h=1}^H Z_{t, h} \nrm{\fstarh(x_{t, h}) - f_{t, h}(x_{t, h})}^2. \numberthis \label{eq:il_adv_sq6}  
\end{align*}

\textbf{Step 2: Bound on total regret.} Using the bound in \pref{eq:il_adv_sq6}  for each round \(t\), we get that 
\begin{align*}
\RegT &= \sum_{t=1}^T \prn*{R(\tau_t^{\pistar}) - R(\tau_t^{\pit})} \\
&\leq 2 H   \sum_{h=1}^H \sum_{t=1}^T \indic \crl{x_{t, h}^{\pistar} \in \setX_\epsilon} +  \frac{8H \smooth^2}{\epsilon^2} \sum_{h=1}^H \sum_{t=1}^T Z_{t, h} \nrm{\fstarh(x_{t, h}) - f_{t, h}(x_{t, h})}^2 \\ 
&\leq 2 H \sum_{h=1}^H T_{\epsilon, h} +  \frac{8H \smooth^2}{\epsilon^2} \sum_{h=1}^H \SquareLossHPRL,  \numberthis \label{eq:il_adv_sq7} 
\end{align*} 
where in the last line we use the definition of \(T_{\epsilon, h}\), and plug in the bound in \pref{lem:adv_utlity3}. Recall that \(T_{\epsilon, h}\) denotes the number of times when the comparator policy \(\pistar\) enters the region \(\setX_\epsilon\) of states of small expert margin. Using the form of $\SquareLossHPRL$ and ignoring \(\log\) factors and constants, we get  
\begin{align*}
\RegT &= \wt{\cO} \prn*{ H \sum_{h=1}^H T_{\epsilon, h} + \frac{H \gamma^2}{\lambda \epsilon^2} \sum_{h=1}^H \LsExpected{\ls_\phi}{\cF_h; T} + \frac{H \gamma^2}{\lambda^2 \epsilon^2} \log(1/\delta)}. 
\end{align*}
Since \(\epsilon\) is a free parameter above, the final bound follows by taking \(\inf\) over all feasible \(\epsilon\). 

\subsubsection{Total Number of Queries}  
Let \(\Queries_{T}\) denote the total number of expert queries made by the learner within \(T\) rounds of interaction (with \(H\) steps per round). For \(t \leq T\), let \(h_t\) denote the first timestep at which \(Z_{t, h_t} = 1\) at round \(t\). Additionally, let \(\epsilon > 0\) be a free parameter. Thus, we have that 
\begin{align*}
N_{T} &= \sum_{t=1}^T \sum_{h=1}^H Z_{t, h} \numberthis \label{eq:il_adv_sq17} \\
&\leq  H \sum_{t=1}^T Z_{t, h_t}  \\
&=  H \sum_{t=1}^T Z_{t, h_t} \indic\crl{x_{t, h_t} \in \setX_\epsilon} +  H \sum_{t=1}^T Z_{t, h_t} \indic\crl{x_{t, h_t} \notin \setX_\epsilon} \\ 
&\leq   H \sum_{t=1}^T Z_{t, h_t} \indic\crl{x_{t, h_t} \in \setX_\epsilon} +  H \sum_{t=1}^T \sum_{h=1}^H Z_{t, h} \indic\crl{x_{t, h} \notin \setX_\epsilon} \\ 
&=  H \sum_{t=1}^T Z_{t, h_t} \indic\crl{x_{t, h_t} \in \setX_\epsilon} + H \sum_{t=1}^T  \sum_{h=1}^H Z_{t, h} \indic\crl{x_{t, h} \notin \setX_\epsilon, \Delta_{t, h}(x_{t, h}) \leq \frac{\epsilon}{4 \smooth}}  \\ 
&\hspace{1.8in} + H \sum_{t=1}^T \sum_{h=1}^H Z_{t, h} \indic\crl{x_{t, h} \notin \setX_\epsilon, \Delta_{t, h}(x_{t, h}) > \frac{\epsilon}{4 \smooth}} \\ 
&= \Term_C + H \Term_D + H \Term_E, 
\end{align*} 
where \(\Term_C\), \(\Term_D\) and \(\Term_E\) are the first, second and the third term respectively in the previous line. We bound them separately below:  
\begin{enumerate}[label=\(\bullet\)]
\item \textit{Bound on \(\Term_C\)}. Fix any \(t \leq T\), and note that 
\begin{align*}
 (\Term_C)_t &=  H Z_{t, h_t} \indic\crl{x_{t, h_t} \in \setX_\epsilon}  \\
 &= H Z_{t, h_t} \indic\crl{x_{t, h_t} \in \setX_\epsilon} \indic\crl{\forall h < h_t :  \pistar(x_{t, h}) = \pi_t(x_{t, h})} \\ 
& \hspace{1.5in} +  H Z_{t, h_t} \indic\crl{x_{t, h_t} \in \setX_\epsilon} \indic\crl{\exists h < h_t: \pistar(x_{t, h}) \neq \pi_t(x_{t, h})}. \numberthis \label{eq:il_adv_sq8}
\end{align*}

For the second term, note that 
\begin{align*} 
Z_{t, h_t} \indic\crl{x_{t, h_t} \in \setX_\epsilon} \indic\crl{\exists h < h_t: \pistar(x_{t, h}) \neq \pi_t(x_{t, h})} &\leq \sum_{h=1}^{h_t} Z_{t, h_t} \indic\crl{\pistar(x_{t, h}) \neq \pi_t(x_{t, h})} \\
&\leq \sum_{h=1}^{h_t} Z_{t, h_t} \bar{Z}_{t, h}\indic\crl{\pistar(x_{t, h}) \neq \pi_t(x_{t, h})} \\
&\leq  \sum_{h=1}^{h_t} \bar{Z}_{t, h}\indic\crl{\pistar(x_{t, h}) \neq \pi_t(x_{t, h})}
\end{align*} where in second inequality above, we used the fact that \(Z_{t, h} = 0\) (and thus \(\bar{Z}_{t, h} = 1\)) for all \(h \leq h_t\), by the definition of \(h_t\). However note that the right hand side in the last inequality is equivalent to the term \(\Term_B\) defined above (where sum is now till \(h_t\) instead of \(H\)). Thus, using the bound in \pref{eq:il_adv_sq9} in the above, we immediately get that 
\begin{align*}
Z_{t, h_t} \indic\crl{x_{t, h_t} \in \setX_\epsilon} \indic\crl{\exists h < h_t: \pistar(x_{t, h}) \neq \pit(x_{t,h})} = 0. 
\end{align*}
 
For the first term in \pref{eq:il_adv_sq8}, using the condition that \(\pistar(x_{t, h}) = \pi_t(x_{t, h})\) for all \(h \leq h_t\), we get that \(x_{t, h} = x^{\pistar}_{t, h}\) and thus 
\begin{align*}
H Z_{t, h_t} \indic\crl{x_{t, h_t} \in \setX_\epsilon} \indic\crl{\forall h \leq h_t :  \pistar(x_{t, h}) = \pi_t(x_{t, h})}  &\leq  H Z_{t, h_t} \indic\crl{x^{\pistar}_{t, h_t} \in \setX_\epsilon} \\
&\leq H\indic\crl{x^{\pistar}_{t, h_t} \in \setX_\epsilon} \\
&\leq H \sum_{h=1}^H \indic\crl{x^{\pistar}_{t, h} \in \setX_\epsilon}. 
\end{align*}
 
Gathering the two terms above, and plugging in the definition of $T_{\epsilon, h}$, we get that 
\begin{align*}
\Term_C \leq H \sum_{h=1}^H \sum_{t=1}^T \indic\crl{x^{\pistar}_{t, h} \in \setX_\epsilon} = H \sum_{h=1}^H T_{\epsilon, h}. 
\end{align*} 

\item \textit{Bound on \(\Term_D\)}.  Using the definition of the set \(\setX_\epsilon\) and \(Z_{t, h}\), we note that 
\begin{align*}
(\Term_D)_t &= \sum_{h=1}^H Z_{t, h} \indic\crl{x_{t, h} \notin \setX_\epsilon, \Delta_{t, h}(x_{t, h}) \leq \frac{\epsilon}{4\gamma}}  \\
&= \sum_{h=1}^H \indic\crl{\Gap{f_{t, h}(x_{t, h})} \leq 2 \smooth \Delta_{t, h}(x_{t, h}), \Gap{\fstarh(x_{t, h})} > \epsilon, \Delta_{t, h}(x_{t, h}) \leq \frac{\epsilon}{4 \smooth}}  \numberthis \label{eq:il_adv_sq19}
\end{align*}

Recall that \pref{lem:adv_utlity3} implies that with probability at least \(1 - \delta\), 
\begin{align*}
\nrm{\fstarh(x_{t, h}) - f_{t, h}(x_{t, h})} \leq \Delta_{t, h}(x_{t, h}), 
\end{align*} using which with \pref{lem:margin_triangle_inequality} implies that 
\begin{align*}
\Gap{\fstarh(x_{t, h})} &\leq \Gap{f_{t, h}(x_{t, h})} + 2 \smooth \nrm{\fstarh(x_{t, h}) - f_{t, h}(x_{t, h})} \\ 
&\leq \Gap{f_{t, h}(x_{t, h})} + 2 \smooth \Delta_{t, h}(x_{t, h}). 
\end{align*} 
Using the above bound with the conditions in \pref{eq:il_adv_sq19} implies that 
\begin{align*}
(\Term_D)_t &= \sum_{h=1}^H \indic\crl{\Gap{\fstarh(x_{t, h})} \leq 4 \smooth \Delta_{t, h}(x_{t, h}), \Gap{\fstarh(x_{t, h})} > \epsilon, \Delta_{t, h}(x_{t, h}) \leq \frac{\epsilon}{4 \smooth}}  \\
&= \sum_{h=1}^H \indic\crl{\Gap{\fstarh(x_{t, h})} \leq \epsilon, \Gap{\fstarh(x_{t, h})} > \epsilon} \\
&= 0,
\end{align*} 
where the last equality holds because the two conditions in the indicator in the previous line can never occur simultaneously.  

\item \textit{Bound on \(\Term_E\)}. Note that 
\begin{align*} 
\Term_E &= \sum_{t=1}^T \sum_{h=1}^H Z_{t, h} \indic\crl{x_{t, h} \notin \setX_\epsilon, \Delta_{t, h}(x_{t, h}) > \frac{\epsilon}{4 \gamma}}  \\
&\leq \sum_{h=1}^H  \sum_{t=1}^T Z_{t, h} \indic\crl{\Delta_{t, h}(x_{t, h}) > \frac{\epsilon}{4 \gamma}}.
\end{align*}

An application of \pref{lem:RL_adversarial_eluder}  in the above for each \(h \leq H\) implies that 
\begin{align*}
\Term_E &\leq  \sum_{h=1}^H 
 \frac{320 \gamma^2 \SquareLossHPRL}{\epsilon^2} \cdot \eluderS(\cF_h, \frac{\epsilon}{8 \gamma} ; \fstarh).  
\end{align*}
\end{enumerate}  

Gathering the bound above, we get that 
\begin{align*}
N_T &\leq H \sum_{h=1}^H T_{\epsilon, h} + \frac{320 H \gamma^2}{\epsilon^2} \sum_{h=1}^H 
\SquareLossHPRL \cdot \eluderS(\cF_h, \frac{\epsilon}{8 \gamma} ; \fstarh).  
\end{align*}

Plugging in the form of $\SquareLossHPRL$ and ignoring \(\log\) factors and constants, we get that 
\begin{align*}
N_T &\leq \wt{\cO}\prn*{ H \sum_{h=1}^H T_{\epsilon, h} + \frac{H \gamma^2}{\lambda \epsilon^2} \sum_{h=1}^H \LsExpected{\ls_\phi}{\cF_h; T} \cdot \eluderS(\cF_h, \nicefrac{\epsilon}{8\gamma} ; \fstarh)  + \frac{H \gamma^2 }{\lambda^2 \epsilon^2}\log(1/\delta)}. 
\end{align*}
Notice that \(\epsilon\) is a free parameter above so the final bound follows by taking \(\inf\) over all feasible \(\epsilon\).  s

\subsection{Proof for the Stochastic Setting} 

\pref{alg:il_adv} considers arbitrary deterministic dynamics \(\crl{\crl{\detdynamics_{t, h}}_{h \leq H}}_{t \leq T}\). When the underlying dynamics \(\stocdynamics\) is stochastic, we can simply simulate \pref{alg:il_adv}, where we set \(\crl{\detdynamics_{t, h}}_{h \leq H} = \stocdynamics(\cdot ; \iota_t)\) where \(\iota_t\) is drawn i.i.d.~ for every \(t \leq T\). In the following, we provide regret and query complexity bounds for stochastic dynamics. 

\paragraph{Regret bound.} Note that \pref{thm:il_adv_main} bounds the difference of cumulative rewards of trajectories drawn using the policies \(\pit\) and \(\pistar\) on the adversarially chosen deterministic dynamics \(\crl{\detdynamics_{t, h}}_{h=1}^H\) respectively. In particular, we bound 
\begin{align*}
\RegT &= \sum_{t=1}^T \prn*{R(\tau^{\pistar}_t) - R(\tau^{\pit}_t)}. 
\end{align*} 

On the other hand, when the dynamics is stochastic, we aim to bound the gap between the expected values \(V^{\pistar} - V^{\pit}\) obtained under the stochastic dynamics \(\stocdynamics\). We obtain this bound by pushing all the stochasticity into the choice of random seed \(\iota\). Fix any \(t \leq T\), and consider the deterministic dynamics \(\prn{\detdynamics_{t, 1}, \dots, \detdynamics_{t, H}}\) obtained by setting the random seed to be \(\iota_t\) in the stochastic dynamics \(\stocdynamics\), i.e.~\(\prn{\detdynamics_{t, 1}, \dots, \detdynamics_{t, H}} \ldef{} \stocdynamics(~;~\iota_t)\). Thus, for any policy \(\pi\) 
\begin{align*}
V^\pi &= \En_{\iota_t} \brk*{R(\tau^\pi_t) \mid  \prn{\detdynamics_{t, 1}, \dots, \detdynamics_{t, H}} = \stocdynamics(~;~\iota_t)}. 
\end{align*}

In the following, we will bound the difference in the value function \(V^{\pi} - V^{\pit} \), by appealing to the regret bound in the proof of \pref{thm:il_adv_main} using appropriate concentration inequalities. First, recall that in \pref{alg:il_adv}, the dynamics \(\crl{\detdynamics_{t, h}}_{h \leq H}\)  is chosen before the round \(t\), and that the policy \(\pit\) only depends on the interaction till round \(t -1\). Thus, 
\begin{align*}
\sum_{t=1}^T V^{\pi} - V^{\pit} &=  \sum_{t=1}^T \En_{\iota_t} \brk*{R(\tau^\pi_t) - R(\tau^\pistar_t)} \\
&\leq \sum_{t=1}^T \En_{\iota_t} \brk*{ 2 H  \sum_{h=1}^H \indic \crl{x_{t, h}^{\pistar} \in \setX_\epsilon} + 2 H \sum_{h=1}^H \indic \crl{\pit(x_{t, h}^{\pit}) \neq \pistar(x_{t, h}^{\pit}), x_{t, h}^{\pit} \notin \setX_\epsilon}},   
\end{align*}
where the last holds due to \pref{lem:adv_pdl} and the set \(\setX_\epsilon\) is defined in \pref{eq:il_adv_sq4}. An application of \pref{lem:positive_self_bounded}  in the above implies that with probability at least \(1 - \delta\), 
\begin{align*}
\sum_{t=1}^T V^{\pi} - V^{\pit} &\leq 4 H  \sum_{t=1}^T  \sum_{h=1}^H \indic \crl{x_{t, h}^{\pistar} \in \setX_\epsilon} + 4 H \sum_{t=1}^T  \sum_{h=1}^H \indic \crl{\pit(x_{t, h}^{\pit}) \neq \pistar(x_{t, h}^{\pit}), x_{t, h}^{\pit} \notin \setX_\epsilon} + 32 H^2 \log(2/\delta). 
\end{align*}
The rest of the proof is identical to the proof of \pref{thm:il_adv_main} from \pref{eq:il_adv_sq14} onwards. They query complexity can be similarly computed. 

\clearpage

\clearpage 

\section{Learning from Multiple Experts}

We next provide the proofs for learning from multiple experts. We first consider the selective sampling setting, and provide the setup, relevant notation and our algorithm for selective sampling with multiple experts in \pref{app:ss_multiexpert_main}. We then move to the imitation learning with multiple experts.  

\label{app:multiple_experts} 
\arxiv{\subsection{Selective Sampling}  \label{app:ss_multiexpert_main} 
\begin{algorithm}
\begin{algorithmic}[1]
\Require  Parameters \(\delta, \gamma, \lambda, T, M\), function classes \(\crl{\cF\up{m}}_{m \leq M}\), and online oracles $\crl{\oracle\up{m}}_{m \leq M}$. 
\State Set \(\SquareLossHPLm = \frac{4}{\strconv} \LsExpected{\ls_\phi}{\cF\up{m}; T} + \frac{112 }{\strconv^2} \log(4M \log^2(T)/\delta).\) 
\State Learner computes \(f\up{m}_1 \leftarrow \oracle\up{m}_1(\emptyset)\) for every \(m \in [M]\). 
\State Learner defines \(\eff_t(x_t) = \brk*{f\up{1}_t(x_t), \dots, f\up{M}_t(x_t)}\). 
\For{$t = 1$ to \(T\)} 
\State Nature chooses $x_t$.  
\State Learner plays the action $\wh y_t = \SelectAction{\eff_t(x_t)}$
\State For every \(m \in [M]\), learner computes 
\begin{align*} 
\Delta\up{m}_t(x_t) \ldef{} \max_{f \in \cF\up{m}}  ~ \nrm{f(x_t) - f\up{m}_t(x_t)} \quad \text{s.t.} \quad \sum_{s=1}^{t - 1} Z_s \nrm*{f(x_s) - f\up{m}_s(x_s)}^2  \leq \SquareLossHPLm, \numberthis  \label{eq:ss_multiexpert_delta}   
\end{align*} 
$\qquad$ and sets \(\matDelta_t(x_t) = [\Delta\up{1}_t(x_t), \dots, \Delta\up{M}_t(x_t)]\). 
\State\label{line:multiple_query}Learner decides whether to query: $Z_{t}=\Que(\eff_{t}(x_{t}), \matDelta_{t}(x_{t}))$. 
\If{$Z_t=1$}  %
\For{\(m = 1\) to \(M\)}
  \State Learner queries the \(m\)-th expert for its action $y\up{m}_t$ %
    \State \(f\up{m}_{t+1} \leftarrow \oracle\up{m}_t(\crl{x_t, y\up{m}_t})\). 
  \EndFor
\Else{}
\State \(f\up{m}_{t+1} \leftarrow f\up{m}_t\) for all \(m \in [M]\). 
\EndIf 
\EndFor
\end{algorithmic}
\caption{\textbf{S}elective S\textbf{A}mplin\textbf{G} with Qu\textbf{E}ries to \(\mb{M}\) Experts (\SAGE$-\mathsf{M}$)}  
\label{alg:ss_multiple} 
\end{algorithm} 

We first discuss the setup and the relevant notation. The learner has access to feedback from \(M\) experts, each with their ground truth models \(\crl{\fstarm}_{m \in [M]}\) respectively. At round \(t\), on the given context \(x_t\), the learner plays the action \(\wh y_t \in [K]\). Additionally, the learner can choose to query the experts, and receive feedback \(y_t^m \sim \phi(\fstarm(x_t))\) for each expert \(m \in [M]\). The learner suffers a loss if the chosen action does not match with   the actions \(\crl{y_t}_{t \leq T}\) sampled jointly using the noisy experts, e.g. we may have \begin{align*}
y_t \sim \aggr{\phi(\effstar(x_t))},  \numberthis \label{eq:ss_multiple_y_process} 
\end{align*}
where \(\effstar(x_t) = [f^{\star, 1}(x_t), \dots, f^{\star, M}(x_t)]\) denotes the concatenation of the  scores predicted by each expert, and \(\aggrSymb: \bbR^{K \times M} \mapsto \bbR^K\)  denotes an aggregation function 
(known to the  learner) that maps the predictions of the estimated experts to distributions over actions. Some illustrative examples of \(\aggrSymb\) are given in \pref{sec:multiple_experts_main} under \pref{eq:IL_comparator_policy}. The goal of the learner is to minimize its regret w.r.t.~ the comparator policy \(\pistar\) that knows the expert models \(\effstar\) and is defined such that
\begin{align*}
\pistar(x) = \argmax_{k \in [K]} \aggr{\phi(\effstar(x_t))[k]}. \numberthis \label{eq:comparator_muliple_ss} 
\end{align*}
 Thus, we wish to bound 
\begin{align*}
\RegT &= \sum_{t=1}^T \indic\crl{\wh y_t \neq y_t} - \indic\crl{\pistar(x_t) \neq y_t}.  \numberthis \label{eq:SS_regret_definition_multiple} 
\end{align*}

Additionally, in \pref{alg:ss_multiple}, for any round \(t \in [T]\), we have 
\begin{enumerate}[label=\(\bullet\)] 
\item The function \(\mathtt{SelectAction} : \bbR^{K \times M} \mapsto [K]\) chooses the action to play at round \(t\), and is defined as:  
\begin{align*}
\SelectAction{\eff_t(x_t)} = \argmax_k \aggr{\score(F_t(x_t))}[k].  \numberthis \label{eq:multiexpert_selectaction_defn} 
\end{align*}

\item Each expert computes the width \(\Delta^m_t(x_t)\) by solving \pref{eq:ss_multiexpert_delta}. 

\item Given the context \(x_t\) and the function \(F_t: \cX \mapsto \bbR^{K \times M}\), the learner decided whether to query via \(Z_t = \Que(F_t(x_t), \matDelta_t(x_t))\)  where we define the function \(\Que: \bbR^{K \times M} \times \bbR^M\) as 
\begin{align*}
\Que(U; \vec \Delta) &\ldef{} \sup_{V \in \bbR^{K \times M}} ~  \indic\crl{\SelectAction{U} \neq \SelectAction{V}}  \quad \\&\hspace{0.5in} \text{s.t.} \quad \nrm{U[:, m] - V[:, m]}_2 \leq \vec \Delta[m] \qquad \forall m \leq M. \numberthis \label{eq:T_defn} 
\end{align*} 
\item For the ease of notation, given a  \(F(x) \in \bbR^{K \times M}\), we define the set \(\ball_\infty(F(x), \matDelta)\) to denote the set of all matrices \(F' \in \bbR^{K\times M}\) such that \(\nrm{F(x)[:, m] - F'(x)[:, m]} \leq \matDelta[m]\) for all \(m \leq M\). 
\end{enumerate} 

Finally, our bounds depend on the  notation of margin defined w.r.t.~\(\Que\). In particular, for a sequence of contexts \(\crl{x_s}_{s \leq t}\), we define 
\begin{align*}
T_\epsilon = \sum_{t=1}^T \indic\crl{\Que(\effstar(x_t), \ones) = 1}. \numberthis \label{eq:SS_margin_multiple_defn} 
\end{align*}

\subsubsection{Supporting Technical Results} 
The following technical lemma is useful in analyzing the constraints in the computation of \(\Delta\up{m}_t\) in \pref{alg:ss_multiple}. 

\begin{lemma} 
\label{lem:ss_multiple_utlity1}   
With probability at least \(1 - {}\delta\),  for any \(t \leq T\) and \(m \leq M\), the function \(\fstarm\) satisfies 
\begin{enumerate}[label=\((\alph*)\)]
\item \(\sum_{s=1}^{t - 1} Z_{s} \nrm{f^{\star, m}(x_{s}) - f^m_{s}(x_{s})}^2  \leq \SquareLossHPLm,\) 
\item  \(\nrm{f^{\star, m}(x_{t}) - f\up{m}_{t}(x_{t})} \leq \Delta\up{m}_{t}(x_{t})\), 
\end{enumerate}
where \(\SquareLossHPLm = \frac{4}{\strconv} \LsExpected{\ls_\phi}{\cF\up{m}; T} + \frac{112 }{\strconv^2} \log(4M \log^2(T)/\delta).\) 

\end{lemma}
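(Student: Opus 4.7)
The plan is to prove both parts by reducing the multi-expert setting to the single-expert analysis already carried out in \pref{lem:ss_eluder_ma_oracle}/\pref{lem:adv_utlity3} and then taking a union bound over the $M$ experts. The key observation is that the online regression oracle $\oracle\up{m}$ for the $m$-th expert is updated in \pref{alg:ss_multiple} only on rounds where $Z_s = 1$ (since queries are made to all $M$ experts jointly whenever the learner queries), and on each such round it receives the label $y_s^m \sim \phi(\fstarm(x_s))$. Thus, for each fixed $m$, the sequence $\crl{(x_s, y_s^m) : Z_s = 1}$ is precisely the data fed to $\oracle\up{m}$, and $\fstarm$ is the Bayes-optimal function for this sequence.

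For part (a), fix any $m \leq M$. Applying \pref{lem:tool_sq_difference} to the oracle $\oracle\up{m}$ with failure probability $\delta/M$ and noting that terms with $Z_s = 0$ can be dropped (they contribute zero to both the oracle regret and the left-hand side), we obtain that with probability at least $1 - \delta/M$, for all $t \leq T$,
\begin{align*}
\sum_{s=1}^{t-1} Z_s \nrm{\fstarm(x_s) - f\up{m}_s(x_s)}^2 \leq \frac{4}{\strconv} \LsExpected{\ls_\phi}{\cF\up{m}; T} + \frac{112}{\strconv^2} \log(4M \log^2(T)/\delta) = \SquareLossHPLm.
\end{align*}
Taking a union bound over $m \in [M]$ gives the desired statement with total failure probability $\delta$.

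For part (b), we condition on the event established in part (a). Fix any $t \leq T$ and $m \leq M$. By part (a), $\fstarm$ satisfies the constraint $\sum_{s=1}^{t-1} Z_s \nrm{f(x_s) - f\up{m}_s(x_s)}^2 \leq \SquareLossHPLm$ defining the feasible set in \pref{eq:ss_multiexpert_delta}. Since $\Delta\up{m}_t(x_t)$ is the supremum of $\nrm{f(x_t) - f\up{m}_t(x_t)}$ over this feasible set, and $\fstarm$ is a feasible point, we immediately conclude $\nrm{\fstarm(x_t) - f\up{m}_t(x_t)} \leq \Delta\up{m}_t(x_t)$.

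There is no real obstacle here; the result is essentially a mechanical lift of the single-expert guarantee \pref{lem:ss_eluder_ma_oracle} to $M$ parallel oracles, with the only bookkeeping being the $\log(M)$ factor absorbed into the definition of $\SquareLossHPLm$ via the union bound.
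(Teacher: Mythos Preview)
Your proof is correct and follows essentially the same approach as the paper's own proof: apply \pref{lem:tool_sq_difference} to each expert's oracle separately (ignoring rounds with $Z_s=0$), take a union bound over $m\in[M]$ to get part (a), and then deduce part (b) from the fact that $\fstarm$ is feasible in \pref{eq:ss_multiexpert_delta}.
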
 
\begin{proof} ~ 
\begin{enumerate}[label=\((\alph*)\)]
\item The part-(a) follows from an application of \pref{lem:tool_sq_difference} for each \(m \in [M]\), where we note that we do not query oracle when \(Z_s = 0\), and thus do not count the time steps for which \(Z_s = 0\). Each expert \(m \in [M]\) can be bound independently, and then a union bound gives the desired result. 
\item The second part follows from using the observation in part-(a) in \pref{eq:ss_multiexpert_delta}.  
\end{enumerate}
\end{proof} 

For the rest of the proof, we condition on the \(1 - \delta\) probability event under which \pref{lem:ss_multiple_utlity1} holds. 
\begin{lemma} 
\label{lem:multiple_eluder_based_bound}  
Let \(\fstarm\) satisfy  \pref{lem:ss_multiple_utlity1}, and let \(\Delta\up{m}_t(x_t)\) be defined in  \pref{alg:ss_multiple}. Suppose we run  \pref{alg:ss_multiple} on data sequence \(\crl{x_t}_{t\leq 1}\), and let \(Z_t\) be as defined in \pref{line:multiple_query}. Then, for any \(\zeta > 0\), with probability at least \(1 - M \delta\), for any \(m \in [M]\), 
\begin{align*} 
\sum_{t=1}^T Z_{t} \indic\crl*{\Delta_{t}(x_{t}) \geq \zeta}  &\leq \frac{20 \SquareLossHPLm}{\zeta^2} \cdot \eluderS(\cF\up{m}, \frac{\zeta}{2} ; \fstarm), 
\end{align*}
where \(\eluderS\) denotes the eluder dimension given in \pref{def:eluder_dimension_main}.  
\end{lemma}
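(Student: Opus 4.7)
The plan is to adapt the proof of \pref{lem:ss_eluder_ma_based_bound} from the single-expert case essentially verbatim. Since the width $\Delta\up{m}_t$ is computed per-expert using only the class $\cF\up{m}$ and the $m$-th oracle (and since the right-hand side of the claimed bound depends only on $m$), I will fix an arbitrary $m \in [M]$, carry out the single-expert argument for that $m$, and then conclude by a union bound. The key structural observation is that although the query indicator $Z_t$ is decided jointly across experts via $\Que$ in \pref{line:multiple_query}, it is exogenous to the per-expert analysis: whenever $Z_t = 1$, each of the $M$ experts is queried and each regression oracle is updated, so the constraint set defining $\Delta\up{m}_t$ in \pref{eq:ss_multiexpert_delta} satisfies the hypotheses of the scalar-case tools regardless of how $Z_t$ was chosen.

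Fix $m$ and let $f\up{m,\star}_t \in \cF\up{m}$ denote a maximizer of the constrained program in \pref{eq:ss_multiexpert_delta}, so that $\Delta\up{m}_t(x_t) = \nrm{f\up{m,\star}_t(x_t) - f\up{m}_t(x_t)}$ and $\sum_{s<t} Z_s \nrm{f\up{m,\star}_t(x_s) - f\up{m}_s(x_s)}^2 \leq \SquareLossHPLm$. By \pref{lem:ss_multiple_utlity1}(a), $\fstarm$ satisfies the same bound, and combining via $\nrm{a-b}^2 \leq 2\nrm{a}^2 + 2\nrm{b}^2$ gives $\sum_{s<t} Z_s \nrm{f\up{m,\star}_t(x_s) - \fstarm(x_s)}^2 \leq 4\SquareLossHPLm$. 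Triangle inequality on $\Delta\up{m}_t(x_t)$ then yields
\[
\indic\crl{\Delta\up{m}_t(x_t) \geq \zeta} \leq \indic\crl{\nrm{f\up{m,\star}_t(x_t) - \fstarm(x_t)} \geq \tfrac{\zeta}{2}} + \indic\crl{\nrm{\fstarm(x_t) - f\up{m}_t(x_t)} \geq \tfrac{\zeta}{2}}.
\]

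Summing against $Z_t$, the second indicator is controlled via the elementary bound $\indic\{a \geq b\} \leq 4a^2/b^2$ together with \pref{lem:ss_multiple_utlity1}(a), which gives at most $16\SquareLossHPLm/\zeta^2$. The first indicator is handled by \pref{lem:eluder_dimension_bound} applied to the sequence $\crl{f\up{m,\star}_t}_{t \leq T} \subseteq \cF\up{m}$ with radius $\beta^2 = 4\SquareLossHPLm$ and threshold $\zeta/2$; this produces a bound of the form $(16\SquareLossHPLm/\zeta^2 + 1)\cdot \eluderS(\cF\up{m}, \zeta/2; \fstarm)$. Combining the two pieces and absorbing $\eluderS \geq 1$ into the constants yields the stated $20\,\SquareLossHPLm/\zeta^2 \cdot \eluderS(\cF\up{m}, \zeta/2; \fstarm)$ for the fixed $m$. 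A final union bound over $m \in [M]$ upgrades the $1 - \delta$ event of \pref{lem:ss_multiple_utlity1} to a $1 - M\delta$ statement covering all experts simultaneously.

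The main obstacle is bookkeeping rather than any new idea: one must verify that the scalar eluder-dimension calculation used in \pref{lem:ss_eluder_ma_based_bound} continues to apply despite the fact that the query rule $Z_t$ in the multi-expert algorithm couples all $M$ experts through $\Que(\eff_t(x_t), \matDelta_t(x_t))$. This coupling is inessential for our purposes because \pref{lem:eluder_dimension_bound} and \pref{lem:ss_multiple_utlity1} both hold for \emph{any} sequence of $Z_t$'s adapted to the filtration, and because the per-expert oracle is updated on exactly the rounds $\crl{t : Z_t = 1}$ on which the constraint defining $\Delta\up{m}_t$ is enforced.
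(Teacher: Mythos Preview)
Your proposal is correct and follows essentially the same approach as the paper: the paper's proof simply states that it is identical to \pref{lem:ss_eluder_ma_based_bound}, handling each $m \in [M]$ separately and substituting the bounds from \pref{lem:ss_multiple_utlity1} in place of \pref{lem:ss_eluder_ma_oracle}, which is exactly what you have spelled out. One minor point: \pref{lem:ss_multiple_utlity1} already holds simultaneously for all $m$ with probability $1-\delta$ (the union bound is internal), so the extra union bound you take at the end is unnecessary, though harmless given the $1-M\delta$ in the statement.
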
 
\begin{proof} The proof is identical to the proof of \pref{lem:ss_eluder_ma_based_bound} where we handle each \(m \in [M]\) separately, and substitute the corresponding bounds for \(\fstarm\) via \pref{lem:ss_multiple_utlity1} (instead of using \pref{lem:ss_eluder_ma_oracle}). We skip the proof for conciseness. 
\end{proof}

\subsubsection{Regret Bound} \label{app:ss_multiexpert_regret_bound} 

Recall that the policy \(\pit\) is given by
\(\pit(x) = \SelectAction{\eff_t(x_t)}\). Furthermore, the comparator policy \(\pistar\)  given in \pref{eq:comparator_muliple_ss}  is  
\(\pistar(x) = \SelectAction{\effstar(x)}\). 

In the following, we measure regret w.r.t.~ any arbitrary ground truth actions \(\crl{y_1, \dots, y_T}\), i.e.~we make no assumptions on the process that generates  \(\crl{y_1, \dots, y_T}\)\footnote{In \pref{app:ss_multiexpert_main_extension}, we provide an improvement over the bounds in this section when \(\crl{y_1, \dots, y_T}\) are generated according to \pref{eq:ss_multiple_y_process}, and when \(\aggrSymb\) is \(\eta\)-Lipschitz.}. Starting from the definition of regret, 
\begin{align*} 
\RegT &=  \sum_{t=1}^T \Pr \prn*{\pit(x_t) \neq y_{t}} - \Pr \prn*{\pistar(x_t) \neq y_t} \\ 
		   &\leq \sum_{t=1}^T \indic\crl{\pit(x_t) \neq \pistar(x_t)} \cdot \abs*{\Pr \prn*{y_t = \pistar(x_t)} - \Pr \prn*{y_t = \pit(x_t)}} \\ 
		   &\leq  \sum_{t=1}^T \indic\crl{\pit(x_t) \neq \pistar(x_t)}. 
\end{align*} 
Let \(\epsilon > 0\) be a free parameter. We can further decompose the above into the following three terms: 
\begin{align*} 
\RegT &\leq  \sum_{t=1}^T \indic\crl{\pit(x_t) \neq \pistar(x_t), \Que(\effstar(x_t), \epsilon \ones) = 1}  \\ 
&\hspace{0.5in} + \sum_{t=1}^T Z_t \indic\crl{\pit(x_t) \neq \pistar(x_t), \Que(\effstar(x_t), \epsilon \ones) = 0}  \\
&\hspace{0.5in} + \sum_{t=1}^T \bar{Z}_t \indic\crl{\pit(x_t) \neq \pistar(x_t), \Que(\effstar(x_t), \epsilon \ones) = 0}  \\
&= \Term_A + \Term_B + \Term_C, 
\end{align*} where \(\Term_{A}, \Term_{B}\) and \(\Term_{C}\) are defined to be the first term, second term and the third term in the previous line respectively, which we bound separately below:  
\begin{enumerate}[label=\(\bullet\)]
\item \textit{Bound on \(\Term_A\):}  Note that 
\begin{align*}
\Term_A &\leq  \sum_{t=1}^T \indic\crl{\pit(x_t) \neq \pistar(x_t), \Que(\effstar(x_t), \epsilon \ones) = 1}  \\
&\leq \sum_{t=1}^T \indic\crl{\Que(\effstar(x_t), \epsilon \ones) = 1} = T_\epsilon,  
\end{align*}
where the last line uses the definition of \(T_\epsilon\). 

\item \textit{Bound on \(\Term_B\):} Using the definition of the function \(\Que\), we note that if \(\pit(x_t) \neq \pistar(x_t)\) and \(\Que(\effstar(x_t), \epsilon \ones) = 0\), then there must exists an \(m \in [M]\) for which 
\begin{align*}
\nrm{f_t\up{m}(x_t) - f^{\star, m}(x_t)} > \epsilon. \numberthis \label{eq:SS_multiple_proof1}
\end{align*}
Thus, we have that 
\begin{align*}
(\Term_B)_t &\leq Z_t \indic\crl{\pit(x_t) \neq \pistar(x_t), \Que(\effstar(x_t), \epsilon \ones) = 0}  \\ 
&\leq Z_t \indic\crl{\exists m \in [M]: \nrm{f_t\up{m}(x_t) - f^{\star, m}(x_t)} > \epsilon}   \\
&\leq \sum_{m=1}^M Z_t \indic\crl{ \nrm{f_t\up{m}(x_t) - f^{\star, m}(x_t)} > \epsilon} \\
&\leq  \sum_{m=1}^M Z_t  \frac{ \nrm{f_t\up{m}(x_t) - f^{\star, m}(x_t)}^2}{\epsilon^2}, \numberthis \label{eq:SS_multiple_proof2}
\end{align*} where the second line plugs in \pref{eq:SS_multiple_proof1}, and the last line uses the fact that \(\indic\crl{a \geq b} \leq \nicefrac{a^2}{b^2}\) for all \(a, b \geq 0\).  
Gathering the above bound for all \(t \leq T\) implies that 
\begin{align*}
\Term_B &\leq \sum_{m=1}^M \sum_{t=1}^T Z_t  \frac{ \nrm{f_t\up{m}(x_t) - f^{\star, m}(x_t)}^2}{\epsilon^2} \\
&\leq \frac{1}{\epsilon^2} \sum_{m=1}^M \SquareLossHPLm, 
\end{align*}
where the last line follows from an application of \pref{lem:ss_multiple_utlity1}  for each \(m \in [M]\).

\item \textit{Bound on \(\Term_C\):}  Recall that 
\begin{align*}
\Term_C &\leq \sum_{t=1}^T \bar{Z}_t \indic\crl{\pit(x_t) \neq \pistar(x_t), \Que(\effstar(x_t), \epsilon \ones) = 0} \\
&\leq \sum_{t=1}^T \indic\crl{\Que(\effstar(x_t), \matDelta_t(x_t)) = 0, \pit(x_t) \neq \pistar(x_t)}, \numberthis \label{eq:ss_multiple_contradiction0} 
\end{align*} 
where the last line just plugs in the condition under which \(\bar{Z}_t = 1\), and removes the condition \( \Que(\effstar(x_t), \epsilon \ones) = 0\) from the indicator. Next, note that for any \(t \leq T\),  an application of \pref{lem:ss_multiple_utlity1} implies that with probability at least \(1 - \delta\), for all \(m \in M\), 
\begin{align*} 
\nrm*{f^{\star, m}(x_t) -  f\up{m}_t(x_t)} \leq \Delta\up{m}_t(x_t).  \numberthis \label{eq:ss_multiple_contradiction1} 
\end{align*}
However, if \(\Que(\effstar(x_t), \matDelta_t(x_t)) = 0\) then we must have that for all \(\eff'\) that satisfy \(\nrm{F'[:, m] - f\up{m}_t(x_t)} \leq \Delta\up{m}_t(x_t)\) for all \(m \in [M]\), we must have that \(\SelectAction{\eff_t(x_t)} = \SelectAction{F'(x_t)}\). However, since \(\effstar\)  satisfies \pref{eq:ss_multiple_contradiction1}, we get that  \(\SelectAction{\effstar(x_t)}\) \(= \SelectAction{F'(x_t)}\) or equivalently $\pit(x_t) = \pistar(x_t)$. This contradicts the conditions in \pref{eq:ss_multiple_contradiction0}, implying that 
\begin{align*}
\Term_C &= 0.
\end{align*}
\end{enumerate}

Gathering the bounds above, we get that 
\begin{align*} 
\RegT \leq  T_\epsilon + \frac{1}{\epsilon^2} \sum_{m=1}^M \SquareLossHPLm. 
\end{align*} 
Plugging in the form of $\SquareLossHPLm$ from \pref{lem:ss_multiple_utlity1}, and ignoring \(\log(1/\delta)\) factors and constants, we get  
\begin{align*} 
\RegT \lesssim T_\epsilon + \frac{1}{\lambda \epsilon^2}  \sum_{m=1}^M \LsExpected{\ls_\phi}{\cF\up{m}; T}.  \numberthis \label{eq:SS_slow_regret_bound_multiple}
\end{align*}  
Since \(\epsilon\) is a free parameter above, the final bound follows by choosing the best parameter \(\epsilon\). 

\subsubsection{Total Number of Queries} \label{app:ss_multiexpert_query_complexity} 

Let \(N_T\) denote the total number of expert queries made by the learner in \(T\) rounds of interactions, and let \(\epsilon > 0\) be a free parameter. We have: 
\begin{align*}
N_T &= \sum_{t=1}^T Z_t \\
 &=  \sum_{t=1}^T \indic\crl{\Que\prn*{F_t(x_t), \matDelta_t(x_t)} = 1}  \\
 &= \sum_{t=1}^T \indic\crl{\Que\prn*{F_t(x_t), \matDelta_t(x_t)} = 1, \Que(\effstar(x_t), \epsilon \ones) = 1} \\ 
    & \hspace{1in} + \sum_{t=1}^T \indic\crl{\Que\prn*{F_t(x_t), \matDelta_t(x_t)} = 1, \Que(\effstar(x_t), \epsilon \ones) = 0}  \\
  &\leq \sum_{t=1}^T \indic\crl{\Que(\effstar(x_t), \epsilon \ones) = 1} + \sum_{t=1}^T \indic\crl{\Que\prn*{F_t(x_t), \matDelta_t(x_t)} = 1, \Que(\effstar(x_t), \epsilon \ones) = 0}  \\
    &\leq \sum_{t=1}^T \indic\crl{\Que(\effstar(x_t), \epsilon \ones) = 1} \\
      & \hspace{1in}  + \sum_{t=1}^T \indic\crl{\Que\prn*{F_t(x_t), \matDelta_t(x_t)} = 1, \Que(\effstar(x_t), \epsilon \ones) = 0, \nrm{\matDelta_t(x_t)}_\infty \leq \nicefrac{\epsilon}{3}}  \\
   & \hspace{1in}  + \sum_{t=1}^T \indic\crl{\Que\prn*{F_t(x_t), \matDelta_t(x_t)} = 1, \Que(\effstar(x_t), \epsilon \ones) = 0, \nrm{\matDelta_t(x_t)}_\infty > \nicefrac{\epsilon}{3}} \\ 
    &= \Term_\epsilon + \Term_D + \Term_E, 
\end{align*} 
where in the last line, we plug in the definition of \(\Term_\epsilon\), and defined \(\Term_D\) and \(\Term_E\) as the second-last term and the last term respectively. We bound them separately below: 
\begin{enumerate}[label=\(\bullet\)] 
\item \textit{Bound on \(\Term_D\).} Recall that the \(t\)-th term in \((T_D)_t\) is 
\begin{align*}
(\Term_D)_t &= \indic\crl{\Que\prn*{F_t(x_t), \matDelta_t(x_t)} = 1, \Que(\effstar(x_t), \epsilon \ones) = 0, \sup_{m \in [M]} \Delta\up{m}_t(x_t) \leq \nicefrac{\epsilon}{3}}. 
\end{align*} 
In the following, we will show that for any \(t \leq T\), all the conditions in the above indicator can not hold simultaneously. First note that if \(\Que\prn*{\eff_t(x_t), \matDelta_t(x_t)} = 1\), then there exists an \(\wt \eff\) such that 
\begin{align*}
\SelectAction{\wt \eff(x_t))} \neq \SelectAction{\eff_t(x_t)} \numberthis \label{eq:SS_multiple_proof8}
\end{align*} and 
\begin{align*}
\forall m \in [M]: \qquad \nrm{\wt \eff(x_t)[:, m] - \eff_t(x_t)[:, m]} \leq \Delta_t\up{m}(x_t). \numberthis \label{eq:SS_multiple_proof9} 
\end{align*}

On the other hand, recall \pref{lem:ss_multiple_utlity1}, which implies that 
\begin{align*}
\forall m \in [M]: \qquad \nrm{\effstar(x_t)[:, m] - \eff_t(x_t)[:, m]} \leq \Delta_t\up{m}(x_t).  \numberthis \label{eq:SS_multiple_proof10}
\end{align*}

Triangle inequality along with the bounds \pref{eq:SS_multiple_proof9} and \pref{eq:SS_multiple_proof10} implies that 
\begin{align*}
\forall m \in [M]: \qquad \nrm{\effstar(x_t)[:, m] - \wt \eff(x_t) [:, m]} \leq 2\Delta_t\up{m}(x_t) < \epsilon,   \numberthis \label{eq:ss_multiple_contradiction_first3} 
\end{align*}
where the second inequality holds whenever \(\sup_m \Delta_t\up{m}(x_t) \leq \nicefrac{\epsilon}{3}\). 

However, note that \pref{eq:ss_multiple_contradiction_first3} contradicts the fact that \(\Que(\effstar(x_t), \epsilon \ones) = 0\) since both \(\wt F\) and \(F_t\) satisfy the norm constraints in the definition of \(\Que\), but we can not simultaneously have that  
\begin{align*}
\SelectAction{\effstar(x_t))} = \SelectAction{\eff_t(x_t)} = \SelectAction{\wt \eff(x_t)}, 
\end{align*} due to \pref{eq:SS_multiple_proof8}. This implies that  
\begin{align*}
(\Term_D)_t &= 0.
\end{align*} 

\item \textit{Bound on \(\Term_E\).} Again using the definition of \(Z_t\), we note that 
\begin{align*}
\Term_E &\leq \sum_{t=1}^T Z_t \indic\crl{\nrm{\matDelta_t(x_t)}_\infty > \nicefrac{\epsilon}{3}} \\
&=  \sum_{t=1}^T Z_t \indic\crl{\exists m \in [M]: \Delta\up{m}(x_t) > \nicefrac{\epsilon}{3}} \\
&\leq \sum_{m=1}^M  \sum_{t=1}^T Z_t \indic\crl{\Delta\up{m}(x_t) > \nicefrac{\epsilon}{3}} 
\end{align*} 
Using \pref{lem:multiple_eluder_based_bound}   to bound the term in the right hand side for each \(m \in [M]\), we get that 
\begin{align*}
\Term_E &\leq   \frac{180 \SquareLossHPLm}{\epsilon^2} \sum_{m=1}^M \eluderS(\cF\up{m}, \nicefrac{\epsilon}{3} ; \fstarm)
\end{align*} 
\end{enumerate} 

Gathering the bounds above, we have 
\begin{align*}
N_T &\leq  T_\epsilon  +  \frac{180}{\epsilon^2} \sum_{m=1}^M \SquareLossHPLm \cdot \eluderS(\cF\up{m}, \nicefrac{\epsilon}{3} ; \fstarm).
\end{align*}

Plugging in the form of $\SquareLossHPLm$, we get that 
\begin{align*}
N_T &=\wt{\cO}\prn*{ T_\epsilon  +  \frac{1}{\lambda  \epsilon^2} \sum_{m=1}^M \LsExpected{\ls_\phi}{\cF\up{m}; T} \cdot \eluderS(\cF\up{m}, \nicefrac{\epsilon}{3} ; \fstarm) + \frac{1}{\lambda^2 \epsilon^2} \log(1/\delta)}. 
\end{align*}
Since \(\epsilon\) is a free parameter above, the final bound follows by choosing the best parameter \(\epsilon\). 

\subsubsection{Improved Bound when Aggregator \(\mathscr{A}\) is \(\eta\)-Lipschitz}   \label{app:ss_multiexpert_main_extension} 
In this section, we show that the \(1/\eps^2\) term in the regret bound  \pref{eq:SS_slow_regret_bound_multiple} can be improved  under additional modeling assumptions. Firstly, we assume that corresponding to the contexts \(\crl{x_t}_{t \leq T}\), the labels \(\crl{y_t}_{t=1}^T\) that we compare to in the regret definition in \pref{eq:SS_regret_definition_multiple} are sampled according to  $
y_t \sim \aggr{\phi(\effstar(x_t))}$ (See \pref{eq:ss_multiple_y_process}  for more details.). Secondly, we assume that the aggregator function \(\mathscr{A}\) that combines experts models is \(\eta\)-Lipschitz. 

\begin{assumption}[Lipschitzness] 
\label{ass:eta_lipschitz} 
For any \(U, V \in \bbR^{K \times M}\), the function  \(\mathscr{A}: \bbR^{K \times M} \mapsto \Delta(K)\) satisfies $$\nrm{\mathscr{A}(U) - \mathscr{A}(V)} \leq \eta \nrm{U - V}_F.$$ 
\end{assumption}

We first provide a useful technical lemma that is crucial for our improved bound. 
\begin{lemma} 
\label{lem:chi_lipschitzness} 
Suppose that for any \(x\), the labels \(y\) are generated from \(y \sim \mathscr{A}(\phi(\effstar(x)))\). Then, for any \(x \in \cX\) and function \(\eff  \in \cF^{M}: \cX \mapsto \bbR^{K \times M}\), we have 
\begin{align*}
\abs{\Pr[y=\pistar(x)] - \Pr[y=\pi(x)]} \leq 2 \eta \gamma \nrm{\effstar(x) - \eff(x)}_{_F}
\end{align*}
where \(\pi(x) = \SelectAction{\eff(x)}= \argmax_k \mathscr{A}(\phi(\eff(x))[k]\), and \(\pistar(x) = \SelectAction{\effstar(x)} = \argmax_k \mathscr{A}(\phi(\effstar(x)))[k]\). 
\end{lemma}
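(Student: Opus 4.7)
} The plan is to mirror the argument used in \pref{lem:margin_utlity2} for the single-expert case, but now composed with the aggregator $\mathscr{A}$. Writing $p = \mathscr{A}(\phi(\effstar(x)))$ and $q = \mathscr{A}(\phi(\eff(x)))$ as probability vectors in $\Delta(K)$, by the labeling process $y \sim \mathscr{A}(\phi(\effstar(x)))$ we have $\Pr[y = \pistar(x)] = p[\pistar(x)]$ and $\Pr[y = \pi(x)] = p[\pi(x)]$. The case $\pi(x) = \pistar(x)$ is trivial, so assume $\pi(x) \neq \pistar(x)$. Since $\pi(x)$ is the $\argmax$ coordinate of $q$, we have $q[\pi(x)] \geq q[\pistar(x)]$, and hence
\begin{align*}
p[\pistar(x)] - p[\pi(x)]
&\leq \prn*{p[\pistar(x)] - q[\pistar(x)]} + \prn*{q[\pi(x)] - p[\pi(x)]} \\
&\leq 2 \nrm{p - q}_\infty \leq 2 \nrm{p - q}_2.
\end{align*}

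Next, I apply the $\eta$-Lipschitz property of $\mathscr{A}$ from \pref{ass:eta_lipschitz} to obtain $\nrm{p - q}_2 \leq \eta \nrm{\phi(\effstar(x)) - \phi(\eff(x))}_F$. To relate the Frobenius norm of $\phi(\effstar(x)) - \phi(\eff(x))$ back to $\nrm{\effstar(x) - \eff(x)}_F$, I use the fact that $\phi = \grad \Phi$ is $\gamma$-Lipschitz (which follows from the $\gamma$-smoothness of $\Phi$ in \pref{ass:link_fn_properties}). Applying this column-wise,
\begin{align*}
\nrm{\phi(\effstar(x)) - \phi(\eff(x))}^2_F &= \sum_{m=1}^M \nrm{\phi(\effstar(x)[:, m]) - \phi(\eff(x)[:, m])}^2 \\
&\leq \gamma^2 \sum_{m=1}^M \nrm{\effstar(x)[:, m] - \eff(x)[:, m]}^2 = \gamma^2 \nrm{\effstar(x) - \eff(x)}^2_F.
\end{align*}

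Chaining the three bounds yields $\abs{\Pr[y = \pistar(x)] - \Pr[y = \pi(x)]} \leq 2 \eta \gamma \nrm{\effstar(x) - \eff(x)}_F$, as desired. There is no serious obstacle here: the argument is a direct composition of (i) the margin/argmax trick from \pref{lem:margin_utlity2}, (ii) Lipschitzness of $\mathscr{A}$, and (iii) smoothness of $\Phi$ applied coordinate-wise. The only point that deserves a brief check is the column-wise application of the $\gamma$-Lipschitz constant to lift the scalar/vector smoothness assumption on $\Phi$ to the matrix-valued input $\effstar(x) \in \bbR^{K \times M}$, which is what justifies using the Frobenius norm on the right-hand side of the final bound.
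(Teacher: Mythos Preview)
Your proposal is correct and follows essentially the same approach as the paper: the paper also assumes $\pi(x)\neq\pistar(x)$, uses the inequality $\aggr{\phi(\eff(x))}[\pi(x)] \geq \aggr{\phi(\eff(x))}[\pistar(x)]$ to bound the probability gap by $2\nrm{\aggr{\phi(\effstar(x))}-\aggr{\phi(\eff(x))}}$, and then invokes the $\eta$-Lipschitzness of $\mathscr{A}$ together with the $\gamma$-Lipschitzness of $\phi$. Your treatment is in fact slightly more careful than the paper's, since you explicitly justify the column-wise application of $\phi$'s $\gamma$-Lipschitz constant to pass from $\nrm{\phi(\effstar(x))-\phi(\eff(x))}_F$ to $\gamma\nrm{\effstar(x)-\eff(x)}_F$, whereas the paper simply asserts this step (and even drops the $\gamma$ factor in its final displayed line, though it mentions $\gamma$-Lipschitzness in the surrounding text).
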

\begin{proof} Assume that \(\pistar(x) \neq \pi(x)\). Recalling that \(\pistar(x)\) satisfies the condition \(\pistar(x) = \argmax_{k} \mathscr{A}(\effstar(k))\), we get that \begin{align*}
0 &\leq \Pr[y_t=\pistar(x)] - \Pr[y_t=\pi(x)] \\ 
&= \aggr{\phi(\effstar(x))}[\pistar(x)] - \aggr{\phi(\effstar(x))}[\pi(x)] \\ 
&\leq \aggr{\phi(\effstar(x))}[\pistar(x)] - \aggr{\phi(\effstar(x))}[\pi(x)] + \aggr{\phi(\eff(x))}[\pi(x)] - \aggr{\phi(\eff(x))}[\pistar(x)] \\ 
&\leq 2\nrm{\aggr{\phi(\effstar(x))} - \aggr{\phi(\eff(x))}} \\
&\leq 2 \eta \nrm{\effstar(x) - \eff(x)}_{_F}, 
\end{align*}
where in the third line we used the fact that \(\aggr{\phi(\eff(x))}[\pi(x)] \geq \aggr{\phi(\eff(x))}[\pistar(x)]\), and the last line holds because \(\mathscr{A}\) is \(\eta\)-Lipschitz and \(\phi\) is \(\gamma\)-Lipschitz.    
\end{proof}

\paragraph{Improved Regret Bound.} Starting from the definition of regret, we note that:
\begin{align*} 
\RegT &=  \sum_{t=1}^T \Pr \prn*{\pit(x_t) \neq y_{t}} - \Pr \prn*{\pistar(x_t) \neq y_t} \\ 
		   &\leq \sum_{t=1}^T \indic\crl{\pit(x_t) \neq \pistar(x_t)} \cdot \abs*{\Pr \prn*{y_t = \pistar(x_t)} - \Pr \prn*{y_t = \pit(x_t)}} \\ 
		   &\leq  \sum_{t=1}^T \indic\crl{\pit(x_t) \neq \pistar(x_t)} \cdot 2 \eta \gamma \nrm{\effstar(x_t) - \eff_t(x_t)}_F, 
\end{align*} 
where the last line plugs in \pref{lem:chi_lipschitzness}. Let \(\epsilon > 0\) be a free parameter. We can further decompose the above as 
\begin{align*} 
\RegT &\leq  \sum_{t=1}^T \indic\crl{\pit(x_t) \neq \pistar(x_t), \Que(\effstar(x_t), \epsilon \ones) = 1} \cdot 2 \eta \gamma \nrm{\effstar(x_t) - \eff_t(x_t)}_F \\ 
&\hspace{0.5in} + \sum_{t=1}^T Z_t \indic\crl{\pit(x_t) \neq \pistar(x_t), \Que(\effstar(x_t), \epsilon \ones) = 0} \cdot 2 \eta \gamma \nrm{\effstar(x_t) - \eff_t(x_t)}_F \\
&\hspace{0.5in} + \sum_{t=1}^T \bar{Z}_t \indic\crl{\pit(x_t) \neq \pistar(x_t), \Que(\effstar(x_t), \epsilon \ones) = 0} \cdot 2 \eta \gamma \nrm{\effstar(x_t) - \eff_t(x_t)}_F \\
&= \Term_A + \Term_B + \Term_C, \numberthis \label{eq:SS_multiple_proof_alt6}
\end{align*} where \(\Term_{A}, \Term_{B}\) and \(\Term_{C}\) are the first term, second term and the third term in the previous line, respectively. Using the identical analysis as in \pref{app:ss_multiexpert_regret_bound}, we can show that 
\begin{align*}
\Term_A \leq T_\epsilon \qquad \text{and,} \qquad \Term_C = 0. 
\end{align*}
We next provide the improved bound for \(\Term_B\).  Using the definition of the function \(\Que\), we note that if \(\pit(x_t) \neq \pistar(x_t)\) and \(\Que(\effstar(x_t), \epsilon \ones) = 0\), then there must exists an \(m \in [M]\) for which 
\begin{align*}
\nrm{f_t\up{m}(x_t) - f^{\star, m}(x_t)} > \epsilon. \numberthis \label{eq:SS_multiple_proof_alt1}
\end{align*}
Thus, considering only the \(t\)-th item in \(\Term_B\), we get that 
\begin{align*}
(\Term_B)_t &\leq Z_t \indic\crl{\pit(x_t) \neq \pistar(x_t), \Que(\effstar(x_t), \epsilon \ones) = 0}  \cdot 2 \eta \gamma \nrm{\effstar(x_t) - \eff_t(x_t)}_F \\  
&\leq  2 \eta \gamma Z_t \indic\crl{\exists m \in [M]: \nrm{f_t\up{m}(x_t) - f^{\star, m}(x_t)} > \epsilon} \nrm{\effstar(x_t) - \eff_t(x_t)}_F.  \numberthis \label{eq:SS_multiple_proof_alt3}
\end{align*}
Let \(\cM_t \ldef{} \crl{m \in [M]: \nrm{f_t\up{m}(x_t) - f^{\star, m}(x_t)} > \epsilon}\), and note that 
\begin{align*}
\nrm{\effstar(x_t) - \eff_t(x_t)}_F &= \sqrt{\sum_{m=1}^M \nrm{f_t\up{m}(x_t) - f^{\star, m}(x_t)}^2} \\
&\leq \sqrt{(M - \abs{\cM_t}) \epsilon^2 + \sum_{m \in \cM_t} \nrm{f_t\up{m}(x_t) - f^{\star, m}(x_t)}^2} \\ 
&\leq \sqrt{M} \epsilon + \sum_{m \in \cM_t}\nrm{f_t\up{m}(x_t) - f^{\star, m}(x_t)},  \numberthis \label{eq:SS_multiple_proof_alt4}
\end{align*} where the second line uses the definition of \(\cM_t\) and the last line is due to subadditivity of \(\sqrt{\cdot}\). Using the definition of \(\cM_t\) and the bound \pref{eq:SS_multiple_proof_alt4} in \pref{eq:SS_multiple_proof_alt3}, we get that 
\begin{align*}
(\Term_B)_t &\leq  2 \eta \gamma Z_t \indic\crl{\forall m \in \cM_t: \nrm{f_t\up{m}(x_t) - f^{\star, m}(x_t)} > \epsilon} \nrm{\effstar(x_t) - \eff_t(x_t)}_F \\
&\leq 2 \eta \gamma \sqrt{M} Z_t \indic\crl{\forall m \in \cM_t: \nrm{f_t\up{m}(x_t) - f^{\star, m}(x_t)} > \epsilon} \epsilon \\
&\hspace{0.5in} + 2 \eta  \gamma \sum_{m \in \cM_t} Z_t \indic\crl{\forall m \in \cM_t: \nrm{f_t\up{m}(x_t) - f^{\star, m}(x_t)} > \epsilon} \nrm{f_t\up{m}(x_t) - f^{\star, m}(x_t)}. 
\end{align*}

For the first term in the above,
\begin{align*}
Z_t \indic\crl{\forall m \in \cM_t: \nrm{f_t\up{m}(x_t) - f^{\star, m}(x_t)} > \epsilon} \epsilon &\leq Z_t \indic\crl{\exists m \in [M]: \nrm{f_t\up{m}(x_t) - f^{\star, m}(x_t)} > \epsilon} \epsilon \\
 &\leq \sum_{m=1}^M Z_t \indic\crl{\nrm{f_t\up{m}(x_t) - f^{\star, m}(x_t)} > \epsilon} \epsilon \\
&\leq \sum_{m=1}^M Z_t  \frac{ \nrm{f_t\up{m}(x_t) - f^{\star, m}(x_t)}^2}{\epsilon^2} \cdot \epsilon
\end{align*} where the last line follows by using the fact that \(\indic\crl{a \geq b} \leq \frac{a^2}{b^2}\) for all \(a, b \geq 0\).  For the second term, 
\begin{align*}
 \hspace{1in}  &\hspace{-1.5in}\sum_{m \in \cM_t} Z_t \indic\crl{\forall m \in \cM_t: \nrm{f_t\up{m}(x_t) - f^{\star, m}(x_t)} > \epsilon} \nrm{f_t\up{m}(x_t) - f^{\star, m}(x_t)}  \\ 
 &\leq  \sum_{m \in \cM_t} Z_t \indic\crl{\nrm{f_t\up{m}(x_t) - f^{\star, m}(x_t)} > \epsilon} \nrm{f_t\up{m}(x_t) - f^{\star, m}(x_t)} \\ 
 &\leq  \sum_{m \in \cM_t} Z_t \frac{\nrm{f_t\up{m}(x_t) - f^{\star, m}(x_t)}^2}{\epsilon} \\
 &\leq \sum_{m=1}^M Z_t \frac{\nrm{f_t\up{m}(x_t) - f^{\star, m}(x_t)}^2}{\epsilon}, 
 \end{align*}
where second last line uses that \(\indic\crl{a \geq b} \leq \nicefrac{a}{b}\) for all \(a, b \geq 0\). Taking the two together, we get 
\begin{align*}
(\Term_B)_t &\leq 2 \eta \gamma \prn*{\sqrt{M} + 1}  \sum_{m=1}^M Z_t \frac{\nrm{f_t\up{m}(x_t) - f^{\star, m}(x_t)}^2}{\epsilon} \\ &\leq 4 \eta \gamma \sqrt{M} \sum_{m=1}^M Z_t \frac{\nrm{f_t\up{m}(x_t) - f^{\star, m}(x_t)}^2}{\epsilon}. \numberthis \label{eq:SS_multiple_proof_alt5}
\end{align*}

The above implies that  
\begin{align*}
\Term_B = \sum_{t=1}^T (\Term_B)_t &\leq \frac{4 \eta \gamma \sqrt{M}}{\epsilon} \sum_{m=1}^M \sum_{t=1}^T Z_t \nrm{f_t\up{m}(x_t) - f^{\star, m}(x_t)}^2 \\
&\leq  \frac{4 \eta \gamma \sqrt{M}}{\epsilon} \sum_{m=1}^M \SquareLossHPLm, 
\end{align*}
where the last line plugs in the bound in \pref{lem:ss_multiple_utlity1}. 

Using the above derived bounds on \(\Term_A\), \(\Term_B\) and \(\Term_C\) in \pref{eq:SS_multiple_proof_alt6},  we get that: 
\begin{align*}
\RegT \leq  T_\epsilon + \frac{4 \eta \gamma \sqrt{M}}{\epsilon} \sum_{m=1}^M \SquareLossHPLm. 
\end{align*}
Plugging in the form of $\SquareLossHPLm$ from \pref{lem:ss_multiple_utlity1}, and ignoring \(\log\) factors and constants, we get that 
\begin{align*} 
\RegT \lesssim T_\epsilon + \frac{4 \eta \gamma  \sqrt{M}}{\lambda \epsilon} \sum_{m=1}^M \LsExpected{\ls_\phi}{\cF\up{m}; T} +  \frac{4 \eta \gamma  M^{3/2}}{\lambda^2 \epsilon} \log(1/\delta).  
\end{align*} 
We finally note that the regret bound in \pref{app:ss_multiexpert_regret_bound}  also holds simultaneously (since it holds even when we do not make the assumption \ref{ass:eta_lipschitz}). Thus, 
\begin{align*} 
\RegT \lesssim T_\epsilon + \min\crl{\frac{1}{\lambda \epsilon^2}, \frac{4 \eta \gamma\sqrt{M}}{\lambda \epsilon}} \sum_{m=1}^M \LsExpected{\ls_\phi}{\cF\up{m}; T}.  
\end{align*} 

Notice that \(\epsilon\) is a free parameter above so the final bound follows by taking \(\inf\) over all feasible \(\epsilon\). The query complexity bound remains identical to the one shown in \pref{app:ss_multiexpert_query_complexity}.
\clearpage

} 
\subsection{Imitation Learning (Proof of \pref{thm:il_eluder_multiple})} \label{app:il_eluder_multiple} 

We next proceed to imitation learning with multiple experts. 

\begin{algorithm}[H]
\begin{algorithmic}[1]

\Require Parameters \(\delta, \gamma, \lambda, T\), function classes \(\crl{\cF\up{m}_h}_{h \leq H, m \leq M}\), online oracles \(\crl{\oracle\up{m}_{h}}_{h \leq H, m \leq M}\) w.r.t.~\(\ls_\phi\). 
\State Set \( \SquareLossHPRLP = \frac{4}{\strconv} \LsExpected{\ls_\phi}{\cF\up{m}_h; T} + \frac{112}{\strconv^2} \log(4M H \log^2(T)/\delta).\) 
\State Compute $f\up{m}_{1, h}= \oracle_{1, h}(\emptyset)$ for each \(h \in [H]\) and \(m \in [M]\). 
\For{$t = 1$ to \(T\)} 
\State Nature chooses the state $x_{t, 1}$.   
\For {\(h = 1\) to \(H\)}  
\State Define \(\eff\up{m}_{t, h}(x) \ldef{} [f\up{1}_{t, h}(x), \dots, f\up{M}_{t, h}(x)]\). 
\State Learner plays \(\wh  y_{t, h} =  \SelectAction{\eff_{t, h}(x_{t, h})}\). 
\State Learner transitions to the next state in this round \(x_{t, h+1} \leftarrow  \detdynamics_{t, h}(x_{t, h}, \wh y_{t, h})\). 
\State For each \(m \in [M]\), learner computes 
\begin{align*} 
\Delta\up{m}_{t, h}(x_{t, h}) &\ldef{} \max_{f \in \cF\up{m}_h}  ~ \nrm{f(x_{t, h}) - f\up{m}_{t, h}(x_{t, h})}  \\ 
&\hspace{0.5in} \quad \text{s.t.} \quad \sum_{s=1}^{t - 1} Z_{s, h} \nrm*{f(x_{s, h}) - f\up{m}_{s, h}(x_{s, h})}^2  \leq \SquareLossHPRLP. \numberthis \label{eq:il_multiexpert_delta}     
\end{align*} 
\hspace{0.45in} and defines \(\matDelta_{t, h}(x_{t, h}) = [\Delta\up{1}_{t, h}(x_{t, h}), \dots, \Delta\up{M}_{t, h}(x_{t, h})]\). 
\State\label{line:il_adv_query_multiple}Learner decides whether to query: \(Z_{t, h} = \Que(\eff_{t, h}(x_{t, h}), \matDelta_{t, h}(x_{t, h}))\) %
\If{$Z_{t,h}=1$} 
\For{\(m = 1\) to \(M\)}
\State Learner queries expert \(m\) for its label \(y\up{m}_{t, h}\)
 for \(x_{t, h}\). 
\State \(f\up{m}_{t+1, h} \leftarrow \oracle\up{m}_{t + 1, h}(\crl{x_{t, h}, y_{t, h}})\) 
\EndFor
\Else{}  
\State $f\up{m}_{t+1,h}\leftarrow f\up{m}_{t,h}$ for each \(m \in [M]\). 
\EndIf
\EndFor 
\EndFor 
\end{algorithmic}

\caption{Inte\textbf{RA}cti\textbf{V}e \textbf{I}mitati\textbf{O}n \textbf{L}earning V\textbf{I}a Active Queries to \(\mb{M}\) Experts (\RAVIOLI\(-\mathsf{M}\))}  \label{alg:il_eluder_multiple} 
\end{algorithm}

We first discuss the setup and the relevant notation. We are in the imitation learning setup introduced in 
\pref{app:il_adv_main}. In particular, the learner interacts with \(M\) experts in \(T\) episodes/rounds, each consisting of \(H\) timesteps. For any \(h \leq H\), each of the \(M\) experts have a ground truth model \(\fstarmh\) respectively. Given the context \(x_{t, h}\) for time step \(h\) in round \(t\), the learner plays the actions \(\wh y_{t, h} \in [K]\), and can additionally choose to query the experts (by setting \(Q_{t, h} = 1\)) to receive noisy feedback \(y_{t, h}^m \sim \phi(\fstarmh(x_t))\) from each expert \(m \in [M]\). After playing the chosen action \(\wh y_{t, h}\), the learner then transitions to state \(x_{t, h+1} \leftarrow \detdynamics_{t, h}(x_{t, h}, \wh y_{t, h})\)  where \(\crl{\detdynamics_{t, h}}_{h \in [H]}\) is a sequence of deterministic dynamics (unknown to the learner).  

The setup for imitation learning builds on the notation introduced for selective sampling with multiple experts in \pref{app:ss_multiexpert_main}. In particular,  in \pref{alg:il_eluder_multiple}, for any round \(t \leq T\) and \(h \leq H\): 
\begin{enumerate}[label=\(\bullet\)] 
\item The aggregation function \(\aggrSymb: \bbR^{K \times M} \mapsto \bbR^K\),
known to the learner, maps the predictions of the estimated  experts to  distributions over actions. Some illustrative examples are given in \pref{sec:multiple_experts_main} under \pref{eq:IL_comparator_policy}.     

\item The function \(\SelectActionName{}:\bbR^{K \times M} \mapsto [K]\) chooses the action to play at round \(t\), and is defined as: 
\begin{align*}
\SelectAction{\eff_{t, h}(x_{t, h})} = \argmax_k \aggr{\score(F_{t, h}(x_{t, h}))}[k],  \numberthis \label{eq:multiexpert_selectaction_defn} 
\end{align*}  
where \(F_{t, h}(x_{t, h})) = [f\up{1}_{t, h}(x_{t, h})), \dots, f\up{M}_{t, h}(x_{t, h}))]\), and \(\phi\) denotes the link-function given in \pref{eq:main_phi_model}. 
\item Our goal in \pref{alg:il_eluder_multiple} is to compete with the policy \(\pistar\) defined such that for any \(x \in \cX_h\), 
\begin{align*}
\pistar(x) = \SelectAction{\effstarh(x)} \numberthis \label{eq:IL_multiple_policy} 
\end{align*}
 where \(\effstarh(x) = [f^{\star, 1}_h(x), \dots, f^{\star, 1}_h(x)] \in \bbR^{K \times M}\).

\item Given the context \(x_{t, h}\) and the function \(F_{t, h}: \cX \mapsto \bbR^{K \times M}\), the learner decided whether to query via \(Z_{t, h} = \Que(F_{t, h}(x_{t, h}), \matDelta_{t, h}(x_{t, h}))\)  where we define the function \(\Que: \bbR^{K \times M} \times \bbR^M\) as 
\begin{align*}
\Que(U; \vec \epsilon) &\ldef{} \sup_{V \in \bbR^{K \times M}} ~  \indic\crl{\SelectAction{U} \neq \SelectAction{V}}  \quad \\&\hspace{0.5in} \text{s.t.} \quad \nrm{U[:, m] - V[:, m]}_2 \leq \vec \epsilon[m] \qquad \forall m \leq M. \numberthis \label{eq:T_defn}  
\end{align*} 
\end{enumerate}

At round \(t\), the learner interactions with transition dynamics  \(\crl{\detdynamics_{t, h}}_{h \leq H}\) and collects data. Without loss of generality, we assume that the learner always starts from the state \(x_{t, 1}\). We next recall the interaction at round \(t\): 
\begin{enumerate}[label=\(\bullet\)] 
\item The learner collects data using the policy \(\pi_t\), defined such that 
\begin{align*} 
\pi_t (x) = \SelectAction{f_{t, h}(x_{t, h})}. 
\end{align*}
for any \(h \leq H\), and state \(x \in \cX_h\). 
\item For any policy \(\pi\), we use the notation \(\tau^\pi_t\) to denote the (counterfactual) trajectory that would have been generated by running \(\pi\) on the deterministic dynamics \(\crl{\detdynamics_{t, h}}_{h \leq H}\) with the start state \(x_{t, 1}\), i.e.~
\begin{align*} 
\tau^\pi_t = \crl*{x_{t, 1}^\pi, \pi(x_{t, 1}^\pi), \dots, x_{t, H}^\pi, \pi(x_{t, H}^\pi)},  \numberthis \label{eq:il_dis2} 
\end{align*} 
where \(x_{t, 1}^\pi = x_{t, 1}\) and \(x_{t, h+1}^\pi = \detdynamics_{t, h}(x_{t, h}^\pi, \pi(x_{t, h}^\pi))\). 
\item For any trajectory \(\tau = \crl{x_1, a_1, \dots, x_H, a_H}\), we define the total return 
\begin{align*} 
R(\tau) &= \sum_{h=1}^H r(x_h, a_h).  \numberthis \label{eq:total_return}
\end{align*}
\end{enumerate}

The goal of the learner is to minimize its regret which is given by 
\begin{align*}
\RegT &= \sum_{t=1}^T R(\tau^\pistar_t) - \sum_{t=1}^T R(\tau^{\pit}_t).  \numberthis \label{eq:IL_regret_definition_multiple} 
\end{align*}

Finally, our bounds depend on the  notation of margin defined w.r.t.~the function \(\Que\) defined above. In particular, for a sequence of contexts \(\crl{\crl{T_{t, h}}_{h \leq H}}_{t \leq T}\), we define \(T_{\epsilon, h}\) as 
\begin{align*}
T_{\epsilon, h} = \sum_{t=1}^T \indic\crl{\Que\prn{\effstarh(x_{t, h}^\pistar), \epsilon \ones}  = 1}. \numberthis \label{eq:IL_margin_multiple_defn} 
\end{align*}
In the above we count for the number of time steps when the counterfactual states \(\crl{x_{t, h}^\pistar}_{t \leq T}\),  reached under the given dynamics if we had executed \(\pistar\), are within the \(\epsilon\)-margin region. Note that even though the observed states \(\crl{\crl{x_{t, h}}_{h\leq H}}_{t \leq T}\) are such that \(\sum_{t=1}^T \indic\crl{\Que\prn{\effstarh(x_{t, h}), \epsilon \ones}  = 1}\) is large, we would not pay for this in our margin term \(T_{\epsilon, h}\). 

\subsubsection{Supporting Technical Results} 
\begin{lemma}
\label{lem:il_DIS_utility}  
With probability at least \(1 - {}\delta\),  for any \(m \leq M\), and \(t \leq T\) and \(h \leq H\), the function \(\fstarm\) satisfies 
\begin{enumerate}[label=\((\alph*)\)]
\item \(\sum_{s=1}^{t-1} Z_{s, h} \nrm{\fstarmh(x_{s, h}) - f\up{m}_{s, h}(x_{s, h})}^2 \leq \SquareLossHPRLP\) , 
\item  \(\nrm{\fstarmh(x_{t, h}) - f\up{m}_{t, h}(x_{t, h})} \leq \Delta\up{m}_{t, h}(x_{t, h})\), 
\end{enumerate} 
where \( \SquareLossHPRLP = \frac{4}{\strconv} \LsExpected{\ls_\phi}{\cF\up{m}_h; T} + \frac{112 }{\strconv^2} \log(4M H \log^2(T)/\delta).\) 

\end{lemma}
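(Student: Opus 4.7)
The plan is to reduce this to the combination of the single-expert IL guarantee (\pref{lem:adv_utlity3}) and the multiple-expert selective sampling guarantee (\pref{lem:ss_multiple_utlity1}), since all the ingredients are already in place. Fix any layer $h \in [H]$ and any expert index $m \in [M]$. Recall that in \pref{alg:il_eluder_multiple} the online regression oracle $\oracle^{(m)}_{h}$ is updated only on rounds for which $Z_{s,h}=1$, on the labelled sample $(x_{s,h}, y^{m}_{s,h})$ with $y^{m}_{s,h} \sim \phi(\fstarmh(x_{s,h}))$. Therefore the guarantee \pref{eq:main_phi_regret} for $\oracle^{(m)}_{h}$ yields a regret bound with respect to $\ls_\phi$ on the subsequence of rounds where $Z_{s,h}=1$; rounds with $Z_{s,h}=0$ contribute zero to the regression loss and can be dropped.

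First, I would apply \pref{lem:tool_sq_difference} to this subsequence for the fixed pair $(m,h)$. This converts the regret bound on $\ls_\phi$ into the desired quadratic concentration inequality
\begin{align*}
\sum_{s=1}^{t-1} Z_{s,h}\,\nrm{\fstarmh(x_{s,h}) - f^{(m)}_{s,h}(x_{s,h})}^{2} \leq \frac{4}{\strconv}\LsExpected{\ls_\phi}{\cF^{(m)}_h; T} + \frac{112}{\strconv^{2}}\log\!\prn*{4\log^{2}(T)/\delta'},
\end{align*}
uniformly over $t \leq T$, with probability at least $1-\delta'$. Setting $\delta' = \delta/(MH)$ and taking a union bound over all pairs $(m,h) \in [M]\times[H]$ absorbs the extra $\log(MH)$ factor into the definition of $\SquareLossHPRLP$, establishing part (a).

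Part (b) is then immediate from the definition of $\Delta^{(m)}_{t,h}$ in \pref{eq:il_multiexpert_delta}: on the high-probability event of part (a), the function $\fstarmh$ satisfies the constraint $\sum_{s=1}^{t-1} Z_{s,h}\,\nrm*{f(x_{s,h}) - f^{(m)}_{s,h}(x_{s,h})}^{2} \leq \SquareLossHPRLP$ (with $f = \fstarmh$), so it is feasible in the optimization problem that defines $\Delta^{(m)}_{t,h}(x_{t,h})$. Since that problem returns the maximum of $\nrm{f(x_{t,h}) - f^{(m)}_{t,h}(x_{t,h})}$ over all feasible $f$, we immediately conclude $\nrm{\fstarmh(x_{t,h}) - f^{(m)}_{t,h}(x_{t,h})} \leq \Delta^{(m)}_{t,h}(x_{t,h})$.

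There is no real obstacle here; the only bookkeeping point is to make sure the $MH$ union bound is correctly absorbed into the $\log$ factor of $\SquareLossHPRLP$, and to verify that the fact that the sampling rule $Z_{s,h}$ is itself data-dependent is harmless, because \pref{lem:tool_sq_difference} only requires the oracle regret bound to hold on the subsequence where updates occur, which it does.
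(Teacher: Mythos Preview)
Your proposal is correct and follows essentially the same approach as the paper: apply \pref{lem:tool_sq_difference} for each fixed pair $(m,h)$ on the subsequence of rounds with $Z_{s,h}=1$, then take a union bound over all $MH$ pairs to absorb the extra $\log(MH)$ factor into $\SquareLossHPRLP$; part (b) follows immediately from feasibility of $\fstarmh$ in the optimization \pref{eq:il_multiexpert_delta}. The paper's proof is slightly terser but the argument is identical.
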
 
\begin{proof} ~
\begin{enumerate}[label=\((\alph*)\)]
\item	 We first note that we do not query oracle when \(Z_{s, h} = 0\), and thus we can ignore the time steps for which \(Z_{s, h} = 0\). Hence, for each $h\in[H]$ and $m\in[M]$, applying \pref{lem:tool_sq_difference} yields 
	 \[
	 \sum_{s=1}^{t - 1} Z_{s, h} \nrm{\fstarmh(x_{s, h}) - f_{s, h}(x_{s, h})}^2  \leq \frac{4}{\strconv} \LsExpected{\ls_\phi}{\cF\up{m}_h; T} + \frac{112 }{\strconv^2} \log(4 \log^2(T)/\delta)
	 \]
	 for all $t\leq T$. Then, we take the union bound for all $h\in[H]$ and $m\in[M]$, which completes the proof.
\item The second part follows from using part-(a) along with the definition in \pref{eq:il_multiexpert_delta}.  
\end{enumerate}
\end{proof}  

The next lemma bound the number of times when  \(\Delta\up{m}_{t, h}(x_{t, h}) \geq \zeta\), and we query. Note that \pref{lem:multiple_eluder_based_bound} holds even if the sequence \(\crl{x_{t, h}}_{t \leq T}\) was adversarially generated.  

\begin{lemma}
\label{lem:multiple_eluder_based_bound}  
Let \(\fstarm\) satisfy  \pref{lem:il_DIS_utility} , and let \(\Delta\up{m}_{t, h}(x_{t, h})\) be defined in  \pref{alg:il_eluder_multiple}. Suppose \pref{alg:il_eluder_multiple} is run on the data sequence \(\crl{x_{t, h}}_{t\leq 1}\), and let \(Z_{t, h}\) be defined in \pref{line:il_adv_query_multiple}. Then, for any \(\zeta > 0\), with probability at least \(1 - M \delta\), for any \(m \in [M]\), and \(h \leq H\), 
\begin{align*}
\sum_{t=1}^T Z_{t, h} \indic\crl*{\Delta\up{m}_{t, h}(x_{t}) \geq \zeta}  &\leq \frac{20 \SquareLossHPRLP}{\zeta^2} \cdot \eluderS(\cF\up{m}_h, \nicefrac{\zeta}{2} ; \fstarmh), 
\end{align*}
where \(\eluderS\) denotes the eluder dimension given in \pref{def:eluder_dimension_main}. 
\end{lemma}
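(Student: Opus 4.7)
The plan is to fix an expert index \(m \in [M]\) and a horizon step \(h \in [H]\), and reduce the bound to the exact one-context-one-expert selective sampling calculation already carried out in \pref{lem:ss_eluder_ma_based_bound}. Let \(\fstar^{m, t}_h \in \cF\up{m}_h\) denote the maximizer achieving \(\Delta\up{m}_{t, h}(x_{t, h})\) in the constrained optimization \pref{eq:il_multiexpert_delta}. Then by construction
\begin{align*}
\nrm{\fstar^{m, t}_h(x_{t, h}) - f\up{m}_{t, h}(x_{t, h})} = \Delta\up{m}_{t, h}(x_{t, h}), \quad \text{and} \quad \sum_{s=1}^{t-1} Z_{s, h} \nrm{\fstar^{m, t}_h(x_{s, h}) - f\up{m}_{s, h}(x_{s, h})}^2 \leq \SquareLossHPRLP.
\end{align*}
On the other hand, \pref{lem:il_DIS_utility}(a) states that \(\fstarmh\) itself satisfies the same feasibility constraint. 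Combining the two via \((a+b)^2 \leq 2a^2 + 2b^2\) yields
\begin{align*}
\sum_{s=1}^{t-1} Z_{s, h} \nrm{\fstar^{m, t}_h(x_{s, h}) - \fstarmh(x_{s, h})}^2 \leq 4 \SquareLossHPRLP.
\end{align*}

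Next, I would split the event \(\crl{\Delta\up{m}_{t, h}(x_{t, h}) \geq \zeta}\) using a triangle inequality on \(\nrm{\fstar^{m,t}_h(x_{t,h}) - f\up{m}_{t,h}(x_{t,h})}\): either \(\nrm{\fstar^{m, t}_h(x_{t, h}) - \fstarmh(x_{t, h})} \geq \zeta/2\) or \(\nrm{f\up{m}_{t, h}(x_{t, h}) - \fstarmh(x_{t, h})} \geq \zeta/2\). The rounds of the second type admit a direct Markov-style bound using \pref{lem:il_DIS_utility}(a), namely
\begin{align*}
\sum_{t=1}^T Z_{t, h} \indic\crl{\nrm{f\up{m}_{t, h}(x_{t, h}) - \fstarmh(x_{t, h})} \geq \nicefrac{\zeta}{2}} \leq \frac{4 \SquareLossHPRLP}{\zeta^2}.
\end{align*}
The rounds of the first type are precisely where the eluder dimension enters. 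Applying \pref{lem:eluder_dimension_bound} to the sequence of contexts \(\crl{x_{s, h}}_{s \leq T}\) with query flags \(\crl{Z_{s, h}}_{s \leq T}\), witnessing functions \(\crl{\fstar^{m, t}_h}\), target \(\fstarmh\), radius \(\beta^2 = 4\SquareLossHPRLP\), and threshold \(\zeta/2\), we obtain a bound of \((16 \SquareLossHPRLP/\zeta^2 + 1)\cdot \eluderS(\cF\up{m}_h, \zeta/2; \fstarmh)\) on this count. Summing the two contributions and using \(\eluderS(\cF\up{m}_h, \zeta/2; \fstarmh) \geq 1\) yields the stated \(20 \SquareLossHPRLP \eluderS(\cF\up{m}_h, \zeta/2; \fstarmh)/\zeta^2\) bound for each fixed \((m, h)\).

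Finally, a union bound over \(m \in [M]\) and \(h \in [H]\) is already absorbed into the definition of \(\SquareLossHPRLP\) through the \(\log(4 M H \log^2(T)/\delta)\) term in \pref{lem:il_DIS_utility}, so no further probabilistic accounting is needed. I do not expect a significant obstacle: the whole argument is a per-\((m, h)\) reduction that parallels \pref{lem:ss_eluder_ma_based_bound} verbatim, with the only bookkeeping concern being that \(\Delta\up{m}_{t, h}\) is defined with the summation \(\sum_{s=1}^{t-1} Z_{s, h} \|\cdot\|^2\) tracking queries at the \emph{same} horizon index \(h\), which is exactly the structure required to invoke \pref{lem:eluder_dimension_bound} on the per-\(h\) subsequence.
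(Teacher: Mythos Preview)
Your proposal is correct and follows essentially the same approach as the paper: the paper's proof simply states that the argument is identical to that of \pref{lem:ss_eluder_ma_based_bound} carried out per \((m,h)\), with \pref{lem:il_DIS_utility} replacing \pref{lem:ss_eluder_ma_oracle}, and you have faithfully unpacked exactly that argument.
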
 
\begin{proof} The proof is identical to the proof of \pref{lem:ss_eluder_ma_based_bound} where we handle each \(m \in [M]\) and \(h \in [H]\) separately, and substitute the corresponding bounds for \(\fstarmh\) via \pref{lem:il_DIS_utility} (instead of using \pref{lem:ss_eluder_ma_oracle}). We skip the proof for conciseness. 
\end{proof}

\subsubsection{Regret Bound} 
Suppose the trajectories at round \(t\) are generated using the deterministic dynamics \(\crl{\detdynamics_{t, 1}, \dots, \detdynamics_{t, H}} = \stocdynamics(\cdot~;~\iota_t)\) where \(\iota_t\) denotes the random seed that captures all of the stochasticity at round \(t\) \footnote{We use random seed \(\iota_t\) to capture all the stochasticity in the choice of \(\crl{\detdynamics_{t, h}}_{h \leq H, t \leq T}\). However, all our proofs extend to IL learning with an arbitrary, and possibly adversarial, choice of \(\crl{\detdynamics_{t, h}}_{h \leq H, t \leq T}\)}.

Recall that the policies \(\pistar\) and \(\pit\) such that  for any \(h \leq H\) and \(x \in \cX_h\), 
\(\pistar(x)  = \SelectAction{\effstarh(x)},\) and,\(\pit(x) = \SelectAction{\eff_{t, h}(x)}. \)
 Note that \pref{alg:il_eluder_multiple} collects trajectories using the policy \(\pit\) at round \(t\).  Thus, we have  
\begin{align*}
x_{t, h}^{\pit} = x_{t, h},   \numberthis \label{eq:il_dis3} 
\end{align*} where $ x_{t, h}$ denotes the state at time step \(h\) in round \(t\) of \pref{alg:il_eluder_multiple}. Finally, let \(\epsilon > 0\) be a free parameter. We start with the bound on the regret at time \(t\). 

\paragraph{Step 1: Bounding the difference in cumulative return at round \(t\).} Fix any \(t \leq T\), and let \(\tau_t^{\pit}\) and \(\tau_t^\pistar\) denote the trajectories that would have been sampled using the policies \(\pit\) and the policy \(\pistar\) at round \(t\). Furthermore, define the set \(\setX_\epsilon\) as 
\begin{align*}
\setX_\epsilon \ldef{} \bigcup_{h=1}^H \crl{x \in \cX_h \mid{}  \Que\prn{\effstarh(x), \epsilon \ones}  = 1} \numberthis \label{eq:il_dis4} 
\end{align*} 

Using \pref{lem:adv_pdl}  for the policies \(\pit\) and \(\pistar\), and  the set \(\setX_\epsilon\) defined above, we get that 
\begin{align*}
R(\tau_t^{\pistar}) - R(\tau_t^{\pit}) &\leq 2 H  \sum_{h=1}^H \indic \crl{x_{t, h}^{\pistar} \in \setX_\epsilon} + 2 H \sum_{h=1}^H \indic \crl{\pit(x_{t, h}^{\pit}) \neq \pistar(x_{t, h}^{\pit}), x_{t, h}^{\pit} \notin \setX_\epsilon} \\
&=  2 H  \sum_{h=1}^H \indic \crl{x_{t, h}^{\pistar} \in \setX_\epsilon} + 2 H \sum_{h=1}^H \indic \crl{\pit(x_{t, h}) \neq \pistar(x_{t, h}), x_{t, h} \notin \setX_\epsilon} \\
&= 2 H  \sum_{h=1}^H \indic \crl{x_{t, h}^{\pistar} \in \setX_\epsilon} + 2 H \sum_{h=1}^H Z_{t, h} \indic \crl{\pit(x_{t, h}) \neq \pistar(x_{t, h}), x_{t, h} \notin \setX_\epsilon} \\
&\hspace{1.8in} + 2 H \sum_{h=1}^H \bar{Z}_{t, h} \indic \crl{\pit(x_{t, h}) \neq \pistar(x_{t, h}), x_{t, h} \notin \setX_\epsilon} \\
&= 2 H  \sum_{h=1}^H \indic \crl{x_{t, h}^{\pistar} \in \setX_\epsilon} + 2H \Term_A + 2H \Term_B, \numberthis \label{eq:il_dis11} 
\end{align*} where the second line is obtained by using the relation \pref{eq:il_dis3}  in the second line. The last line simply defines \(\Term_A\) and \(\Term_B\) to be the second and the third term in the previous line, respectively, without the \(2H\) multiplicative factor. We bound these two terms separately below:  
\begin{enumerate}[label=\(\bullet\)] 
	\item \textit{Bound on $\Term_A$.} Using the definition of \(\setX_\epsilon\) from  \pref{eq:il_dis4}, we note that 
\begin{align*}
\Term_A &= \sum_{h=1}^H Z_{t, h} \indic \crl{\pit(x_{t, h}) \neq \pistar(x_{t, h}), x_{t, h} \notin \setX_\epsilon} \\ 
&= \sum_{h=1}^H Z_{t, h} \indic \crl{\pit(x_{t, h}) \neq \pistar(x_{t, h}), \Que\prn{\effstarh(x_{t, h}), \epsilon \ones}  = 0} \\
&= \sum_{h=1}^H Z_{t, h} \indic \crl{\exists m \in [m]~:~ \nrm{f\up{m}_{t, h}(x_{t, h}) - \fstarmh(x_{t, h})} > \epsilon}, 
\end{align*} where the last line follows from the fact that  the definition of \(\Que\) and the fact that \(\pit(x_{t, h}) \neq \pistar(x_{t, h})\) implies that there exists some \(m \in [M]\) for which \(\nrm{f\up{m}_{t, h}(x_{t, h}) - \fstarmh(x_{t, h})} > \epsilon\). The above implies that 
\begin{align*} 
\Term_A &\leq \sum_{m=1}^M \sum_{h=1}^H Z_{t, h} \indic \crl{\nrm{f\up{m}_{t, h}(x_{t, h}) - \fstarmh(x_{t, h})} > \epsilon}. 
\end{align*}

\item \textit{Bound on \(\Term_B\).} First note that \pref{lem:il_DIS_utility} implies that with probability at least \(1 - \delta\), for all \(m \leq M\) and \(h \leq H\), 
\begin{align*}
\nrm{\fstarmh(x_{t, h}) - f\up{m}_{t, h}(x_{t, h})} \leq \Delta\up{m}_{t, h}(x_{t, h}). \numberthis \label{eq:il_dis5} 
\end{align*}

Next, note that  
\begin{align*} 
\Term_B &\leq \sum_{h=1}^H \bar{Z}_{t, h} \indic \crl{\pit(x_{t, h}) \neq \pistar(x_{t, h})}   \numberthis \label{eq:ss_multiple_contradiction6} 
 \\
&= \sum_{h=1}^H \indic \crl{\Que\prn{\eff_{t, h}(x_{t, h}), \matDelta_{t, h}(x_{t, h})} = 0, \pit(x_{t, h}) \neq \pistar(x_{t, h})}, 
\end{align*} 
where in the last line follows from plugging in the query condition under which \(Z_{t, h} = 0\). However note that for any \(h \leq H\) for which \(\Que\prn{\eff_{t, h}(x_{t, h}), \matDelta_{t, h}(x_{t, h})} = 0\), by the definition of \(\Que\) and the fact that \(\pit(x_{t, h}) \neq \pistar(x_{t, h})\), there must exist some \(m \in [M]\) such that 
\begin{align*}
\nrm{\fstarmh(x_{t, h}) - f\up{m}_{t, h}(x_{t, h})} >  \Delta\up{m}_{t, h}(x_{t, h}).
\end{align*}
However, the above contradicts  \pref{eq:il_dis5}, and thus with probability at least \(1 - \delta\), 
\begin{align*}
\Term_B = 0. \numberthis \label{eq:il_dis9}  
\end{align*}
\end{enumerate}

Plugging the above bounds on \(\Term_A\) and \(\Term_B\) in \pref{eq:il_dis11}, we get that  
\begin{align*}
R(\tau_t^{\pistar}) - R(\tau_t^{\pit}) &\leq 2 H  \sum_{h=1}^H \indic \crl{x_{t, h}^{\pistar} \in \setX_\epsilon} +  2H \sum_{m=1}^M \sum_{h=1}^H Z_{t, h} \indic \crl{\nrm{f\up{m}_{t, h}(x_{t, h}) - \fstarmh(x_{t, h})} > \epsilon}. \numberthis \label{eq:il_dis6} 
\end{align*}

\paragraph{Step 2: Aggregating over all time steps.} Using the bound in \pref{eq:il_dis6}  for each round \(t\), we get that 
\begin{align*}
\RegT &= \sum_{t=1}^T \prn*{R(\tau_t^{\pistar}) - R(\tau_t^{\pit})} \\
&\leq 2 H   \sum_{h=1}^H \sum_{t=1}^T \indic \crl{x_{t, h}^{\pistar} \in \setX_\epsilon} +  2H \sum_{t=1}^T \sum_{m=1}^M \sum_{h=1}^H Z_{t, h} \indic \crl{\nrm{f\up{m}_{t, h}(x_{t, h}) - \fstarmh(x_{t, h})} > \epsilon}. 
\end{align*} 
Using the fact that  \(\indic\crl{a \geq b}  \leq \nicefrac{a^2}{b^2}\) for any \(a, b \geq 0\), and the definition of \(T_{\epsilon, h}\) in the above, we get that 
\begin{align*}
\RegT &\leq2H \sum_{h=1}^H T_{\epsilon, h} +   2H \sum_{t=1}^T \sum_{m=1}^M \sum_{h=1}^H Z_{t, h} \frac{\nrm{f\up{m}_{t, h}(x_{t, h}) - \fstarmh(x_{t, h})}^2}{\epsilon^2}  \\
&\leq 2H \sum_{h=1}^H T_{\epsilon, h} + \frac{2H}{\epsilon^2} \sum_{m=1}^M \sum_{h=1}^H  \SquareLossHPRLP \numberthis \label{eq:il_dis7}. 
\end{align*} 
where the last line follows from using the bound in \pref{lem:il_DIS_utility}.  

Plugging in the form of $\SquareLossHPRLP$ and ignoring \(\log\) factors and constants, we get that 
\begin{align*}
\RegT \lesssim  H \sum_{h=1}^H T_{\epsilon, h} + \frac{ H }{\lambda \epsilon^2}  \sum_{m=1}^M \sum_{h=1}^H  \LsExpected{\ls_\phi}{\cF\up{m}_h; T} + \frac{MH^2}{\lambda^2 \epsilon^2} \log(1/\delta). 
\end{align*}
Notice that \(\epsilon\) is a free parameter above so the final bound follows by taking \(\inf\) over all feasible \(\epsilon\). 

\subsubsection{Total Number of Queries} 
Fix any \(t \leq T\), and let \(h_t\) denote the first time step at round \(t\) for which \(Z_{t, h_t} = 1\), if such a time-step exists (and is set to be \(H+1\) otherwise). We first observe that for all \(h \leq h_t\), we have \(\pistar(x_{t, h}) = \pit(x_{t, h})\). To see this, note that 
\begin{align*}
Z_{t, h_t}  \indic\crl{\exists h < h_t: \pistar(x_{t, h}) \neq \pi_t(x_{t, h})} &\leq \sum_{h=1}^{h_t - 1} Z_{t, h_t} \indic\crl{\pistar(x_{t, h}) \neq \pi_t(x_{t, h})} \\
&\leq \sum_{h=1}^{h_t - 1} Z_{t, h_t} \bar{Z}_{t, h}\indic\crl{\pistar(x_{t, h}) \neq \pi_t(x_{t, h})} \\
&\leq  \sum_{h=1}^{h_t - 1} \bar{Z}_{t, h}\indic\crl{\pistar(x_{t, h}) \neq \pi_t(x_{t, h})}
\end{align*} where in second inequality above, we used the fact that \(Z_{t, h} = 0\) (and thus \(\bar{Z}_{t, h} = 1\)) for all \(h < h_t\), by the definition of \(h_t\). Observe that the right hand side in the last inequality above is equivalent to the term \pref{eq:ss_multiple_contradiction6} in the bound on  \(\Term_B\) above (where sum is now till \(h_t\) instead of \(H\)). Thus, using the bound in \pref{eq:il_dis9}, we get that 
\begin{align*}
Z_{t, h_t} \indic\crl{\exists h < h_t: \pistar(x_{t, h}) \neq \pit(x_{t, h})} = 0,  
\end{align*}
and thus 
\begin{align*}
\pistar(x_{t, h}) = \pit(x_{t, h})  \qquad \text{for all \(h \leq h_t\)}. \numberthis \label{eq:RL_dis_first}
\end{align*}
Next, let \(\epsilon > 0\) be a free parameter, and note that plugging in the definition of \(h_t\), we get that the total number of samples is bounded as:  
\begin{align*}
N_{T} &= \sum_{t=1}^T \sum_{h=1}^H Z_{t, h} \\
&\leq H \sum_{t=1}^T Z_{t, h_t} \\ 
 &= H \sum_{t=1}^T  Z_{t, h_t} \indic\crl{x_{t, h_t} \in \setX_\epsilon} + H \sum_{t=1}^T  Z_{t, h_t} \indic\crl{x_{t, h_t} \notin \setX_\epsilon} \\
&= H \sum_{t=1}^T  Z_{t, h_t} \indic\crl{x_{t, h_t} \in \setX_\epsilon} +H \sum_{t=1}^T Z_{t, h_t} \indic\crl{x_{t, h_t} \notin \setX_\epsilon, \nrm{\Delta\up{m}_{t, h_t}(x_{t, h_t})}_\infty \leq \frac{\epsilon}{4}}  \\
&\hspace{1.5in} +H \sum_{t=1}^T  Z_{t, h_t} \indic\crl{x_{t, h_t} \notin \setX_\epsilon, \nrm{\Delta\up{m}_{t, h_t}(x_{t, h_t})}_\infty > \frac{\epsilon}{4}} \\ 
&= \Term_C + \Term_D + \Term_E, 
\end{align*}
where \(\Term_C\), \(\Term_D\) and \(\Term_E\) are the first, second and the third term respectively in the previous line. We bound them separately below. 

\begin{enumerate}[label=\(\bullet\)]
\item \textit{Bound on \(\Term_C\)}. Fix any \(t \leq T\). Using the relation in \pref{eq:RL_dis_first},  note that \(\pistar(x_{t, h}) = \pit(x_{t, h})\) for all \(h < h_t\). Thus, the corresponding trajectories would be identical  till time step \(h_t\), which implies that \(x_{t, h_t} = x^{\pistar}_{t, h_t}\). Using this property in the \(\Term_C\), we get that  
\begin{align*}
 (\Term_C)_t &=  H \sum_{t=1}^T  Z_{t, h_t} \indic\crl{x_{t, h_t} \in \setX_\epsilon} \\
 &= H \sum_{t=1}^T  Z_{t, h_t} \indic\crl{x^{\pistar}_{t, h_t} \in \setX_\epsilon} \\
 &\leq H \sum_{t=1}^T \sum_{h=1}^H  Z_{t, h} \indic\crl{x^{\pistar}_{t, h} \in \setX_\epsilon} \\
 &= H \sum_{h=1}^H T_{\epsilon, h}, 
\end{align*}
where the last line plugs in the definition of \(T_{\epsilon, h}\). 
  
\item \textit{Bound on \(\Term_D\).} First note that 
\begin{align*}
(\Term_D)_t &= H \indic\crl{\Que\prn*{\eff_{t, h_t}(x_{t, h_t}), \matDelta_{t, h_t}(x_{t, h_t})} = 1, \Que(\effstar(x_{t, h_t}), \epsilon \ones) = 0, \sup_{m \in [M]} \Delta\up{m}_t(x_{t, h_t}) \leq \nicefrac{\epsilon}{4}}. 
\end{align*} 
In the following, we will show that all the conditions in the above indicator can not hold simultaneously. First note that since \(\Que\prn*{\eff_{t, h_t}(x_{t, h_t}), \matDelta_{t, h_t}(x_{t, h_t})} = 1\), there exists an \(\wt \eff\) such that 
\begin{align*} 
\SelectAction{\wt \eff(x_{t, h_t}))} \neq \SelectAction{\eff_{t, h_t}(x_{t, h_t})} \numberthis \label{eq:il_dis_proof8}
\end{align*} and 
\begin{align*}
\forall m \in [M]: \qquad \nrm{\wt \eff(x_{t, h_t})[:, m] - \eff_{t, h_t}(x_{t, h_t})[:, m]} \leq \Delta_{t, h_t}\up{m}(x_{t, h_t}). \numberthis \label{eq:il_dis_proof9}
\end{align*}

On the other hand, recall that \pref{lem:il_DIS_utility} implies that 
\begin{align*}
\forall m \in [M]: \qquad \nrm{\effstar(x_{t, h_t})[:, m] - \eff_{t, h_t}(x_{t, h_t})[:, m]} \leq \Delta_{t, h_t}\up{m}(x_{t, h_t}).  \numberthis \label{eq:il_dis_proof10}
\end{align*}
Since, \(\sup_m \Delta_{t, h_t}\up{m}(x_{t, h_t}) \leq \nicefrac{\epsilon}{4}\), an application of Triangle inequality along with the bounds \pref{eq:il_dis_proof9} and \pref{eq:il_dis_proof10} imply that 
\begin{align*}
\forall m \in [M]: \qquad \nrm{\effstar(x_{t, h_t})[:, m] - \wt \eff(x_{t, h_t})[:, m]} \leq 2\Delta_{t, h_t}\up{m}(x_{t, h_t}) < \epsilon.  \numberthis \label{eq:ss_multiple_contradiction3} 
\end{align*}

But the above contradicts the fact that \(\Que(\effstar(x_{t, h_t}), \epsilon \ones) = 0\) since both \(\wt F\) and \(F_t\) satisfy the norm constraints in the definition of \(\Que\), but we can not simultaneously have that 
\begin{align*}
\SelectAction{\effstar(x_{t, h_t}))} = \SelectAction{\eff_{t, h_t}(x_{t, h_t})} = \SelectAction{\wt \eff(x_{t, h_t})}, 
\end{align*} due to \pref{eq:il_dis_proof8}. Thus, we must have that 
\begin{align*}
(\Term_D)_t &= 0.
\end{align*} 

\item \textit{Bound on \(\Term_E\).} We note that 
\begin{align*}
\Term_E &\leq H \sum_{t=1}^T Z_{t , h_t}\indic\crl{\nrm{\matDelta_{t, h_t}(x_{t, h_t})}_\infty > \nicefrac{\epsilon}{4}} \\
&= H \sum_{t=1}^T Z_{t , h_t}\indic\crl{\exists m \in [M]: \Delta\up{m}_{t, h_t}(x_{t, h_t}) > \nicefrac{\epsilon}{4}} \\
&\leq H \sum_{m=1}^M  \sum_{t=1}^T Z_{t , h_t}\indic\crl{\Delta\up{m}_{t, h_t}(x_{t, h_t}) > \nicefrac{\epsilon}{4}}  \\
&\leq H \sum_{h=1}^H \sum_{m=1}^M  \sum_{t=1}^T Z_{t , h}\indic\crl{\Delta\up{m}_{t, h}(x_{t, h}) > \nicefrac{\epsilon}{4}}, 
\end{align*} 
where the last line simply upper bound the term for  \(h_t\) by the corresponding terms for all \(h \leq H\). 

Using \pref{lem:multiple_eluder_based_bound}  to bound the term in the right hand side for each \(m \in [M]\) and \(h \leq H\), we get that  
\begin{align*}
\Term_E &\leq  \sum_{h=1}^H \sum_{m=1}^M  
 \frac{320 H  \SquareLossHPRLP}{\epsilon^2} \cdot \eluderS(\cF\up{m}_h, \frac{\epsilon}{8} ; \fstarmh).  
\end{align*}
\end{enumerate}  

Gathering the bound above, we get that 
\begin{align*}
N_T &\leq H \sum_{h=1}^H T_{\epsilon, h} + \frac{320 H}{\epsilon^2} \sum_{h=1}^H \sum_{m=1}^M
\SquareLossHPRLP \cdot \eluderS(\cF\up{m}_h, \frac{\epsilon}{8} ; \fstarmh).  
\end{align*} 

Plugging in the form of $\SquareLossHPRLP$ and ignoring \(\log\) factors and constants, we get that 
\begin{align*}
N_T &\lesssim  H \sum_{h=1}^H T_{\epsilon, h} + \frac{H }{\lambda \epsilon^2} \sum_{h=1}^H  \sum_{m=1}^M  \LsExpected{\ls_\phi}{\cF\up{m}_h; T} \cdot \eluderS(\cF\up{m}_h, \nicefrac{\epsilon}{8} ; \fstarmh)  + \frac{MH^2}{\lambda^2 \epsilon^2} \log(1/\delta). 
\end{align*}
Notice that \(\epsilon\) is a free parameter above so the final bound follows by taking \(\inf\) over all feasible \(\epsilon\).

\end{document}